\documentclass[11pt,dvipsnames]{article}
\usepackage[utf8]{inputenc} 
\usepackage[T1]{fontenc}    
\usepackage{url}            
\usepackage[top=1in, bottom=1in, left=1in, right=1in]{geometry}
\usepackage{mathtools}
\usepackage{amsmath, amssymb, amsthm, bbm}
\usepackage{physics}
\usepackage{graphicx}
\usepackage{hyperref}       
\usepackage{caption}
\usepackage{tensor}
\usepackage[most]{tcolorbox}
\usepackage{empheq}
\usepackage{enumitem}
\usepackage{float}
\usepackage{graphicx}
\usepackage{booktabs}       
\usepackage{amsfonts}       
\usepackage{nicefrac}       
\usepackage{microtype}      
\usepackage{cleveref}
\usepackage{natbib}
\usepackage{tikz}
\usepackage{xfrac}
\usepackage{algpseudocode}
\usepackage{algorithm}
\usepackage{authblk}



%
\newcommand{\samples}{n} 
\newcommand{\features}{p} 
\newcommand{\targetfun}{f_{\star}} 
\newcommand{\activation}{\varphi}

\newcommand{\probx}{\mu_x}
\newcommand{\probxy}{\mu_{x,y}}
\newcommand{\probw}{\mu_w}
\newcommand{\student}{\hat{f}}
\newcommand{\teacher}{{\bbeta_{\star}}}
\newcommand{\noise}{\varepsilon} 
\newcommand{\bnoise}{\beps}
\newcommand{\noisestd}{\sigma_\noise}
\newcommand{\featmatr}{\bZ} 
\newcommand{\I}{\boldsymbol{I}}

\newcommand{\Ltwo}{L_{2}}

\def\de{{\rm d}}
\def\Im{{\rm Im}}

\renewcommand{\P}{\mathbb{P}}
\newcommand{\E}{\mathbb{E}}

\newcommand{\cN}{\mathcal{N}}


\newcommand{\R}{\mathbb{R}}

\newcommand{\eps}{\varepsilon} 

\def\id{{\mathbf I}}


\newcommand{\<}{\langle}
\renewcommand{\>}{\rangle}

\newcommand{\diag}{\text{diag}}
\def\sT{{\mathsf T}}
\def\bzero{{\boldsymbol 0}}

\DeclareMathOperator*{\argmin}{arg\,min}


\newtheorem{theorem}{Theorem}[section]

\newtheorem{assumption}[theorem]{Assumption}
\newtheorem{lemma}[theorem]{Lemma}
\newtheorem{proposition}[theorem]{Proposition}
\newtheorem{corollary}[theorem]{Corollary}

\newtheorem{definition}{Definition}

\newtheorem{remark}{Remark}[section]

\DeclareSymbolFont{rsfs}{U}{rsfs}{m}{n}
\DeclareSymbolFontAlphabet{\mathscrsfs}{rsfs}

\def\bA{{\boldsymbol A}}
\def\bB{{\boldsymbol B}}

\def\bF{{\boldsymbol F}}
\def\bG{{\boldsymbol G}}

\def\bR{{\boldsymbol R}}
\def\bS{{\boldsymbol S}}

\def\bV{{\boldsymbol V}}
\def\bW{{\boldsymbol W}}
\def\bX{{\boldsymbol X}}

\def\bZ{{\boldsymbol Z}}

\def\ba{{\boldsymbol a}}

\def\boldf{{\boldsymbol f}}
\def\bg{{\boldsymbol g}}

\def\bu{{\boldsymbol u}}
\def\bv{{\boldsymbol v}}
\def\bw{{\boldsymbol w}}
\def\bx{{\boldsymbol x}}
\def\by{{\boldsymbol y}}
\def\bz{{\boldsymbol z}}

\def\bbeta{{\boldsymbol \beta}}

\def\beps{{\boldsymbol \eps}}

\def\bpsi{{\boldsymbol \psi}}
\def\bphi{{\boldsymbol \phi}}

\def\bDelta{{\boldsymbol \Delta}}
\def\bLambda{{\boldsymbol \Lambda}}

\def\bSigma{{\boldsymbol \Sigma}}

\def\hba{{\hat {\boldsymbol a}}}

\def\cR{\mathcal{R}}
\def\test{{\rm test}}


\def\de{{\rm d}}
\def\Tr{{\rm Tr}}

\def\de{{\rm d}}

\def\cV{{\mathcal V}}

\def\cP{{\mathcal P}}
\def\cW{{\mathcal W}}

\def\cF{{\mathcal F}}
\def\cE{{\mathcal E}}

\def\cV{{\mathcal V}}

\def\cP{{\mathcal P}}
\def\cW{{\mathcal W}}

\def\cA{{\mathcal A}}

\def\tbA{\Tilde \bA}


\def\proj{{\mathsf P}}

\def\naturals{{\mathbb N}}
\def\Top{{\mathbb T}}

\def\proj{{\mathsf P}}

\def\proj{{\mathsf P}}

\def\naturals{{\mathbb N}}
\def\proj{{\mathsf P}}

\def\sV{{\sf V}}

\def\tcE{\widetilde{\cE}}

\def\hba{{\hat {\boldsymbol a}}}

\def\cE{{\mathcal E}}
\def\cD{{\mathcal D}}
\def\cX{{\mathcal X}}
\def\cF{{\mathcal F}}

\def\de{{\rm d}}

\def\cE{{\mathcal E}}

\def\bDelta{{\boldsymbol \Delta}}

\def\cX{{\mathcal X}}

\def\bA{{\boldsymbol A}}

\def\bLambda{{\boldsymbol \Lambda}}

\def\cV{{\mathcal V}}

\def\diag{{\rm diag}}
\def\bS{{\boldsymbol S}}

\def\bsigma{{\boldsymbol \sigma}}

\def\bR{{\boldsymbol R}}
\def\bpsi{{\boldsymbol \psi}}

\def\evn{{\mathsf m}}

\def\bv{{\boldsymbol v}}
\def\bI{{\boldsymbol I}}

\def\ind{\mathbbm{1}}

\def\boldf{\boldsymbol{f}}

\def\cB{\mathcal{B}}

\def\Im{{\rm Im}}

\def\sR{\mathsf R}
\def\sV{\mathsf V}
\def\sB{\mathsf B}

\def\tbR{\widetilde{\bR}}

\def\br{{\boldsymbol r}}
\def\bGamma{{\boldsymbol \Gamma}}

\def\boldf{\boldsymbol{f}}

\def\cB{\mathcal{B}}

\def\Im{{\rm Im}}

\def\sR{\mathsf R}
\def\sV{\mathsf V}
\def\sB{\mathsf B}

\def\hbSigma{\hat{\bSigma}}
\def\sfC{{\sf C}}
\def\sfc{{\sf c}}

\def\tbS{\widetilde{\bS}}

\def\tnu{\Tilde{\nu}}
\def\hbSigma{\widehat{\bSigma}}

\def\sK{{\sf K}}
\def\sA{{\sf A}}
\def\tPhi{\widetilde{\Phi}}
\def\obF{\overline{\bF}}
\def\oboldf{\overline{\boldf}}

\def\hxi{\hat{\xi}}

\def\hrho{\widehat{\rho}}
\def\trho{\widetilde{\rho}}
\def\tcA{\widetilde{\cA}}
\def\obv{\overline{\bv}}
\def\tsB{\widetilde{\sB}}
\def\tbG{\widetilde{\bG}}

\hypersetup{colorlinks, breaklinks=true, urlcolor=orange, linkcolor=blue, citecolor=blue}

\title{Dimension-free deterministic equivalents and scaling laws\\ for random feature regression}

\author[1]{Leonardo Defilippis}
\author[1]{\;Bruno Loureiro}
\author[2]{\;Theodor Misiakiewicz}

\affil[1]{\small Département d’Informatique, École Normale Supérieure - PSL \& CNRS, France}
\affil[2]{\small 
Department of Statistics and Data Science, Yale University, United States.}

\date{\today}

\allowdisplaybreaks
\begin{document}

\maketitle
\begin{abstract}
In this work we investigate the generalization performance of random feature ridge regression (RFRR). Our main contribution is a general deterministic equivalent for the test error of RFRR. Specifically, under a certain concentration property, we show that the test error is well approximated by a closed-form expression that only depends on the feature map eigenvalues. Notably, our approximation guarantee is non-asymptotic, multiplicative, and independent of the feature map dimension---allowing for infinite-dimensional features. We expect this deterministic equivalent to hold broadly beyond our theoretical analysis, and we empirically validate its predictions on various real and synthetic datasets. As an application, we derive sharp excess error rates under standard power-law assumptions of the spectrum and target decay. In particular, we provide a tight result for the smallest number of features achieving optimal minimax error rate.
\end{abstract}

\tableofcontents

\section{Introduction}
\label{sec:intro}
At odds with classical statistical intuition, overparametrized neural networks are able to generalize while perfectly interpolating the training data. This observation, which defies the canonical mathematical understanding of generalization based on complexity measures and uniform convergence, appears surprising at first \citep{zhang2017understanding}. However, recent progress in our mathematical understanding of generalization has taught us that this \emph{benign overfitting} property of overparametrized neural networks is shared by a plethora of simpler learning tasks \citep{Bartlett_Montanari_Rakhlin_2021, Belkin_2021}. Among them, the investigation of the following class of \emph{random feature models} has been at the forefront of this progress:
\begin{align}
\label{eq:def:class}
\cF_{\text{RF}} = \Big\{\student(\bx; \ba) = \frac{1}{\sqrt{\features}}\sum_{j\in[\features]}a_j\activation(\bx,\bw_j)\;:\;\ba=(a_j)_{j\in[\features]}\in\R^\features\Big\}.
\end{align}
Here $\bx \in \cX$ denotes the inputs and $\bW = (\bw_j)_{j\in[\features]}$ a set of weight vectors which are taken to be random $\bw_j \in \cW\subseteq\R^d \sim_{\text{iid}} \probw$. Hence, as the name suggests the feature map $\activation:\cX\times\cW\to\R$ defines a random function. A few examples of random feature maps include the fully connected neural network features $\activation(\bx, \bw) = \sigma(\langle \bw,\bx\rangle)$, $\sigma:\R\to\R$, and the convolutional features with global average pooling $\activation(\bx, \bw) = \sfrac{1}{d}\sum_{\ell=1}^{d}\sigma(\langle \bw,g_{\ell}\cdot\bx\rangle)$ where $\sigma:\R\to\R$ and $g_{\ell}\cdot \bx = (x_{\ell+1},\dots,x_{d},x_{1},\dots,x_{\ell})$ is the $\ell$-shift operator with cyclic boundary conditions (both with $\mathcal{X}=\mathbb{R}^{d}$). 

Random features \citep{balcan2006kernels,Rahimi2007} were originally introduced as a computationally efficient approximation for the limiting kernel:
\begin{align}
\label{eq:def:kernel}
    K(\bx,\bx') = \mathbb{E}_{\bw\sim \mu_{w}}\left[\activation(\bx, \bw)\activation(\bx',\bw)\right].
\end{align}
Although this can reduce the computational cost of kernel methods, it introduces an approximation error. \cite{Rahimi2008} showed that in a supervised setting with $n$ samples, $p=O(n)$ features are sufficient to achieve an excess risk $O(n^{-\sfrac{1}{2}})$. \cite{Rudi2017} improved this result under standard power-law assumptions on the asymptotic kernel spectrum, showing that for fast decays, less features are needed to achieve the minimax rate. In \Cref{sec:scaling} we will revisit this question, where we will derive a tight result for the minimum number of features.

More recently, the random feature model has gained in popularity as a proxy model for studying the generalization properties of two-layer neural networks in the lazy regime of training \citep{Jacot2018, Chizat2019}. Indeed, for particular choices of feature maps such as $\activation(\bx, \bw) = \sigma(\langle\bw, \bx\rangle)$, it can also be seen as a two-layer neural network with fixed first-layer weights. Exact asymptotic results for the generalization error of \cref{eq:def:class} were derived for different supervised learning tasks under the proportional scaling regime $n, p=\Theta(d)$ in \citep{Mei2022, Gerace2021, dhifallah2020precise, Hu2023, goldt22a, Loureiro2022, bosch23a, bosch23b, pmlr-v202-schroder23a, schröder2024asymptotics} and under more general polynomial scaling $n, p=\Theta(d^{\kappa})$ in \citep{simon2023better, aguirrelopez2024random, hu2024asymptotics}. As discussed above, these works played a fundamental role in our current mathematical understanding of the relationship between overparametrization and generalization, demystifying different phenomena such as double descent \cite{belkin_reconciling_2019} and benign overfitting \cite{Bartlett2020}. It also led to fundamental separation results between lazy and trained networks \citep{Ghorbani2019, Ghorbani2020, Mei2023}, recently motivating the investigation of corrections to the random limit \citep{Ba2022, dandi2023twolayer, moniri2024theory, cui2024asymptotics}. 

With the exception of \citep{simon2023better}, which is based on non-rigorous arguments, the results in all the works cited above are derived in the asymptotic limit of large data dimension. However, the relative scaling of the $n,p,d$ is fundamentally artificial, and in practice it is hard to unambiguously define the regime of interest. Our main goal in this manuscript is to provide a dimension-free characterization of the generalization error allowing us to give tight answers to questions which cannot be addressed asymptotically. More precisely, our main contributions are:
\begin{enumerate}[label=(\roman*)]
    \item Under a concentration assumption on the feature map eigenfunctions, we prove a non-asymptotic deterministic approximation for the RFRR risk $\cR_{\test} \approx \sR_{n,p}$ which is independent of the feature map dimension. More precisely, with high-probability over the input data and random weights:
\begin{align}
    |\cR_{\test} - \sR_{n,p}| \leq \Tilde{O} (p^{-1/2} + n^{-1/2} ) \cdot \sR_{n,p}.
\end{align}
   where the \emph{deterministic equivalent} $\sR_{n,p}$ can be computed by solving a set of self-consistent equations of the type $x = f(x)$, with $f$ a contractive map. This result unifies the long list of asymptotic formulas in the RFRR literature, and proves a conjecture by \cite{simon2023better}. We numerically validate the results on various real and synthetic datasets.
   The precise statement of the theorem and the assumptions are discussed in \Cref{sec:equiv}.

\item Leveraging our formula, we investigate the error scaling laws in a setting where the target function and feature spectrum decay as a power-law, also known as \emph{source and capacity} conditions. We provide a full picture of the different scaling regimes and the cross-overs between them, summarized in \Cref{fig:rates_rlarge}. Our result is closely related to the neural scaling laws literature \citep{kaplan2020scaling}, and provides the first rigorous, non-linear extension of \citep{bahri2021explaining, maloney2022solvable}.

\item We provide a sharp expression for the minimum number of features required to achieve the minimax optimal decay rate of \cite{caponnetto2007optimal}, closing the gap of previous lower-bounds in the literature \citep{Rudi2017}. 
\end{enumerate}

\paragraph{Further related works ---} Deterministic equivalents have been derived for a wide range of learning problems, such as ridge regression \citep{Dobriban2018, Hastie2022, cheng2022dimension, wei22a}, kernel regression \citep{misiakiewicz2024nonasymptotic}, shallow \citep{liao18a, Mei2022, chouard2022quantitative, Bach2024, atanasov2024scaling} and deep random feature regression \citep{Fan2020, pmlr-v202-schroder23a, schröder2024asymptotics, chouard2023deterministic} and spiked random features \citep{wang2024nonlinear}. Scaling laws under source and capacity conditions were studied by several authors in the context of kernel ridge regression  \citep{bordelon20a, Spigler2020, Cui2022, simon2023eigenlearning, NEURIPS2023_9adc8ada, NEURIPS2022_ba8aee78, NEURIPS2021_4e8eaf89, cagnetta23a, dohmatob2024model} and classification \citep{Cui2023}. 
\section{Setting} 
\label{sec:setting}
In this work, we focus on the generalization properties of the random feature class $\cF_{\text{RF}}$ defined in \cref{eq:def:class} in a supervised regression setting. More precisely, consider a data set $\cD = \{(\bx_i, y_i)_{i\in[\samples]}\}$ composed of $\samples$ independent and identically distributed samples from a joint distribution $\probxy$ on $\mathcal{X}\times \mathbb{R}$. Let $\targetfun(\bx) = \mathbb{E}[y|\bx]$ denote the target function. We assume $\targetfun\in \Ltwo(\probx)$, where $\probx$ is the marginal distribution over $\mathcal{X}$. Moreover, we assume the noise $\noise \coloneqq y-\targetfun(\bx)$ has zero mean and finite variance $\mathbb{E}[\noise^{2}]=\noisestd^2 < \infty$. Note this is equivalent to:
\begin{align}
    y_{i} = \targetfun(\bx_{i}) + \varepsilon_{i}, \qquad\quad \targetfun\in \Ltwo(\probx).
\end{align}
Given the training data, we are interested in the properties of the minimiser:
\begin{align}
\label{eq:def:erm}
    \hat{\ba}_\lambda(\featmatr, \by) \coloneqq \argmin_{\ba\in\R^\features} \Big\{ \sum_{i \in [\samples] } \left( y_i - \student (\bx_i ; \ba) \right)^2 + \lambda \| \ba \|_2^2 \Big\} = (\featmatr^\top\featmatr + \lambda\I_{\features})\featmatr^\top\by,
\end{align}
where we have defined the feature matrix $\featmatr_{ij} = p^{-\nicefrac{1}{2}}\activation(\bx_i;\bw_j)$ and the label vectors $\by = (y_i)_{i\in[\samples]}$. In particular, we are interested in its capacity of generalising to unseen data, as quantified by the \emph{excess population risk}:
\begin{align}
\label{eq:def:risk}
    \cR (\targetfun, \bX, \bW,\beps, \lambda) \coloneqq\E_{\bx\sim\probx} \Big[ \Big(\targetfun(\bx) -   \student (\bx ; \hba_\lambda)\Big)^2 \Big].
\end{align}
It will be convenient to decompose the excess risk above in terms of the standard bias and variance:
\begin{align}\label{def:bias_variance_decomp}
\cR (\targetfun; \bX,\bW,\lambda) := \E_\bnoise \left[ \cR(\targetfun; \bX,\bW,\beps,\lambda) \right] = \cB ( \targetfun ; \bX,\bW,\lambda) + \cV (\bX,\bW,\lambda),
\end{align}
where:
\begin{align}
    \cB ( \targetfun ; \bX,\bW,\lambda) &\coloneqq  \E_{\bx \sim \probx} \left[ \left(\targetfun (\bx) -  \E_\bnoise[ \student (\bx ; \hba_\lambda)]\right)^2 \right],\\
\cV (\bX,\bW,\lambda) &\coloneqq \E_{\bx \sim \mu_x} \left[ \operatorname{Var}_{\bnoise} (\student(\bx ; \hba_\lambda)) \right].
\end{align}
Note that to simplify the exposition, we have explicitly taken an expectation over the training data noise $\bnoise = (\noise_{i})_{i\in[n]}$. Indeed, it can be shown that the excess risk \cref{eq:def:risk} concentrates on its expectation over $\bnoise$ under mild assumptions (see for example \cite{misiakiewicz2024nonasymptotic}). 

\section{Deterministic equivalents}
\label{sec:equiv}
The excess risk \cref{eq:def:risk} is a function of the covariates $\bX$ and the weights $\bW$, and therefore it is a random quantity.  Our main result in what follows is a sharp characterization of the bias and variance in terms of a \emph{deterministic equivalent} depending only on the model parameters and spectral properties of the features. Consider a square-integrable $\activation \in\Ltwo(\mathcal{X}\times \mathcal{W})$, and define the Fredholm integral operator $\Top : \Ltwo ( \cX) \to \cV \subseteq \Ltwo ( \cW)$: 
\begin{align}
\Top h ( \bw) \coloneqq \int_{\cX} \activation ( \bx; \bw) h (\bx) \probx (\de \bx), \qquad\quad \forall h \in \Ltwo (\cX),
\end{align}
where we define $\cV = \Im (\Top)$. This is a compact operator, and therefore can be diagonalized: 
\begin{equation}\label{eq:diag_Top}
\Top = \sum_{k = 1}^\infty \xi_k \psi_k \phi_k^*,
\end{equation}
where $(\xi_k)_{k \geq 1} \subseteq \R$  are the eigenvalues and $(\psi_k)_{k\geq 1}$ and $(\phi_k)_{k\geq 1}$ are orthonormal bases of $\Ltwo(\cX)$ and $\cV$ respectively:
\begin{align}
\< \psi_k , \psi_{k'} \>_{\Ltwo(\cX)} = \delta_{kk'}, \qquad \quad \< \phi_k , \phi_{k'} \>_{\Ltwo(\cW)} = \delta_{kk'}.
\end{align}
Without loss of generality, we assume the eigenvalues are ordered in non-increasing absolute values $|\xi_1| \geq | \xi_2| \geq \dots$, and for simplicity of presentation we assume that all eigenvalues are non-zero, i.e., $\ker (\Top) = \{ 0 \}$. Denote $\bSigma = \diag (\xi_1^2,\xi_2^2, \ldots) \in \R^{\infty \times \infty}$ the diagonal matrix of the squared eigenvalues. Similarly, since $\targetfun\in\Ltwo(\probx)$, it admits the following decomposition in $(\psi)_{k\geq 1}$:
\begin{align}\label{eq:def:fstar_decomposition}
    \targetfun = \sum\limits_{k\geq 1}\teacher_{,k}\psi_{k}
\end{align}
Our formal results will assume the following concentration property over the eigenfunctions.
\begin{assumption}[Concentration of the eigenfunctions]
\label{ass:concentration_eigenfunctions}
     Denote the (infinite-dimensional) random vectors\footnote{Note that we can consider both $\bpsi$ and $\bphi$ random elements of the Hilbert space $\ell_2$ with distribution induced by $\bx \sim \probx$ and $\bw \sim \probw$, where $\E[\bpsi \bpsi^\sT] = \E[\bphi \bphi^\sT] = \bSigma$ and $\Tr(\bSigma) < \infty$.} $\bpsi := ( \xi_k \psi_k (\bx))_{k \geq 1}$ and $\bphi := (\xi_k \phi_k (\bw))_{k \geq 1}$. There exists a constant $\sfC_x>0$ such that for any deterministic p.s.d.~matrix $\bA \in \R^{\infty \times \infty}$, {\it i.e.} a linear operator acting on an infinite-dimensional Hilbert space, with $\Tr(\bSigma \bA)<\infty$, we have
\begin{align}     
     \P \left( \left| \bpsi^\sT \bA \bpsi - \Tr(\bSigma \bA)\right| \geq t \cdot \| \bSigma^{1/2} \bA \bSigma^{1/2} \|_F \right) \leq&~ \sfC_x \exp \left\{ -  t /\sfC_x\right\}, \\
     \P \left( \left| \bphi^\sT \bA \bphi - \Tr(\bSigma \bA)\right| \geq t \cdot \| \bSigma^{1/2} \bA \bSigma^{1/2} \|_F \right) \leq&~ \sfC_x \exp \left\{-  t /\sfC_x \right\}.
\end{align}
\end{assumption}
While Assumption \ref{ass:concentration_eigenfunctions} is restrictive and will not be satisfied by many non-linear settings, it covers a number of popular theoretical models studied in the literature: 1) independent sub-Gaussian entries, 2) verifying a log-Sobolev inequality or convex Lipschitz concentration (see \cite{cheng2022dimension}). We expect that Assumption \ref{ass:concentration_eigenfunctions} can be relaxed using the same procedure as in \cite{misiakiewicz2024nonasymptotic} to cover classical examples such as data and weights uniformly distributed on the sphere or hypercube. Such a relaxation is involved and we leave it to future work. We will further assume that:
\begin{assumption}\label{ass:technical}
    There exists $\evn \in \mathbb{N}$ such that 
    \begin{align}
    \label{eq:condspect}
    p \xi_{\evn+1}^2 \leq \frac{\lambda}{n} \sum_{k = \evn+1}^\infty \xi_k^2.
    \end{align}
    Furthermore, we will assume that for some $\sfC_*>0$ that we have
    \begin{equation}
    \label{eq:ass_overparametrized}
    \frac{\Tr( \bSigma ( \bSigma + \nu_2)^{-1} )}{\Tr( \bSigma^2 ( \bSigma + \nu_2)^{-2} )} \leq \sfC_*, \qquad \quad\frac{\< \teacher , (\bSigma + \nu_2)^{-1} \teacher \>}{ \nu_2\< \teacher , (\bSigma + \nu_2)^{-2} \teacher \>} \leq \sfC_*.
    \end{equation}
\end{assumption}
Assumption \ref{ass:technical} is technical, and we believe it can be removed at the cost of a more involved analysis. For instance, \cref{eq:condspect} is always satisfied for $\xi_k^2 \propto k^{-\alpha}$ if we take $\evn =O (p^2)$. Condition \eqref{eq:ass_overparametrized} was also considered in \cite{cheng2022dimension}, and is satisfied in many settings of interest, for example under source and capacity conditions $\beta_k \asymp k^{-\beta}$ and $\xi_k^2 \asymp k^{-\alpha}$ considered in Section \ref{sec:scaling}. 

\paragraph{Main result ---} Our main result concerns a dimension-free characterization of the risk \cref{eq:def:risk} in terms of deterministic equivalents. We start by defining them.
\begin{definition}[Deterministic equivalents] 
\label{def:fixed_point_main}
Given integers $n,p$, covariance matrix $\bSigma$ and regularization parameter $\lambda \geq 0$. Consider the parameter $\nu_2 \in \R_{>0}$ defined as the unique solution of the self-consistent 
equation:
\begin{align}\label{eq:def:nu2}
    1 + \frac{n}{p} - \sqrt{\left(1 - \frac{n}{p}\right)^2 + 4\frac{\lambda}{p\nu_2}}  = \frac{2}{p} \Tr \left( \bSigma ( \bSigma + \nu_2 )^{-1} \right), 
\end{align}
and $\nu_1 \in \R_{>0}$ is given by:
\begin{equation}\label{eq:def:nu1}
    \nu_1 := \frac{\nu_2}{2} \left[ 1 - \frac{n}{p} + \sqrt{\left(1 - \frac{n}{p}\right)^2 + 4\frac{\lambda}{p\nu_2}} \right].
\end{equation} 
We introduce the short-hand:
\begin{align}
\Upsilon (\nu_1,\nu_2) &\coloneqq \frac{p}{n} \left[ \left(1 -  \frac{\nu_1}{\nu_2} \right)^2 +   \left(  \frac{\nu_1}{\nu_2}\right)^2  \frac{ \Tr( \bSigma^2 (\bSigma +\nu_2)^{-2})}{p - \Tr( \bSigma^2 (\bSigma +\nu_2)^{-2})} \right], 
\label{eq:def:Upsilon}\\
\chi (\nu_2)  &\coloneqq \frac{\Tr( \bSigma (\bSigma + \nu_2)^{-2})}{p - \Tr( \bSigma^2 (\bSigma + \nu_2)^{-2} )}.
\label{eq:def:chi}
\end{align}
Then, the deterministic equivalents for the bias, variance and test error are defined as: 
\begin{align}\label{eq:def_bias_equivalent}
    \sB_{n,p} (\bbeta_*,\lambda) &\coloneqq \frac{\nu_2^2}{1 -  \Upsilon (\nu_1,\nu_2)} \Big[ \< \bbeta_* , (\bSigma + \nu_2)^{-2} \bbeta_* \> + \chi (\nu_2) \< \bbeta_* , \bSigma (\bSigma + \nu_2)^{-2} \bbeta_* \>\Big],\\
    \sV_{n,p} ( \lambda) &\coloneqq \noisestd^{2} \frac{\Upsilon (\nu_1,\nu_2)}{1 - \Upsilon (\nu_1,\nu_2)},     \label{eq:def_variance_equivalent}\\
    \sR_{n,p} (\bbeta_*, \lambda) &\coloneqq \sB_{n,p} (\bbeta_*,\lambda) + \sV_{n,p} (\lambda).
    \label{eq:def_risk_equivalent}
\end{align}
\end{definition}
Our main result provides precise conditions for when the deterministic equivalents defined in \cref{def:fixed_point_main} are a good approximation for the test error \cref{eq:def:risk}, as a function of the dimensions $n,p$, feature covariance $\bSigma$, and regularization $\lambda>0$. More precisely, the approximation rates will depend on them through the following quantities:  
\begin{align}
\label{eq:def_trho_n_p}
    r_{\bSigma}(k)&\coloneqq \frac{\sum_{j\geq k}\xi_j^2}{\xi^2_k}, \quad M_\bSigma (k) \coloneqq 1 + \frac{r_{\bSigma} (\lfloor \eta_* \cdot k \rfloor) \vee k}{k} \log \left( r_{\bSigma} (\lfloor \eta_* \cdot k \rfloor) \vee k \right),\\
    \rho_\kappa (p) &\coloneqq 1 + \frac{p \cdot \xi^2_{\lfloor \eta_* \cdot p \rfloor}}{\kappa}  M_\bSigma (p), \label{eq:def_rho_p}\\
    \trho_\kappa (n,p) &\coloneqq 1 + \ind [ n \leq p/\eta_*] \cdot \left\{ \frac{n \xi_{\lfloor \eta_* \cdot n \rfloor}^2}{\kappa} + \frac{n}{p} \cdot \rho_\kappa (p)\right\} M_\bSigma (n), \label{eq:def_rho_tilde_n_p}
\end{align}
 Below we denote $C_{a_1, \ldots , a_k}$ constants that only depend on the values of $\{a_i\}_{i \in [k]}$. We use $a_i = `*$' to denote the dependency on the constants in Assumptions \ref{ass:concentration_eigenfunctions} and \ref{ass:technical}.

%
\begin{theorem}[Test error of RFRR]
\label{thm:main_test_error_RFRR}
Under Assumptions \ref{ass:concentration_eigenfunctions}, \ref{ass:technical} and for any $D,K >0$, there exist constants $\eta_* \in (0,1/2)$ and $C_{*,D,K}>0$ such that the following holds.
For any $n , p \geq C_{*,D,K}$, regularization $\lambda >0$, and target function $\targetfun \in \Ltwo(\probx)$, if 
    \begin{align}
    \label{eq:conditions_test_error}
                \lambda \geq n^{-K}, \qquad \gamma_\lambda \geq p^{-K}, \qquad \qquad
                \quad\trho_\lambda (n,p)^{5/2} \cdot \log^{3/2} (n) \leq&~ K \sqrt{n}, \\ \label{eq:conditions_test_error2}
                \trho_\lambda (n,p)^{2} \cdot \rho_{\gamma_+} (p)^8 \cdot \log^{4} (p) \leq&~ K \sqrt{p},
    \end{align}
    then with probability at least $1 - n^{-D} - p^{-D}$, we have
    \begin{align}
    \left| \cR (\targetfun ; \bX,\bW,\lambda) - \sR_{n,p} (\bbeta_*,\lambda) \right| \leq C_{*,D,K} \cdot \cE (n,p) \cdot  \sR_{n,p} (\bbeta_*,\lambda),
    \end{align}
    where $\sR_{n,p}(\bbeta_*,\lambda)$ has been defined in \cref{eq:def_risk_equivalent} and:
    \begin{align}\label{eq:def_gamma_lambda_gamma_plus}
    \gamma_\lambda = \frac{p\lambda}{n} + \sum_{k = \evn+1}^\infty \xi_k^2, \qquad \qquad\gamma_+ = p \nu_1 + \sum_{k = \evn+1}^\infty \xi_k^2.
    \end{align}
    and the approximation rate is given by
    \begin{equation}\label{eq:approximation_rate_test}
        \cE (n,p)  :=  \frac{\trho_\lambda (n,p)^6 \log^{7/2} (n)}{\sqrt{n}} + \frac{\trho_\lambda (n,p)^2 \cdot \rho_{\gamma_+} (p)^8 \log^{7/2} (p)}{\sqrt{p}}.
    \end{equation}
\end{theorem}
For typical settings, with regularly varying spectrum, $\rho_\kappa (p) \lesssim \log(p)^C /\kappa$ and $\trho_\kappa (n,p) \lesssim \log(n \wedge p)^C/\kappa$. In this case, the approximation rate scales as $\cE(n,p)= \Tilde O(n^{-1/2} + p^{-1/2})$, which matches the optimal rates expected from local law fluctuations. 
A few remarks on this theorem are in order:
\begin{enumerate}[label=(\alph*)]
    \item \Cref{thm:main_test_error_RFRR} provides fully non-asymptotic approximation bounds for the population risk and its deterministic equivalent. They hold pointwise and for a large class of functions. In particular, they do not require probabilistic assumptions over the target function coefficients $\teacher$, as for instance in \citep{Dobriban2018, pmlr-v130-richards21b, NEURIPS2020_72e6d323}. 
    \item They are not explicitly dependent on the feature map dimension.
    \item They are multiplicative, and therefore relative to the scale of the risk. In particular, they hold even if $\cR \asymp n^{-\gamma}$, which will be crucial to the discussion in \cref{sec:scaling}.
    \item \Cref{thm:main_test_error_RFRR} is considerably more general than previous results. First, it extends the dimension-free results of \cite{cheng2022dimension} for well-specified ridge regression and \cite{misiakiewicz2024nonasymptotic} for KRR (see $p\to\infty$ discussion below) to the case of feature maps $\activation:\cX\times\cW\to\mathbb{R}$, which, as discussed in \Cref{sec:setting}, comprises several cases of interest in machine learning. Moreover, the deterministic equivalent recovers as particular cases the asymptotic results derived under proportional $n,p=\Theta(d)$ \citep{Mei2022, Loureiro2022, pmlr-v202-schroder23a} and polynomial $n,p=\Theta(d^{\kappa})$ \citep{NEURIPS2022_1d3591b6, hu2024asymptotics, aguirrelopez2024random} scaling. 
    \item The bounds depend on $\lambda^{-1}$ and $\lambda_{>\evn}^{-1}$. Following similar arguments as in \cite{cheng2022dimension,misiakiewicz2024nonasymptotic}, this assumption could be removed at the cost of a lengthier analysis and worse rates $n^{-C} +p^{-C}$ with $C<1/2$.
\end{enumerate}

\begin{figure}
\centering
\includegraphics[width = \linewidth]{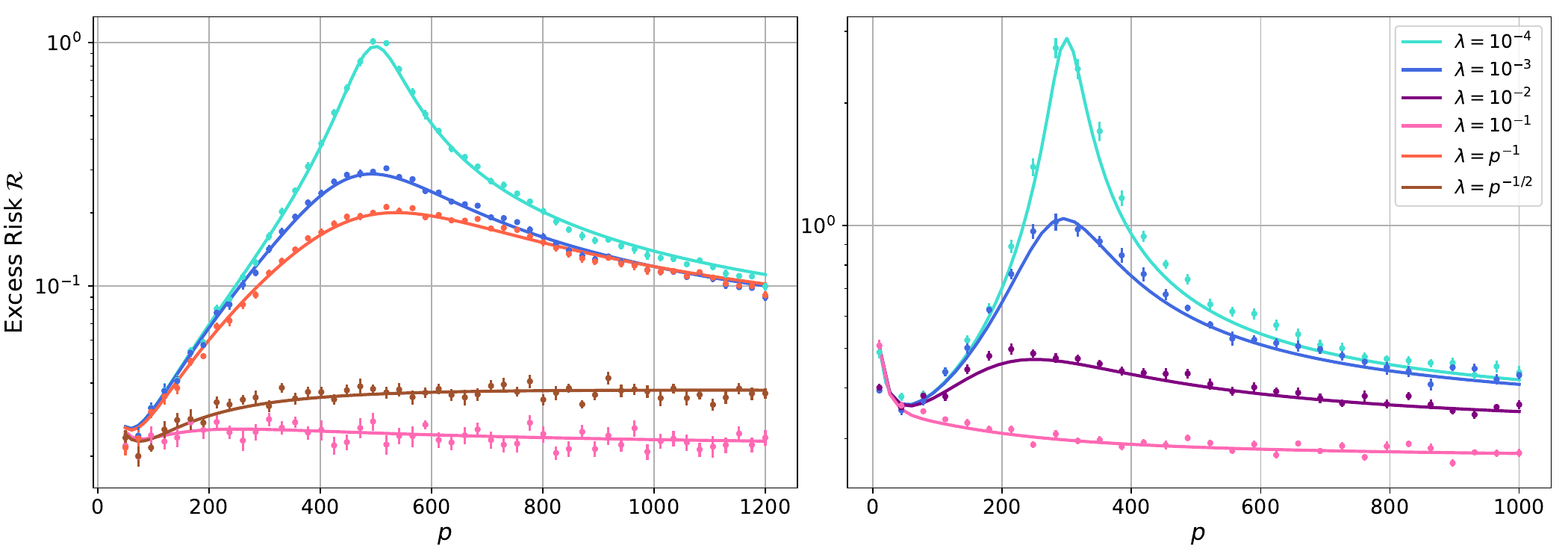}
\caption{Excess risk \cref{eq:def:risk} of RFRR as a function of the number of features $p$ for a fixed number of samples $n$. Solid lines are obtained from the deterministic equivalent in \Cref{thm:main_test_error_RFRR}, and points are numerical simulations, with the different curves denoting different regularization strengths $\lambda\geq 0$. (\textbf{Left}) Training data $(\bx_{i},y_{i})_{i\in[n]}$, $n = 500$, sampled from a teacher-student model $y_{i}=\erf(\langle \bbeta, \bx_i\rangle) +\varepsilon_i$, $\noisestd^2=0.1$, $\bx_{i}\sim_{\text{i.i.d.}}\mathcal{N}(0,\bI_{d})$, \ with a spiked random feature map $\activation(\bx, \bw) = \operatorname{tanh}(\langle \bw + u\bv,\bx\rangle)$ where $\bv\in\mathbb{R}^{d}$ has a fixed overlap $\gamma = \langle \bv, \bbeta\rangle$ with the teacher vector, $\bw\sim\cN(0,d^{-1}\bI_d)$, $u\sim\cN(0,1)$. (\textbf{Right}) Training data $(\bx_{i}, y_{i})_{i\in[n]}$, $n = 300$, sub-sampled from the FashionMNIST data set \citep{xiao2017fashionmnist}, with feature map given by $\activation(\bx;\bw) = \erf(\langle \bw, \bx\rangle)$ and $\probw=\mathcal{N}(0,d^{-1}\bI_{d})$.}
\label{fig:examples}
\end{figure}
\Cref{fig:examples} illustrates \Cref{thm:main_test_error_RFRR} in two different settings with real and synthetic data. On the left, we show the population risk of learning a single-index target function with a spiked random features model. This model was recently shown to be equivalent to the first-step of training in a fully-connected two-layer network \citep{Ba2022}, and it was recently studied by several authors \citep{moniri2024theory, cui2024asymptotics, wang2024nonlinear}. On the right, we apply our formulas directly to a real data set. In both cases, the theoretical curves show excellent agreement with the numerical simulations. In \Cref{app:numerics} we present additional plots, together with a discussion of how these plots were generated.
\paragraph{Particular limits ---} We now discuss some particular limits of interest of the deterministic equivalent \cref{eq:def_risk_equivalent}. First, note that at the interpolation threshold $n=p$, we have $1 - \Upsilon (\nu_1,\nu_2) \sim \sqrt{\lambda}$. Therefore, the risk $\sR_{n,p} \sim \lambda^{-1/2}$ diverges as $\lambda\to 0^{+}$, a well-known behaviour known as the \emph{interpolation peak} in the random feature literature \citep{Hastie2022, Mei2022, Gerace2021} and observed in neural networks \citep{Spigler2019, Nakkiran2021}. 

Another limit of interest is $p\to\infty$ where, in the generic case, the features span an infinite-dimensional RKHS. Typically, the resulting kernel will be universal, implying it can approximate any function in $\Ltwo(\probx)$. In this limit, the risk bottleneck is given by the finite amount of data $n$. 
\begin{corollary}[Kernel limit] 
\label{thm:kernel}
In the $p\to\infty$ limit both $\nu_1$ and $\nu_2$ converge to a single $\nu_\sK$ which is the unique positive solution to the following self-consistent equation
\begin{align}
n - \frac{\lambda}{\nu_\sK} = \Tr \big(\bSigma ( \bSigma + \nu_\sK )^{-1} \big).
\end{align}
Moreover, the bias \cref{eq:def_bias_equivalent} and variance \cref{eq:def_variance_equivalent} terms simplify to:
\begin{align}  
\label{eq:def_biasvar_equivalent_K}
\sB_{\sK,n} (\bbeta_*, \lambda) = \frac{\nu_{\sK}^2 \< \bbeta_*, (\bSigma +\nu_\sK)^{-2} \bbeta_* \>}{1 - \frac{1}{n} \Tr(\bSigma^2 (\bSigma+\nu_{\sK})^{-2})}, && \sV_{\sK,n} (\lambda) = \noisestd^{2} \frac{\Tr(\bSigma^2 (\bSigma+\nu_{\sK})^{-2})}{n - \Tr(\bSigma^2 (\bSigma+\nu_{\sK})^{-2})}.
\end{align}
We denote the corresponding test error $\sR_{\sK,n} (\bbeta_*, \lambda) =  \sB_{\sK,n} (\bbeta_*, \lambda) + \sV_{\sK,n} (\lambda)$.
\end{corollary}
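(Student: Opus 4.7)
The plan is to take the limit $p \to \infty$ directly in the self-consistent equations \eqref{eq:def:nu2}--\eqref{eq:def:nu1} defining $\nu_1, \nu_2$, and in the closed-form expressions \eqref{eq:def:Upsilon}--\eqref{eq:def:chi} for $\Upsilon$ and $\chi$. The key observation is that because $\bSigma$ is trace-class, the quantities $\Tr(\bSigma(\bSigma+\nu)^{-1})$, $\Tr(\bSigma(\bSigma+\nu)^{-2})$ and $\Tr(\bSigma^2(\bSigma+\nu)^{-2})$ remain bounded as $p\to\infty$ for $\nu$ bounded away from $0$; hence any factor of $1/p$ multiplying such a trace vanishes. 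Everything then reduces to Taylor-expanding the square root in \eqref{eq:def:nu2} and \eqref{eq:def:nu1} to first order in $1/p$.

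First I would handle the fixed-point equation for $\nu_2$. Setting $s := \sqrt{(1-n/p)^2 + 4\lambda/(p\nu_2)}$ and writing $s^2-1 = -2n/p + (n/p)^2 + 4\lambda/(p\nu_2)$, one obtains $s = 1 - n/p + 2\lambda/(p\nu_2) + O(1/p^2)$ provided $\nu_2$ stays in a compact subset of $(0,\infty)$. Substituting into \eqref{eq:def:nu2} and multiplying by $p/2$,
\begin{align*}
n - \frac{\lambda}{\nu_2} + O(1/p) \;=\; \Tr\!\big(\bSigma(\bSigma+\nu_2)^{-1}\big),
\end{align*}
so any limit point of $\nu_2$ satisfies the stated kernel self-consistent equation; uniqueness follows as both sides are strictly monotone in $\nu$. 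The same expansion gives
\begin{align*}
\nu_1 \;=\; \frac{\nu_2}{2}\!\left[(1-n/p) + s\right] \;=\; \nu_2(1-n/p) + \frac{\lambda}{p(1-n/p)} + O(1/p^2) \;\longrightarrow\; \nu_\sK,
\end{align*}
establishing that $\nu_1 = \nu_2 + O(1/p)$ and both converge to $\nu_\sK$.

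Next I would simplify $\Upsilon$ and $\chi$. Since $1 - \nu_1/\nu_2 = O(1/p)$, the first summand in \eqref{eq:def:Upsilon} satisfies $(p/n)(1 - \nu_1/\nu_2)^2 = O(1/(np)) \to 0$. In the second summand, $(\nu_1/\nu_2)^2 \to 1$ and, because $\Tr(\bSigma^2(\bSigma+\nu_2)^{-2})$ is bounded uniformly in $p$,
\begin{align*}
\frac{p}{n}\cdot\frac{\Tr(\bSigma^2(\bSigma+\nu_2)^{-2})}{p-\Tr(\bSigma^2(\bSigma+\nu_2)^{-2})} \;\longrightarrow\; \frac{1}{n}\Tr\!\big(\bSigma^2(\bSigma+\nu_\sK)^{-2}\big).
\end{align*}
Similarly $\chi(\nu_2) = \Tr(\bSigma(\bSigma+\nu_2)^{-2})/(p - \Tr(\bSigma^2(\bSigma+\nu_2)^{-2})) = O(1/p) \to 0$. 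Feeding these limits into \eqref{eq:def_bias_equivalent} and \eqref{eq:def_variance_equivalent}, the cross-term $\chi(\nu_2)\langle\bbeta_*,\bSigma(\bSigma+\nu_2)^{-2}\bbeta_*\rangle$ vanishes (by dominated convergence, since the inner product is finite under Assumption \ref{ass:technical}), and one recovers the expressions \eqref{eq:def_biasvar_equivalent_K}.

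The only subtle point — and the main obstacle — is justifying that $\nu_2$ actually admits a limit (equivalently, stays in a compact subset of $(0,\infty)$ as $p$ grows) rather than drifting to $0$ or $\infty$. This follows by a standard compactness argument: the LHS of \eqref{eq:def:nu2} is strictly decreasing in $\nu_2$ while the RHS is strictly increasing, so the unique positive root $\nu_2(p)$ varies continuously and monotonically in the parameters; combined with the $p\to\infty$ formal expansion above, which identifies the only consistent limit as the unique positive root $\nu_\sK$ of the kernel equation, this forces $\nu_2(p) \to \nu_\sK$. All remaining limits are then standard consequences of dominated convergence applied to the spectral sums, using the trace-class property of $\bSigma$.
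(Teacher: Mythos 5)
Your derivation is correct and coincides with the route the paper implicitly takes: Corollary \ref{thm:kernel} is stated as a direct $p\to\infty$ limit of Definition \ref{def:fixed_point_main}, and your expansion of the square root in \eqref{eq:def:nu2}--\eqref{eq:def:nu1}, giving $1-\nu_1/\nu_2=O(1/p)$, $\Upsilon\to \frac1n\Tr(\bSigma^2(\bSigma+\nu_\sK)^{-2})$ and $\chi\to 0$, is exactly the intended computation. Two small points: you have the monotonicity directions swapped (the left side of \eqref{eq:def:nu2} is increasing in $\nu_2$ and the right side decreasing), which is harmless, and the "standard compactness argument" for keeping $\nu_2(p)$ in a compact subset of $(0,\infty)$ is asserted rather than shown — monotonicity in $\nu_2$ gives existence and uniqueness of the root for each $p$ but not uniform-in-$p$ bounds. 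That gap closes in one line from the equivalent coupled form \eqref{eq:fixed_points_appendix}: the first equation gives $\nu_1\geq\lambda/n$, the second gives $\nu_2\geq\nu_1$, and since $\Tr(\bSigma(\bSigma+\nu_2)^{-1})\leq\Tr(\bSigma)/\nu_2$ a large $\nu_2$ would force $\nu_1$ close to $\lambda/n$ while $\nu_2\leq\nu_1(1-\frac1p\Tr(\bSigma(\bSigma+\nu_2)^{-1}))^{-1}$, so $\nu_2$ is bounded above uniformly in $p$; subsequential limits then all equal $\nu_\sK$ by uniqueness, as you conclude.
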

Note that \cref{eq:def_variance_equivalent} exactly agrees with the dimension-free deterministic equivalents for kernel methods in \cite{cheng2022dimension,misiakiewicz2024nonasymptotic}. Finally, the third limit of interest is the $n\to\infty$ where data is abundant. In this case, the empirical risk \cref{eq:def:erm} converge to the population risk, and therefore the bottleneck in the risk is given by the capacity of the random feature class $\cF_{\text{RF}}$ \cref{eq:def:class} to approximate the target $\targetfun$.
\begin{corollary}[Approximation limit]
\label{thm:approx}
In the $n\to\infty$ limit, we have $\nu_{1}\to 0$ and $\nu_{2}\to \nu_{\sA}$ satisfying the following simplified self-consistent equation:
\begin{align}
    p = \Tr \big( \bSigma (\bSigma + \nu_{\sA} )^{-1}\big).
\end{align}
Moreover, the bias \cref{eq:def_bias_equivalent} and variance \cref{eq:def_variance_equivalent} terms simplify to:
\begin{align}
       \sB_{\sA,p} (\bbeta_*) = \nu_{\sA} \< \bbeta_*, (\bSigma +\nu_\sA)^{-1} \bbeta_* \>, && \sV_{\sA,n} = 0.
\end{align}
We denote the risk in this case $\sR_{\sA,p}(\bbeta_*) = \sB_{\sA,p} (\bbeta_*)$, which as expected does not depend on $\lambda$.
\end{corollary}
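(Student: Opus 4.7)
The corollary is a purely algebraic consequence of passing to the limit $n \to \infty$ (with $p$, $\lambda$, and $\bSigma$ fixed) in the self-consistent equations of Definition \ref{def:fixed_point_main}, so my plan is organized around a careful asymptotic expansion of \eqref{eq:def:nu2}, followed by substitution into the deterministic-equivalent formulas.

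First, I would analyze the fixed-point equation \eqref{eq:def:nu2}. Assume tentatively that $\nu_2$ stays bounded away from $0$ and $\infty$; under this assumption, $4\lambda/(p\nu_2)$ remains $O(1)$ while $(1 - n/p)^2 \to \infty$. Expanding the square root,
\begin{equation*}
\sqrt{\left(1 - \tfrac{n}{p}\right)^2 + \tfrac{4\lambda}{p\nu_2}} \;=\; \tfrac{n}{p} - 1 + \tfrac{2\lambda/(p\nu_2)}{n/p - 1} + O(n^{-3}),
\end{equation*}
so the left-hand side of \eqref{eq:def:nu2} tends to $2$. The right-hand side is continuous and strictly decreasing in $\nu_2$ on $(0, \infty)$, with limits $2 \Tr(\bI)/p \geq 2$ (potentially $+\infty$ in the infinite-dimensional case) as $\nu_2 \to 0$ and $0$ as $\nu_2 \to \infty$; hence the limit equation $p = \Tr(\bSigma(\bSigma + \nu_\sA)^{-1})$ has a unique positive solution $\nu_\sA$, which by monotonicity must be the limit of $\nu_2$. (A short argument by monotone continuity of the fixed point in the parameter $n$ rigorously justifies the tentative boundedness.) Substituting the same expansion into \eqref{eq:def:nu1} gives
\begin{equation*}
\nu_1 \;=\; \tfrac{\nu_2}{2}\left[-(n/p - 1) + (n/p - 1) + \tfrac{2\lambda/(p\nu_2)}{n/p - 1} + O(n^{-3})\right] \;=\; \tfrac{\lambda}{n - p} + o(n^{-1}),
\end{equation*}
so $\nu_1 \to 0$ and in particular $\nu_1/\nu_2 \to 0$.

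Next I would plug these limits into the shorthand $\Upsilon$ and $\chi$. From \eqref{eq:def:Upsilon}, the bracket converges to $1$ and the prefactor $p/n \to 0$, so $\Upsilon(\nu_1, \nu_2) \to 0$. This immediately yields $\sV_{n,p}(\lambda) \to 0$ via \eqref{eq:def_variance_equivalent}. For $\chi$, the key identity is that differentiating $p = \Tr(\bSigma(\bSigma + \nu_\sA)^{-1})$ or, equivalently, writing $\mathbf{I} - \bSigma(\bSigma + \nu_\sA)^{-1} = \nu_\sA (\bSigma + \nu_\sA)^{-1}$ gives the telescoping
\begin{equation*}
p - \Tr(\bSigma^2 (\bSigma + \nu_\sA)^{-2}) \;=\; \Tr\!\big(\bSigma (\bSigma + \nu_\sA)^{-1} [\mathbf{I} - \bSigma (\bSigma + \nu_\sA)^{-1}]\big) \;=\; \nu_\sA \Tr\!\big(\bSigma (\bSigma + \nu_\sA)^{-2}\big),
\end{equation*}
so $\chi(\nu_\sA) = 1/\nu_\sA$. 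Plugging into \eqref{eq:def_bias_equivalent} with $1 - \Upsilon \to 1$,
\begin{equation*}
\sB_{\sA,p}(\bbeta_*) \;=\; \nu_\sA^2 \<\bbeta_*, (\bSigma + \nu_\sA)^{-2}\bbeta_*\> + \nu_\sA \<\bbeta_*, \bSigma (\bSigma + \nu_\sA)^{-2}\bbeta_*\> \;=\; \nu_\sA \<\bbeta_*, (\bSigma + \nu_\sA)^{-1}\bbeta_*\>,
\end{equation*}
which is exactly the stated expression. The independence from $\lambda$ follows since $\nu_\sA$ is determined solely by $p$ and $\bSigma$.

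The calculation is essentially routine once the $n \to \infty$ expansion of the square root is set up. The only subtle point is justifying that $\nu_2$ does not drift to $0$ or $\infty$ along the sequence; I would handle this by observing that the map $\nu_2 \mapsto$ (LHS $-$ RHS of \eqref{eq:def:nu2}) is monotone in $\nu_2$ for each fixed $n$, which, combined with the fact that the limiting equation has a unique positive root, is enough to conclude convergence by a standard monotone-continuity argument. No other obstacles are expected.
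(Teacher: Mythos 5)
Your proposal is correct and follows essentially the argument the paper intends (the corollary is stated without a separate written proof): pass to the limit $n\to\infty$ in the fixed-point equations, obtain $\nu_1\to 0$ and $\nu_2\to\nu_\sA$ with $p=\Tr\big(\bSigma(\bSigma+\nu_\sA)^{-1}\big)$, then use the identity $p-\Tr\big(\bSigma^2(\bSigma+\nu_\sA)^{-2}\big)=\nu_\sA\Tr\big(\bSigma(\bSigma+\nu_\sA)^{-2}\big)$ to get $\chi(\nu_\sA)=1/\nu_\sA$, $\Upsilon\to 0$, and the collapsed bias $\nu_\sA\<\bbeta_*,(\bSigma+\nu_\sA)^{-1}\bbeta_*\>$. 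The only cosmetic difference is that working from the equivalent coupled form \eqref{eq:fixed_points_appendix} in the appendix yields $\nu_1\le \lambda/(n-p)\to 0$ and the limiting equation for $\nu_2$ immediately, sparing both the square-root expansion and the a priori boundedness discussion you handle via monotonicity.
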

\section{Scaling laws}
\label{sec:scaling}
Our exact characterization of the excess risk in Theorem \ref{thm:main_test_error_RFRR} shows that the bottleneck in the model performance stems either from its approximation capacity (as measured by the ``width'' $p$) and the availability of data (as measured by the number of samples $n$). In other words, for a fixed data budget $n$, increasing $p$ might not improve the error besides a certain point, yielding a waste of computational resources. This raises an important question: \emph{given a fixed data budged $n$, what is the optimal choice of model size $p_{\star}$?} 

\paragraph{Context --- } This is a fundamental question in the random feature literature, and was investigated already in the pioneering works of \cite{Rahimi2007, Rahimi2008}, who showed that to achieve an excess risk of $O(n^{-1/2})$ requires at most $p=O(n)$ features. This upper bound was considerably refined by \cite{Rudi2017} under classical power law scaling assumptions, also known as \emph{source} and \emph{capacity} conditions in the kernel literature:
\begin{align}
\label{eq:def:sourcecapacity}
    \Tr \bSigma^{\nicefrac{1}{\alpha}}<\infty, && ||\bSigma^{-r}\teacher||_2<\infty.
\end{align}
where $\alpha \in (1,\infty)$ and $r\in (0,\infty)$, with the case $r=\sfrac{1}{2}$ corresponding to $f_{\star}$ belonging to the RKHS of the asymptotic random feature kernel \cref{eq:def:kernel}. The optimal minmax rate $O\left(n^{-\frac{2\alpha r}{2\alpha r+1}}\right)$ for ridge regression under source and capacity conditions were obtained by \cite{caponnetto2007optimal}. \cite{Rudi2017} showed that this optimal rate can be attained by the random feature hypothesis \cref{eq:def:class} with $p>p_{0}=O\left(n^{\frac{\alpha-1+2r}{1+2\alpha r}}\right)$ features. 
However, this is only an upper bound, and understanding how tight it is, as well as the full picture in the hard regime $r\in(0,\sfrac{1}{2})$, remains an open question. In this section, we leverage our tight characterization of the excess risk in Theorem \ref{thm:main_test_error_RFRR} to provide a sharp answer to this question. 

\paragraph{Results ---} Without loss of generality, we can assume the covariance is a diagonal matrix $\bSigma = {\rm diag}(\xi_{k}^2)_{k\geq 1}$, and we consider the case where the exponents exactly saturate the source and capacity conditions \cref{eq:def:sourcecapacity}:
\begin{align}\label{def:sourcecapacity}
    \xi_k^2 = k^{-\alpha}, && \beta_{*,k} = k^{-\frac{1+2\alpha r}{2}}.
\end{align}
Further, we assume a relative scaling of the number of features $p$ and the regularization $\lambda$ with the number of samples $n$:
\begin{align}
\label{def:feat_reg_powerlaw}
\features = n^q, && \lambda = n^{-(\ell-1)}.
\end{align}
with $q\geq 0$ and $\ell\geq 0$.

\begin{theorem}[Excess risk rates]
\label{thm:rates:risk}
Under source and capacity conditions \cref{def:sourcecapacity} and scaling assumptions \cref{def:feat_reg_powerlaw}, the deterministic equivalent \cref{eq:def_risk_equivalent} rate is given by:
\begin{align}
\label{eq:rates:risk}
 \sR_{n,p} (\bbeta_*,\lambda) = \Theta\left(n^{-\gamma_{\cB}(\ell,q)}+\noisestd^2n^{-\gamma_{\cV}(\ell,q)}\right) = \Theta\left(n^{-\gamma(\ell,q)}\right),
\end{align}
where $\gamma(\ell, q) \coloneqq \gamma_{\cB}(\ell, q) \wedge \gamma_{\cV}(\ell, q)$ for non-zero noise variance $\noisestd^2\neq0$, otherwise $\gamma(\ell,q)=\gamma_{\cB}(\ell,q)$. The exponents $\gamma_{\cB}$ and $\gamma_{\cV}$ are respectively the decay rates of the bias and variance terms \cref{eq:def_bias_equivalent,eq:def_variance_equivalent}, and are explicitly given by
\begin{align}
\label{eq:def:rates}
    \gamma_{\cB} &\coloneqq  \left[2\alpha\left(\frac{\ell}{\alpha}\wedge q\wedge1\right) (r\wedge 1)\right]\wedge \left[\left(2\alpha\left(r\wedge\frac{1}{2}\right)-1\right)\left(\frac{\ell}{\alpha}\wedge q\wedge1\right) + q\right],\\
    \gamma_{\cV} &\coloneqq 1 - \left(\frac{\ell}{\alpha}\wedge q\wedge1\right).\label{eq:rates:gammaV}
\end{align}
\end{theorem}
\begin{remark} \label{remark:rates}
    Under the scaling in \cref{def:sourcecapacity,def:feat_reg_powerlaw}, one can check that the approximation rates $\cE(n,p)$ in Theorem \ref{thm:main_test_error_RFRR} are vanishing for $\ell \leq \alpha + \nicefrac{1}{12}$ if $q\geq 1$, and for $\ell \leq q((\alpha + \sfrac{1}{16})\vee\sfrac{1}{16(\alpha-1)})$ if $q < 1$, which includes the optimal vertical line $\ell = \ell_*$. Hence, for these regions of scaling, \Cref{thm:main_test_error_RFRR} readily implies that the excess risk \cref{eq:def:risk} indeed has the decay rates described in Theorem \ref{thm:rates:risk}. As discussed in the previous section, we expect that these approximation guarantees can be improved to include a larger region of decay rates, but we leave it to future work.
\end{remark}
A detailed derivation of the result above from the deterministic equivalent characterization from Theorem \ref{thm:main_test_error_RFRR} is discussed in \Cref{app:rates}. The expressions in \cref{eq:def:rates} are easier to visualise in a diagram. \Cref{fig:rates_rlarge} shows the excess risk exponent $\gamma(\ell, q)$ as a function of the parameters $\ell$ and $q$, in the case where $\noisestd^2\neq0$ for $r\geq\nicefrac{1}{2}$ (left) and $r< \nicefrac{1}{2}$ (right). Note that the key difference between the diagrams is the presence of an additional region for $r\geq \sfrac{1}{2}$.\footnote{Recall that these two cases correspond to the target function $\targetfun$ belonging ($r\geq \sfrac{1}{2}$) or not ($r<\sfrac{1}{2}$) to the RKHS spanned by the asymptotic kernel} Defining the following shorthand:
\begin{align}
\label{eq:rates:crossover}
    \ell_{\star} \coloneqq \frac{\alpha}{2\alpha(r\wedge1)+1},\qquad q_{\star} \coloneqq 1-\ell_{\star}(2r \wedge 1), \qquad \hat{q} \coloneqq \frac{1}{\alpha(2r\wedge1)+1} = q_{\star} \vee \frac{1}{\alpha+1}
\end{align}
\begin{figure}
\centering
\begin{tikzpicture}[scale=1.1]
  \def\axY{5}
  \def\axX{5}
  \def\redY{1.1*\axY/2}
  \def\redX{1.1*\axX*3/4} 
  \def\slope{0.666666}
  \def\qst{\axY*2.4/5}
  \def\lst{\axX*4.9/16}
  \def\lplus{\axX*7.8/16}
  \def\qplus{\lplus*\slope} 
  \draw[->] (-0.2,0) -- (\axX,0) node[right] {$\ell$};
  \draw[->] (0,-0.2) -- (0,\axY) node[above] {$q$};
  
  \draw[line width=0.7pt] (\redX,\redY) -- (\axX,\redY); 
  \draw[line width=0.7pt] (\redX,\redY) -- (\redX,\axY); 
  \draw[line width=0.7 pt] (0,0) -- (\redX,\redY); 

  \draw[line width=0.7pt] (0,0) -- (\lst, \qst); 
  \draw[line width=0.7pt] (\lst, \qst) -- (\lplus, \qplus); 

  \draw[line width=0.7pt] (\lplus, \qplus) -- (\axX,\qplus); 
  \draw[red, line width=2pt] (\lst, \qst) -- (\lst, \axY); 
  
   \fill[red, opacity=0.5] (\axX, \redY) -- (\axX,\axY) -- (\redX,\axY) -- (\redX,\redY) -- cycle;
  \fill[brown, opacity=0.5] (\axX,\redY) -- (\redX,\redY) --(\lplus, \qplus) -- (\axX,\qplus) -- cycle;
  \fill[orange, opacity=0.5] (\redX,\redY) --(\lplus, \qplus) -- (\lst, \qst) -- (\lst, \axY) -- (\redX, \axY) -- cycle;
  \fill[teal, opacity=0.5] (\lplus, \qplus) -- (\axX,\qplus) -- (\axX,0) -- (0,0) -- cycle;
  \fill[cyan, opacity=0.5] (\lst, \qst) -- (0,0) -- (0,\axY) -- (\lst,\axY) -- cycle;
  \fill[Emerald, opacity=0.5] (\lplus, \qplus) -- (\lst, \qst) -- (0,0) -- cycle;
  
  \node at (\axX*7.3/8,\axY*3.5/5) {$0$};
 
  \node at (1.1*\axX/2,\axY*3.5/5) {$1-\frac{\ell}{\alpha}$};
  \node at (\axX*1.2/8,\axY*3.5/5) {$2\ell(r\wedge1)$};
  
  \node at (\axX*1.66/5,\qplus) {\small $\frac{\alpha-1}{\alpha}\ell + q$};
  \node at (\axX*6.5/8,1.2*\axY/3) {$1-q$};
  \node at (\axX*5.8/8,\axY/7) {$\alpha q$};
  \draw (\redX,1pt) -- (\redX,-1pt) node[anchor=north] {$\alpha$};
  \draw (\lst,1pt) -- (\lst,-1pt) node[anchor=north] {$\ell_*$};
  \draw (1pt,\redY) -- (-1pt,\redY) node[anchor=east] {1};
  \draw (1pt,\qst) -- (-1pt,\qst) node[anchor=east] {$q_*$};
  \draw (1pt,\qplus) -- (-1pt,\qplus) node[anchor=east] {$\hat{q}$};
  \filldraw[red] (\lst, \qst) circle (2pt) ;
\end{tikzpicture}
\begin{tikzpicture}[scale=1.1]
  \def\axY{5}
  \def\axX{5}
  \def\redY{\axY*3/5}
  \def\redX{\axX*3/5} 
  \def\qst{\axY*1.4/5}
  \def\lst{\axX*1.4/5}
  \draw[->] (-0.2,0) -- (\axX,0) node[right] {$\ell$};
  \draw[->] (0,-0.2) -- (0,\axY) node[above] {$q$};
  
  \draw[line width=0.7pt] (\redX,\redY) -- (\axX,\redY);
  \draw[line width=0.7pt] (\redX,\redY) -- (\redX,\axY);
  \draw[line width=0.7 pt] (0,0) -- (\redX,\redY);
  \draw[red, line width=2 pt] (\lst,\qst) -- (\lst,\axY);
  \draw[red, line width=2 pt] (\lst,\qst) -- (\axY,\qst);
  \fill[red, opacity=0.5] (\redX,\redY) -- (\axX,\redY) -- (\axX,\axY) -- (\redX,\axY) -- cycle;
  \fill[brown, opacity=0.5] (\lst,\qst) -- (\axX,\qst) -- (\axX,\redY) -- (\redX,\redY) -- cycle;
  \fill[orange, opacity=0.5] (\lst,\qst) -- (\lst,\axY) -- (\redX,\axY) -- (\redX,\redY) -- cycle;
  \fill[teal, opacity=0.5] (\lst,\qst) -- (\axX,\qst) -- (\axY,0) -- (0,0) -- cycle;
  \fill[cyan, opacity=0.5] (\lst,\qst) -- (\qst,\axY) -- (0,\axY) -- (0,0) -- cycle;
  
  \node at (4,4) {0};
  \node at (3.5,2.2) {$1-q$};
  \node at (2.2,3.5) {$1-\frac{\ell}{\alpha}$};
  \node at (3,0.7) {$2\alpha qr$};
  \node at (0.7,3) {$2\ell r$};

  \draw (\redX,1pt) -- (\redX,-1pt) node[anchor=north] {$\alpha$};
  \draw (\lst,1pt) -- (\lst,-1pt) node[anchor=north] {$\ell_*$};
  \draw (1pt,\redY) -- (-1pt,\redY) node[anchor=east] {1};
  \draw (1pt,\qst) -- (-1pt, \qst) node[anchor=east] {$q_*$};

  \filldraw[red] (\lst, \qst) circle (2pt) ;
\end{tikzpicture}
\caption{Excess error rate $\gamma$ in the regime $n \gg \noisestd ^{-\nicefrac{1}{(\gamma_{\cB}(\ell, q) - \gamma_{\cV}(\ell, q))}}$ as a function of $(\ell, q)$, defined in \cref{eq:rates:risk} and \cref{def:feat_reg_powerlaw} for $r\geq \sfrac{1}{2}$ ({\bf Left}) and $r\in [0,\sfrac{1}{2})$ ({\bf Right}). The explicit crossover points $\ell_{\star}, q_{\star}, \hat{q}$ are defined in \cref{eq:rates:crossover} as a function of the source $r$ and capacity $\alpha$ exponents. }
\label{fig:rates_rlarge}
\end{figure}
\noindent we can identify two main regions in the $(\ell, q)$ plane, corresponding to a trade-off between the bias $\gamma_{\cB}$ and variance $\gamma_{\cV}$ terms:
\begin{enumerate}[label=(\alph*)]
    \item \textbf{Variance dominated region} ($\gamma_{\cV}<\gamma_{\cB}$): if $\ell > \ell_{\star}$, $q > \hat{q}$ and $p > \lambda$, the excess risk is dominated by the variance term, provided the number of samples is large enough $n \gg \noisestd ^{-\nicefrac{1}{(\gamma_{\cB}(\ell, q) - \gamma_{\cV}(\ell, q))}}$.\footnote{The noise dominated regime where $n < \noisestd ^{-\nicefrac{1}{(\gamma_{\cB}(\ell, q) - \gamma_{\cV}(\ell, q))}}$ and the corresponding cross-over was studied by \cite{Cui2022}. A similar phenomenology hold here, but for simplicity we focus the discussion on the data dominated regime.} Inside this region it is possible to further distinguish between two regimes:
    \begin{itemize}
        \item \textbf{slow decay regime} ({\color{orange}orange} and {\color{brown}brown}): for $\ell < \alpha$ and $q < 1$ ($p\ll n$), $\gamma_{\cV} = 1 - (\nicefrac{\ell}{\alpha}\wedge q)$, hence the decay depends on the interplay between regularization strength and number of random features and it is slower as $(\nicefrac{\ell}{\alpha}\wedge q)$ increases;
        \item \textbf{plateau regime} ({\color{red}red}): for $\ell \geq \alpha$ and $q\geq1$ ($p \geq n$) the excess risk converges to a constant value and does not decay as $\samples$ increases.
    \end{itemize}

    \item \textbf{Bias dominated region} ($\gamma_{\cV}>\gamma_{\cB}$): if $\ell < \ell_{\star}$, $q < \hat{q}$ and $p < \lambda$, the excess risk is dominated by the bias term, whose decay is faster as $(\nicefrac{\ell}{\alpha}\wedge q)$ increases ({\color{cyan}cyan}, {\color{Emerald}emerald} and {\color{teal}teal}).
\end{enumerate}
Note that in the limit of large number of random features $p\to\infty$, we recover the same rates found by \cite{Cui2022} for kernel ridge regression. Of particular interest is the rate for which the excess risk decays the fastest with the number of samples $n$, and what is the minimum number of random features $p_{\star}$ required to achieve this rate.
\begin{corollary}[Optimal rates]
\label{thm:rates:optimal}
The optimal excess risk rate achieved by the random features hypothesis \cref{eq:def:class} under source and capacity conditions \cref{def:sourcecapacity} and scaling assumptions \cref{def:feat_reg_powerlaw}:
\begin{align}
\label{eq:rate:optimal}
    \gamma_{\star} = \max_{\ell, q} \gamma(\ell, q) = \frac{2\alpha (r\wedge 1)}{2\alpha (r \wedge 1) + 1},
\end{align}
and it is attained for:
\begin{align}
&\begin{cases}
    \lambda = \lambda_{\star} \coloneqq n^{-(\ell_{\star} - 1)} \\
    p \geq p_{\star} \coloneqq n^{q_{*}} = \lambda_{\star} 
\end{cases}&& \text{for } r\geq\sfrac{1}{2}, \\
&\begin{cases}
    \lambda = \lambda_{\star} \\
    p \geq p_{\star} = \left(\lambda_{\star}^{-1} n\right)^{\sfrac{1}{\alpha}}
\end{cases} \text{or } 
\begin{cases}
    \lambda \leq \lambda_{\star}\\
    p = p_{\star} = \left(\lambda_{\star}^{-1} n\right)^{\sfrac{1}{\alpha}}
\end{cases} && \text{for } r < \sfrac{1}{2}
\end{align}
corresponding to the bold red line ({\color{red}\bf ---}) in Fig.~\ref{fig:rates_rlarge}. In particular, the minimal number of random features $p_{\star} = n^{q_{\star}}$ required to achieve the optimal rate $\gamma_{\star}$ is given by:
\begin{align}
\label{eq:optimal:q}
    q_{\star} = 1-\frac{\alpha(2r\wedge 1)}{2\alpha(r\wedge1)+1}
\end{align}
and corresponds to the bold red dot ({\color{red}$\bullet$}) in Fig.~\ref{fig:rates_rlarge}.
\end{corollary}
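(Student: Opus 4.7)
The plan is to directly maximize $\gamma(\ell,q) = \gamma_{\cB}(\ell,q) \wedge \gamma_{\cV}(\ell,q)$ over $(\ell, q)$ using the closed-form expressions from Corollary \ref{thm:rates:risk}. The key reformulation is to introduce the variable $t := \frac{\ell}{\alpha} \wedge q \wedge 1 \in [0,1]$: both exponents depend on $(\ell, q)$ only through $(t, q)$, with $\ell$ entering solely via $t$ and the constraint $\ell \geq \alpha t$. Under this change of variable, $\gamma_{\cV} = 1 - t$ while
\begin{align*}
\gamma_{\cB}(t, q) = \min\bigl( 2\alpha (r\wedge 1)\, t, \ (2\alpha(r\wedge\tfrac{1}{2}) - 1)\, t + q\bigr),
\end{align*}
over the feasible set $\{(t, q) : 0 \le t \le \min(1, q)\}$.

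Next I would perform a two-stage optimization. For fixed $t$, the first argument of $\gamma_{\cB}$ is a $q$-independent ceiling, while the second argument is linear in $q$. Thus raising $q$ above the threshold $q_{\text{th}}(t) := \bigl[2\alpha(r\wedge 1) - 2\alpha(r\wedge\tfrac{1}{2}) + 1\bigr]\, t$ no longer improves $\gamma_{\cB}$. Once $q \geq q_{\text{th}}(t)$, the problem collapses to the one-dimensional max-min $\max_{t \in [0,1]} \min\bigl(2\alpha(r\wedge 1)\, t,\, 1 - t\bigr)$, whose unique solution equates the two arguments:
\begin{align*}
t_\star = \frac{1}{2\alpha(r\wedge 1) + 1}, \qquad \gamma_\star = \frac{2\alpha(r\wedge 1)}{2\alpha(r\wedge 1) + 1},
\end{align*}
so $\ell_\star = \alpha t_\star$ in agreement with \cref{eq:rates:crossover}.

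It then remains to identify the set of $(\ell, q)$ realising this optimum by substituting $t_\star$ into $q_{\text{th}}$. For $r \geq 1/2$ one has $(r \wedge \tfrac{1}{2}) = 1/2$, giving $q_{\text{th}}(t_\star) = (2\alpha(r \wedge 1) - \alpha + 1)\, t_\star = 1 - \ell_\star = q_\star$, which matches $1 - \ell_\star(2r \wedge 1)$ since $2r \wedge 1 = 1$. Achieving $\gamma_\star$ therefore requires the sharp choice $\lambda = \lambda_\star$ (i.e.~$\ell = \ell_\star$) together with $p \geq p_\star = n^{q_\star}$. For $r < 1/2$ the identity $(r \wedge \tfrac{1}{2}) = r$ yields the algebraic simplification $(2\alpha r - 1) t + q \geq 2\alpha r\, t$ whenever $t \leq q$, so on the feasible set $\gamma_{\cB}$ collapses to $2\alpha r\, t$ independently of $q$. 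The optimum is then attained on the entire locus $\{\ell = \ell_\star,\, q \geq t_\star\} \cup \{\ell \geq \ell_\star,\, q = t_\star\}$; translating through $\lambda = n^{-(\ell - 1)}$ and $p = n^q$ (see \cref{def:feat_reg_powerlaw}) gives exactly the two parametrizations listed in the corollary, with $p_\star = n^{t_\star} = (\lambda_\star^{-1} n)^{1/\alpha}$.

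The analysis itself is a two-variable piecewise linear max-min problem and poses no deep technical obstacle; the remark following Corollary \ref{thm:rates:risk} already certifies that the approximation error $\cE(n,p)$ from Theorem \ref{thm:main_test_error_RFRR} vanishes along the optimal line $\ell = \ell_\star$, so the rate derived for $\sR_{n,p}$ transfers to the true excess risk. The main effort will therefore be bookkeeping: tracking which of $\tfrac{\ell}{\alpha}, q, 1$ is active in each subregion of \Cref{fig:rates_rlarge}, and handling the three sub-cases $r \geq 1$, $1/2 \leq r < 1$, and $r < 1/2$ uniformly via the min operators so as to obtain the compact expressions in \cref{eq:rate:optimal} and \cref{eq:optimal:q}.
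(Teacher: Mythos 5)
Your proposal is correct and follows essentially the same route as the paper, which obtains Corollary \ref{thm:rates:optimal} by directly optimizing the piecewise-linear exponents of Corollary \ref{thm:rates:risk} over $(\ell,q)$ (the content summarized by the phase diagram in \Cref{fig:rates_rlarge}); your reparametrization via $t=\tfrac{\ell}{\alpha}\wedge q\wedge 1$ and the resulting max-min computation reproduce exactly $\ell_\star$, $q_\star$, $\gamma_\star$ and the two achieving loci for $r\geq\sfrac{1}{2}$ and $r<\sfrac{1}{2}$. No gaps; the explicit threshold $q_{\mathrm{th}}(t)$ is just a clean way of making the paper's implicit bookkeeping precise.
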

A few comments on Corollary \ref{thm:rates:optimal} are in place. 1) The optimal excess error rate \cref{eq:rate:optimal} is consistent with the minimax optimal rates for ridge regression from \cite{caponnetto2007optimal}, as also discussed by \cite{Rudi2017}. 2) The minimal number of random features $p_{\star} = n^{q_{\star}}$ in \cref{eq:optimal:q} achieving the optimal rate \cref{eq:rate:optimal} in the $r\geq \sfrac{1}{2}$ regime is strictly smaller than the lower bound $p>p_{0}$ of \cite{Rudi2017}. More precisely, letting $p_{0} = n^{q_{0}}$, for $r\in[\sfrac{1}{2},1)$:
\begin{align}
        q_{0}-q_{\star} = \frac{2(1-r)(\alpha-1)}{2\alpha r+1} > 0, \qquad \text{for all $\alpha>1$.}
\end{align}

\begin{figure}
\centering
\includegraphics[width = \linewidth]{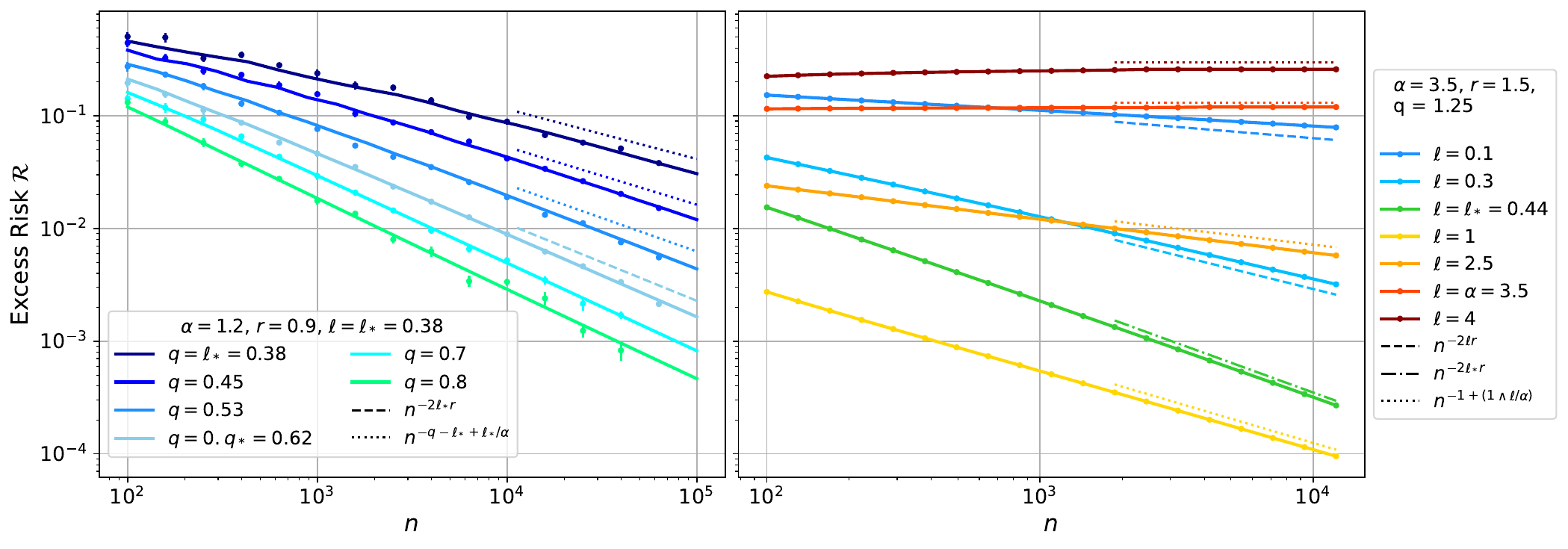}
\caption{Excess risk \cref{eq:def:risk} of RFRR as a function of the number of samples $n$ under source and capacity conditions \cref{eq:def:sourcecapacity} and power-law assumptions $\lambda = n^{-(\ell-1)}$, $p=n^{q}$, with noise variance $\sigma_\varepsilon^2 = 0.1$. Solid lines are obtained from the deterministic equivalent \Cref{thm:main_test_error_RFRR}. In the figure on the left, points are finite size numerical experiments. Dashed and dotted lines are the analytical rates from Theorem \ref{thm:rates:risk}, stated in the legend. The colour scheme corresponds to the regions of Fig. \ref{fig:rates_rlarge}. }
\label{fig:rates:examples}
\end{figure}

\paragraph{Relationship to scaling laws ---} The empirical observation that the performance of large scale neural networks decreases as a power law with respect to the number of samples, parameter and computing time has sparked a renewed wave of interest in the theoretical investigation of power laws \citep{kaplan2020scaling}. Despite being a mature topic in the statistical learning literature, different recent works have turned to the study of linear models under source and capacity conditions as a playground to understand the emergence of different bottlenecks in the excess error rates \citep{bahri2021explaining, maloney2022solvable}. 

The model studied in these works is given by ridge regression on data $y_{i}=\langle \bbeta_{\star},\bx_{i}\rangle$ with $\bx\sim\mathcal{N}(\boldsymbol{0}, {\rm diag}((\sfrac{d}{k})^{\alpha}))$  and $\bbeta_{\star}\sim\mathcal{N}(\boldsymbol{0},\sfrac{1}{d}\bI_{d})$ with a linear projection model $\hat{f}(\bx,\ba) = \langle\ba,\bW\bx\rangle$, where $\bW$ is an i.i.d. Gaussian matrix. Note this model is a particular case of the one studied here, corresponding to a linear feature map and random target function. Moreover, since the variance of the target is constant, the source is entirely determined by the capacity $\alpha$ of the asymptotic kernel, here controlled by the decay of the covariance of the input data. 

The approximation limit from Corollary \ref{thm:approx} and the kernel limit from Corollary \ref{thm:kernel} are known in this literature as \emph{Variance} and \emph{Resolution limited regimes}, respectively \citep{bahri2021explaining}. They correspond precisely to the bottlenecks in the excess risk arising from the limited approximation capacity of the random feature model or the limited availability of training data. As this model is a particular case of ours, the rates in the variance limited regime can also be obtained from Theorem \ref{thm:rates:risk}, and correspond to particular cases in Fig.~\ref{fig:rates_rlarge}, see \Cref{app:scaling} for a detailed discussion. Contemporary to our work, \cite{atanasov2024scaling} has extended the analysis in this linear model to the case where $\bbeta_{\star}$ also has a power-law decay, and provided a comprehensive discussion of the different scaling regimes for this model. Their rates can be put in a one-to-one correspondence with the rates derived in \cref{sec:scaling}. We refer the interested reader to Section VI.6 of \cite{atanasov2024scaling} for a detailed discussion of this relationship. We stress, however, that beside being rigorous, our results hold for features in infinite-dimensional Hilbert spaces  and are not restricted to a particular asymptotic limit in the dimensions. 

Complementary to the sample and model complexity bottlenecks, \cite{kaplan2020scaling} also observed the emergence of computational scaling laws in the risk as a function of flops used in training. A recent line of work has investigated this question on the aforementioned linear random feature model under different training algorithms, such as gradient flow \citep{bordelon2024dynamical} and SGD \citep{paquettescaling, lin2024scaling}. Due to the simplicity of this setting, the risk of ridge regression with a particular choice of regularization $\lambda$ is closely related to the risk of different descent algorithms for least-squares at a fixed running horizon \citep{ali2019continuous, ali2020implicit, sonthalia2024regularization}. A similar analogy allows us to compare our results to the ones obtained in \citep{paquettescaling, lin2024scaling}. In particular, our setting cover three of the phases identified by \cite{paquettescaling}, in which the compute-optimal rates of the excess risk for SGD are optimal or sub-optimal with respect to the result in Corollary \ref{thm:rates:optimal}, 
and correspond to the result in Theorem \ref{thm:rates:risk} with $\lambda = 1$ ($\ell=1$). Similarly, the rates of \cite{lin2024scaling} are obtained by taking $\lambda$ to be the inverse of the learning rate.  A detailed connection to this line of work is discussed in \Cref{app:scaling}.
\subsubsection*{Acknowledgements} 
We would like to thank Yasaman Bahri, Hugo Cui and Florent Krzakala for stimulating discussions. BL thanks Alex Atanasov and Jacob Zavatone-Veth for the car rides and the productive discussions on the relationship between our works during the ``\textit{DIMACS Workshop on Modeling Randomness in Neural Network Training}''. BL \& LD acknowledges funding from the \textit{Choose France - CNRS AI Rising Talents} program.

\bibliographystyle{plainnat}
\bibliography{bibliography}

\newpage
\appendix
\section*{\LARGE Appendix}
\section{Background on deterministic equivalents}
\label{app:background}


\label{app:background_det_equiv}

We consider a feature vector $\boldf \in \R^q$, $q \in \naturals \cup \{ \infty\}$, with covariance matrix $\bSigma = \E[\boldf \boldf^\sT]$. 
We denote $ \gamma_1^2 \geq \gamma_2^2 \geq \gamma_3^2 \geq \cdots$ the eigenvalues of $\bSigma$ in non-increasing order. In the case of infinite-dimensional features $q = \infty$, we will further assume that $\Tr( \bSigma) < \infty$, i.e., we consider $\bSigma$ to be a trace-class self-adjoint operator.

We assume that the feature $\boldf$ satisfies the following assumption.

\begin{assumption}[Feature concentration] \label{ass:concentrated}
    There exist $\sfc_*,\sfC_* >0$ such that for any p.s.d.~matrix $\bA \in \R^{q\times q}$ with $\Tr(\bSigma \bA) <\infty$, we have
\begin{equation}\label{eq:ass_concentrated_features}
        \P \left( \left| \boldf^\sT \bA \boldf - \Tr(\bSigma \bA) \right| \geq t \cdot \| \bSigma^{1/2} \bA \bSigma^{1/2} \|_F \right) \leq \sfC_* \exp \left\{ -\sfc_* t \right\} .
    \end{equation}
\end{assumption}

Recall that we denote $C_{a_1,\dots,a_k}$ constants that only depend on the values of $\{a_i\}_{i\in[k]}$. We use $a_i = `*$' to denote the dependency on the constants $\sfc_*,\sfC_*$ from Assumption \ref{ass:concentrated}.

We will further define the following two quantities.

\begin{definition}[Effective regularization]\label{def:effective_regularization}
    For an integer $n$, covariance $\bSigma$, and regularization $\lambda \geq 0$, we define the \emph{effective regularization} $\lambda_*$ associated to model $(n,\bSigma,\lambda)$ to be the unique non-negative solution to the equation
    \begin{equation}\label{eq:def_lambda_star}
        n - \frac{\lambda}{\lambda_*} = \Tr \big( \bSigma ( \bSigma + \lambda_* )^{-1} \big)  .
    \end{equation}
\end{definition}
Throughout this appendix, we assume that $\lambda >0$.
The existence and uniqueness follow from noticing that the left-hand side is monotonically increasing in $\lambda_*$ while the right-hand side is monotonically decreasing. We consider the change of variable $\mu_*:= \mu_* (\lambda) = \lambda / \lambda_*$, such that $\mu_*$ is the unique non-negative solution of 
\begin{align}\label{eq:det_equiv_fixed_point_mu_star}
\mu_* = \frac{n}{1 + \Tr(\bSigma (\mu_* \bSigma + \lambda)^{-1} )} .
\end{align}
Both $\mu_*$ and $\lambda_*$ are increasing functions with $\lambda$.

\begin{definition}[Intrinsic dimension]\label{def:int_dimension}
    For a covariance matrix $\bSigma \in \R^{q \times q}$ with eigenvalues in nonincreasing order $\gamma_1^2 \geq \gamma_2^2 \geq \gamma_3^2 \geq \cdots$, we define the \emph{intrinsic dimension} $r_{\bSigma} (k)$ at level $k\in \naturals$ of $\bSigma$ to be the intrinsic dimension of the covariance matrix $\bSigma_{\geq k} = \diag ( \gamma_k^2, \gamma_{k+1}^2, \ldots)$, i.e., the covariance matrix projected orthogonally to the top $k-1$ eigenspaces, which is given by
        \[
        r_{\bSigma} (k) := \frac{\Tr( \bSigma_{\geq k})}{\| \bSigma_{\geq k} \|_{\rm op}} = \frac{\sum_{j =k}^q \gamma^2_j}{\gamma^2_{k}}.
        \]
\end{definition}

The intrinsic dimension of $\bSigma_{\geq k}$ captures the number of dimensions of $\bSigma_{\geq k}$ that have significant spectral content, i.e., $\gamma_j^2 \approx \| \bSigma_{\geq k} \|_{\rm op}$ (see \cite[Chapter 7]{tropp2015introduction} for further background).

We are given $n$ i.i.d.~features $(\boldf_i)_{i \in [n]}$, and we denote $\bF = [ \boldf_1 , \ldots , \boldf_n ]^\sT \in \R^{n \times q}$ the feature matrix. The train and test errors are functionals of the feature matrix $\bF$. In particular, they depend on the following resolvent matrix
\[
\bR = ( \bF^\sT \bF + \lambda \id_q)^{-1} .
\]
In this section we consider functionals that depend on products of $\bF$, $\bR$ and deterministic matrices. 

For a general p.s.d.~matrix $\bA \in \R^{q\times q}$, define the functionals
\[
\begin{aligned}
    \Phi_1(\bF;\bA,\lambda) := &~ \Tr \left(\bA \bSigma^{1/2} (\bF^\sT \bF + \lambda )^{-1}  \bSigma^{1/2}  \right), \\
    \Phi_2 (\bF;\lambda) := &~  \Tr \left( \frac{\bF^\sT \bF}{n} ( \bF^\sT \bF + \lambda )^{-1} \right), \\
    \Phi_3(\bF ; \bA,\lambda) :=&~  \Tr \left(  \bA \bSigma^{1/2} ( \bF^\sT \bF + \lambda)^{-1} \bSigma ( \bF^\sT \bF + \lambda)^{-1} \bSigma^{1/2} \right) ,\\
    \Phi_4 ( \bF ; \bA,\lambda) := &~ \Tr\left( \bA \bSigma^{1/2} ( \bF^\sT \bF + \lambda)^{-1}\frac{\bF^\sT \bF}{n} ( \bF^\sT \bF + \lambda)^{-1}\bSigma^{1/2} \right)   . 
\end{aligned}
\]
These functionals are well approximated by quantities proportional to
\[
\begin{aligned}
    \Psi_1 ( \lambda_*; \bA) :=&~  \Tr \left( \bA \bSigma ( \bSigma + \lambda_* )^{-1}  \right) ,\\
     \Psi_2 ( \lambda_*) :=&~\frac{1}{n} \Tr\left( \bSigma ( \bSigma + \lambda_* )^{-1} \right) ,\\
       \Psi_3 ( \lambda_*; \bA) :=&~ \frac{1}{n} \cdot \frac{\Tr(\bA \bSigma^2 ( \bSigma + \lambda_*)^{-2})}{n- \Tr(\bSigma^2 ( \bSigma + \lambda_* )^{-2} ) }.
\end{aligned}
\]
Without loss of generality, we can assume that $\Tr(\bA \bSigma) <\infty$ for $\Phi_1$, as otherwise $\Phi_1 (\bF;\bA,\lambda)  = \Psi_1 (\mu_*; \bA , \lambda) = \infty$ almost surely, and $\Tr(\bA \bSigma^2) <\infty$ for $\Phi_3$ and $\Phi_4$, as otherwise $\Phi_j (\bF;\bA,\lambda) = \Psi_j (\mu_*; \bA,\lambda) = \infty$, $j = 3,4$, almost surely.

Our relative approximation bound will depend on the covariance matrix $\bSigma$ through
\begin{equation}\label{eq:nu_lambda}
\rho_{\lambda} (n) = 1 +  \frac{n \gamma_{\lfloor \eta_* \cdot n \rfloor}^2}{\lambda}\left\{ 1 + \frac{r_{\bSigma} (\lfloor \eta_* \cdot n \rfloor) \vee n}{n} \log \big(r_{\bSigma} (\lfloor \eta_* \cdot n \rfloor) \vee n \big) \right\},
\end{equation}
where $\eta_* \in (0,1/2)$ is a constant that will only depend on $\sfc_*,\sfC_*$ and we used the convention that $\gamma_{\lfloor \eta_* \cdot n \rfloor}^2 = 0$ if $\lfloor \eta_* \cdot n \rfloor > q$.

The following theorem gathers the approximation guarantees for the different functionals stated above, and is obtained by modifying {\cite[Theorem 4]{misiakiewicz2024nonasymptotic}}. 

\begin{theorem}[Dimension-free deterministic equivalents]\label{thm:main_det_equiv_summary}
    Assume the features $(\boldf_i)_{i\in[n]}$ satisfy Assumption \ref{ass:concentrated} with some constants $\sfc_x,\sfC_x,\beta>0$. For any $D,K>0$, there exist constants $\eta := \eta_x \in (0,1/2)$ (only depending on $\sfc_x,\sfC_x,\beta$), $C_{D,K} >0$ (only depending on $K,D$), and $C_{x,D,K}>0$ (only depending on $\sfc_x,\sfC_x,\beta,D,K$), such that the following holds. For all $n \geq C_{D,K}$ and  $\lambda >0$, if it holds that 
\begin{equation}\label{eq:conditions_det_equiv_main}
    \lambda \cdot \rho_\lambda (n) \geq \| \bSigma \|_{\rm op} \cdot n^{-K}, \qquad  \rho_\lambda (n)^{5/2} \log^{3/2} (n) \leq K \sqrt{n} ,
    \end{equation}
    then for any p.s.d.~matrix $\bA$, we have with probability at least $1 - n^{-D}$ that
    \begin{align} 
         \big\vert  \Phi_1(\bF;\bA,\lambda) - \frac{\lambda_*}{\lambda} \Psi_1 (\lambda_* ; \bA) \big\vert \leq&~ C_{x,D,K} \frac{ \rho_\lambda (n)^{5/2} \log^{3/2} (n) }{\sqrt{n}}    \cdot \frac{\lambda_*}{\lambda} \Psi_1 (\lambda_* ; \bA) , \label{eq:det_equiv_phi1_main} \\
         \big\vert \Phi_2(\bF;\lambda) - \Psi_2 (\lambda_*) \big\vert \leq&~ C_{x,D,K} \frac{\rho_\lambda (n)^{5/2} \log^{3/2} (n) }{\sqrt{n}}     \Psi_2 (\lambda_*) , \label{eq:det_equiv_phi2_main} \\
         \big\vert \Phi_3(\bF;\bA,\lambda) -  \left( \frac{n\lambda_*}{\lambda}\right)^2\Psi_3 (\mu_* ; \bA,\lambda) \big\vert \leq&~ C_{x,D,K} \frac{\rho_\lambda (n)^{6} \log^{5/2} (n) }{\sqrt{n}}     \cdot \left( \frac{n\lambda_*}{\lambda}\right)^2\Psi_3 (\mu_* ; \bA,\lambda) , \label{eq:det_equiv_phi3_main} \\
         \big\vert \Phi_4(\bF;\bA,\lambda) - \Psi_3 (\lambda_* ; \bA) \big\vert \leq&~ C_{x,D,K} \frac{ \rho_\lambda (n)^{6} \log^{3/2} (n) }{\sqrt{n}}     \Psi_3 (\lambda_* ; \bA) . \label{eq:det_equiv_phi4_main} 
    \end{align}
\end{theorem}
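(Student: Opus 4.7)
This theorem is stated as a modification of [misiakiewicz2024nonasymptotic, Theorem 4]; the proof of the cited result already works under the same concentration hypothesis (Assumption \ref{ass:concentrated}) and the same head--tail decomposition strategy that I describe below. The plan is therefore to import the bounds for $\Phi_1$, $\Phi_2$, $\Phi_3$ essentially verbatim and to derive the $\Phi_4$ bound via an elementary algebraic identity reducing it to control of a $\lambda$-derivative of $\Phi_1$. Finally, I verify that all four bounds hold on a common high-probability event.

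\textbf{Leave-one-out with a head--tail split.} Fix $m = \lfloor \eta_* n \rfloor$ for a sufficiently small constant $\eta_* \in (0,1/2)$, and decompose $\bGamma = \bGamma_{\leq m} + \bGamma_{> m}$, with the corresponding splitting of the feature matrix. The ``tail'' operator $\bX_{>m}^\sT \bX_{>m}$ has small operator norm $\gamma_{m+1}^2$ but possibly large trace; Assumption \ref{ass:concentrated} applied to the whitened tail features implies that it concentrates around $n \bGamma_{>m}$ and can be absorbed into an effective regularization via Woodbury. The ``head'' part is effectively $m$-dimensional and admits a classical leave-one-out / local-law analysis after whitening by $\bGamma_{\leq m}^{1/2}$. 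Balancing the two contributions yields the self-consistent equation \eqref{eq:def_lambda_star} and the error control $\rho_\lambda(n)$ defined in \eqref{eq:nu_lambda}, with the exponent $\rho_\lambda(n)^{5/2}$ in \eqref{eq:det_equiv_phi1_main}--\eqref{eq:det_equiv_phi2_main} arising from the standard Hanson--Wright-type fluctuation bounds applied to the leave-one-out terms.

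\textbf{Reducing $\Phi_4$ to $\Phi_1$.} The new ingredient is the identity
\begin{equation*}
    \bR \bX^\sT \bX \bR \;=\; \bR - \lambda \bR^2 \;=\; \bR + \lambda\, \partial_\lambda \bR ,
\end{equation*}
which gives $n\, \Phi_4(\bX;\bA,\lambda) = \Phi_1(\bX;\bA,\lambda) + \lambda\, \partial_\lambda \Phi_1(\bX;\bA,\lambda)$. On the deterministic side, implicit differentiation of \eqref{eq:def_lambda_star} yields $\partial_\lambda \lambda_* = 1 / \bigl( n - \Tr(\bGamma^2(\bGamma+\lambda_*)^{-2}) \bigr)$, and a short calculation checks that applying the operator $1 + \lambda \partial_\lambda$ to $(\lambda_*/\lambda)\Psi_1(\lambda_*;\bA)$ produces exactly $n\, \Psi_3(\lambda_*;\bA)$, matching \eqref{eq:det_equiv_phi4_main}. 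The functional $\Phi_3$ is handled by the same method applied to the Duhamel formula $\bR \bGamma \bR = -\partial_t (\bX^\sT \bX + \lambda + t \bGamma)^{-1}\big|_{t = 0}$, which recovers \eqref{eq:det_equiv_phi3_main}.

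\textbf{Main obstacle.} The main difficulty is that a pointwise concentration bound for $\Phi_1$ does not automatically transfer to its $\lambda$-derivative. The cleanest workaround is a two-point argument: apply \eqref{eq:det_equiv_phi1_main} at $\lambda$ and at $\lambda(1+\epsilon)$, form the finite difference, and optimize $\epsilon$. This loses additional powers of $\rho_\lambda(n)$ and explains the degraded rate $\rho_\lambda(n)^{6}$ appearing in \eqref{eq:det_equiv_phi3_main}--\eqref{eq:det_equiv_phi4_main} relative to $\rho_\lambda(n)^{5/2}$ in \eqref{eq:det_equiv_phi1_main}--\eqref{eq:det_equiv_phi2_main}. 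An alternative is to rerun the leave-one-out argument directly on $\bR^2$ via a second-order Sherman--Morrison expansion; this is more delicate but yields the same rate. Either way, the four approximation bounds hold simultaneously on the same high-probability event, completing the proof.
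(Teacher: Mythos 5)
Your plan misses the one step that the paper's proof actually carries out. As the paper notes, the statement differs from \cite[Theorem 4]{misiakiewicz2024nonasymptotic} \emph{only} in the definition of $\rho_\lambda(n)$ in \eqref{eq:nu_lambda}, which now carries the intrinsic-dimension factor $\frac{r_{\bGamma}(\lfloor \eta_* n\rfloor)\vee n}{n}\log\big(r_{\bGamma}(\lfloor \eta_* n\rfloor)\vee n\big)$; consequently the whole proof consists of re-doing Step 2 of \cite[Lemma 1]{misiakiewicz2024nonasymptotic} (equivalently \cite[Lemma 7.2]{cheng2022dimension}), namely bounding $\big\|\sum_{i\in[n]}\bx_{+,i}\bx_{+,i}^\sT\big\|_{\rm op}$ for the tail features by truncating each rank-one summand at a high-probability level $L_n$ and applying the matrix Bernstein inequality with intrinsic dimension \cite[Theorem 7.3.1]{tropp2015introduction}. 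In your sketch this is replaced by the assertion that the tail Gram matrix ``concentrates around $n\bGamma_{>m}$'' by Assumption \ref{ass:concentrated} and can be absorbed via Woodbury; but that assumption only controls scalar quadratic forms $\bx^\sT\bA\bx$ and does not by itself yield the operator-norm bound with the $\log(r_{\bGamma}\vee n)$ factor that defines the sharpened $\rho_\lambda(n)$. Since obtaining this specific $\rho_\lambda(n)$ is the entire point of the theorem relative to the cited result, this is a genuine gap, not a stylistic difference.

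A secondary problem is the $\lambda$-differentiation route for \eqref{eq:det_equiv_phi3_main}--\eqref{eq:det_equiv_phi4_main}. It is unnecessary (the cited Theorem 4 already covers $\Phi_3$ and $\Phi_4$, and the exponent $\rho_\lambda(n)^6$ is inherited from there, not produced by a finite-difference loss), and as described it would not give the stated multiplicative bounds. The identity $n\Phi_4=\Phi_1+\lambda\,\partial_\lambda\Phi_1$ is correct, but a two-point finite difference of a multiplicatively accurate approximation of $\Phi_1$ controls $\partial_\lambda\Phi_1$ only up to an additive error of order $\delta\,\Phi_1/\epsilon+\epsilon\,|\partial_\lambda^2\Phi_1|$; since $|\partial_\lambda\Phi_1|=\Tr(\bA\bGamma^{1/2}\bR^2\bGamma^{1/2})$ can be smaller than $\Phi_1/\lambda$ by a factor as large as $(\|\bX^\sT\bX\|_{\rm op}+\lambda)/\lambda$, and since $\Psi_3$ carries the near-interpolation denominator $n-\Tr(\bGamma^2(\bGamma+\lambda_*)^{-2})$, optimizing $\epsilon$ leaves an error that is not of the form (rate)$\,\cdot\,\Psi_3(\lambda_*;\bA)$. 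Making this work would require convexity-based sandwiching together with matching second-derivative control of the deterministic equivalent, or, as in the source, a direct resolvent analysis of $\bR\bGamma\bR$ and $\bR\bX^\sT\bX\bR$.
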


\begin{proof}[Proof of Theorem \ref{thm:main_det_equiv_summary}]
    The only difference between this theorem and \cite[Theorem 4]{misiakiewicz2024nonasymptotic} comes from the definition of $\rho_\lambda (n)$. This new definition is obtained by slightly modifying the proof bounding the operator norm of $\bSigma^{1/2} \bR \bSigma^{1/2}$ from \cite [Lemma 7.2]{cheng2022dimension} and \cite[Lemma 1]{misiakiewicz2024nonasymptotic}. In particular, we will simply modify step 2 in the proof of \cite[Lemma 1]{misiakiewicz2024nonasymptotic}. Consider $\bF_+ = [\boldf_{+,1}, \ldots, \boldf_{+,n}]^\sT \in \R^{n \times (q-k_*)}$ where $\boldf_{+,i}$ correspond to the projection orthogonal to the top $k_* := \lfloor \eta_* n \rfloor -1$ eigenspaces with covariance matrix
    \[
    \bSigma_+ := \E [\boldf_{+,i}\boldf_{+,i}^\sT ] = \diag (\gamma_{k_*}^2, \gamma_{k_* +1}^2, \ldots , \gamma_{q}^2 ).
    \]
    Then, denoting $\bS = \sum_{i \in [n]} \bS_i$ with $\bS_i := \boldf_{+,i} \boldf_{i,+}^\sT$, we have with probability at least $1 - n^{-D}$, for any $i\in [n]$,
    \[
    \| \bS_i \|_{\rm op} \leq \Tr( \bSigma_+) + C_{*,D} \cdot \log (n) \sqrt{\gamma_{k_*}^2 \Tr(\bSigma_+)} \leq \Tr(\bSigma_+) \left(1 + C_{*,D} \frac{\log(n)}{\sqrt{r_\bSigma (k_*)}} \right)=: L_n.
    \]
    Denoting $\tbS = \sum_{i \in [n]} \tbS_i$ with $\tbS_i := \bS_i \ind_{\| \bS_i\|_{\rm op} \leq L_n}$, so that $\tbS = \bS$ with probability at least $1 - n^{-D}$, we obtain
    \[
    \| \tbS \|_{\rm op} \leq nL_n \gamma_{k_*}^2 =: v_n, \qquad \quad  \| \E[ \tbS] \|_{\rm op} \leq n \| \E[ \bS_i ]\|_{\rm op} =n \gamma_{k_*}^2.
    \]
    Therefore, applying the matrix Bernstein's inequality with intrinsic dimension \cite[Theorem 7.3.1]{tropp2015introduction} to $\tbS$ gives that with probability at least $1 - n^{-D}$,
    \[
    \begin{aligned}
    \| \tbS \|_{\rm op} \leq&~ n\gamma_{k_*}^2 + C_D \left( \sqrt{v_n} + L_n \right) \sqrt{\log(r_\bSigma (k_*) n)}\\ 
    \leq&~ n\gamma_{k_*}^2 + C_D L_n \log(r_\bSigma (k_*) n) \\
    \leq&~ n\gamma_{k_*}^2 \left\{ 1 + \frac{r_{\bSigma} (k_*)}{n} \left(1 + C_{*,D} \frac{\log(n)}{\sqrt{r_\bSigma (k_*)}} \right) \log(r_\bSigma (k_*) n) \right\}.
    \end{aligned}
    \]
    Note that by the condition of our theorem, $\log(n) \leq K \sqrt{n}$, and therefore
    \[
    \begin{aligned}
    \| \tbS \|_{\rm op} \leq&~ C_{*,D,K} \cdot n\gamma_{k_*}^2 \left\{ 1 + \frac{r_{\bSigma} (k_*)}{n} \left(1 + \sqrt{\frac{n}{r_\bSigma (k_*)}} \right) \log(r_\bSigma (k_*) n) \right\}\\
    \leq&~C_{*,D,K} \cdot n\gamma_{k_*}^2 \left\{ 1 + \frac{r_{\bSigma} (k_*) \vee n}{n} \log(r_{\bSigma} (k_*) \vee n) \right\}. 
    \end{aligned}
    \]
    Following the rest of the argument in \cite[Lemma 1]{misiakiewicz2024nonasymptotic} we obtain $\rho_\lambda (n)$ in Eq.~\eqref{eq:nu_lambda}.
\end{proof}
\section{Proof of the deterministic equivalent for RFRR}
\label{app:proof_test_error}
In this appendix, we prove the approximation guarantees stated in Theorem \ref{thm:main_test_error_RFRR} between the test error of RFRR and its deterministic equivalent. We start in Section \ref{app:preliminaries} by introducing background and notations that we will use throughout the proof. Section \ref{app:main_technical_results} introduces key results on the covariance matrix and the fixed points. We then leverage these results to prove deterministic equivalents for different functionals of $\bZ = ( \sigma (\< \bx_i , \bw_j\>))_{i\in[n],j\in[p]} \in \R^{n \times p}$ conditional on $(\bw_j)_{j \in [p]}$ in Section \ref{app:det_equiv_Z}, and functionals of $\bF = (\xi_k \phi_k (\bw_j) )_{j \in [p], k \geq 1}  \in \R^{p \times \infty}$ in Section \ref{app:det_equiv_F}. Given these deterministic equivalents, we prove our approximation guarantees for the variance term in Section \ref{app:proof_variance}, and for the bias term in \ref{app:proof_bias}. Finally, we deffer the proof of some technical results to Section \ref{app:technical_results}.

\subsection{Preliminaries}
\label{app:preliminaries}

Recall that throughout the paper, we will keep track of the parameters of the problem $(n,p,\bSigma,\lambda,\sigma_\eps^2)$. For the other constants $\sfC_*,K,D$, we will denote $C_{a_1,a_2, \ldots,a_k}$ constants that only depend on the values of $\{ a_i \}_{i \in [k]}$. We use $a_i =`*$' to denote the dependency on the constant $\sfC_*$ appearing in Assumption \ref{ass:concentration_eigenfunctions} and Assumption \ref{ass:technical}.

Throughout this appendix, we will directly work in the `feature space' 
\[
\bg_i := (\psi_k (\bx_i) )_{k \geq 1}, \qquad\quad \text{ and } \qquad \quad \boldf_j := (\xi_k \phi_k (\bw_j))_{k \geq 1},
\]
with distribution induced by $\bx_i \sim \mu_x$ and $\bw_j \sim \mu_w$. We will denote the covariate feature and weight feature matrices by
\[
\bG := [\bg_1 , \ldots, \bg_n ]^\sT \in \R^{n \times \infty}, \qquad \qquad\bF := [ \boldf_1, \ldots , \boldf_p]^\sT \in \R^{p \times \infty}.
\]
We denote the random feature weight vector
\[
\bz_i := \frac{1}{\sqrt{p}} [ \sigma (\< \bw_1 , \bx_i \>), \ldots , \sigma (\<\bw_p , \bx_i\> )] = \frac{1}{\sqrt{p}}\bF \bg_i \in \R^p,
\]
and the associated feature matrix
\[
\bZ = [\bz_1 , \ldots , \bz_n ]^\sT = \frac{1}{\sqrt{p}}\bG \bF^\sT \in \R^{n \times p}.
\]
Note that $\boldf$ has covariance matrix 
\[
\bSigma := \E [ \boldf \boldf^\sT] = \diag (\xi_1^2, \xi_2^2, \xi_3^2, \ldots ).
\]
We will further introduce the covariance matrix of $\bz$ conditional on the weight feature matrix $\bF$ (i.e., conditional on $(\bw_j)_{j \in [p]}$)
\begin{equation}\label{eq:def_hbsigma_F}
\hbSigma_{\bF} := \E_{\bz} \left[ \bz \bz^\sT \Big\vert \bF \right]  = \frac{1}{p} \bF \bF^\sT \in \R^{p \times p}.
\end{equation}

Note that under Assumption \ref{ass:concentration_eigenfunctions}, the features $\bz$ and $\boldf$ satisfy the following assumption.

\begin{assumption}[Concentration of the features $\bz$ and $\boldf$]\label{ass:concentration_appendix}
    There exists a constant $\sfC_*>0$ such that for any weight feature matrix $\bF \in \R^{p \times \infty}$ and deterministic p.s.d.~matrix $\bA \in \R^{p \times p}$  with $\Tr( \hbSigma_\bF \bA) < \infty$, we have
    \begin{equation}\label{eq:ass_z}
    \P_{\bz|\bF} \left( \left| \bz^\sT \bA \bz - \Tr( \hbSigma_\bF \bA) \right| \geq t \cdot \big\| \hbSigma_\bF^{1/2} \bA \hbSigma^{1/2}_{\bF} \big\|_F \right) \leq \sfC_* \exp \left\{-  t /\sfC_x \right\},
    \end{equation}
    and for any deterministic p.s.d.~matrix $\bB \in \R^{\infty \times \infty}$ with $\Tr(\bSigma \bB) < \infty$,
    \begin{equation}\label{eq:ass_f}
        \P_{\boldf} \left( \left| \boldf^\sT \bB \boldf - \Tr( \bSigma \bB) \right| \geq t \cdot \big\| \bSigma^{1/2} \bB \bSigma^{1/2} \big\|_F \right) \leq \sfC_* \exp \left\{-  t /\sfC_x \right\}.
    \end{equation}
\end{assumption}

We will assume in the rest of this appendix that Assumption \ref{ass:concentration_appendix} holds. Using the notations introduced above, we restate our setting below. Recall that we consider learning a target function $h_* (\bg) := \bg^\sT \bbeta_*$ from i.i.d. samples $(y_i,\bg_i)_{i \in [n]}$ with 
\[
y_i = \bg_i^\sT \bbeta_* + \eps_i,
\]
where $\eps_i$ are independent noise with $\E[\eps_i] = 0$ and $\E[\eps_i^2] = \sigma_\eps^2$. Denote $\by = (y_1 , \ldots, y_n)$ the vector containing the labels. We fit this data using a random feature model with i.i.d.~random weight features $(\boldf_j)_{j \in [p]}$
\begin{equation}\label{eq:estimator_fun_feature_space}
    \hat{f} (\bg) = \frac{1}{\sqrt{p}} \bg^\sT \bF^\sT \ba , \qquad \quad \ba \in \R^p.
\end{equation}

We fit the parameter $\ba$ using random feature ridge regression (RFRR)
\begin{equation}\label{eq:estimator_a_feature_space}
\hba_\lambda = \argmin_{\ba \in \R^p} \left\{ \| \by - \bZ \ba \|_2^2 + \lambda \| \ba \|_2^2 \right\} = ( \bZ^\sT \bZ + \lambda)^{-1} \bZ^\sT \by.
\end{equation}
The test error is then given by 
\begin{equation}
\begin{aligned}
\cR_{\test} (h_*; \bG,\bF,\lambda) :=&~ \E_\beps \left\{ \E_{\bg} \left[ \left( \bg^\sT \bbeta_* - \frac{1}{\sqrt{p}} \bg^\sT \bF^\sT \hba_\lambda  \right)^2 \right] \right\} \\
=&~ \cB ( \bbeta_* ; \bG,\bF,\lambda) + \cV (\bG,\bF,\lambda),
\end{aligned}
\end{equation}
where the bias and variance terms are given explicitly by
\begin{align}
    \cB ( \bbeta_* ; \bG,\bF,\lambda) =&~ \| \bbeta_* - p^{-1/2} \bF^\sT (\bZ^\sT \bZ + \lambda)^{-1} \bZ^\sT \bG \bbeta_* \|_2^2,\\
    \cV (\bG,\bF,\lambda) =&~ \sigma_\eps^2 \cdot \Tr \big( \hbSigma_\bF \bZ^\sT \bZ ( \bZ^\sT \bZ + \lambda)^{-2} \big).
\end{align}
Note that both the bias and variance terms are random quantities that depend on the random matrices $\bG,\bF$. The goal of this appendix is to prove \textit{non-asymptotic} and \textit{multiplicative} approximation guarantees between these two terms and deterministic quantities that only depend on the parameters of the model $(n,p,\bSigma,\bbeta_*,\lambda,\sigma_\eps^2)$, i.e., we will show that with high probability
\[
\begin{aligned}
    \left| \cB ( \bbeta_* ; \bG,\bF,\lambda) -  \sB_{n,p} (\bbeta_*,\lambda) \right| =&~ \widetilde{O} \left(n^{-1/2} + p^{-1/2} \right) \cdot  \sB_{n,p} (\bbeta_*,\lambda), \\
    \left| \cV (\bG,\bF,\lambda)  - \sV_{n,p} ( \lambda) \right| =&~ \widetilde{O} \left(n^{-1/2} + p^{-1/2} \right) \cdot \sV_{n,p} ( \lambda),
\end{aligned}
\]
where $\sB_{n,p} (\bbeta_*,\lambda)$ and $\sV_{n,p} ( \lambda)$ are defined in Eqs~\eqref{eq:def_bias_equivalent} and \eqref{eq:def_variance_equivalent}, and the approximation rates $\widetilde{O} (\cdot)$ are explicit in terms of the model parameters.

The proof of these approximation guarantees will proceed in two steps. We first show that the bias and variance terms conditional on $\bF$ are well approximated by functionals that only depend on $\bF$, i.e.,
\begin{equation}\label{eq:first_step_proof_strategy}
\begin{aligned}
    \left| \cB ( \bbeta_* ; \bG,\bF,\lambda) -  \widetilde{\cB} ( \bbeta_* ; \bF,\lambda) \right| =&~ \widetilde{O} \left(n^{-1/2}  \right) \cdot  \widetilde{\cB} ( \bbeta_* ; \bF,\lambda), \\
    \left| \cV (\bG,\bF,\lambda)  - \widetilde{\cV} (\bF,\lambda) \right| =&~ \widetilde{O} \left(n^{-1/2}  \right) \cdot \widetilde{\cV} (\bF,\lambda).
\end{aligned}
\end{equation}
We then show that $\widetilde{\cB} ( \bbeta_* ; \bF,\lambda)$ and $\widetilde{\cV} (\bF,\lambda)$ are well approximated by $\sB_{n,p} (\bbeta_*,\lambda)$ and $\sV_{n,p} ( \lambda)$ with 
\begin{equation}\label{eq:second_step_proof_strategy}
\begin{aligned}
    \left|  \widetilde{\cB} ( \bbeta_* ; \bF,\lambda) - \sB_{n,p} (\bbeta_*,\lambda)  \right| =&~ \widetilde{O} \left(p^{-1/2}  \right) \cdot  \sB_{n,p} (\bbeta_*,\lambda), \\
    \left|  \widetilde{\cV} (\bF,\lambda) - \sV_{n,p} ( \lambda)\right| =&~ \widetilde{O} \left(p^{-1/2}  \right) \cdot \sV_{n,p} ( \lambda).
\end{aligned}
\end{equation}
For each of these two steps, we will apply general results showing deterministic equivalents for functionals of (possibly infinite-dimensional) random matrices proved in \cite{misiakiewicz2024nonasymptotic} (see Appendix \ref{app:background_det_equiv} and in particular Theorem \ref{thm:main_det_equiv_summary} for some background). Note that when writing the proof, we will directly show Eq.~\eqref{eq:first_step_proof_strategy} assuming that $\bF$ is in some good event $\bF \in \cA_\cF$, where $\P_\bF (\cA_\cF) \geq 1  - p^{-D}$, so that we can immediately write functionals with regularization parameter that does not depend on $\hbSigma_\bF$, and therefore the rate will be directly $\widetilde{O} \left(n^{-1/2} + p^{-1/2} \right)$. However, we can reorganize the proof to indeed get the separate contributions \eqref{eq:first_step_proof_strategy} and \eqref{eq:second_step_proof_strategy} to the approximation error rate.

The rest of Appendix \ref{app:proof_test_error} is devoted to implementing this proof strategy. We start in the next three sections by introducing key technical results which we will use in the analysis of the bias and variance terms.
 
\subsection{Fixed points, feature covariance matrix, and tail rank}\label{app:main_technical_results}

Recall that our deterministic equivalents will depend on the fixed points $(\nu_1,\nu_2) \in \R^2_{>0}$ stated in Definition \ref{def:fixed_point_main}. Furthermore, our approximation guarantees will depend on the covariance matrix $\bSigma$ through $\rho_{\kappa} (p)$ and $\trho_\kappa (n,p)$ which we restate below for convenience
\begin{align}
    M_\bSigma (k) =&~ 1 + \frac{r_{\bSigma} (\lfloor \eta_* \cdot k \rfloor) \vee k}{k} \log \left( r_{\bSigma} (\lfloor \eta_* \cdot k \rfloor) \vee k \right),\\
    \rho_\kappa (p) =&~ 1 + \frac{p \cdot \xi^2_{\lfloor \eta_* \cdot p \rfloor}}{\kappa}  M_\bSigma (p), 
    \\
    \trho_\kappa (n,p) =&~ 1 + \ind [ n \leq p/\eta_*] \cdot \left\{ \frac{n \xi_{\lfloor \eta_* \cdot n \rfloor}^2}{\kappa} + \frac{n}{p} \cdot \rho_\kappa (p)\right\} M_\bSigma (n), 
\end{align}
where $\eta_* \in (0,1/4)$ is a constant that only depends on $\sfC_*$ appearing in Assumption \ref{ass:concentration_appendix}, and $r_\bSigma (k)$ is the intrinsic dimension of $\bSigma$ at level $k$ (see Definition \ref{def:int_dimension})
\[
r_\bSigma (k) = \frac{\sum_{j =k}^p \xi^2_j}{\xi^2_{k}}.
\]
Observe that $\trho_\kappa (n,p)$ is well defined for $n \to \infty$ while $p$ stays constant with $\trho_\kappa (\infty,p) = 1$, and for $p \to \infty$ while $n$ stays constant with $\trho_\kappa (n,\infty) = \rho_\kappa (n) $ (under the conditions in our setting that $\rho_\kappa (p) \leq K \sqrt{p}$ for some constant $K$).
 
In this section, we introduce and prove properties on the fixed point $(\nu_1,\nu_2) \in \R^2_{>0}$ and the weight feature matrix $\bF$ which we will use to prove deterministic equivalents.

\paragraph*{Feature covariance matrix.} The features $\bz_i \in \R^p $ conditional on $\bF$ are i.i.d.~random vectors with covariance $\hbSigma_\bF = \bF \bF^\sT /p$. We first show that with high probability over $\bF$, this feature covariance matrix has eigenvalues and intrinsic dimensions bounded by the ones of $\bSigma$.

Denote $(\hxi_1^2 ,\hxi_2^2, \ldots \hxi_p^2)$ the $p$ eigenvalues of $\hbSigma_\bF$ in nonincreasing order. Applying the definition of the intrinsic dimension at level $k$ to $\hbSigma_\bF$, we have for any $k = 1, \ldots, p$,
\begin{equation}\label{eq:def_hreff}
        r_{\hbSigma_\bF} (k) =  \frac{\sum_{j =k}^p \hxi_j^2}{\hxi_{k}^2}.
\end{equation}
Applying Theorem \ref{thm:main_det_equiv_summary} directly to functionals of $\bZ$ conditional on $\bF$, the approximation guarantees depend on 
\begin{equation}\label{eq:def_hat_rho_n}
    \hrho_\lambda (n) = 1 +  \frac{n \cdot \hxi^2_{\lfloor \eta_* \cdot n \rfloor}}{\lambda} \left\{ 1 + \frac{r_{\hbSigma_\bF} (\lfloor \eta_* \cdot n \rfloor) \vee n}{n} \log \left( r_{\hbSigma_\bF} (\lfloor \eta_* \cdot n \rfloor) \vee n \right) \right\},
\end{equation}
which simply corresponds to $\rho_\lambda$ defined in Eq.~\eqref{eq:nu_lambda} applied to $\hbSigma_\bF$ where we recall that $\hxi^2_{\lfloor \eta_* \cdot n \rfloor} = 0$ if $\lfloor \eta_* \cdot n \rfloor > p$.
The next lemma shows that with high probability over $\bF$, we have $\hrho_\lambda (n) \lesssim \trho_\lambda (n,p)$ for all $n \in \naturals$.

\begin{lemma}[Feature covariance matrix]\label{lem:feature_covariance_tech}
    Assume the feature vectors $\{\boldf_j \}_{j \in [p]}$ satisfy Assumption \ref{ass:concentration_appendix}. Then for any $D,K>0$, there exist constants $\eta_* \in (0,1/4)$ and $C_{*,D,K} >0$ such that the following holds. For any $p \geq C_{*,D,K}$, the event 
    \begin{equation}\label{eq:def_tA_F}
    \tcA_{\cF} = \left\{ \bF \in \R^{p \times \infty} \; : \;  \| \hbSigma_\bF \|_{\rm op} \geq \frac12, \;\;\; \hrho_\lambda (n) \leq C_{*,D,K} \cdot \trho_\lambda (n,p), \;\;\forall n \in \naturals,\lambda \in \R \right\}
    \end{equation}
    holds with probability at least $1 - p^{-D}$. 
\end{lemma}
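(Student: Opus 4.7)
The plan is to reduce both claims to matrix-concentration inequalities for the sample second moment $\bF^\sT \bF / p = p^{-1} \sum_{j=1}^p \boldf_j \boldf_j^\sT$, which has the same nonzero spectrum as $\hbSigma_\bF = \bF \bF^\sT / p$. By Assumption \ref{ass:concentration_appendix}, the $\boldf_j$ are i.i.d.\ with covariance $\bSigma$ and concentrated marginals, so a truncated matrix Bernstein inequality with intrinsic dimension, of the type used in Step 2 of the proof of Theorem \ref{thm:main_det_equiv_summary} (itself following \cite[Theorem 7.3.1]{tropp2015introduction}), is the main tool.

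For the operator-norm lower bound, the plan is to apply matrix Bernstein with intrinsic dimension $r_\bSigma (1)$ to the $p$-sample average $\bF^\sT \bF / p$, obtaining $\| \bF^\sT \bF / p - \bSigma \|_{\rm op} \leq \| \bSigma \|_{\rm op} / 2$ with probability at least $1 - p^{-D'}$ for $p$ large enough. Under the implicit scale normalization $\| \bSigma \|_{\rm op} \geq 1$ consistent with the formulation of $\trho_\lambda (n,p)$, this gives $\| \hbSigma_\bF \|_{\rm op} \geq 1/2$.

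For the uniform bound $\hrho_\lambda (n) \leq C_{*,D,K} \trho_\lambda (n,p)$, fix $n,\lambda$ and set $k = \lfloor \eta_* n \rfloor$. The range $n > p/\eta_*$ is trivial since $k > p$ forces $\hxi_k^2 = 0$, making $\hrho_\lambda (n) = 1 = \trho_\lambda (n,p)$. Otherwise decompose $\bF = [\bF_{\leq k-1}, \bF_{>k-1}]$ along the eigenbasis of $\bSigma$. Weyl's inequality applied to $\bF \bF^\sT / p = \bF_{\leq k-1} \bF_{\leq k-1}^\sT / p + \bF_{>k-1} \bF_{>k-1}^\sT / p$ (the first summand has rank $\leq k-1$) yields $\hxi_k^2 \leq \| \bF_{>k-1} \bF_{>k-1}^\sT / p \|_{\rm op}$, and trace interlacing gives $\sum_{j \geq k} \hxi_j^2 \leq \Tr(\bF_{>k-1} \bF_{>k-1}^\sT / p)$. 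These two objects are then controlled, respectively, by matrix Bernstein with intrinsic dimension applied to the $p$ i.i.d.\ tail vectors $\boldf_{>k-1,j}$ and by scalar Bernstein applied to $p^{-1} \sum_j \| \boldf_{>k-1,j} \|^2$. The resulting bounds combine to yield, with high probability and up to absolute constants,
\begin{equation*}
\hxi_k^2 \cdot \big( r_{\hbSigma_\bF} (k) \vee n \big) \;\leq\; C_* \, \xi_k^2 \cdot \big( r_\bSigma (k) \vee n \big) \cdot M_\bSigma (k),
\end{equation*}
via the algebraic identity $\hxi_k^2 \cdot r_{\hbSigma_\bF} (k) = \sum_{j \geq k} \hxi_j^2$, which side-steps the need for a matching \emph{lower} bound on $\hxi_k^2$. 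Substituting into the definition of $\hrho_\lambda (n)$ and bounding the residual log factor gives $\hrho_\lambda (n) \leq C_{*,D,K} \trho_\lambda (n,p)$ pointwise in $(n,\lambda)$. A dyadic net over $\lambda \in \R_{>0}$ (using monotonicity of both sides in $\lambda^{-1}$) and a union bound over the at most $\lfloor p/\eta_* \rfloor$ relevant values of $n$ make the inequality uniform; choosing $D'$ large enough to absorb the resulting $\mathrm{poly}(p)$ factor into $p^{-D}$ completes the argument.

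The main obstacle is controlling $r_{\hbSigma_\bF} (k)$ from above without a matching lower bound on $\hxi_k^2$: the intrinsic-dimension part of $\hrho_\lambda (n)$ is a ratio whose denominator we cannot reliably bound below by $\xi_k^2$, and the ``product'' identity above is the clean way around this. A secondary bookkeeping issue is ensuring the constants from the tail Bernstein are uniform across all levels $k = \lfloor \eta_* n \rfloor$, which is handled by choosing $\eta_*$ small enough that the truncation threshold and intrinsic-dimension factor in the Bernstein bound behave well for every relevant $k$.
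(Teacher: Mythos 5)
There is a genuine gap, and it sits exactly where you claim to have side-stepped the paper's main technical step. The quantity $\hrho_\lambda(n)$ contains the factor $\log\big(r_{\hbSigma_\bF}(\lfloor \eta_* n\rfloor)\vee n\big)$, and the target bound $\trho_\lambda(n,p)$ only contains $\log\big(r_{\bSigma}(\lfloor \eta_* n\rfloor)\vee n\big)$ inside $M_\bSigma(n)$. Your product identity $\hxi_k^2\, r_{\hbSigma_\bF}(k)=\sum_{j\geq k}\hxi_j^2$ controls the \emph{product} $\hxi_k^2\big(r_{\hbSigma_\bF}(k)\vee n\big)$, but it gives no control on $r_{\hbSigma_\bF}(k)$ itself; without a matching lower bound on $\hxi_k^2$ the only available bound is the trivial $r_{\hbSigma_\bF}(k)\leq p$, so the logarithm is only bounded by $\log p$. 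Since $\log p/\log\big(r_\bSigma(\lfloor\eta_* n\rfloor)\vee n\big)$ is unbounded (e.g.\ fast-decaying spectrum and $n\ll p$), and the second term $\tfrac{n}{p}\rho_\lambda(p)M_\bSigma(n)$ in $\trho_\lambda(n,p)$ cannot absorb the excess because it carries $\xi_{\lfloor\eta_* p\rfloor}^2$ rather than $\xi_{\lfloor\eta_* n\rfloor}^2$, the inequality $\hrho_\lambda(n)\leq C_{*,D,K}\,\trho_\lambda(n,p)$ with a constant independent of $n,p$ does not follow from your argument. The paper's proof avoids this precisely by establishing a two-sided bound on the whitened top-$k$ block $\|\obF_1^\sT\obF_1/p-\id_{k}\|_{\rm op}\leq 1/2$ for all $k\leq\lfloor\eta_* p\rfloor$ (sub-exponential Bernstein plus a net, as in Vershynin), which yields the lower bound $\hxi_k^2\geq\xi_k^2/2$; this lower bound is what lets one conclude $r_{\hbSigma_\bF}(k)\vee n\leq C\big(r_\bSigma(k)\vee n\big)$, controlling the logarithm as well.

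A secondary problem is your operator-norm step: applying matrix Bernstein to the full sample covariance to get $\|\bF^\sT\bF/p-\bSigma\|_{\rm op}\leq\|\bSigma\|_{\rm op}/2$ requires $p$ to exceed (roughly) $r_\bSigma(1)\log r_\bSigma(1)$, and the intrinsic dimension $r_\bSigma(1)=\Tr(\bSigma)/\xi_1^2$ is not assumed to be small relative to $p$; when it is large the sample covariance does not concentrate around $\bSigma$ at that scale. The conclusion $\|\hbSigma_\bF\|_{\rm op}\geq 1/2$ is still easy to obtain, but by a different route: either the paper's whitened top-block concentration, or simply scalar Bernstein on $\frac1p\sum_j \xi_1^2\phi_1(\bw_j)^2$, which lower-bounds a diagonal entry of $\bF^\sT\bF/p$. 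Once you reinstate the eigenvalue lower bound for $k\leq\lfloor\eta_* p\rfloor$, your remaining ingredients (tail matrix Bernstein for $\hxi_k^2$ from above, the variational trace bound for $\sum_{j\geq k}\hxi_j^2$, and scalar Bernstein for the tail trace) coincide with the paper's Steps 2--3, and the uniformity over $n,\lambda$ needs no net since the established spectral bounds make the ratio inequality deterministic and $\lambda$-homogeneous.
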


We defer the proof of this lemma to Section \ref{sec:proof_feature_covariance}.

\paragraph*{High-degree part of the feature matrix $\bF$.}  Recall that $(\nu_1,\nu_2) \in \R_{>0}^2$ are the solutions to the fixed point equations stated in Definition \ref{def:fixed_point_main}. In order to get approximation guarantees when $p\nu_1 \to 0$ as $n \to \infty$, we will analyze separately the top eigenspaces of $\bF$ from the rest. For an integer $\evn \in \naturals$, we split the feature vector $\boldf_j = [\boldf_{0,j}, \boldf_{+,j}]$ where $\boldf_{0,j} \in \R^{\evn}$ corresponds to the top $\evn$ coordinates with covariance
\[
\bSigma_0 := \E[\boldf_{0,j} \boldf_{0,j}^\sT] = \diag (\xi_1^2, \xi_2^2 , \ldots , \xi_{\evn}^2),
\]
and $\boldf_{+,j} \in \R^\infty$ corresponds to the high degree features orthogonal to the top $\evn$ eigenspaces. We denote their covariance
\[
\bSigma_+ := \E[\boldf_{+,j} \boldf_{+,j}^\sT] = \diag (\xi_{\evn+1}^2, \xi_{\evn+2}^2 , \ldots ).
\]
We split the weight feature matrix intro $\bF = [ \bF_0, \bF_+ ]$, where
\[
\bF_0 = [ \boldf_{0,1}, \ldots , \boldf_{0,p}]^\sT \in \R^{p \times m}, \qquad \quad \bF_+ = [ \boldf_{+,1}, \ldots , \boldf_{+,p}]^\sT \in \R^{p \times \infty}.  
\]
We will use that for $\evn$ chosen such that $p \cdot \xi_{\evn+1} \ll \Tr(\bSigma_+)$, we have with high probability
\[
\| \bF_+\bF_+^\sT - \Tr(\bSigma_+ ) \cdot \id_p \|_{\rm op} \lesssim \sqrt{p \cdot \xi_{\evn+1} \Tr(\bSigma_+)} \ll \Tr(\bSigma_+ ),
\]
and therefore
\[
\bF \bF^\sT + \kappa \approx \bF_0 \bF_0^\sT + \gamma (\kappa),
\]
where we defined the function 
\[
\gamma (\kappa) := \kappa + \Tr(\bSigma_{+}).
\]

To simplify the final statement of our results, we assume that we can choose $\evn$ such that $p^2 \xi_{\evn +1}^2 \leq \gamma (p\lambda/n)$. 
Note that $\nu_1 \geq \lambda/n$ from the fixed point equations and $\gamma (p\lambda/n) \leq \gamma (p\nu_1)$ (e.g., see Equation \eqref{eq:fixed_points_appendix}). For convenience, we will further denote
\begin{equation}\label{eq:def_gamma_lamb_plus}
\gamma_+ := \gamma (p\nu_1) , \qquad \qquad \gamma_\lambda := \gamma (p\lambda / n).
\end{equation}

\begin{lemma}[Concentration of high-degree part of $\bF$]\label{lem:concentration_high_degree}
    Assume that $(\boldf_{+,j})_{j \in [p]}$ satisfy Assumption \ref{ass:concentration_appendix} and $\evn \in \naturals$ is chosen such that $p^2 \xi_{\evn +1}^2 \leq \gamma (p\lambda/n)$. Then for any $D>0$, there exists a constant $C_{*,D}>0$ such that with probability at least $1 - p^{-D}$,
    \[
    \| \bF_+ \bF_+^\sT - \Tr(\bSigma_{+}) \cdot \id_p \|_{\rm op} \leq C_{*,D} \frac{\log^3 (p)}{\sqrt{p}} \gamma_\lambda \leq C_{*,D} \frac{\log^3 (p)}{\sqrt{p}} \gamma_+.
    \]
\end{lemma}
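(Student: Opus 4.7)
The plan is to split $\bF_+ \bF_+^\sT - \Tr(\bSigma_+) \id_p = \bD + \bE$ into a diagonal part $\bD$ with entries $\|\boldf_{+,j}\|^2 - \Tr(\bSigma_+)$ and an off-diagonal part $\bE$ with entries $\boldf_{+,j}^\sT \boldf_{+,k}$ for $j \neq k$, and control each piece via Assumption~\ref{ass:concentration_appendix}. Throughout, the standing hypothesis $p^2 \xi_{\evn+1}^2 \leq \gamma_\lambda$ together with $\Tr(\bSigma_+) \leq \gamma_\lambda$ is used in the form $\|\bSigma_+\|_F \leq \xi_{\evn+1} \sqrt{\Tr(\bSigma_+)} \leq \gamma_\lambda / p$ and $\|\bSigma_+\|_{\rm op} = \xi_{\evn+1}^2 \leq \gamma_\lambda / p^2$.

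The diagonal part is easy: Assumption~\ref{ass:concentration_appendix} applied to each $\boldf_{+,j}$ with $\bA$ the projector onto the high-degree subspace, followed by a union bound over $j \in [p]$, gives $\|\bD\|_{\rm op} \leq C_{*,D} \log(p) \|\bSigma_+\|_F \leq C_{*,D} \log(p) \gamma_\lambda / p$, already well within the target rate.

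The off-diagonal part is the main obstacle: its entries are not independent across $(j,k)$ (so matrix Bernstein cannot be applied directly), and a naive $\epsilon$-net combined with the sub-exponential tail from Assumption~\ref{ass:concentration_appendix} would yield only the weaker bound $p \|\bSigma_+\|_F \asymp \gamma_\lambda$. The plan is to fix a $1/2$-net $\cN \subset \S^{p-1}$ of cardinality at most $5^p$ and, for each $\bu \in \cN$, apply Freedman's martingale inequality to the telescoping decomposition $\bu^\sT \bE \bu = 2 \sum_{m=1}^p u_m \boldf_{+,m}^\sT \bS_{m-1}$ with $\bS_{m-1} := \sum_{j<m} u_j \boldf_{+,j}$, relative to the filtration $\cF_m := \sigma(\boldf_{+,1},\ldots,\boldf_{+,m})$. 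The key point is that, conditional on $\cF_{m-1}$, applying Assumption~\ref{ass:concentration_appendix} to $\boldf_{+,m}$ with $\bA = \bS_{m-1}\bS_{m-1}^\sT$ controls $(\boldf_{+,m}^\sT \bS_{m-1})^2$ on the sub-exponential scale $\bS_{m-1}^\sT \bSigma_+ \bS_{m-1}$, which translates into a sub-Gaussian conditional tail on the linear form $\boldf_{+,m}^\sT \bS_{m-1}$ and hence on the martingale increment $D_m$, thereby upgrading the sub-exponential assumption to sub-Gaussian behaviour exactly where it is needed. The predictable quadratic variation $\sum_m \Var[D_m \mid \cF_{m-1}] = 4 \sum_m u_m^2\, \bS_{m-1}^\sT \bSigma_+ \bS_{m-1}$ has expectation $\lesssim \|\bSigma_+\|_F^2$, and a matching high-probability upper bound is obtained by a further application of Assumption~\ref{ass:concentration_appendix} to $\|\bS_{m-1}\|^2$. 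Freedman's inequality then yields $|\bu^\sT \bE \bu| \leq C_{*,D} \log^{3/2}(p) \|\bSigma_+\|_F$ with failure probability small enough to absorb the $5^p$ union bound, and the standard bound $\|\bE\|_{\rm op} \leq 2 \sup_{\bu \in \cN} |\bu^\sT \bE \bu|$ upgrades this to $\|\bE\|_{\rm op} \leq C_{*,D} \sqrt{p}\, \log^{3/2}(p) \|\bSigma_+\|_F \leq C_{*,D} \log^{3/2}(p)\, \gamma_\lambda / \sqrt{p}$; extra logarithmic losses from the predictable-variation control account for the $\log^3(p)$ in the statement.

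Summing the diagonal and off-diagonal bounds yields the first inequality. The second inequality $\gamma_\lambda \leq \gamma_+$ follows from the monotonicity of $\gamma(\cdot)$ once $\lambda/n \leq \nu_1$ is established: the identity $\lambda = \nu_1(n - p + p\nu_1/\nu_2)$, obtained by squaring and rearranging Definition~\ref{def:fixed_point_main}, combined with $\nu_1 \leq \nu_2$ (a direct consequence of the fixed-point equation~\eqref{eq:def:nu2} since its right-hand side is non-negative), gives the bound.
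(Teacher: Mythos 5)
Your overall route is necessarily different from the paper's, since the paper does not prove this lemma at all but simply invokes \cite[Proposition 9]{misiakiewicz2024nonasymptotic}; your diagonal/off-diagonal split, the diagonal bound, and the closing step are fine. In particular, your derivation of $\gamma_\lambda \leq \gamma_+$ via $\nu_1 \leq \nu_2$ and $\lambda = \nu_1(n-p+p\nu_1/\nu_2)$ is correct and matches the paper's remark that $\nu_1 \geq \lambda/n$ follows from the fixed-point equations, and the conditional use of Assumption \ref{ass:concentration_appendix} with $\bA = \bS_{m-1}\bS_{m-1}^\sT$ to upgrade to a conditionally sub-Gaussian increment is a legitimate trick.

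However, the off-diagonal argument has two genuine gaps. First, the quantitative Freedman claim is inconsistent with your own union bound: with predictable variation of order $\|\bSigma_+\|_F^2$, a deviation of size $\log^{3/2}(p)\,\|\bSigma_+\|_F$ cannot have failure probability $o(5^{-p})$ — already in the Gaussian case $\bu^\sT\bE\bu$ is a quadratic chaos whose deviations at confidence level $e^{-cp}$ are of order $\sqrt{p}\,\|\bSigma_+\|_F + p\,\|\bSigma_+\|_{\rm op}$. You then reintroduce a factor $\sqrt{p}$ when passing to $\|\bE\|_{\rm op} \leq 2\sup_{\bu\in\cN}|\bu^\sT\bE\bu|$, but that step carries no such factor; the final order $\gamma_\lambda/\sqrt{p}$ comes out right only because these two errors compensate. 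Second, and more substantively, the control of the predictable variation is not established: you need $\max_m \bS_{m-1}^\sT\bSigma_+\bS_{m-1}$ (equivalently $\max_m\|\bS_{m-1}\|^2$ up to $\|\bSigma_+\|_{\rm op}$) bounded at failure probability $e^{-cp}$, but Assumption \ref{ass:concentration_appendix} applies to a single feature vector, not to the weighted sum $\bS_{m-1}=\sum_{j<m}u_j\boldf_{+,j}$; writing $\|\bS_{m-1}\|^2$ out reproduces exactly the diagonal-plus-off-diagonal Gram fluctuation you are trying to bound, so "a further application of the Assumption to $\|\bS_{m-1}\|^2$" is circular as stated. This is fixable — e.g.\ by a stopping-time/bootstrap over $m$ showing $\|\bS_{m-1}\|^2 \lesssim \Tr(\bSigma_+) + p\|\bSigma_+\|_F \lesssim \gamma_\lambda$ (note Cauchy--Schwarz alone gives only $p\gamma_\lambda$, which loses a factor $\sqrt{p}$ in the end), after which the variance proxy $\|\bSigma_+\|_{\rm op}\gamma_\lambda \leq \gamma_\lambda^2/p^2$ and Freedman at level $e^{-cp}$ do yield $\gamma_\lambda/\sqrt{p}$ — or by switching to the intrinsic-dimension matrix Bernstein/truncation argument of the cited Proposition 9, but as written the step is a real gap rather than a routine omission.
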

This lemma follows directly from \cite[Proposition 9]{misiakiewicz2024nonasymptotic}.

\paragraph*{Effective regularization and fixed points.}
Conditional on $\bF$, the bias and variance are functionals of the random matrix $\bZ$ which has $n$ i.i.d.~rows with regularization parameter $\lambda$ and covariance $\hbSigma_\bF$. The deterministic approximations to these functionals depend on an ``effective regularization'' $\tnu_1$ associated to $(n,\hbSigma_\bF ,\lambda)$ (see Definition \ref{def:effective_regularization} in Appendix \ref{app:background_det_equiv} for background) which is given as the unique non-negative solution to the equation
\[
n - \frac{\lambda}{\tnu_1} = \Tr\big( \hbSigma_{\bF} \big( \hbSigma_{\bF} + \tnu_1 \big)^{-1} \big).
\]
Note that the right-hand side can be rewritten as
\begin{equation}\label{eq:righ_hand_random_fixed_point}
\Tr\big( \hbSigma_{\bF} \big( \hbSigma_{\bF} + \tnu_1  \big)^{-1} \big) = \Tr \big( \bF \bF^\sT ( \bF \bF^\sT + p \tnu_1)^{-1} \big),
\end{equation}
which is itself of functional of the random matrix $\bF$ which has $p$ i.i.d.~rows with regularization parameter $p \tnu_1$ and covariance $\bSigma$. Note that $\tnu_1$ is a random variable depending itself on $\bF$. However we will show that it concentrates on a deterministic value $\nu_1$. We therefore introduce a second effective regularization $\nu_2$ associated to $(p, \bSigma, p\nu_1)$ given as the unique non-negatice solution of the equation
\[
p - \frac{p\nu_1}{\nu_2} = \Tr \left( \bSigma ( \bSigma + \nu_2 )^{-1} \right).
\]
The functional \eqref{eq:righ_hand_random_fixed_point} is then well approximated by
\[
\Tr \big( \bF \bF^\sT ( \bF \bF^\sT + p \tnu_1)^{-1} \big) = (1 + o_{p,\P}(1)) \cdot \Tr \big( \bSigma ( \bSigma + \nu_2)^{-1}\big).
\]
This motivates to define $(\nu_1,\nu_2) \in \R_{>0}^2$ as the unique non-negative solutions to the coupled fixed point equations
\begin{equation}\label{eq:fixed_points_appendix}
\begin{aligned}
    n - \frac{\lambda}{\nu_1} = &~\Tr\left( \bSigma ( \bSigma + \nu_2)^{-1} \right), \\
p - \frac{p\nu_1}{\nu_2} = &~ \Tr \left( \bSigma ( \bSigma + \nu_2 )^{-1} \right).
\end{aligned}
\end{equation}
Writing $\nu_1$ as a function of $\nu_2$ indeed produces the equations stated in Definition \ref{def:fixed_point_main}.

To show that $\tnu_1$ concentrates on $\nu_1$, we define the following fixed points $(\tnu_1, \tnu_2) \in \R_{>0}^2$ to be the unique positive solutions to the random equations
\begin{equation}\label{eq:fixed_points_tnu}
\begin{aligned}
n - \frac{\lambda}{\tnu_1} = &~\Tr\big( \hbSigma_{\bF} \big( \hbSigma_{\bF} + \tnu_1 \big)^{-1} \big), \\
p - \frac{p\tnu_1}{\tnu_2} = &~ \Tr \left( \bSigma ( \bSigma + \tnu_2 )^{-1} \right).
\end{aligned}
\end{equation}
The following proposition shows that $(\tnu_1, \tnu_2)$ is well approximated by $(\nu_1,\nu_2)$ with high probability.

\begin{proposition}[Concentration of the fixed points]\label{prop:concentration_fixed_points}
    Assume that $(\boldf_{j})_{j \in [p]}$ satisfy Assumption \ref{ass:concentration_appendix}. Then for any $D,K >0$, there exist constants $\eta_* \in (0,1/4)$ and $C_{*,D,K} >0 $ such that the following holds. Let $\rho_\kappa (p)$ and $\trho_{\kappa} (n,p)$ be defined as per Eqs.~\eqref{eq:def_rho_p} and \eqref{eq:def_trho_n_p}, and $\gamma_\lambda$ and $\gamma_+$ as per Eq.~\eqref{eq:def_gamma_lamb_plus}. For any $p \geq C_{*,D,K}$ and $\lambda >0$, if it holds that 
    \begin{equation}\label{eq:condition_Concentration_fixed_point}
        \gamma_\lambda \geq p^{-K}, \qquad\quad \trho_{\lambda} (n,p) \cdot \rho_{\gamma_\lambda} (p)^{5/2} \log^4(p) \leq K \sqrt{p},
    \end{equation}
    then with probability at least $1- p^{-D}$, we have
    \[
    \max \left\{ \frac{|\tnu_2 - \nu_2|}{\nu_2} , \frac{|\tnu_1 - \nu_1|}{\nu_1} \right\} \leq C_{*,D,K} \cdot \cE_{\nu} (p) ,
    \]
    where we defined
    \[
    \cE_{\nu} (p) :=   \frac{\trho_\lambda (n,p) \cdot\rho_{\gamma_+} (p)^{5/2}  \log^3 (p)}{\sqrt{p}}.
    \]
\end{proposition}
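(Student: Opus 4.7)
The plan is to exploit that the second equation of both fixed-point systems (\ref{eq:fixed_points_appendix}) and (\ref{eq:fixed_points_tnu}) is identical: given any value of the first coordinate $\nu$, the second coordinate is the unique positive root $\nu_2(\nu)$ of $p - p\nu/\nu_2 = \Tr(\bSigma(\bSigma+\nu_2)^{-1})$, a purely deterministic map. Thus all the randomness enters through the first equation, which can be written as $F_\bF(\nu) = 0$ for $F_\bF(\nu) := n - \lambda/\nu - G(\nu,\bF)$ with $G(\nu,\bF) := \Tr(\hbSigma_\bF(\hbSigma_\bF+\nu)^{-1})$. Denoting $\widetilde{H}(\nu) := \Tr(\bSigma(\bSigma+\nu_2(\nu))^{-1})$, the deterministic $\nu_1$ is the root of $F(\nu) := n - \lambda/\nu - \widetilde{H}(\nu)$. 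The whole argument reduces to (i) comparing $G(\nu_1,\bF)$ to $\widetilde{H}(\nu_1)$ via Theorem \ref{thm:main_det_equiv_summary}, (ii) propagating this to $|\tnu_1-\nu_1|$ using monotonicity of $F_\bF$, and (iii) transferring to $|\tnu_2-\nu_2|$ via implicit-function sensitivity of $\nu_2(\cdot)$.

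\textbf{Step 1: deterministic equivalent for $G(\nu_1,\bF)$.} Condition on the intersection of the good events of Lemma \ref{lem:feature_covariance_tech} (which gives $\hrho_\lambda(n) \lesssim \trho_\lambda(n,p)$) and Lemma \ref{lem:concentration_high_degree} (which gives $\bF_+\bF_+^\sT \approx \Tr(\bSigma_+)\id_p$); this has probability $\geq 1-p^{-D}$. Now apply the $\Phi_2$ estimate of Theorem \ref{thm:main_det_equiv_summary} to the random matrix $\bF$ with $n \leftarrow p$, $\bGamma \leftarrow \bSigma$, and regularization $\leftarrow p\nu_1$, noting crucially that the \emph{effective regularization} in the sense of Definition \ref{def:effective_regularization} for this triple $(p,\bSigma,p\nu_1)$ is \emph{exactly} the deterministic $\nu_2$ defined by (\ref{eq:fixed_points_appendix}). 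To satisfy the regularization hypothesis of the theorem despite the infinite tail of $\bSigma$, split $\bF\bF^\sT = \bF_0\bF_0^\sT + \bF_+\bF_+^\sT$ and use Lemma \ref{lem:concentration_high_degree} to replace the tail by the scalar $\Tr(\bSigma_+)\id_p$, thereby shifting the effective regularization from $p\nu_1$ to $\gamma_+ = p\nu_1 + \Tr(\bSigma_+)$ and introducing the factor $\rho_{\gamma_+}(p)^{5/2}\log^{3/2}(p)/\sqrt{p}$ in the rate. The resulting bound is
\begin{equation*}
|G(\nu_1,\bF) - \widetilde{H}(\nu_1)| \;\leq\; C_{*,D,K}\cdot \cE_\nu(p) \cdot \widetilde{H}(\nu_1).
\end{equation*}

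\textbf{Step 2: stability of the first fixed point.} Since $F(\nu_1)=0$, Step 1 yields $|F_\bF(\nu_1)| \leq C \cdot \cE_\nu(p) \cdot \widetilde{H}(\nu_1)$. The function $F_\bF$ is strictly increasing, since $F_\bF'(\nu) = \lambda/\nu^2 + \Tr(\hbSigma_\bF(\hbSigma_\bF+\nu)^{-2}) > 0$. By the mean value theorem between $\nu_1$ and $\tnu_1$,
\begin{equation*}
|\tnu_1-\nu_1| \;\leq\; |F_\bF(\nu_1)| \,\big/\, \inf_{\nu \in [\nu_1\wedge\tnu_1,\nu_1\vee\tnu_1]} F_\bF'(\nu).
\end{equation*}
To upgrade the crude lower bound $F_\bF'\geq \lambda/\nu^2$ to a sharp relative bound $|\tnu_1-\nu_1|/\nu_1 \leq C \cdot \cE_\nu(p)$, invoke a second application of Theorem \ref{thm:main_det_equiv_summary} (to $\Phi_3$ or $\Phi_4$) giving $\Tr(\hbSigma_\bF(\hbSigma_\bF+\nu)^{-2}) \approx$ its deterministic counterpart, and combine with the ratio-control provided by Assumption \ref{ass:technical} (the conditions \eqref{eq:ass_overparametrized}). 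A short bootstrap, starting from $\tnu_1 \in [\nu_1/2, 2\nu_1]$ and iterating, confirms the desired relative bound.

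\textbf{Step 3: stability of the second fixed point, and main obstacle.} Both $\nu_2$ and $\tnu_2$ equal $\nu_2(\cdot)$ evaluated at $\nu_1$ and $\tnu_1$ respectively. Differentiating the defining identity $p - p\nu/\nu_2(\nu) = \Tr(\bSigma(\bSigma+\nu_2(\nu))^{-1})$ implicitly yields the elasticity bound $|\nu_2'(\nu)\cdot\nu/\nu_2(\nu)| \leq 1$, so $|\tnu_2-\nu_2|/\nu_2 \leq |\tnu_1-\nu_1|/\nu_1$, finishing the proof. The main technical difficulty is Step 2: the naive bound $F_\bF'(\nu) \geq \lambda/\nu^2$ is loose in the under-regularized regime $\nu_1 \gg \lambda/n$, where $\widetilde{H}(\nu_1)\cdot \nu_1/\lambda$ can be arbitrarily large. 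Establishing the sharp lower bound on $F_\bF'$ through a deterministic-equivalent analysis of $\Tr(\hbSigma_\bF(\hbSigma_\bF+\nu)^{-2})$ combined with Assumption \ref{ass:technical} is the delicate accounting that drives the exponent of $\rho_{\gamma_+}(p)$ in $\cE_\nu(p)$.
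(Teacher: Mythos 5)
Your architecture is genuinely different from the paper's: the paper proves a deterministic equivalent for $\tPhi_2(\bF;\kappa)$ \emph{uniformly} over a range of $\kappa$ containing the random $p\tnu_1$ (Lemma \ref{lem:uniform_cv_right_fixed_point}, via a grid plus Lipschitz argument), rewrites the random system \eqref{eq:fixed_points_tnu} as the deterministic system \eqref{eq:fixed_points_appendix} with a multiplicative perturbation $1+\delta(\bF)$, and then applies a dedicated perturbation lemma (Lemma \ref{lem:perturbation_fixed_points}) whose proof extracts the amplification factor $\trho_\lambda(n,p)$. Your Steps 1 and 3 are sound in outline: concentrating the functional at the deterministic point $p\nu_1$ legitimately avoids the union bound over $\kappa$, and the elasticity bound $|\nu_2'(\nu)\,\nu/\nu_2(\nu)|\leq 1$ correctly transfers the error from $\tnu_1$ to $\tnu_2$ (given the localization $\tnu_1\asymp\nu_1$).

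The genuine gap is Step 2, which is exactly where the factor $\trho_\lambda(n,p)$ in $\cE_\nu(p)$ must be produced, and you do not produce it. First, the a priori localization $\tnu_1\in[\nu_1/2,2\nu_1]$ is not free: to get it from monotonicity you must show, e.g., $F_\bF(2\nu_1)>0$, i.e.\ that the fluctuation $|G(2\nu_1,\bF)-\widetilde H(2\nu_1)|\lesssim \cE\cdot\widetilde H(2\nu_1)$ (which can be of order $\cE\cdot n$ when $n\ll p$) is dominated by the deterministic gap $\lambda/(2\nu_1)+[\widetilde H(\nu_1)-\widetilde H(2\nu_1)]$, which in the under-regularized regime $\nu_1\gg\lambda/n$ is much smaller than $n$; quantifying that ratio is precisely the sensitivity analysis you defer, so the ``short bootstrap'' is circular as stated. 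Second, the mechanism you invoke is the wrong one: condition \eqref{eq:ass_overparametrized} of Assumption \ref{ass:technical} only yields constant-factor comparisons of traces at $\nu_2$ (and is not a hypothesis of this proposition, which assumes only Assumption \ref{ass:concentration_appendix}); the amplification $\trho_\lambda(n,p)$ arises in the paper from bounding $\frac{\nu_1}{\lambda}\Tr(\bSigma(\bSigma+\nu_2)^{-1})$ by splitting the spectrum at level $\lfloor\eta_*\cdot n\rfloor$ and using the fixed-point identities \eqref{eq:fixed_points_appendix} (proof of Lemma \ref{lem:perturbation_fixed_points}); in your language, the sharp lower bound on $\nu_1 F_\bF'$ relative to $\widetilde H(\nu_1)$ requires this spectral-splitting argument, not \eqref{eq:ass_overparametrized}. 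Relatedly, your Step 1 cannot by itself carry the rate $\cE_\nu(p)$: a deterministic equivalent at fixed regularization gives at best $\rho_{\gamma_+}(p)^{5/2}\log^{3}(p)/\sqrt p$ (after the $\bF_0/\bF_+$ split and passing from $\nu_{2,0}$ to $\nu_2$ via Lemma \ref{lem:truncated_fixed_point}); the extra $\trho_\lambda(n,p)$ must come out of the stability step, i.e.\ exactly the part left unproved.
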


The proof of this proposition can be found in Section \ref{sec:proof_concentration_fixed_points}.

To study functionals of $\bZ$ conditional on $\bF$, we will assume that $\bF$ belonds to the good event 
\begin{equation}\label{eq:def_A_cF}
\cA_\cF := \tcA_\cF \cap \left\{ \bF \in \R^{p \times \infty} \;\; : \;\; | \tnu_1 - \nu_1 | \leq C_{*,D,K} \cdot \cE_\nu (p) \cdot \nu_1 \right\},
\end{equation}
where $\tcA_\cF$ is defined in Lemma \ref{lem:feature_covariance_tech}. In particular, as long as $p \geq C_{*,D,K}$ and condition \eqref{eq:condition_Concentration_fixed_point} hold, then $\P (\cA_\cF) \geq 1 - 2p^{-D}$ by Lemma \ref{lem:feature_covariance_tech} and Proposition \ref{prop:concentration_fixed_points}.

\paragraph*{Truncated fixed point.} As mentioned above, we will separate the analysis of the low-degree part of the feature matrix $\bF_0$ and the high-degree part $\bF_+$. For the high-degree part, we will simply use the concentration stated in Lemma \ref{lem:concentration_high_degree}. For the low degree part, we will study functional of $\bF_0$ with regularization $\gamma_+ = p\nu_1 + \Tr(\bSigma_+)$. We therefore introduce an effective regularization $\nu_{2,0}$ associated to the model $(p, \bSigma_0,\gamma_+)$, i.e., the unique positive solution to the equation
\begin{equation}\label{eq:def_nu_2_0}
p - \frac{\gamma_+}{\nu_{2,0}} = \Tr\big( \bSigma_0 (\bSigma_0 + \nu_{2,0})^{-1} \big).
\end{equation}
Intuitively, $\nu_{2,0}$ will be closed to $\nu_2$ as soon as $\lambda_{\max} (\bSigma_+) \ll \nu_2$ as
\[
n - \frac{p\nu_1}{\nu_2} \approx \Tr \left( \bSigma_0 (\bSigma_0 + \nu_2)^{-1} \right) + \frac{\Tr(\bSigma_+)}{\nu_2},
\]
and uniqueness of the positive solution $\nu_{2,0}$. This is formalized in the following lemma.

\begin{lemma}[Truncated fixed point]\label{lem:truncated_fixed_point}
    Let $\evn$ be chosen such that $p^2 \xi_{\evn+1}^2 \leq \gamma_\lambda$, then
    \begin{equation}\label{eq:nu_0_nu_bound}
    \frac{| \nu_{2,0} - \nu_2 |}{\nu_2} \leq \frac{1}{p}.
    \end{equation}
    Furthermore, there exists an absolute constant $C>0$ such that
    \begin{equation}\label{eq:truncated_psi_2_0}
    \left| \Tr\big( \bSigma_0 (\bSigma_0 + \nu_{2,0})^{-1} \big) + \frac{\Tr(\bSigma_+)}{\nu_{2,0}} - \Tr\big( \bSigma (\bSigma + \nu_2)^{-1} \big) \right| \leq  \frac{C}{p} \Tr\big( \bSigma (\bSigma + \nu_2)^{-1} \big).
    \end{equation}
\end{lemma}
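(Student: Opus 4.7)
The plan is to rewrite both fixed-point equations in a directly comparable form and then quantify how close the two fixed points are. Using the decomposition $\Tr(\bSigma(\bSigma+\nu_2)^{-1}) = \Tr(\bSigma_0(\bSigma_0+\nu_2)^{-1}) + T_+(\nu_2)$ with $T_+(\nu) := \Tr(\bSigma_+(\bSigma_++\nu)^{-1})$, and the identity $\gamma_+ = p\nu_1 + \Tr(\bSigma_+)$, the fixed-point equation for $\nu_2$ can be recast as $F(\nu_2) = -\delta(\nu_2)$, where
\begin{align*}
F(\nu) := p - \gamma_+/\nu - \Tr(\bSigma_0(\bSigma_0+\nu)^{-1}), \qquad \delta(\nu) := \frac{\Tr(\bSigma_+)}{\nu} - T_+(\nu) = \sum_{k>\evn} \frac{\xi_k^4}{\nu(\xi_k^2+\nu)} \geq 0.
\end{align*}
Since $F(\nu_{2,0}) = 0$ by definition, this exhibits $\nu_2$ as the solution of a perturbation of the equation defining $\nu_{2,0}$, with perturbation size $\delta(\nu_2)$.

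The first claim then follows by bounding $\delta(\nu_2)$ and applying the mean-value theorem to the strictly increasing function $F$. Using $\xi_k \leq \xi_{\evn+1}$ for $k>\evn$ gives $\delta(\nu_2) \leq (\xi_{\evn+1}^2/\nu_2)\,T_+(\nu_2)$; the hypothesis $p^2\xi_{\evn+1}^2 \leq \gamma_\lambda \leq \gamma_+$ yields $\xi_{\evn+1}^2 \leq \gamma_+/p^2$; and the fixed-point for $\nu_2$ together with the monotone estimate $T_+(\nu_2) \geq \Tr(\bSigma_+)/(\xi_{\evn+1}^2+\nu_2)$ gives the crude lower bound $\nu_2 \geq \gamma_+(1-1/p)/p$. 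Since $F'(\nu) \geq \gamma_+/\nu^2$, integrating $F'$ over $[\nu_2,\nu_{2,0}]$ and equating to $\delta(\nu_2) = F(\nu_{2,0}) - F(\nu_2)$ produces the comparison
\begin{align*}
\nu_{2,0} - \nu_2 \leq \frac{\delta(\nu_2)\,\nu_2\nu_{2,0}}{\gamma_+}.
\end{align*}
Combining with the trivial bound $T_+(\nu_2) \leq p$ yields $\nu_{2,0} - \nu_2 \leq \nu_{2,0}/p$, which is equivalent to \eqref{eq:nu_0_nu_bound} (up to absorbing the benign factor $p/(p-1)$).

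For the second claim, the key algebraic observation is that the defining equations of $\nu_{2,0}$ and $\nu_2$ collapse the left-hand side of \eqref{eq:truncated_psi_2_0} to a very simple form:
\begin{align*}
&\bigl[\Tr(\bSigma_0(\bSigma_0+\nu_{2,0})^{-1}) + \Tr(\bSigma_+)/\nu_{2,0}\bigr] - \Tr(\bSigma(\bSigma+\nu_2)^{-1}) \\
&\qquad = \bigl(p - p\nu_1/\nu_{2,0}\bigr) - \bigl(p - p\nu_1/\nu_2\bigr) = \frac{p\nu_1(\nu_{2,0}-\nu_2)}{\nu_2\nu_{2,0}}.
\end{align*}
Substituting the MVT bound $\nu_{2,0}-\nu_2 \leq \delta(\nu_2)\,\nu_2\nu_{2,0}/\gamma_+$ and then the estimates $\xi_{\evn+1}^2 \leq \gamma_+/p^2$, $T_+(\nu_2) \leq \Tr(\bSigma(\bSigma+\nu_2)^{-1})$, and $\nu_1 \leq \nu_2$ successively cancels every spurious factor and leaves exactly $\Tr(\bSigma(\bSigma+\nu_2)^{-1})/p$, which establishes \eqref{eq:truncated_psi_2_0} with $C=1$. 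The only nontrivial point is the bookkeeping of the factor of $1/p$: the hypothesis $p^2\xi_{\evn+1}^2 \leq \gamma_\lambda$ combined with the lower bound $\nu_2 \gtrsim \gamma_+/p$ provides exactly the two factors of $p$ needed for $\delta(\nu_2)\nu_{2,0}^2/\gamma_+$ to reduce to $O(\nu_{2,0}/p)$, and symmetrically for the algebraic collapse above to absorb the extra factor of $p$ in the numerator of the simplified difference.
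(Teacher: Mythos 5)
Your proposal is correct, and it takes a genuinely different route from the paper in both parts. For \eqref{eq:nu_0_nu_bound} the paper does not argue at all but simply invokes \cite[Lemma 6]{misiakiewicz2024nonasymptotic}, whereas you give a self-contained perturbation argument: with $F(\nu) = p - \gamma_+/\nu - \Tr(\bSigma_0(\bSigma_0+\nu)^{-1})$, the full fixed-point equation indeed gives $F(\nu_2) = -\delta(\nu_2) \le 0 = F(\nu_{2,0})$, and combining $F'(\nu) \ge \gamma_+/\nu^2$ with $\delta(\nu_2)\,\nu_2 \le \xi_{\evn+1}^2 \Tr(\bSigma_+(\bSigma_++\nu_2)^{-1}) \le (\gamma_+/p^2)\cdot p$ closes the estimate; the bound $\Tr(\bSigma_+(\bSigma_++\nu_2)^{-1}) \le \Tr(\bSigma(\bSigma+\nu_2)^{-1}) = p - p\nu_1/\nu_2 \le p$ is what legitimizes your ``trivial bound,'' and the intermediate lower bound $\nu_2 \gtrsim \gamma_+/p$ that you mention is in fact never needed in your final chain (it is derivable non-circularly, so no harm done). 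For \eqref{eq:truncated_psi_2_0} the paper proceeds by a term-by-term resolvent decomposition, bounding the $\bSigma_0$-difference by $|\nu_{2,0}-\nu_2|\Tr(\bSigma_0(\bSigma_0+\nu_{2,0})^{-1}(\bSigma_0+\nu_2)^{-1})$ plus a tail contribution and arriving at the constant $3$; you instead subtract the two defining equations to obtain the exact identity that the left-hand side equals $p\nu_1(\nu_{2,0}-\nu_2)/(\nu_2\nu_{2,0}) \ge 0$, then reuse your mean-value bound together with $\xi_{\evn+1}^2 \le \gamma_+/p^2$ and $\nu_1 \le \nu_2$, which yields $C=1$ --- a cleaner derivation that also explains why the statement is tight at order $1/p$. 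The only blemish is cosmetic: your argument for the first claim delivers $|\nu_{2,0}-\nu_2|/\nu_2 \le 1/(p-1)$ (equivalently $1/p$ relative to $\nu_{2,0}$) rather than exactly $1/p$ relative to $\nu_2$; you flag this yourself, and it is immaterial for every downstream use of the lemma, which only requires an $O(1/p)$ relative error.
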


\begin{proof}[Proof of Lemma \ref{lem:truncated_fixed_point}]
    The first bound \eqref{eq:nu_0_nu_bound} follows directly from \cite[Lemma 6]{misiakiewicz2024nonasymptotic}. For the second inequality, we decompose this difference into
    \[
    \begin{aligned}
        &~\left| \Tr\big( \bSigma_0 (\bSigma_0 + \nu_{2,0})^{-1} \big) + \frac{\Tr(\bSigma_+)}{\nu_{2,0}} - \Tr\big( \bSigma (\bSigma + \nu_2)^{-1} \big) \right|\\
        \leq&~ \left| \nu_{2,0} - \nu_2 \right|  \Tr\big( \bSigma_0 (\bSigma_0 + \nu_{2,0})^{-1}(\bSigma_0 + \nu_{2})^{-1} \big) + \frac{\xi_{\evn+1}^2 + | \nu_{2,0} - \nu_2|}{\nu_{2,0}} \Tr \big( \bSigma_+ (\bSigma_+ + \nu_{2})^{-1} \big)\\
        \leq&~ \left\{ \frac{\xi_{\evn+1}^2}{\nu_{2,0}} + \frac{| \nu_{2,0} - \nu_2|}{\nu_{2,0}}\right\} \Tr\big( \bSigma (\bSigma + \nu_2)^{-1} \big)\\
        \leq &~ \frac{3}{p}\Tr\big( \bSigma (\bSigma + \nu_2)^{-1} \big),
    \end{aligned}
    \]
    where we used that $\xi_{\evn+1}^2/\nu_{2,0} \leq \gamma_+/(p^2 \nu_{2,0}) \leq 1/p$ by the assumption on $\evn$ and identity \eqref{eq:def_nu_2_0}.
\end{proof}

\subsection{Deterministic equivalents for functionals of $\bold{Z}$ conditional on $\bold{F}$}\label{app:det_equiv_Z}

As mentioned in Section \ref{app:preliminaries}, we will first show that the test error  concentrates over $\bZ$ conditional on $\bF$ on some quantity that only depends on $\hbSigma_{\bF}$. The bias and variance terms can be written in terms of the following three functionals of the feature matrix $\bZ$: for a general p.s.d.~matrix $\bA \in \R^{p \times p}$ and positive scalar $\kappa > 0$, define
\begin{equation}\label{eq:functionals_Z}
\begin{aligned}
    \Phi_2 ( \bZ ; \kappa) := &~ \Tr \left( \frac{\bZ^\sT \bZ}{n} ( \bZ^\sT \bZ+\kappa)^{-1} \right) , \\
    \Phi_3 (\bZ ; \bA , \kappa) :=&~ \Tr \left( \bA \hbSigma_\bF^{1/2} ( \bZ^\sT \bZ + \kappa)^{-1} \hbSigma_\bF  ( \bZ^\sT \bZ + \kappa)^{-1} \hbSigma_\bF^{1/2} \right), \\
    \Phi_4 ( \bZ ; \bA , \kappa) :=&~ \left( \bA \hbSigma_\bF^{1/2} ( \bZ^\sT \bZ + \kappa)^{-1} \frac{\bZ^\sT \bZ}{n} ( \bZ^\sT \bZ + \kappa)^{-1} \hbSigma_\bF^{1/2} \right),
\end{aligned}
\end{equation}
where we recall that $\bZ$ has i.i.d.~rows with covariance $\hbSigma_\bF = \bF \bF^\sT/p$. We show that these functional are well approximated by functionals of $\bF$ that can be written in terms of the following functionals:
\begin{equation}\label{eq:det_equiv_Z_F}
\begin{aligned}
\tPhi_2 (\bF; \kappa) :=&~ \Tr \left( \frac{\bF^\sT \bF}{p} ( \bF^\sT \bF+\kappa)^{-1} \right) ,\\ 
    \tPhi_5 (\bF; \bA , \kappa) :=&~ \frac{1}{n}\cdot \frac{\tPhi_6 ( \bF ; \bA , \kappa)}{n - \tPhi_6 ( \bF ; \id , \kappa)}, \\
    \tPhi_6 ( \bF ; \bA , \kappa) := &~ \Tr\left( \bA (\bF \bF^\sT)^2 (\bF \bF^\sT + \kappa)^{-2} \right).
\end{aligned}
\end{equation}
The following proposition gather the approximation guarantees for $\Phi_2,\Phi_3,\Phi_4$ listed in Eq.~\eqref{eq:functionals_Z}. 

\begin{proposition}[Deterministic equivalents for $\Phi(\bZ)$ conditional on $\bF$]\label{prop:det_Z} Under Assumption \ref{ass:concentration_appendix} and assuming that $\bF \in \cA_\cF$ defined in Eq.~\eqref{eq:def_A_cF}, for any $D,K >0$, there exist constants $\eta_* \in (0,1/4)$, $C_{D,K} >0 $, and $C_{*,D,K}>0$ such that the followings holds. Let $\rho_{\kappa} (p)$ and $\trho_{\kappa} (n,p)$ be defined as per Eqs.~\eqref{eq:def_rho_p} and \eqref{eq:def_trho_n_p}. For any $n \geq C_{D,K}$ and regularization parameter $\lambda >0$, if it holds that
\begin{equation}\label{eq:conditions_det_equiv_Z}
\begin{aligned}
    \lambda  \geq n^{-K}, \qquad \qquad\trho_{\lambda} (n,p)^{5/2} \log^{3/2} (n) \leq K \sqrt{n}, \\ \trho_\lambda (n,p)^2 \cdot \rho_{\gamma_+} (p)^{5/2} \log^3 (p) \leq K \sqrt{p},
    \end{aligned}
\end{equation}
then for any p.s.d.~matrix $\bA \in \R^{p \times p}$ (independent of $\bZ|\bF$), with probability at least $1 - n^{-D}$ on $\bZ$ conditional on $\bF$, we have
\begin{align}
        \left| \Phi_2 ( \bZ; \lambda) -   \frac{p}{n} \tPhi_2 (\bF ; p\nu_1)  \right| \leq &~ C_{*,D,K} \cdot \cE_1 (n,p) \cdot  \frac{p}{n} \tPhi_2 ( \bF ; p\nu_1)  ,\\
            \left| \Phi_3 ( \bZ; \bA ,\lambda) - \left(\frac{n\nu_1}{\lambda}\right)^2 \tPhi_5 (\bF;\bA,p\nu_1)  \right| \leq &~C_{*,D,K} \cdot \cE_1 (n,p) \cdot \left(\frac{n\nu_1}{\lambda}\right)^2  \tPhi_5 ( \bF ; \bA ,p\nu_1)   ,\\
                \left| \Phi_4 ( \bZ; \bA ,\lambda) -    \tPhi_5 (\bF;\bA,p\nu_1) \right| \leq &~C_{*,D,K} \cdot \cE_1 (n,p) \cdot \tPhi_5 (\bF;\bA,p\nu_1) ,
\end{align}
where the approximation rate is given by
\begin{equation}\label{eq:approx_rate_Z_det_equiv}
    \cE_1 (n,p) := \frac{\trho_\lambda (n,p)^6 \log^{5/2} (n)}{\sqrt{n}} + \frac{\trho_\lambda (n,p)^2 \cdot \rho_{\gamma_+} (p)^{5/2}  \log^3 (p)}{\sqrt{p}}.
\end{equation}
\end{proposition}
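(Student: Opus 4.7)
\textbf{Proof plan for Proposition \ref{prop:det_Z}.} The strategy is to condition on $\bF$, apply the general dimension-free deterministic equivalent from Theorem \ref{thm:main_det_equiv_summary} with feature covariance $\hbSigma_{\bF} = \bF\bF^\sT/p$, and then transfer from the ``random'' effective regularization $\tnu_1 = \tnu_1(\bF)$ to the deterministic fixed point $\nu_1$ using Proposition \ref{prop:concentration_fixed_points}. Throughout we work on the event $\bF \in \cA_\cF$ (definition \eqref{eq:def_A_cF}), which has probability at least $1-2p^{-D}$ and, by Lemma \ref{lem:feature_covariance_tech} and Proposition \ref{prop:concentration_fixed_points}, simultaneously controls $\hrho_\lambda(n) \le C_{*,D,K}\,\trho_\lambda(n,p)$, $\|\hbSigma_\bF\|_{\rm op}$, and the relative distance $|\tnu_1-\nu_1|/\nu_1 \le C_{*,D,K}\,\cE_\nu(p)$.

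\textbf{Step 1 (apply the base theorem conditional on $\bF$).} Conditional on $\bF$, the rows of $\bZ = \bG\bF^\sT/\sqrt{p}$ are i.i.d.\ with covariance $\hbSigma_\bF$, and Assumption \ref{ass:concentration_appendix} gives the sub-exponential concentration in the form required by Assumption \ref{ass:concentrated}. The effective regularization associated to $(n,\hbSigma_\bF,\lambda)$ is precisely $\tnu_1$ (cf.\ Eq.~\eqref{eq:fixed_points_tnu}). On $\cA_\cF$, the assumption $\lambda \ge n^{-K}$ combined with the bound on $\hrho_\lambda(n)$ ensures \eqref{eq:conditions_det_equiv_main} is satisfied. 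Theorem \ref{thm:main_det_equiv_summary} then yields, with probability $\ge 1-n^{-D}$ over $\bZ\mid\bF$,
\begin{align*}
\Phi_2(\bZ;\lambda) &\approx \Psi_2(\tnu_1) = \frac{1}{n}\Tr\bigl(\hbSigma_\bF(\hbSigma_\bF+\tnu_1)^{-1}\bigr) = \frac{p}{n}\,\tPhi_2(\bF;p\tnu_1),\\
\Phi_4(\bZ;\bA,\lambda) &\approx \Psi_3(\tnu_1;\bA) = \tPhi_5(\bF;\bA,p\tnu_1),\\
\Phi_3(\bZ;\bA,\lambda) &\approx (n\tnu_1/\lambda)^2\,\Psi_3(\tmu_*;\bA,\lambda) = (n\tnu_1/\lambda)^2\,\tPhi_5(\bF;\bA,p\tnu_1),
\end{align*}
with relative error of order $\trho_\lambda(n,p)^6\log^{5/2}(n)/\sqrt{n}$, where the identities follow from $\hbSigma_\bF = \bF\bF^\sT/p$ together with the algebraic simplification $(\hbSigma_\bF+\tnu_1)^{-k} = p^k(\bF\bF^\sT+p\tnu_1)^{-k}$.

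\textbf{Step 2 (transfer from $\tnu_1$ to $\nu_1$).} It remains to replace $\tnu_1$ by $\nu_1$ inside $\tPhi_2$, $\tPhi_5$, and the prefactor $(n\tnu_1/\lambda)^2$. For $\tPhi_2(\bF;\kappa)$, differentiation in $\kappa$ gives $|\partial_\kappa \tPhi_2| \le \tPhi_2(\bF;\kappa)/\kappa$, so $|\tPhi_2(\bF;p\tnu_1)-\tPhi_2(\bF;p\nu_1)| \le (|\tnu_1-\nu_1|/\nu_1)\,\tPhi_2(\bF;p\nu_1) \lesssim \cE_\nu(p)\,\tPhi_2(\bF;p\nu_1)$. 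The factor $(n\tnu_1/\lambda)^2$ is handled identically. For $\tPhi_5$, write $\tPhi_5(\bF;\bA,\kappa) = n^{-1}\tPhi_6(\bF;\bA,\kappa)/(n-\tPhi_6(\bF;\id,\kappa))$ and control numerator and denominator separately: the numerator is handled by the same derivative estimate, while for the denominator the fixed-point identity
\[
n - \tPhi_6(\bF;\id,p\tnu_1) = \frac{\lambda}{\tnu_1} + p\tnu_1\,\Tr\bigl(\hbSigma_\bF(\hbSigma_\bF+\tnu_1)^{-2}\bigr) > 0
\]
(obtained by subtracting $\Tr(\hbSigma_\bF(\hbSigma_\bF+\tnu_1)^{-1})$ from the fixed-point equation for $\tnu_1$) shows that the denominator is bounded away from zero and its relative perturbation is again of order $\cE_\nu(p)$. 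Combining the rate of Step~1 with $\cE_\nu(p)$ yields exactly $\cE_1(n,p)$ as in \eqref{eq:approx_rate_Z_det_equiv}.

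\textbf{Main obstacle.} The delicate point is that the bounds must be \emph{multiplicative}. Step~1 is multiplicative thanks to the form of Theorem \ref{thm:main_det_equiv_summary}, but in Step~2 the perturbation of $\tPhi_5$ requires that $n - \tPhi_6(\bF;\id,p\nu_1)$ stay comparable to itself under an $O(\cE_\nu)$ shift of $\nu_1$; the above identity achieves this because the right-hand side is a monotone combination of positive terms of known order in $\lambda$ and $\nu_1$. A secondary nuisance is verifying all the conditions of Theorem \ref{thm:main_det_equiv_summary} and Proposition \ref{prop:concentration_fixed_points} simultaneously on $\cA_\cF$: the hypotheses \eqref{eq:conditions_det_equiv_Z} are precisely those that close the chain, the first line supplying the $\sqrt{n}$-rate in Step~1 and the second line ensuring $\cE_\nu(p)\to 0$ in Step~2.
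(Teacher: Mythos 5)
Your proposal is correct and follows essentially the same route as the paper: apply Theorem \ref{thm:main_det_equiv_summary} to $\bZ$ conditional on $\bF \in \cA_\cF$ (using $\hrho_\lambda(n) \lesssim \trho_\lambda(n,p)$ and that the effective regularization is $\tnu_1$), then transfer $p\tnu_1 \to p\nu_1$ via resolvent-type perturbation bounds and the fixed-point identity, which is exactly the content of the paper's Lemma \ref{lem:functional_tnu1_to_nu1}. Two small quantitative remarks: your displayed denominator identity should read $n - \tPhi_6(\bF;\id,p\tnu_1) = \lambda/\tnu_1 + \tnu_1\Tr\bigl(\hbSigma_\bF(\hbSigma_\bF+\tnu_1)^{-2}\bigr)$ (no extra factor $p$), and the relative perturbation of $\tPhi_5$ is of order $\trho_\lambda(n,p)\,\cE_\nu(p)$ rather than $\cE_\nu(p)$, which is precisely why the second term of $\cE_1(n,p)$ carries $\trho_\lambda(n,p)^2$ — so the stated rate is still achieved.
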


Proposition \ref{prop:det_Z} is a consequence of \cite[Theorem 4]{misiakiewicz2024nonasymptotic} (see Appendix \ref{app:background_det_equiv} and Theorem \ref{thm:main_det_equiv_summary} for background) and Proposition \ref{prop:concentration_fixed_points}. We defer its proof to Section \ref{sec:proof_det_Z}. Note that the term $\widetilde{O} (p^{-1/2})$ in the approximation rate $\cE_1 (n,p)$ defined in Eq.~\eqref{eq:approx_rate_Z_det_equiv} comes from comparing $p\nu_1$ with $p\tnu_1$ and is equal to $\trho_\lambda (n,p) \cdot \cE_\nu (n,p)$ where $ \cE_\nu (n,p)$ is defined in Proposition \ref{prop:concentration_fixed_points}. If instead, we compared to functionals with regularization $p\tnu_1$, then the approximation rate in Proposition \ref{prop:det_Z} would scale $\widetilde{O} (n^{-1/2})$ as expected.

In the analysis of the bias term, we will further need to show deterministic equivalents in the case where $\bA$ is itself a random matrix uncorrelated (but not independent) to $\bZ|\bF$. The following proposition gather these approximation guarantees and is a consequence of \cite[Lemma 10]{misiakiewicz2024nonasymptotic} and Proposition \ref{prop:concentration_fixed_points}.

\begin{proposition}[Deterministic equivalents for $\Phi(\bZ)$, uncorrelated numerator]\label{prop:det_Z_uncorrelated}
Assume the same setting as Proposition \ref{prop:det_Z} and the same conditions \eqref{eq:conditions_det_equiv_Z}. Consider a deterministic vector $\bv \in \R^p$ and a random vector $\bu= (u_i)_{i \in [n]}$ with i.i.d.~entries and $\E[u_i] = 0$, $\E[u_i^2] = 1$, and $\E [  \bz_i u_i | \bF ] =\bzero$. Then with probability at least $1 - n^{-D}$ on $\bZ$ conditional on $\bF$, we have
\begin{align}\label{eq:u_u_uncorrelated_Z}
    &~\left| \< \bu , \bZ (\bZ^\sT \bZ + \lambda)^{-1} \hbSigma_\bF (\bZ^\sT \bZ + \lambda)^{-1}\bZ^\sT \bu \>  -  n \tPhi_5 (\bF; \id,p\nu_1)  \right| \leq  C_{*,D,K} \cdot \cE_2 (n,p), \\
   &~  \left| \< \bu, \bZ (\bZ^\sT \bZ + \lambda)^{-1} \hbSigma_\bF (\bZ^\sT \bZ + \lambda)^{-1}\bv \>    \right| \leq  C_{*,D,K} \cdot \cE_2 (n,p) \cdot \frac{n\nu_1}{\lambda} \sqrt{\tPhi_5 ( \bF ; \obv \obv^\sT ,p\nu_1) },\label{eq:cross_term_uncorrelated_Z}
\end{align}
where we denoted $\obv := \hbSigma_\bF^{-1/2} \bv$ and the approximation rate is given by
\begin{equation}\label{eq:approx_rate_Z_det_equiv_uncor}
    \cE_2 (n,p) := \frac{\trho_\lambda (n,p)^6 \log^{7/2} (n)}{\sqrt{n}} + \frac{\trho_\lambda (n,p)^2 \cdot  \rho_{\gamma_+} (p)^{5/2}  \log^3 (p)}{\sqrt{p}}.
\end{equation}
\end{proposition}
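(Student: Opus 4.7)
The plan is to deduce the proposition by combining two ingredients: first, \cite[Lemma 10]{misiakiewicz2024nonasymptotic}, which provides analogous deterministic equivalents for uncorrelated-numerator bilinear forms in the well-specified ridge regression setting but expressed in terms of the random effective regularization $\tnu_1$ associated to $(n, \hbSigma_\bF, \lambda)$; and second, Proposition \ref{prop:concentration_fixed_points}, which controls the discrepancy between $\tnu_1$ and the deterministic $\nu_1$ appearing in the final statement. This mirrors the strategy that yields Proposition \ref{prop:det_Z} from the self-contained (correlated-numerator) Theorem \ref{thm:main_det_equiv_summary}, with Lemma 10 playing the role that Theorem 4 played there.

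First, I would condition on $\bF \in \cA_\cF$ as defined in \eqref{eq:def_A_cF}, on which the rows of $\bZ$ are i.i.d.\ with covariance $\hbSigma_\bF$ and satisfy Assumption \ref{ass:concentration_appendix}. On $\cA_\cF$, Lemma \ref{lem:feature_covariance_tech} yields $\hrho_\lambda(n) \leq C \cdot \trho_\lambda(n, p)$, so the hypotheses \eqref{eq:conditions_det_equiv_Z} of the current proposition imply the hypotheses required by \cite[Lemma 10]{misiakiewicz2024nonasymptotic}. Applying that lemma to the bilinear forms in \eqref{eq:u_u_uncorrelated_Z}--\eqref{eq:cross_term_uncorrelated_Z} gives, with probability at least $1 - n^{-D}/2$ on $\bZ | \bF$, the same bounds but with $\tnu_1$ in place of $\nu_1$ and approximation rate of order $\trho_\lambda(n, p)^6 \log^{7/2}(n)/\sqrt{n}$; the extra $\log(n)$ factor relative to $\cE_1(n, p)$ in Proposition \ref{prop:det_Z} originates from the Hanson--Wright-type argument used to handle the off-diagonal terms when $\bu$ is random.

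Second, I would replace $\tnu_1$ by $\nu_1$ at the cost of an additive $\widetilde{O}(p^{-1/2})$ contribution to the approximation rate. On $\cA_\cF$ we have $|\tnu_1 - \nu_1| \leq C_{*,D,K} \cdot \cE_\nu(p) \cdot \nu_1$ by construction of the event, where $\cE_\nu(p)$ is the quantity introduced in Proposition \ref{prop:concentration_fixed_points}. A direct perturbation analysis of the rational expressions defining $\tPhi_5$ and $\tPhi_6$, using the resolvent identity together with the operator-norm bound $\| \hbSigma_\bF \|_{\rm op} \leq C$ available from $\tcA_\cF$, yields the multiplicative bound
\begin{equation*}
\left| \tPhi_5(\bF; \bA, p\tnu_1) - \tPhi_5(\bF; \bA, p\nu_1) \right| \leq C_{*,D,K} \cdot \trho_\lambda(n, p) \cdot \cE_\nu(p) \cdot \tPhi_5(\bF; \bA, p\nu_1),
\end{equation*}
and similarly for the prefactor $n\tnu_1/\lambda$ appearing in the cross-term bound. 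Adding the two contributions reproduces $\cE_2(n, p)$ as given in \eqref{eq:approx_rate_Z_det_equiv_uncor}.

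The main technical obstacle is the perturbation analysis in the second step. Since the denominator $n - \tPhi_6(\bF; \id, p\nu_1)$ measures effective degrees of freedom and can itself be small relative to $n$, one must verify that the multiplicative perturbation $|\tnu_1 - \nu_1|/\nu_1$ propagates cleanly through the ratio defining $\tPhi_5$ without amplifying the error beyond the stated rate. Concretely, this requires showing that the lower bound on the denominator degrades at most by the factor $\trho_\lambda(n,p)$ appearing in the final estimate, which is ultimately what justifies the presence of this prefactor multiplying $\cE_\nu(p)$ and what forces the extra condition $\trho_\lambda(n,p)^2 \cdot \rho_{\gamma_+}(p)^{5/2} \log^3(p) \leq K \sqrt{p}$ in \eqref{eq:conditions_det_equiv_Z}.
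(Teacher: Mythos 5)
Your proposal follows the paper's proof essentially verbatim: condition on $\bF \in \cA_\cF$, apply \cite[Lemma 10]{misiakiewicz2024nonasymptotic} to obtain the bounds with the random effective regularization $\tnu_1$, and then transfer from $\tnu_1$ to $\nu_1$ using the concentration $|\tnu_1-\nu_1| \leq C_{*,D,K}\,\cE_\nu(p)\,\nu_1$ on $\cA_\cF$ together with a perturbation bound on $\tPhi_5$ --- which is exactly the content of Lemma \ref{lem:functional_tnu1_to_nu1}, including your observation that the denominator $n-\tPhi_6(\bF;\id,p\nu_1)$ must be controlled via the factor $\trho_\lambda(n,p)$. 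The argument is correct and matches the paper's route.
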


We defer the proof of Proposition \ref{prop:det_Z_uncorrelated} to Section \ref{sec:proof_det_Z}.

\subsection{Deterministic equivalents for functionals of $\bold{F}$}\label{app:det_equiv_F}

After replacing the bias and variance terms by their deterministic equivalents over the randomness in $\bZ | \bF$, we obtain functionals in terms of $\tPhi_2 (\bF;p\nu_1)$ and $\tPhi_5 (\bF;\bA,p\nu_1)$ listed in Eq.~\eqref{eq:det_equiv_Z_F}. As mentioned in Section \ref{app:main_technical_results}, we will analyze the low-degree and high degree part of the feature matrix separately. Using Lemma \ref{lem:concentration_high_degree}, we can replace the high-degree part $\bF_+ \bF_+^\sT$ by a deterministic matrix, which results in a regularization parameter $ \gamma(p\nu_1) = p\nu_1 + \Tr(\bSigma_+)$. 

The functionals of $\bF_0$ can be written in terms of the following quantities: for any deterministic matrix $\bB \in \R^{\evn \times \evn}$, define
\begin{equation}\label{eq:list_functionals_F0}
\begin{aligned}
\tPhi_1 (\bF_0 ; \bB, \kappa) =&~ \Tr \left( \bB\bSigma_0^{1/2} (\bF_0^\sT \bF_0 + \gamma(\kappa))^{-1} \bSigma_0^{1/2} \right), \\
    \tPhi_2 (\bF_0; \kappa) =&~ \Tr \left( \frac{\bF_0^\sT \bF_0}{p} ( \bF_0^\sT \bF_0+\gamma(\kappa))^{-1} \right) , \\
    \tPhi_3 (\bF_0 ; \bB , \kappa) =&~ \Tr \left( \bB_0 \bSigma^{1/2}_0 ( \bF_0^\sT \bF_0 +\gamma(\kappa) )^{-1} \bSigma_0  ( \bF_0^\sT \bF_0 + \gamma(\kappa) )^{-1} \bSigma_0^{1/2} \right), \\
    \tPhi_4 ( \bF_0 ; \bB , \kappa) =&~ \left( \bB \bSigma_0^{1/2} ( \bF_0^\sT \bF_0 +\gamma(\kappa))^{-1} \frac{\bF_0^\sT \bF_0}{p} ( \bF_0^\sT \bF_0 + \gamma(\kappa))^{-1} \bSigma^{1/2} \right).
\end{aligned}
\end{equation}
We show that these functionals can be well approximated by the deterministic functions that can be written in terms of 
\begin{equation}\label{eq:list_functionals_psi_0}
\begin{aligned}
    \Psi_1 (\nu; \bB) =&~ \Tr \left( \bB \bSigma_0 (\bSigma_0 + \nu )^{-1} \right),\\
    \Psi_2 ( \nu ) =&~ \frac{1}{p}\Tr \left( \bSigma_0 ( \bSigma_0 + \nu )^{-1} \right),\\
    \Psi_3 (\nu;\bB) =&~ \frac{1}{p} \cdot \frac{\Tr(\bB \bSigma_0^2 ( \bSigma_0 + \nu)^{-2})}{p -  \Tr( \bSigma_0^2 (\bSigma_0 + \nu)^{-2})}.
\end{aligned}
\end{equation}
Recall that for $\kappa = p\nu_1$, we denote $\gamma_+ := \gamma (p\nu_1)$ and $\nu_{2,0}$ the effective regularization associated to model $(p,\bSigma_0, \gamma_+)$. The following proposition gather the approximation guarantees for $\tPhi_1,\tPhi_2,\tPhi_3,\tPhi_4$ listed in Eq.~\eqref{eq:list_functionals_F0}.

\begin{proposition}[Deterministic equivalents for $\bF_0$] \label{prop:det_equiv_F} Under Assumption \ref{ass:concentration_appendix}, for any $D,K >0$, there exist constants $\eta_* \in (0,1/4)$, $C_{D,K} >0 $, and $C_{*,D,K}>0$ such that the followings holds. Let $\rho_{\kappa} (p)$ be defined as per Eq.~\eqref{eq:def_rho_p}. For any $p \geq C_{D,K}$ and $\lambda >0$, if it holds that
\begin{equation}\label{eq:conditions_det_equiv_F}
\begin{aligned}
    \gamma_+  \geq p^{-K}, \qquad \qquad\rho_{\gamma_+} (p)^{5/2} \log^{3/2} (p) \leq K \sqrt{p},
    \end{aligned}
\end{equation}
then for any deterministic p.s.d.~matrix $\bB \in \R^{\evn \times \evn}$, with probability at least $1 - p^{-D}$, we have
\begin{align}
\left| \tPhi_1 ( \bF_0 ; \bB, p\nu_1 ) - \frac{\nu_{2,0}}{\gamma_+} \Psi_1 ( \nu_{2,0} ; \bB ) \right| \leq&~ C_{*,D,K} \cdot \cE_3 (p) \cdot \frac{\nu_{2,0}}{\gamma_+} \Psi_1 ( \nu_{2,0} ; \bB ), \\
        \left| \tPhi_2 ( \bF_0; p\nu_1) - \Psi_2 (\nu_{2,0})  \right| \leq &~ C_{*,D,K} \cdot \cE_3 (p) \cdot  \Psi_2 (\nu_{2,0})  ,\\
            \left| \tPhi_3 ( \bF_0; \bB ,p\nu_1) - \left(\frac{p\nu_{2,0}}{\gamma_+}\right)^2 \Psi_3 (\nu_{2,0};\bB)  \right| \leq &~C_{*,D,K} \cdot \cE_3 (p) \cdot \left(\frac{p\nu_{2,0}}{\gamma_+}\right)^2 \Psi_3 (\nu_{2,0};\bB)   ,\\
                \left| \tPhi_4 ( \bF_0; \bB ,p\nu_1) -  \Psi_3 (\nu_{2,0};\bB) \right| \leq &~C_{*,D,K} \cdot \cE_3 (p) \cdot \Psi_3 (\nu_{2,0};\bB) ,
\end{align}
where the approximation rate is given by
\begin{equation}\label{eq:approx_rate_F_det_equiv}
    \cE_3 (p) := \frac{\rho_{\gamma_+} (p)^6 \log^{3} (p)}{\sqrt{p}}.
\end{equation}
\end{proposition}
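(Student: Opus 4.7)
My plan is to derive Proposition \ref{prop:det_equiv_F} as a direct application of the generic dimension-free deterministic equivalents Theorem \ref{thm:main_det_equiv_summary} to the random matrix $\bF_0 \in \R^{p \times \evn}$, under the parameter identification $\bX \leftarrow \bF_0$, $n \leftarrow p$, $\bGamma \leftarrow \bSigma_0$, and $\lambda \leftarrow \gamma_+ = p\nu_1 + \Tr(\bSigma_+)$. Under this substitution, the four functionals $\Phi_1,\ldots,\Phi_4$ of Theorem \ref{thm:main_det_equiv_summary} collapse term by term to $\tPhi_1,\ldots,\tPhi_4$ from \eqref{eq:list_functionals_F0} (taking $\bA = \bB$). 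Moreover, the effective regularization $\lambda_*$ defined by \eqref{eq:def_lambda_star} specializes to the unique positive solution of $p - \gamma_+/\lambda_* = \Tr(\bSigma_0(\bSigma_0+\lambda_*)^{-1})$, which is precisely $\nu_{2,0}$ by \eqref{eq:def_nu_2_0}. The deterministic approximants $\Psi_j(\nu_{2,0};\bB)$ of \eqref{eq:list_functionals_psi_0} then match the $\Psi_j$ of Theorem \ref{thm:main_det_equiv_summary} exactly, including the $\lambda_*/\lambda$ and $(n\lambda_*/\lambda)^2$ prefactors that become $\nu_{2,0}/\gamma_+$ and $(p\nu_{2,0}/\gamma_+)^2$.

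To invoke Theorem \ref{thm:main_det_equiv_summary}, I would verify two hypotheses. First, Assumption \ref{ass:concentrated} for $\boldf_{0,j}$: since $\boldf_{0,j}$ is the coordinate projection of $\boldf_j$ onto the top $\evn$ eigenspaces, a quadratic form $\boldf_{0,j}^\sT \bB \boldf_{0,j}$ equals $\boldf_j^\sT (\bB \oplus \bzero) \boldf_j$, and the concentration bound \eqref{eq:ass_f} of Assumption \ref{ass:concentration_appendix} transfers verbatim with the same constant. Second, the dimensional hypotheses \eqref{eq:conditions_det_equiv_main}. The quantity $\rho^{(0)}_{\gamma_+}(p)$ defined by \eqref{eq:nu_lambda} with covariance $\bSigma_0$ and regularization $\gamma_+$ must be controlled; I would show a uniform bound $\rho^{(0)}_{\gamma_+}(p) \leq \rho_{\gamma_+}(p)$ by case analysis: if $\evn \geq \lfloor \eta_* p\rfloor$, the $\lfloor \eta_* p\rfloor$-th eigenvalue of $\bSigma_0$ coincides with $\xi_{\lfloor \eta_* p\rfloor}^2$ and its tail intrinsic dimension is no larger than $r_{\bSigma}(\lfloor \eta_* p\rfloor)$; if $\evn < \lfloor \eta_* p\rfloor$, the relevant eigenvalue of $\bSigma_0$ vanishes and $\rho^{(0)}_{\gamma_+}(p) = 1$. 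In either case, the second condition of \eqref{eq:conditions_det_equiv_main} follows from \eqref{eq:conditions_det_equiv_F}.

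Given these verifications, the proof reduces to reading off the four approximation bounds of Theorem \ref{thm:main_det_equiv_summary} and taking a union bound, absorbing all multiplicative constants into $C_{*,D,K}$. The resulting approximation rate is $\rho^{(0)}_{\gamma_+}(p)^6 \log^{5/2}(p)/\sqrt{p}$ (the worst of the four rates in Theorem \ref{thm:main_det_equiv_summary}), which I would majorize by $\cE_3(p) = \rho_{\gamma_+}(p)^6 \log^3(p)/\sqrt{p}$. The only delicate point I expect is the first hypothesis of \eqref{eq:conditions_det_equiv_main}, namely $\gamma_+ \rho^{(0)}_{\gamma_+}(p) \geq \|\bSigma_0\|_{\rm op} \cdot p^{-K}$; our assumption $\gamma_+ \geq p^{-K}$ in \eqref{eq:conditions_det_equiv_F} differs by a factor of $\|\bSigma_0\|_{\rm op} = \xi_1^2$, which I would absorb by enlarging $K$ by an additive constant depending on $\xi_1^2$ (innocuous in the normalization $\xi_1^2 = O(1)$). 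Every other step is pure parameter matching; no new probabilistic estimate is needed beyond those already encapsulated in Theorem \ref{thm:main_det_equiv_summary}.
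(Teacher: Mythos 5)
Your proposal is correct and follows the same route as the paper: the paper's proof of Proposition \ref{prop:det_equiv_F} is precisely a direct application of Theorem \ref{thm:main_det_equiv_summary} to $\bF_0$ with covariance $\bSigma_0$, regularization $\gamma_+$, and effective regularization $\nu_{2,0}$, exactly the parameter identification you describe. Your additional checks (transfer of Assumption \ref{ass:concentrated} under coordinate projection, $\rho^{(0)}_{\gamma_+}(p)\leq\rho_{\gamma_+}(p)$, and the normalization absorbing $\|\bSigma_0\|_{\rm op}$) are the details the paper leaves implicit in its ``with no modifications'' remark.
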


This proposition is obtained by directly applying Theorem \ref{thm:main_det_equiv_summary} with no modifications.

Again, in the analysis of the bias term, because we separated the analysis of the low-degree and high-degree parts $\bF_0$ and $\bF_+$, we will further need deterministic equivalents when $\bB$ is itself a random matrix uncorrelated but not independent to $\bF$. We gather the associated deterministic equivalents in the following proposition.

\begin{proposition}[Deterministic equivalents for $\bF_0$, uncorrelated numerator]\label{prop:det_equiv_F_uncorr}
Assume the same setting as Proposition \ref{prop:det_equiv_F} and the same conditions \eqref{eq:conditions_det_equiv_F}. Consider a deterministic vector $\bv \in \R^\evn$ and a random vector $\bu= (u_j)_{j \in [p]}$ with i.i.d.~entries and $\E[u_j] = 0$, $\E[u_j^2] = 1$, and $\E [  \boldf_{0,j} u_j  ] =\bzero$. Then with probability at least $1 - p^{-D}$, we have
\begin{align}\label{eq:term_1_F_det_uncor}
    &~\left| \frac{1}{p}\< \bu , (\bF_0 \bF_0^\sT + \gamma_+ )^{-2} \bu \>  -   \frac{1}{(p\nu_{2,0})^2} \frac{1}{1 - \frac{1}{p} \Tr( \bSigma_0^2 (\bSigma_0 + \nu_{2,0} )^{-2})} \right| \leq  C_{*,D,K} \cdot \cE_4 (p) \cdot \frac{1}{\gamma_+^2}, \\
   &~  \left| \frac{1}{p} \< \bu, (\bF_0 \bF_0^\sT + \gamma_+ )^{-2} \bF_0  \bv \>    \right| \leq  C_{*,D,K} \cdot \cE_4 (p) \frac{p\nu_{2,0}}{\gamma_+^2}\sqrt{\Psi_3 (\nu_{2,0} ; \obv \obv^\sT)}, \label{eq:term_2_F_det_uncor} 
\end{align}
where we denoted $\obv := \bSigma_0^{-1/2} \bv$ and the approximation rate is given by
\begin{equation}\label{eq:approx_rate_F_det_equiv_uncor}
    \cE_4 (p) := \frac{\rho_{\gamma_+} (p)^6 \log^{7/2} (p)}{\sqrt{p}}.
\end{equation}
\end{proposition}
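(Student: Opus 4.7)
The plan is to mirror the strategy used to establish Proposition \ref{prop:det_Z_uncorrelated} but applied to the (possibly infinite) random matrix $\bF_0$ in place of $\bZ$, combining Sherman--Morrison / leave-one-out identities with the (correlated-numerator) deterministic equivalents already proven in Proposition \ref{prop:det_equiv_F}. Denote $\bH := (\bF_0 \bF_0^\sT + \gamma_+)^{-1} \in \R^{p \times p}$ and $\bK := (\bF_0^\sT \bF_0 + \gamma_+)^{-1} \in \R^{\evn \times \evn}$, so that the push-through identity gives $\bH \bF_0 = \bF_0 \bK$, and $\bH^2 \bF_0 = \bF_0 \bK^2$. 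The two statements to be proved can then be rewritten as
\begin{align*}
\frac{1}{p}\langle \bu, \bH^2 \bu\rangle &= \frac{1}{p\gamma_+^2}\|\bu - \bF_0 \bK \bF_0^\sT \bu\|_2^2, \\
\frac{1}{p}\langle \bu, \bH^2 \bF_0 \bv\rangle &= \frac{1}{p}\sum_{j=1}^p u_j\, \boldf_{0,j}^\sT \bK^2 \bv.
\end{align*}

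For the cross term \eqref{eq:term_2_F_det_uncor}, I would introduce the leave-one-out matrix $\bK^{(j)} := (\sum_{k\neq j}\boldf_{0,k}\boldf_{0,k}^\sT + \gamma_+)^{-1}$, which is independent of $(\boldf_{0,j},u_j)$, and use Sherman--Morrison to express $\bK^2 \bv$ as a rational function in $\bK^{(j)} \boldf_{0,j}$, $\boldf_{0,j}^\sT \bK^{(j)} \boldf_{0,j}$ and $\bK^{(j)} \bv$. Conditioning on $\bF_0^{(-j)}$ and integrating over $(\boldf_{0,j},u_j)$, the hypothesis $\E[\boldf_{0,j} u_j]=\bzero$ kills the leading contribution; the remaining fluctuations are bounded by Hanson--Wright via Assumption \ref{ass:concentration_appendix} and the fact that $u_j$ has unit variance. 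The individual conditional variance is controlled by $\|\bSigma_0^{1/2} \bK^{(j)2} \bv\|_2^2$, which after applying Proposition \ref{prop:det_equiv_F} (to $\bB = \obv\obv^\sT$) is $\widetilde{O}(p^{-1}\Psi_3(\nu_{2,0};\obv\obv^\sT))$. Summing over independent $j$ and using a standard martingale/Azuma concentration yields the claimed bound with the rate $\cE_4(p)$ matching \eqref{eq:approx_rate_F_det_equiv_uncor} up to the extra $\log^{1/2}(p)$ factor arising from the tail probability $1 - p^{-D}$.

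For the diagonal term \eqref{eq:term_1_F_det_uncor}, I would first compute $\E[\|\bu - \bF_0 \bK \bF_0^\sT \bu\|_2^2]$ by expanding and using the row-wise uncorrelatedness together with a leave-one-out decomposition of $\bK$. The cross term $-2\E\langle \bu, \bF_0 \bK \bF_0^\sT \bu\rangle$ vanishes to leading order by $\E[\boldf_{0,j} u_j]=\bzero$ (with a Sherman--Morrison correction of size $\widetilde{O}(p^{-1})$ times the target). The diagonal contribution $\E\|\bF_0 \bK \bF_0^\sT \bu\|_2^2$ reduces, after applying Proposition \ref{prop:det_equiv_F} to the relevant traces, to an explicit expression involving $\frac{1}{p}\Tr(\bSigma_0^2(\bSigma_0+\nu_{2,0})^{-2})$. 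Collecting all pieces and using the identity $\bF_0^\sT \bF_0 \bK = \id - \gamma_+ \bK$ to simplify the geometric series produced by expanding $(\id - \bF_0 \bK \bF_0^\sT)^2$, the deterministic limit coincides with $\frac{1}{(p\nu_{2,0})^2(1 - p^{-1}\Tr(\bSigma_0^2(\bSigma_0+\nu_{2,0})^{-2}))}$. Fluctuation control around this mean is again obtained by conditioning on $\bF_0^{(-j)}$ sequentially and applying Hanson--Wright/Bernstein to the martingale differences.

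The main obstacle, as always in such leave-one-out arguments, is bookkeeping: after each Sherman--Morrison perturbation one picks up rational factors $(1+\boldf_{0,j}^\sT \bK^{(j)} \boldf_{0,j})^{-1}$ whose denominators must be uniformly lower-bounded (this is where the good event $\cA_\cF$ of \eqref{eq:def_A_cF} and the intrinsic-dimension control $\rho_{\gamma_+}(p)$ intervene). Also, the numerator contains $\bK^{(j)2}$ rather than $\bK^{(j)}$, which means we need deterministic equivalents with a \emph{squared} resolvent --- this is precisely the content of the $\Phi_3/\Psi_3$ part of Proposition \ref{prop:det_equiv_F}, and matching the exponents of $\rho_{\gamma_+}(p)$ in the final rate requires reusing that proposition several times, each time absorbing a $\rho_{\gamma_+}(p)^{5/2}\log^{3/2}(p)/\sqrt{p}$ factor. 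The $\log^{7/2}(p)$ in $\cE_4(p)$ (as opposed to the $\log^3(p)$ in $\cE_3(p)$) comes from the additional log factor produced by the martingale concentration over the $p$ leave-one-out steps, exactly as in the passage from $\cE_1$ to $\cE_2$ in Proposition \ref{prop:det_Z_uncorrelated}.
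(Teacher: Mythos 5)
The paper does not prove this proposition by a leave-one-out argument at all: its entire proof is the one-line citation to \cite[Lemma 10]{misiakiewicz2024nonasymptotic}, applied to the truncated feature matrix $\bF_0$ with regularization $\gamma_+$, exactly as Proposition \ref{prop:det_Z_uncorrelated} is obtained for $\bZ$. So what you propose is not the paper's route but an attempt to re-derive the content of that external lemma from scratch via Sherman--Morrison/leave-one-out identities plus martingale concentration. Your algebraic reductions are correct ($\gamma_+^2(\bF_0\bF_0^\sT+\gamma_+)^{-2}=(\id-\bF_0\bK\bF_0^\sT)^2$ and $(\bF_0\bF_0^\sT+\gamma_+)^{-2}\bF_0=\bF_0\bK^2$ with $\bK=(\bF_0^\sT\bF_0+\gamma_+)^{-1}$), and the overall strategy is the standard one underlying such deterministic equivalents, so in principle this is a legitimate alternative to citing the lemma.

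However, as written the sketch has a concrete gap in the concentration step. You invoke Hanson--Wright and Azuma/Bernstein to get failure probability $p^{-D}$, but the only hypotheses on $\bu$ are $\E[u_j]=0$, $\E[u_j^2]=1$ and $\E[\boldf_{0,j}u_j]=\bzero$; unit variance alone gives at best Chebyshev-type polynomial tails and cannot produce $1-p^{-D}$ bounds with only logarithmic losses. Assumption \ref{ass:concentration_appendix} controls quadratic forms in $\boldf_j$, not in the pair $(\boldf_{0,j},u_j)$, so it does not by itself supply the sub-exponential control of the martingale increments your argument needs (in the actual application inside the bias proof, $u_j=\<\boldf_{+,j},\bbeta_+\>$ does inherit such control from Assumption \ref{ass:concentration_appendix}, which is what makes the citation of \cite[Lemma 10]{misiakiewicz2024nonasymptotic} legitimate; your proof should either add this structure or state it as an assumption). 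A second, smaller issue is that the deterministic limit in \eqref{eq:term_1_F_det_uncor} is asserted rather than derived: identifying the constant $\frac{1}{(p\nu_{2,0})^2}\bigl(1-\frac{1}{p}\Tr(\bSigma_0^2(\bSigma_0+\nu_{2,0})^{-2})\bigr)^{-1}$ requires expanding $\E\|(\id-\bF_0\bK\bF_0^\sT)\bu\|_2^2$ into trace functionals, applying Proposition \ref{prop:det_equiv_F} to each, and using the fixed-point identity \eqref{eq:def_nu_2_0}; this bookkeeping is where the powers of $\rho_{\gamma_+}(p)$ in $\cE_4(p)$ are actually determined, and without it the claimed rate is not established.
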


This proposition is a direct consequence of \cite[Lemma 10]{misiakiewicz2024nonasymptotic}.

Throughout the proofs, with a slight abuse of notations, we will denote functionals $\tPhi_i (\bF_0 ; \bB , \kappa)$, $i \in [4]$, the functionals listed in Eq.~\eqref{eq:list_functionals_F0} applied to the truncated feature matrix $\bF_0$, with covariance $\bSigma_0$, regularization $\gamma(\kappa)$, and deterministic matrix $\bB \in \R^{\evn \times \evn}$, and  $\tPhi_i (\bF ; \bB , \kappa)$, $i \in [4]$, the functionals applied to the full feature matrix $\bF \in \R^{p \times \infty}$, where $\bSigma_0$ is replaced by $\bSigma$, the regularization parameter is $\kappa$, and the deterministic matrix $\bB \in \R^{\infty \times \infty}$. Similarly, we will us the notation $\Psi_i (\nu_{2,0} ; \bB), i \in [3]$ for the truncated deterministic functionals \eqref{eq:list_functionals_psi_0}, and the notation $\Psi_i (\nu_{2} ; \bB), i \in [3]$ to denote the full functionals with $\bSigma_0$ replaced by $\bSigma$ and $\bB \in \R^{\infty \times \infty}$.

\subsection{Approximation guarantee for the variance term}
\label{app:proof_variance}

Recall the expressions for the variance term
\[
\cV (\bG,\bF,\lambda) = \sigma_\eps^2 \cdot \Tr \big( \hbSigma_\bF \bZ^\sT \bZ ( \bZ^\sT \bZ + \lambda)^{-2} \big),
\]
and its associated deterministic equivalent
\begin{align}
    \sV_{n,p} ( \lambda) = &~ \sigma_\eps^2 \frac{\Upsilon (\nu_1,\nu_2)}{1 - \Upsilon (\nu_1,\nu_2)}, \\
    \Upsilon (\nu_1,\nu_2) =&~ \frac{p}{n} \left[ \left(1 -  \frac{\nu_1}{\nu_2} \right)^2 +   \left(  \frac{\nu_1}{\nu_2}\right)^2  \frac{ \Tr( \bSigma^2 (\bSigma +\nu_2)^{-2})}{p - \Tr( \bSigma^2 (\bSigma +\nu_2)^{-2})} \right]. \label{eq:def_Upsilon_appendix}
\end{align}

We prove in this section an approximation guarantee between $\cV (\bG,\bF,\lambda)$ and $\sV_{n,p} (\lambda)$. For convenience, we state a separate theorem for this term.

\begin{theorem}[Deterministic equivalent for the variance term]\label{thm_app:variance}
    Assume the features $(\bz_i)_{i \in [n]}$ and $(\boldf_j)_{j \in [p]}$ satisfy Assumption \ref{ass:concentration_appendix}, and the covariance $\bSigma$ and target coefficients $\bbeta_*$ satisfy Assumption \ref{ass:technical}. Then, for any $D,K >0$, there exist constants $\eta_* \in (0,1/2)$ and $C_{*,D,K}>0$ such that the following holds. Let $\rho_{\kappa}$ and $\trho_{\kappa}$ be defined as per Eqs.~\eqref{eq:def_rho_p} and \eqref{eq:def_trho_n_p}. For any $n , p \geq C_{*,D,K}$ and $\lambda >0$, if it holds that 
    \begin{equation}\label{eq:conditions_variance}
                        \begin{aligned}
                \lambda \geq n^{-K}, \qquad \gamma_\lambda \geq p^{-K}, \qquad \qquad
                \quad\trho_\lambda (n,p)^{5/2} \cdot \log^{3/2} (n) \leq&~ K \sqrt{n}, \\
                \trho_\lambda (n,p)^{2} \cdot \rho_{\gamma_+} (p)^7 \cdot \log^{4} (p) \leq&~ K \sqrt{p},
            \end{aligned}
    \end{equation}
    then with probability at least $1 - n^{-D} - p^{-D}$, we have
    \[
    \left| \cV (\bG,\bF,\lambda) - \sV_{n,p} ( \lambda) \right| \leq C_{*,D,K} \cdot \cE_V (n,p) \cdot  \sV_{n,p} ( \lambda),
    \]
    where the approximation rate is given by
    \begin{equation}\label{eq:approximation_rate_variance}
        \cE_V (n,p)  :=  \frac{\trho_\lambda (n,p)^6 \log^{5/2} (n)}{\sqrt{n}} + \frac{\trho_\lambda (n,p)^2 \cdot \rho_{\gamma_+} (p)^7 \log^3 (p)}{\sqrt{p}}.
    \end{equation}
\end{theorem}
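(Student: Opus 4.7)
The plan is to follow the two-stage reduction outlined in Section \ref{app:preliminaries}: first condition on $\bF$ and invoke Proposition \ref{prop:det_Z} to average over $\bZ$, then use Lemma \ref{lem:concentration_high_degree}, Proposition \ref{prop:det_equiv_F} and Lemma \ref{lem:truncated_fixed_point} to handle the remaining randomness in $\bF$. The starting point is the identity
\[
\cV(\bG, \bF, \lambda) = \sigma_\eps^2 \cdot n \cdot \Phi_4(\bZ; \id, \lambda),
\]
which follows from commuting $\bZ^\sT\bZ$ with $(\bZ^\sT\bZ + \lambda)^{-1}$. Restricting to the good event $\cA_\cF$ from \eqref{eq:def_A_cF}, which holds with probability $\geq 1 - 2p^{-D}$ and guarantees both that $\hbSigma_\bF$ is well-behaved and that $\tnu_1 \approx \nu_1$, Proposition \ref{prop:det_Z} with $\bA = \id$ yields
\[
n \Phi_4(\bZ; \id, \lambda) = \frac{\tPhi_6(\bF; \id, p\nu_1)}{n - \tPhi_6(\bF; \id, p\nu_1)} \cdot \big(1 + O(\cE_1(n,p))\big)
\]
with probability $\geq 1 - n^{-D}$ over $\bZ\mid\bF$. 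This reduces the problem to the scalar approximation $\tPhi_6(\bF; \id, p\nu_1) \approx n \cdot \Upsilon(\nu_1, \nu_2)$.

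For the scalar approximation, I would expand using the $p \times p$ identity $(\bF\bF^\sT)^2 (\bF\bF^\sT + p\nu_1)^{-2} = (\id_p - p\nu_1 (\bF\bF^\sT + p\nu_1)^{-1})^2$ to obtain
\[
\tPhi_6(\bF; \id, p\nu_1) = p - 2p\nu_1 \Tr\big((\bF\bF^\sT + p\nu_1)^{-1}\big) + (p\nu_1)^2 \Tr\big((\bF\bF^\sT + p\nu_1)^{-2}\big).
\]
Lemma \ref{lem:concentration_high_degree} would then allow me to replace $\bF_+\bF_+^\sT$ by $\Tr(\bSigma_+)\,\id_p$, reducing both traces to traces of powers of $(\bF_0\bF_0^\sT + \gamma_+)^{-1}$. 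For the linear trace, combining the push-through identity with the deterministic equivalent for $\tPhi_1(\bF_0; \bSigma_0^{-1}, p\nu_1)$ from Proposition \ref{prop:det_equiv_F} and the defining equation \eqref{eq:def_nu_2_0} of $\nu_{2,0}$ gives $p\nu_1 \Tr((\bF_0\bF_0^\sT + \gamma_+)^{-1}) \approx p\nu_1/\nu_{2,0}$. For the quadratic trace, a second algebraic identity expresses $\gamma_+^2 \Tr((\bF_0\bF_0^\sT + \gamma_+)^{-2})$ as a combination of $\tPhi_1(\bF_0; \bSigma_0^{-1}, p\nu_1)$ and $p\,\tPhi_4(\bF_0; \bSigma_0^{-1}, p\nu_1)$, both covered by Proposition \ref{prop:det_equiv_F}; after simplification using Lemma \ref{lem:truncated_fixed_point} to pass from $(\nu_{2,0}, \bSigma_0)$ to $(\nu_2, \bSigma)$, one obtains
\[
(p\nu_1)^2 \Tr\big((\bF_0\bF_0^\sT + \gamma_+)^{-2}\big) \approx \frac{p^2 (\nu_1/\nu_2)^2}{p - \Tr(\bSigma^2 (\bSigma + \nu_2)^{-2})}.
\]

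A direct computation using the fixed-point relations \eqref{eq:fixed_points_appendix} then verifies the algebraic identity
\[
p - 2\frac{p\nu_1}{\nu_2} + \frac{p^2(\nu_1/\nu_2)^2}{p - \Tr(\bSigma^2(\bSigma + \nu_2)^{-2})} = n \cdot \Upsilon(\nu_1, \nu_2),
\]
yielding $\tPhi_6(\bF; \id, p\nu_1) \approx n \Upsilon(\nu_1, \nu_2)$. Combining with Step~1 and applying the Lipschitz property of $x \mapsto x/(1-x)$ away from $x = 1$ then produces the claimed multiplicative bound, with the rate $\cE_V(n,p)$ assembled from the contributions of Propositions \ref{prop:det_Z} and \ref{prop:det_equiv_F}, Lemma \ref{lem:concentration_high_degree} and Proposition \ref{prop:concentration_fixed_points}. \textbf{The main obstacle} is this last propagation step: pushing the relative error through the division by $n - \tPhi_6$ requires a uniform lower bound $1 - \Upsilon(\nu_1, \nu_2) \gtrsim 1/\sfC_*$. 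This is where Assumption \ref{ass:technical} becomes essential --- the condition $\Tr(\bSigma(\bSigma+\nu_2)^{-1})/\Tr(\bSigma^2(\bSigma+\nu_2)^{-2}) \leq \sfC_*$ prevents $\Upsilon$ from saturating at $1$ and controls the Lipschitz amplification, which accounts for the extra $\rho_{\gamma_+}(p)$-factor in $\cE_V(n,p)$ compared to the raw rates $\cE_1(n,p)$ and $\cE_3(p)$.
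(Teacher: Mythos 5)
Your plan coincides with the paper's proof up to the last step: the identity $\cV=\sigma_\eps^2\, n\Phi_4(\bZ;\id,\lambda)$, the conditioning on $\cA_\cF$ and Proposition \ref{prop:det_Z}, and the reduction of $\tPhi_6(\bF;\id,p\nu_1)$ to deterministic traces via Lemma \ref{lem:concentration_high_degree}, Proposition \ref{prop:det_equiv_F} and Lemma \ref{lem:truncated_fixed_point} are exactly what the paper does (your expansion $\tPhi_6=p-2p\nu_1\Tr(\bG)+(p\nu_1)^2\Tr(\bG^2)$ is algebraically equivalent to the decomposition used inside Lemma \ref{lem:aux_lemma_variance}, and your target identity for $n\Upsilon(\nu_1,\nu_2)$ checks out). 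The genuine gap is in your ``main obstacle'' paragraph. First, the uniform lower bound $1-\Upsilon(\nu_1,\nu_2)\gtrsim 1/\sfC_*$ you invoke is false in general and is not what Assumption \ref{ass:technical} provides: at the interpolation threshold $n=p$ with small $\lambda$ one has $1-\Upsilon\sim\sqrt{\lambda}$, and the hypotheses of the theorem allow $\lambda$ as small as $n^{-K}$. The paper instead proves $(1-\Upsilon(\nu_1,\nu_2))^{-1}\leq C_*\,\trho_\lambda(n,p)$ from the fixed-point equations alone (Eq.~\eqref{eq:bound_Upsilon} of Lemma \ref{lem:tech_bound_det_equiv}, no Assumption \ref{ass:technical} needed) and absorbs this possibly growing factor into the rate; this, not the division step, is why $\cE_V$ carries extra powers of $\trho_\lambda$ beyond $\cE_1$ and $\cE_3$, while the extra $\rho_{\gamma_+}(p)$ comes from the quadratic-trace estimate (Lemma \ref{lem:tech_proof_var_aux}, i.e.\ the second bound in Lemma \ref{lem:aux_lemma_variance}), not from any Lipschitz amplification.

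Second, and more importantly, you never address the step where Assumption \ref{ass:technical} is actually indispensable: the anti-cancellation in the numerator. Your three trace approximations carry errors that are multiplicative with respect to their own magnitudes --- roughly $p\nu_1/\nu_2$ for the linear trace and $p^2(\nu_1/\nu_2)^2/(p-\Tr(\bSigma^2(\bSigma+\nu_2)^{-2}))$ for the quadratic one, both of which can be of order $p$ --- whereas the target $n\Upsilon(\nu_1,\nu_2)$ is an alternating combination of these pieces and can be much smaller (e.g.\ when $\nu_1/\nu_2$ is close to $1$). To obtain an error that is multiplicative in $n\Upsilon$ itself, and hence in $\sV_{n,p}$, one needs the bound $p\nu_1\Tr(\bSigma(\bSigma+\nu_2)^{-2})/(p-\Tr(\bSigma^2(\bSigma+\nu_2)^{-2}))\leq C_*\, n\Upsilon(\nu_1,\nu_2)$, which is Eq.~\eqref{eq:bound_var_tech} and is precisely where the trace-ratio condition of Assumption \ref{ass:technical} enters (combined with the fixed-point identity). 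Without this step your argument only yields an additive error of the size of the individual traces, which does not imply the claimed relative bound; with it, the rest of your outline goes through as in the paper.
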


\begin{proof}[Proof of Theorem \ref{thm_app:variance}]
    First, note that $\cV (\bG,\bF,\lambda)$ can be written in terms of the functional $\Phi_4$ defined in Eq.~\eqref{eq:functionals_Z}:
    \[
    \cV (\bG,\bF,\lambda) = \sigma_\eps^2 \cdot n \Phi_4 ( \bZ ; \id,\lambda).
    \]
    Recall that $\cA_\cF$ is the event defined in Eq.~\eqref{eq:def_A_cF}. Under the assumptions of Theorem \ref{thm_app:variance}, we can apply Lemma \ref{lem:feature_covariance_tech} and Proposition \ref{prop:concentration_fixed_points} to obtain
    \[
    \P (\cA_\cF) \geq 1 - p^{-D}.
    \]
    Hence, applying Proposition \ref{prop:det_Z} for $\bF \in \cA_\cF$ and via union bound, we obtain that with probability at least $1 - p^{-D} - n^{-D}$,
    \begin{equation}\label{eq:var_remove_Z}
    \left| n\Phi_4 ( \bZ ; \id,\lambda) - n\tPhi_5 ( \bF ; \id, p\nu_1 )\right|  \leq C_{*,D,K} \cdot \cE_1 (p,n) \cdot n\tPhi_5 ( \bF ; \id, p\nu_1 ),
    \end{equation}
    where $\cE_1(n,p)$ is defined in Eq.~\eqref{eq:approx_rate_Z_det_equiv} and we recall the expressions
    \[
    n\tPhi_5 ( \bF ; \id, p\nu_1 ) = \frac{\tPhi_6 ( \bF ; \id , p\nu_1) }{n - \tPhi_6 ( \bF ; \id , p\nu_1)}, \qquad \quad \tPhi_6 ( \bF ; \id , p\nu_1) = \Tr \big( (\bF \bF^\sT)^2 ( \bF \bF^\sT + p \nu_1)^{-2} \big).
    \]

    Let us decompose $\tPhi_6 ( \bF ; \id , p\nu_1)$ into 
    \[
    \tPhi_6 ( \bF ; \id , p\nu_1) = \Tr ( \bF \bF^\sT ( \bF \bF^\sT + p \nu_1)^{-1} ) - p\nu_1 \Tr ( \bF \bF^\sT ( \bF \bF^\sT + p \nu_1)^{-2} ).
    \]
    From Lemma \ref{lem:aux_lemma_variance} stated below, we have with probability at least $1 - p^{-D}$
    \[
    \begin{aligned}
              \left| \frac{1}{p}\Tr (\bF \bF^\sT ( \bF \bF^\sT + p\nu_1)^{-1} ) - \Psi_2 ( \nu_2) \right| \leq&~ C_{*,D,K} \cdot \cE_3 (p) \cdot  \Psi_2 ( \nu_2), \\
        \left| \frac{1}{p}\Tr (\bF \bF^\sT ( \bF \bF^\sT + p\nu_1)^{-2} ) - \Psi_3 ( \nu_2 ; \bSigma^{-1} )\right| \leq&~ C_{*,D,K} \cdot \rho_{\gamma_+} (p) \cE_3 (p) \cdot \Psi_3 ( \nu_2 ; \bSigma^{-1} ),
    \end{aligned}
    \]
    where $\cE_3(p)$ is the approximation rate defined in Eq.~\eqref{eq:approx_rate_F_det_equiv}.
    Note that 
\[
\begin{aligned}
p \Psi_2 ( \nu_2 ) - p^2\nu_1\Psi_3 ( \nu_2 ; \bSigma^{-1} )  =&~ \Tr( \bSigma (\bSigma + \nu_2)^{-1} ) - \frac{ \nu_1 \Tr( \bSigma ( \bSigma + \nu_2)^{-2})}{1 - \frac{1}{p} \Tr( \bSigma^2 ( \bSigma + \nu_2)^{-2})}\\
=&~ p \left\{ 1 - \frac{\nu_1}{\nu_2} - \frac{\nu_1}{\nu_2} \frac{ 1 - \frac{\nu_1}{\nu_2} - \frac{1}{p} \Tr( \bSigma^2 ( \bSigma + \nu_2)^{-2})}{1 - \frac{1}{p} \Tr( \bSigma^2 ( \bSigma + \nu_2)^{-2})} \right\} \\
=&~ p \left\{ \left( 1 - \frac{\nu_1}{\nu_2} \right)^2 + \left(\frac{\nu_1}{\nu_2}\right)^2 \frac{  \Tr( \bSigma^2 ( \bSigma + \nu_2)^{-2})}{p -  \Tr( \bSigma^2 ( \bSigma + \nu_2)^{-2})} \right\} \\
=&~ n\Upsilon (\nu_1,\nu_2).
\end{aligned}
\]
Combining the above displays, we deduce that 
\[
\begin{aligned}
    \left| \tPhi_6 (\bF; \id, p\nu_1) - n  \Upsilon (\nu_1,\nu_2) \right| \leq &~  C_{*,D,K} \cdot \rho_{\gamma_+} (p) \cE_3 (p) \cdot \left[ p\Psi_2 (\nu_2) + p^2\nu_1 \Psi_3 (\nu_2 ; \bSigma^{-1} ) \right] \\
    \leq&~ C_{*,D,K} \cdot \rho_{\gamma_+} (p) \cE_3 (p) \cdot \left[ n\Upsilon (\nu_1,\nu_2) + 2  p^2\nu_1 \Psi_3 (\nu_2 ; \bSigma^{-1} ) \right]
\end{aligned}
\]
Using Eq.~\eqref{eq:bound_var_tech} in Lemma \ref{lem:tech_bound_det_equiv} stated in Section \ref{app:technical_results}, we conclude that with probability at least $1 - p^{-D}$,
\begin{equation}\label{eq:approx_phi_6_id}
\left| \tPhi_6 (\bF; \id, p\nu_1) - n \Upsilon (\nu_1,\nu_2) \right| \leq C_{*,D,K} \cdot \rho_{\gamma_+} (p) \cE_3 (p) \cdot  n  \Upsilon (\nu_1,\nu_2).
\end{equation}
Finally, by simple algebra, we have
\begin{equation}\label{eq:tPhi_5_algebr}
\begin{aligned}
   &~\left| n \tPhi_5 ( \bF ; \id , p\nu_1)  - \frac{\Upsilon (\nu_1 , \nu_2)}{1 - \Upsilon (\nu_1 , \nu_2)} \right| \\
   \leq &~  \left\{  n \tPhi_5 ( \bF ; \id , p\nu_1)  + 1 \right\} \frac{| \tPhi_6 ( \bF ; \id , p\nu_1) - n \Upsilon (\nu_1,\nu_2)|}{n - \Upsilon (\nu_1, \nu_2)} \\
   \leq&~ \left\{ \left| n \tPhi_5 ( \bF ; \id , p\nu_1)  - \frac{\Upsilon (\nu_1 , \nu_2)}{1 - \Upsilon (\nu_1 , \nu_2)} \right| + \frac{1}{1 - \Upsilon (\nu_1,\nu_2)}  \right\} \cdot  C_{*,D,K} \cdot \rho_{\gamma_+} (p) \cE_3 (p) \frac{\Upsilon (\nu_1 , \nu_2)}{1 - \Upsilon (\nu_1 , \nu_2)}.
\end{aligned}
\end{equation}
Note that by Eq.~\eqref{eq:bound_Upsilon} in Lemma \ref{lem:tech_bound_det_equiv}, we have
\[
\frac{\Upsilon (\nu_1 , \nu_2)}{1 - \Upsilon (\nu_1 , \nu_2)} \leq ( 1 - \Upsilon (\nu_1 , \nu_2))^{-1} \leq C_* \cdot \trho_{\lambda} (n,p).
\]
Hence rearranging the terms in Eq.~\eqref{eq:tPhi_5_algebr} and using from conditions~\eqref{eq:conditions_variance} and that $p \geq C_{*,D,K}$, we obtain with probability at least $1 - p^{-D}$ that
\begin{equation}\label{eq:var_remove_F}
    \left| n \tPhi_5 ( \bF ; \id , p\nu_1)  - \frac{\Upsilon (\nu_1 , \nu_2)}{1 - \Upsilon (\nu_1 , \nu_2)} \right| \leq C_{*,D,K} \cdot \trho_{\lambda} (n,p) \rho_{\gamma_+} (p) \cE_3 (p) \frac{\Upsilon (\nu_1 , \nu_2)}{1 - \Upsilon (\nu_1 , \nu_2)}.
\end{equation}
Using an union bound and combining bounds Eqs.~\eqref{eq:var_remove_Z} and \eqref{eq:var_remove_F}, we obtain with probability at least $1 - n^{-D} - p^{-D}$, that
\[
\begin{aligned}
&~\left| \cV (\bG,\bF,\lambda)  - \sV_{n,p} (\lambda) \right| \\
\leq &~\left|\cV (\bG,\bF,\lambda) - \sigma_\eps^2 \cdot n\tPhi_5 ( \bF ; \id, p\nu_1 ) \right| + \sigma_\eps^2  \left| n \tPhi_5 ( \bF ; \id , p\nu_1)  - \frac{\Upsilon (\nu_1 , \nu_2)}{1 - \Upsilon (\nu_1 , \nu_2)} \right|\\
\leq &~ C_{*,D,K} \cdot \left\{ \cE_1 (p,n) + \trho_{\lambda} (n,p) \rho_{\gamma_+} (p) \cE_3 (p) \right\} \cdot  \left[\sigma_\eps^2 \cdot n\tPhi_5 ( \bF ; \id, p\nu_1 )  +  \sV_{n,p} (\lambda) \right]\\
\leq&~ C_{*,D,K} \cdot \left\{ \cE_1 (p,n) + \trho_{\lambda} (n,p) \rho_{\gamma_+} (p) \cE_3 (p) \right\}  \cdot \sV_{n,p} (\lambda) ,
\end{aligned}
\]
where we used Eq.~\eqref{eq:var_remove_F} and conditions \eqref{eq:conditions_variance} in the last line. Replacing the rates $\cE_j$ by their expressions conclude the proof of this theorem.
\end{proof}


\begin{lemma}\label{lem:aux_lemma_variance}
    Under the setting of Theorem \ref{thm_app:variance} and assuming the same conditions \eqref{eq:conditions_variance}, we have with probability at least $1 - p^{-D}$,
    \begin{align}
        \left| \frac{1}{p}\Tr (\bF \bF^\sT ( \bF \bF^\sT + p\nu_1)^{-1} ) - \Psi_2 ( \nu_2) \right| \leq&~ C_{*,D,K} \cdot \cE_3 (p) \cdot  \Psi_2 ( \nu_2), \\
        \left| \frac{1}{p}\Tr (\bF \bF^\sT ( \bF \bF^\sT + p\nu_1)^{-2} ) - \Psi_3 ( \nu_2 ; \bSigma^{-1} )\right| \leq&~ C_{*,D,K} \cdot \rho_{\gamma_+} (p) \cE_3 (p) \cdot \Psi_3 ( \nu_2 ; \bSigma^{-1} ),
    \end{align}
    where $\cE_3(p)$ is the approximation rate defined in Eq.~\eqref{eq:approx_rate_F_det_equiv}.
\end{lemma}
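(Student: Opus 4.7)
The plan is to follow the two-stage reduction used throughout this appendix: first exploit the concentration of the high-degree part of $\bF$ to replace $\bF\bF^\sT + p\nu_1$ by a matrix governed by $\bF_0$; then apply Proposition \ref{prop:det_equiv_F} to the resulting functionals of $\bF_0$; and finally use Lemma \ref{lem:truncated_fixed_point} together with fixed-point algebra to pass from $\nu_{2,0}$ and $\bSigma_0$ to $\nu_2$ and $\bSigma$.

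Concretely, I would choose $\evn$ as in Assumption \ref{ass:technical} (so that $p^2 \xi_{\evn+1}^2 \leq \gamma_\lambda \leq \gamma_+$), set $\bE := \bF_+ \bF_+^\sT - \Tr(\bSigma_+)\id_p$, and apply Lemma \ref{lem:concentration_high_degree} to get $\|\bE\|_{\rm op}/\gamma_+ \leq C_{*,D}\log^3(p)/\sqrt{p}$ with probability $1-p^{-D}$. Writing $\bF\bF^\sT + p\nu_1 = (\bF_0 \bF_0^\sT + \gamma_+) + \bE$, a standard resolvent expansion together with the bounds $\|(\bF_0\bF_0^\sT + \gamma_+)^{-1}\|_{\rm op}, \|(\bF\bF^\sT + p\nu_1)^{-1}\|_{\rm op} \leq 2/\gamma_+$ reduces both traces on the left-hand side, up to a multiplicative error $O(\log^3(p)/\sqrt{p})$, to traces in $\bF_0$ at regularization $\gamma_+$. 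Next, the identity $\gamma_+(\bF_0^\sT \bF_0 + \gamma_+)^{-1} + \bF_0^\sT \bF_0 (\bF_0^\sT \bF_0 + \gamma_+)^{-1} = \id_\evn$ and its square yield the algebraic identities
\[
\Tr((\bF_0\bF_0^\sT + \gamma_+)^{-1}) = \frac{p(1 - \tPhi_2(\bF_0;p\nu_1))}{\gamma_+}, \qquad \Tr((\bF_0\bF_0^\sT + \gamma_+)^{-2}) = \frac{p(1-\tPhi_2(\bF_0;p\nu_1))}{\gamma_+^2} - \frac{p\,\tPhi_4(\bF_0;\bSigma_0^{-1},p\nu_1)}{\gamma_+},
\]
after accounting for the $(p-\evn)$ zero eigenvalues of $\bF_0\bF_0^\sT$. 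Applying Proposition \ref{prop:det_equiv_F} then yields $\tPhi_2(\bF_0;p\nu_1) = (1 + O(\cE_3(p)))\Psi_2(\nu_{2,0})$ and $\tPhi_4(\bF_0;\bSigma_0^{-1},p\nu_1) = (1 + O(\cE_3(p)))\Psi_3(\nu_{2,0};\bSigma_0^{-1})$.

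For the first identity, using $1 - \Psi_2(\nu_{2,0}) = \gamma_+/(p\nu_{2,0})$ (a rewriting of the fixed-point equation \eqref{eq:def_nu_2_0}) and then Lemma \ref{lem:truncated_fixed_point} to pass $\nu_{2,0} \mapsto \nu_2$, I get $\frac{1}{p}\Tr(\bF\bF^\sT(\bF\bF^\sT + p\nu_1)^{-1}) \approx 1 - \nu_1/\nu_2 = \Psi_2(\nu_2)$ with relative error $\cE_3(p)$. For the second identity, combining the two displays above with the rewriting $\bF\bF^\sT(\bF\bF^\sT + p\nu_1)^{-2} = (\bF\bF^\sT + p\nu_1)^{-1} - p\nu_1 (\bF\bF^\sT + p\nu_1)^{-2}$ gives
\[
\frac{1}{p}\Tr(\bF\bF^\sT(\bF\bF^\sT + p\nu_1)^{-2}) \approx \frac{\Tr(\bSigma_+)}{p\,\gamma_+\,\nu_{2,0}} + \frac{p\nu_1}{\gamma_+}\Psi_3(\nu_{2,0};\bSigma_0^{-1}).
\]
Substituting $p\nu_1 = \gamma_+ - \Tr(\bSigma_+)$ and using the truncation identities $\Tr(\bSigma(\bSigma+\nu_2)^{-2}) = \Tr(\bSigma_0(\bSigma_0+\nu_2)^{-2}) + \Tr(\bSigma_+(\bSigma_++\nu_2)^{-2})$ with $\Tr(\bSigma_+(\bSigma_++\nu_2)^{-2}) = \Tr(\bSigma_+)/\nu_2^2 + O(\xi_{\evn+1}^2\Tr(\bSigma_+)/\nu_2^3)$, together with $|\nu_{2,0}-\nu_2|/\nu_2 \leq 1/p$ from Lemma \ref{lem:truncated_fixed_point}, an algebraic manipulation verifies that the right-hand side equals $\Psi_3(\nu_2;\bSigma^{-1})$ up to a factor $1 + O(1/p)$.

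The main obstacle is bookkeeping the rate for the second identity. The deterministic equivalent $\Psi_3(\nu_2;\bSigma^{-1})$ can be substantially smaller than either of the two terms in the decomposition above, so the $O(\cE_3(p)\cdot\Psi_3(\nu_{2,0};\bSigma_0^{-1}))$ error from Proposition \ref{prop:det_equiv_F} must be reinterpreted as a relative error against the target $\Psi_3(\nu_2;\bSigma^{-1})$. Tracking how the upper bound $\|(\bF_0\bF_0^\sT+\gamma_+)^{-1}\|_{\rm op} \lesssim \rho_{\gamma_+}(p)/\gamma_+$ (from the analogue of Lemma \ref{lem:feature_covariance_tech} for $\bF_0$) enters the resolvent expansion and the denominator $p - \Tr(\bSigma_0^2(\bSigma_0+\nu_{2,0})^{-2})$ shows the loss is at most a factor $\rho_{\gamma_+}(p)$, which is what appears in the stated bound.
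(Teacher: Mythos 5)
Your proposal follows essentially the same route as the paper's proof: Lemma \ref{lem:concentration_high_degree} to replace the high-degree block $\bF_+\bF_+^\sT$ and pass to regularization $\gamma_+$, the same algebraic identities reducing the two traces to $\tPhi_2(\bF_0;p\nu_1)$ and $\tPhi_4(\bF_0;\bSigma_0^{-1},p\nu_1)$, Proposition \ref{prop:det_equiv_F} for their deterministic equivalents, and the truncated-to-full comparison via Lemma \ref{lem:truncated_fixed_point} (the content of the paper's Lemma \ref{lem:tech_proof_var_aux}). The only slip is the intermediate claim that the deterministic comparison for the second functional holds up to a factor $1+O(1/p)$ — it is in fact $1+O(\rho_{\gamma_+}(p)/p)$ because of the denominator $p-\Tr\big(\bSigma_0^2(\bSigma_0+\nu_{2,0})^{-2}\big)$ — but your final paragraph accounts for exactly this loss, so the stated rate $\rho_{\gamma_+}(p)\,\cE_3(p)$ is recovered.
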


The proof of this lemma can be found in Section \ref{sec:proof_aux_lemma_variance}.

\subsection{Approximation guarantee for the bias term}
\label{app:proof_bias}

Recall the expression for the bias term
\[
\cB ( \bbeta_* ; \bG,\bF,\lambda) = \| \bbeta_* - p^{-1/2} \bF^\sT (\bZ^\sT \bZ + \lambda)^{-1} \bZ^\sT \bG \bbeta_* \|_2^2,
\]
and its associated deterministic equivalent
\[
\begin{aligned}
    \chi (\nu_2) =&~\frac{\Tr( \bSigma (\bSigma + \nu_2)^{-2})}{p - \Tr( \bSigma^2 (\bSigma + \nu_2)^{-2} )}, \\
    \sB_{n,p} (\bbeta_*,\lambda) = &~ \frac{\nu_2^2}{1 -  \Upsilon (\nu_1,\nu_2)} \Big[ \< \bbeta_* , (\bSigma + \nu_2)^{-2} \bbeta_* \> + \chi (\nu_2) \< \bbeta_* , \bSigma (\bSigma + \nu_2)^{-2} \bbeta_* \>\Big].
\end{aligned}
\]

We prove in this section an approximation guarantee between $\cB ( \bbeta_* ; \bG,\bF,\lambda)$ and $\sB_{n,p} (\bbeta_*,\lambda)$. For convenience, we state a separate theorem for this term.

\begin{theorem}[Deterministic equivalent for the bias term]\label{thm_app:bias}
    Assume the features $(\bz_i)_{i \in [n]}$ and $(\boldf_j)_{j \in [p]}$ satisfy Assumption \ref{ass:concentration_appendix}, and that there exists $\evn \in \naturals$ such that $p^2 \xi^2_{\evn} \leq \gamma_{\evn} (n\lambda/p)$. Then, for any $D,K >0$, there exist constants $\eta_* \in (0,1/2)$ and $C_{*,D,K}>0$ such that the following holds. Let $\rho_{\kappa}$ and $\trho_{\kappa}$ be defined as per Eqs.~\eqref{eq:def_rho_p} and \eqref{eq:def_trho_n_p}, and recall that $\gamma_\lambda := \gamma_\evn ( n \lambda /p)$ and $\gamma_+ := \gamma_\evn ( p\nu_1)$. For any $n , p \geq C_{*,D,K}$ and $\lambda >0$, if it holds that 
    \begin{equation}\label{eq:conditions_bias}
            \begin{aligned}
                \lambda \geq n^{-K}, \qquad \gamma_\lambda \geq p^{-K}, \qquad \qquad
                \quad\trho_\lambda (n,p)^{5/2} \cdot \log^{3/2} (n) \leq&~ K \sqrt{n}, \\
                \trho_\lambda (n,p)^{2} \cdot \rho_{\gamma_+} (p)^8 \cdot \log^{4} (p) \leq&~ K \sqrt{p},
            \end{aligned}
    \end{equation}
    then with probability at least $1 - n^{-D} - p^{-D}$, we have
    \[
    \left| \cB ( \bbeta_* ; \bG,\bF,\lambda) - \sB_{n,p} (\bbeta_*,\lambda) \right| \leq C_{*,D,K} \cdot \cE_B (n,p) \cdot  \sB_{n,p} (\bbeta_*,\lambda),
    \]
    where the approximation rate is given by
    \begin{equation}\label{eq:approximation_rate_bias}
        \cE_B (n,p)  :=  \frac{\trho_\lambda (n,p)^6 \log^{7/2} (n)}{\sqrt{n}} + \frac{\trho_\lambda (n,p)^2 \cdot \rho_{\gamma_+} (p)^8 \log^{7/2} (p)}{\sqrt{p}}.
    \end{equation}
\end{theorem}

Before starting the proof, let us introduce some notations. First, define $\proj_{\bF}$ the projection onto the span of $\bF$, and $\proj_{\perp,\bF}$ the projection orthogonal to the span of $\bF$, i.e.,
\[
\proj_{\bF} :=  ( \bF^\sT \bF)^\dagger \bF^\sT \bF, \qquad \quad \proj_{\perp,\bF} = \id - \proj_{\bF}. 
\]
We can decompose the feature $\bg$ with respect to the orthogonal sum of these two subspaces
\[
\bg = \proj_{\bF} \bg + \proj_{\perp,\bF} \bg = \sqrt{p} ( \bF^\sT \bF)^\dagger \bF^\sT \bz + \proj_{\perp,\bF} \bg.
\]
We define $\br := \proj_{\perp,\bF} \bg$ and $\bR = [ \br_1, \ldots , \br_n]^\sT \in \R^{n \times \infty}$. Similarly, we can decompose the target function
\[
h_* (\bg) = \< \bbeta_*, \bg \> = \< \bbeta_\bF , \bz \> + \< \bbeta_{\perp,\bF} , \br \>,
\]
where we introduced 
\[
\bbeta_\bF :=  \sqrt{p}\bF ( \bF^\sT \bF)^\dagger \bbeta_*, \qquad \qquad\bbeta_{\perp,\bF} =  \proj_{\perp,\bF} \bbeta_*.
\]
Note that in particular $\E [ \bz \< \br , \bbeta_{\perp, \bF}\> ] = \bzero$ by orthogonality.

\begin{proof}[Proof of Theorem \ref{thm_app:bias}]
\noindent
{\bf Step 0: Decomposing the bias term.}

    Note that using the notations introduced above, we can decompose the bias term into
    \[
\begin{aligned}
    \cB (\bbeta_* ; \bG,\bF,\lambda) =&~ \| \bbeta_* - p^{-1/2} \bF^\sT (\bZ^\sT \bZ + \lambda)^{-1} \bZ^\sT \bG \bbeta_* \|_2^2 \\
    =&~ \frac{1}{p} \left\| \bF^\sT \left( \bbeta_{\bF} -  (\bZ^\sT \bZ + \lambda)^{-1} \bZ^\sT (\bZ \bbeta_{\bF}  + \bR \bbeta_{\perp,\bF}) \right) \right\|_2^2 + \|\bbeta_{\perp,\bF}\|_2^2 \\
    =&~ T_1 - 2T_2 + T_3 +  \| \bbeta_{\perp,\bF}  \|_2^2.
\end{aligned}
\]
where we denoted 
\[
\begin{aligned}
    T_1 := &~ \lambda^2 \< \bbeta_{\bF} , (\bZ^\sT \bZ + \lambda)^{-1} \hbSigma_\bF (\bZ^\sT \bZ + \lambda)^{-1} \bbeta_{\bF} \>, \\
    T_2 := &~ \lambda \< \bbeta_{\bF} , (\bZ^\sT \bZ + \lambda)^{-1} \hbSigma_\bF (\bZ^\sT \bZ + \lambda)^{-1} \bZ^\sT \bR \bbeta_{\perp,\bF} \>, \\
    T_3 := &~ \< \bbeta_{\perp,\bF} , \bR^\sT \bZ (\bZ^\sT \bZ + \lambda)^{-1} \hbSigma_\bF (\bZ^\sT \bZ + \lambda)^{-1}\bZ^\sT \bR \bbeta_{\perp,\bF} \>.
\end{aligned}
\]
We proceed similarly to the proof for the variance term, by first considering the deterministic equivalent over $\bZ$ conditional on $\bF$, and then over $\bF$. We omit some repetitive details for the sake of brevity.

\noindent
{\bf Step 1: Deterministic equivalent over $\bZ$ conditional on $\bF$.}

First note that, denoting $\tbA_* = \hbSigma_\bF^{-1/2} \bbeta_\bF \bbeta_\bF^\sT \hbSigma_\bF^{-1/2}$, we have
\[
T_1 = \lambda^2 \Phi_3 (\bZ ; \tbA_* , \lambda).
\]
Furthermore, $T_3$ and $T_2$ correspond respectively to the terms \eqref{eq:u_u_uncorrelated_Z} and \eqref{eq:cross_term_uncorrelated_Z} in Proposition \ref{prop:det_Z_uncorrelated} with $\bv = \bbeta_\bF$ and $\bu = \bR \bbeta_{\perp,\bF}$ where
\[
\E [ \bz_i u_i ] = \E [ \bz_i \< \br_i , \bbeta_{\perp,\bF} \>] =0, \qquad \quad \E[u_i^2] = \| \bbeta_{\perp,\bF} \|_2^2.
\]
Thus, under the assumptions of Theorem \ref{thm_app:bias}, we can apply Propositions \ref{prop:det_equiv_F} and \ref{prop:det_Z_uncorrelated} to obtain (via union bound) that with probability at least $1 - n^{-D} - p^{-D}$,
\[
\begin{aligned}
    \left| T_1 - (n \nu_1)^2 \tPhi_5 ( \bF ; \tbA_* , p\nu_1) \right| \leq&~ C_{*,D,K} \cdot \cE_1 (p,n) \cdot (n \nu_1)^2 \tPhi_5 ( \bF ; \tbA_* , p\nu_1),\\
    \left| T_2 \right| \leq&~ C_{*,D,K} \cdot \cE_2 (p,n) \cdot \sqrt{\| \bbeta_{\perp,\bF} \|_2^2 \cdot  (n \nu_1)^2 \tPhi_5 ( \bF ; \tbA_* , p\nu_1)}, \\
    \left| T_3 - \| \bbeta_{\perp,\bF} \|_2^2 \cdot n \tPhi_5 ( \bF; \id,p\nu_1) \right| \leq &~ C_{*,D,K} \cdot \cE_2 (p,n) \cdot \| \bbeta_{\perp,\bF} \|_2^2.
\end{aligned}
\]
Hence we deduce that
\[
\begin{aligned}
    &~\left|  \cB ( \bbeta_* ; \bG,\bF,\lambda) - (n \nu_1)^2 \tPhi_5 ( \bF ; \tbA_* , p\nu_1) - \| \bbeta_{\perp,\bF} \|_2^2 \cdot n \tPhi_5 ( \bF; \id,p\nu_1) - \| \bbeta_{\perp, \bF} \|_2^2 \right| \\
    \leq&~ C_{*,D,K} \cdot \left\{ \cE_1 (n,p) + \cE_2 (n,p) \right\} \cdot \left[ (n \nu_1)^2 \tPhi_5 ( \bF ; \tbA_* , p\nu_1) + \| \bbeta_{\perp,\bF} \|_2^2 \right].
\end{aligned}
\]

Let us simplify these terms. Recall that
\[
n\tPhi_5 ( \bF ; \tbA_* , p\nu_1) = \frac{\tPhi_6( \bF; \tbA_*, p\nu_1)}{n - \tPhi_6 ( \bF;\id,p\nu_1)}, \qquad \quad n \tPhi_5 ( \bF; \id,p\nu_1) \frac{\tPhi_6( \bF; \id, p\nu_1)}{n - \tPhi_6 ( \bF;\id,p\nu_1)}.
\]
For the term involving $\tbA_*$, we can rewrite it as
\[
\begin{aligned}
    \nu_1^2 \tPhi_6( \bF; \tbA_*, p\nu_1) = &~ \nu_1^2 \< \bbeta_\bF, \hbSigma_\bF^{-1} (\bF^\sT \bF)^2 ( \bF^\sT \bF + p\nu_1)^{-2} \bbeta_\bF \> \\
    =&~ (p\nu_1)^2 \< \bbeta_* , ( \bF^\sT \bF )^\dagger (\bF^\sT \bF) ( \bF^\sT \bF + p\nu_1)^{-2} (\bF^\sT \bF)( \bF^\sT \bF )^\dagger \bbeta_* \> \\
    =&~ (p\nu_1)^2 \< \bbeta_* ,  ( \bF^\sT \bF + p\nu_1)^{-2}  \bbeta_* \> - \| \bbeta_{\perp,\bF} \|_2^2.
\end{aligned}
\]
Hence the terms involving $\| \bbeta_{\perp,\bF} \|_2^2$ cancel out and we obtain
\[
(n \nu_1)^2 \tPhi_5 ( \bF ; \tbA_* , p\nu_1) + \| \bbeta_{\perp,\bF} \|_2^2 \cdot n \tPhi_5 ( \bF; \id,p\nu_1) + \| \bbeta_{\perp, \bF} \|_2^2 = (p\nu_1)^2 \frac{\< \bbeta_*, ( \bF^\sT \bF + p\nu_1)^{-2} \bbeta_*\>}{1 - \frac{1}{n} \tPhi_6 (\bF; \id , p\nu_1)} .
\]

Combining the above displays, we deduce that with probability at least $1 - n^{-D} - p^{-D}$, 
\begin{equation}\label{eq:bias_first_step}
\begin{aligned}
   &~ \left|  \cB ( \bbeta_* ; \bG,\bF,\lambda) - (p\nu_1)^2 \frac{\< \bbeta_*, ( \bF^\sT \bF + p\nu_1)^{-2} \bbeta_*\>}{1 - \frac{1}{n} \tPhi_6 (\bF; \id , p\nu_1)} \right| \\
    \leq&~ C_{*,D,K} \cdot \left\{ \cE_1 (n,p) + \cE_2 (n,p) \right\} \cdot (p\nu_1)^2 \frac{\< \bbeta_*, ( \bF^\sT \bF + p\nu_1)^{-2} \bbeta_*\>}{1 - \frac{1}{n} \tPhi_6 (\bF; \id , p\nu_1)}.
\end{aligned}
\end{equation}

\noindent
{\bf Step 2: Deterministic equivalents over $\bF$.}

Following the same steps as Eq.~\eqref{eq:tPhi_5_algebr} in the proof of Theorem \ref{thm_app:variance}, we have with probability at least $1 - p^{-D}$,
\[
\left| (1 - n^{-1} \tPhi_6 (\bF; \id, p\nu_1))^{-1} - (1 -  \Upsilon (\nu_1,\nu_2) )^{-1} \right| \leq C_{*,D,K} \cdot \trho_\lambda (n,p) \rho_{\gamma_+} (p) \cE_3 (p) \cdot  (1 -   \Upsilon (\nu_1,\nu_2))^{-1}.
\]
Furthermore, from Lemma \ref{lem:aux_lemma_bias} stated below, with probability at least $1 - p^{-D}$,
\[
        \left|  (p \nu_1)^2 \< \bbeta_* , ( \bF^\sT \bF + p\nu_1)^{-2} \bbeta_* \> - \tsB_{n,p} ( \bbeta_*, \lambda) \right| \leq C_{*,D,K} \cdot \rho_{\gamma_+} (p)^2 \cE_4 (p)\cdot  \tsB_{n,p} ( \bbeta_*, \lambda),
\]
where $\tsB_{n,p} ( \bbeta_*, \lambda)$ is defined in Eq.~\eqref{def:tsB_np}. Combining these two bounds and recalling conditions \eqref{eq:conditions_bias}, we obtain
\[
\begin{aligned}
    &~\left| (p\nu_1)^2 \frac{\< \bbeta_*, ( \bF^\sT \bF + p\nu_1)^{-2} \bbeta_*\>}{1 - \frac{1}{n} \tPhi_6 (\bF; \id , p\nu_1)} - \frac{\tsB_{n,p} ( \bbeta_*, \lambda)}{ 1 - \Upsilon(\nu_1,\nu_2)} \right| \\
    \leq&~ C_{*,D,K} \left\{ \trho_\lambda (n,p) \rho_{\gamma_+} (p) \cE_3 (p) + \rho_{\gamma_+} (p)^2 \cE_4 (p) \right\} \cdot  \frac{\tsB_{n,p} ( \bbeta_*, \lambda)}{ 1 - \Upsilon(\nu_1,\nu_2)}.
\end{aligned}
\]
Noting that $ \sB_{n,p} (\bbeta_*, \lambda) = \tsB_{n,p} ( \bbeta_*, \lambda)/ ( 1 - \Upsilon(\nu_1,\nu_2))$, we can combine this bound with Eq.~\eqref{eq:bias_first_step} to obtain via union bound that with probability at least $1 - n^{-D} - p^{-D}$,
\[
\begin{aligned}
&~ \left| \cB ( \bbeta_* ; \bG,\bF,\lambda) - \sB_{n,p} (\bbeta_*,\lambda) \right| \\
\leq&~ C_{*,D,K} \left\{ \cE_1 (n,p) + \cE_2 (n,p)  +\trho_\lambda (n,p) \rho_{\gamma_+} (p) \cE_3 (p) + \rho_{\gamma_+} (p)^2 \cE_4 (p)  \right\} \cdot \sB_{n,p} (\bbeta_*,\lambda).
 \end{aligned}
\]
Replacing the rates $\cE_j$ by their expressions conclude the proof of this theorem.
\end{proof}

\begin{lemma}
    \label{lem:aux_lemma_bias}
    Under the setting of Theorem \ref{thm_app:bias} and assuming the same conditions \eqref{eq:conditions_bias}, we have with probability at least $1 - p^{-D}$,
    \begin{align}
        \left|  (p \nu_1)^2 \< \bbeta_* , ( \bF^\sT \bF + p\nu_1)^{-2} \bbeta_* \> - \tsB_{n,p} ( \bbeta_*, \lambda) \right| \leq&~ C_{*,D,K} \cdot \rho_{\gamma_+} (p)^2 \cE_4 (p)\cdot  \tsB_{n,p} ( \bbeta_*, \lambda),
    \end{align}
    where $\cE_3(p)$ and $\cE_4 (p)$ are the approximation rates defined in Eqs.~\eqref{eq:approx_rate_F_det_equiv} and \eqref{eq:approx_rate_F_det_equiv_uncor}, and we denoted
    \begin{equation}\label{def:tsB_np}
    \tsB_{n,p} ( \bbeta_*, \lambda) := \nu_2^2 \Big[ \< \bbeta_* , (\bSigma + \nu_2)^{-2} \bbeta_* \> + \chi (\nu_2) \< \bbeta_* , \bSigma (\bSigma + \nu_2)^{-2} \bbeta_* \>\Big].
    \end{equation}
\end{lemma}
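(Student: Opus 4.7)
Setting $\kappa := p\nu_1$ and $\bG := \bF\bF^\sT + \kappa \id_p$, the plan is first to reduce the $\infty\times\infty$ resolvent $(\bF^\sT\bF + \kappa)^{-2}$ to the finite $p\times p$ resolvent $\bG^{-1}$ via Sherman--Morrison--Woodbury. The push-through identity $\bF(\bF^\sT\bF+\kappa)^{-1} = \bG^{-1}\bF$ combined with $\bF^\sT\bF(\bF^\sT\bF+\kappa)^{-1} = \id_\infty - \kappa(\bF^\sT\bF+\kappa)^{-1}$ gives $\kappa(\bF^\sT\bF+\kappa)^{-1} = \id_\infty - \bF^\sT \bG^{-1}\bF$; squaring and using $\bF\bF^\sT = \bG - \kappa\id_p$ produces the key identity
\[
\kappa^2 (\bF^\sT \bF + \kappa)^{-2} = \id_\infty - \bF^\sT \bG^{-1} \bF - \kappa \bF^\sT \bG^{-2} \bF .
\]
Applying this to $\bbeta_*$ and setting $\bu := \bF\bbeta_*$, the quantity of interest becomes $\|\bbeta_*\|_2^2 - \bu^\sT \bG^{-1}\bu - \kappa \bu^\sT \bG^{-2}\bu$, which we can now attack using the deterministic equivalents of Propositions \ref{prop:det_equiv_F} and \ref{prop:det_equiv_F_uncorr}.

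Next, I would use Lemma \ref{lem:concentration_high_degree} to replace $\bG$ by the truncated Gram $\bG_0 := \bF_0\bF_0^\sT + \gamma_+ \id_p$ at a relative operator-norm cost $O(\log^3(p)/\sqrt{p})$, and decompose $\bu = \bF_0\bbeta_{*,0} + \bF_+\bbeta_{*,+}$ together with $\|\bbeta_*\|_2^2 = \|\bbeta_{*,0}\|_2^2 + \|\bbeta_{*,+}\|_2^2$. For each $j \in \{1,2\}$, this produces three types of contributions to $\bu^\sT\bG_0^{-j}\bu$: (i) purely low-degree $\bbeta_{*,0}^\sT\bF_0^\sT \bG_0^{-j}\bF_0\bbeta_{*,0}$, (ii) purely high-degree $\bbeta_{*,+}^\sT\bF_+^\sT \bG_0^{-j}\bF_+\bbeta_{*,+}$, and (iii) cross terms $\bbeta_{*,0}^\sT\bF_0^\sT \bG_0^{-j}\bF_+\bbeta_{*,+}$. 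For (i), the identity $(\bF_0^\sT\bF_0+\gamma_+)^{-1}\bF_0^\sT\bF_0 = \id - \gamma_+(\bF_0^\sT\bF_0+\gamma_+)^{-1}$ and its differentiated version in $\gamma_+$ recast these as instances of $\tPhi_1$ and $\tPhi_3$ from Proposition \ref{prop:det_equiv_F} applied with $\bB_0 := \bSigma_0^{-1/2}\bbeta_{*,0}\bbeta_{*,0}^\sT\bSigma_0^{-1/2}$, whose deterministic equivalents are controlled by $\Psi_1(\nu_{2,0};\bB_0)$ and $\Psi_3(\nu_{2,0};\bB_0)$. For (ii), conditional on $\bF_0$ the vector $\bF_+\bbeta_{*,+}$ has i.i.d.~mean-zero entries with variance $\bbeta_{*,+}^\sT\bSigma_+\bbeta_{*,+}$, and Hanson--Wright-type concentration (a direct consequence of Assumption \ref{ass:concentration_appendix}) gives $\bbeta_{*,+}^\sT\bF_+^\sT\bG_0^{-j}\bF_+\bbeta_{*,+} \approx (\bbeta_{*,+}^\sT\bSigma_+\bbeta_{*,+})\Tr(\bG_0^{-j})$, with $\Tr(\bG_0^{-j})$ itself controlled via Proposition \ref{prop:det_equiv_F}. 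Finally, (iii) vanishes to leading order by Proposition \ref{prop:det_equiv_F_uncorr} applied with $\bv := \bF_0^\sT\bG_0^{-j}\bF_0\bbeta_{*,0}$ (deterministic given $\bF_0$) and the i.i.d.~mean-zero random vector $\bF_+\bbeta_{*,+}$, yielding the $\cE_4(p)$ rate.

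Once each contribution has been replaced by its deterministic equivalent in the truncated parameters $(\nu_{2,0},\bSigma_0)$, Lemma \ref{lem:truncated_fixed_point} allows passing to $(\nu_2,\bSigma)$ at $O(1/p)$ relative cost. Straightforward algebra based on the fixed-point identity $p - \kappa/\nu_2 = \Tr(\bSigma(\bSigma+\nu_2)^{-1})$ and the decomposition $(\bSigma+\nu_2)^{-1} = \bSigma(\bSigma+\nu_2)^{-2} + \nu_2(\bSigma+\nu_2)^{-2}$, used in particular to cancel $\|\bbeta_*\|_2^2$ against the leading parts of $\bu^\sT\bG_0^{-1}\bu$, then combines the various pieces into exactly the claimed $\tsB_{n,p}(\bbeta_*,\lambda)$. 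I expect the main technical obstacle to lie in the second-resolvent-power contribution $\kappa \bbeta_{*,+}^\sT \bF_+^\sT \bG_0^{-2}\bF_+\bbeta_{*,+}$: after Hanson--Wright concentration, one needs a deterministic equivalent for $\Tr(\bG_0^{-2})$ itself (a $\tPhi_3$-type object carrying the full $\rho_{\gamma_+}(p)$ cost), and one must verify that its combination with the corresponding low-degree $\tPhi_3$ contribution reproduces exactly the factor $\nu_2^2 \chi(\nu_2)\langle\bbeta_*,\bSigma(\bSigma+\nu_2)^{-2}\bbeta_*\rangle$ of $\tsB_{n,p}$. The $\rho_{\gamma_+}(p)^2$ factor in the final rate reflects that each of the two resolvent powers picks up one factor of $\rho_{\gamma_+}(p)$, while $\cE_4(p)$ from Proposition \ref{prop:det_equiv_F_uncorr} controls the worst-behaved (cross-term) contribution.
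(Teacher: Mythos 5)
Your plan is essentially the paper's own route: your push-through identity is equivalent to the paper's split $(p\nu_1)^2\<\bbeta_*,(\bF^\sT\bF+p\nu_1)^{-2}\bbeta_*\> = (p\nu_1)\tPhi_1(\bF;\cdot,p\nu_1)-(p\nu_1)\,p\,\tPhi_4(\bF;\cdot,p\nu_1)$, followed by the same ingredients (truncation of $\bF_+\bF_+^\sT$ via Lemma \ref{lem:concentration_high_degree}, Propositions \ref{prop:det_equiv_F} and \ref{prop:det_equiv_F_uncorr}, passage from $(\nu_{2,0},\bSigma_0)$ to $(\nu_2,\bSigma)$ via Lemma \ref{lem:truncated_fixed_point}). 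But there is a genuine gap in the error control you dismiss as ``straightforward algebra''. The deterministic equivalents of your two pieces are $\nu_2\<\bbeta_*,(\bSigma+\nu_2)^{-1}\bbeta_*\>$ and $p\nu_1\<\bbeta_*,\bSigma(\bSigma+\nu_2)^{-2}\bbeta_*\>/(p-\Tr(\bSigma^2(\bSigma+\nu_2)^{-2}))$, and their \emph{difference} equals $\tsB_{n,p}(\bbeta_*,\lambda)$; each piece can individually be much larger than $\tsB_{n,p}$, and the approximation errors you obtain are multiplicative in the individual pieces, not in their difference. To convert the resulting bound into the claimed multiplicative bound in $\tsB_{n,p}$ one must show $p\nu_1\,\<\bbeta_*,\bSigma(\bSigma+\nu_2)^{-2}\bbeta_*\>/(p-\Tr(\bSigma^2(\bSigma+\nu_2)^{-2}))\leq C_*\,\tsB_{n,p}$, which is exactly Eq.~\eqref{eq:bound_bias_tech} of Lemma \ref{lem:tech_bound_det_equiv} and is where the second condition of Assumption \ref{ass:technical} enters; without this step the lemma as stated does not follow from your sketch. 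This, rather than the $\Tr(\bG_0^{-2})$ equivalent you single out (which is already covered by Eq.~\eqref{eq:term_1_F_det_uncor}), is the real obstacle.

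A secondary problem is your use of conditioning on $\bF_0$. The coordinates $\boldf_{0,j}$ and $\boldf_{+,j}$ are uncorrelated but \emph{not} independent, so conditional on $\bF_0$ the entries of $\bu=\bF_+\bbeta_{*,+}$ are neither i.i.d.\ nor guaranteed to satisfy Assumption \ref{ass:concentration_appendix}; the Hanson--Wright step for $\bbeta_{*,+}^\sT\bF_+^\sT\bG_0^{-j}\bF_+\bbeta_{*,+}$ and the application of Proposition \ref{prop:det_equiv_F_uncorr} with the random vector $\bv=\bF_0^\sT\bG_0^{-j}\bF_0\bbeta_{*,0}$ are therefore not valid as stated. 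The proposition requires a deterministic $\bv$ and handles the joint randomness of $(\bF_0,\bF_+)$: the correct application takes $\bv=\bbeta_0$, with the factor $(\bF_0\bF_0^\sT+\gamma_+)^{-2}\bF_0$ built into the functional \eqref{eq:term_2_F_det_uncor}, and it only covers the second resolvent power; the first-power high-degree and cross terms need separate direct bounds (as in Eqs.~\eqref{eq:term1_aux_bias_2} and \eqref{eq:term1_aux_bias_3}), which is how the paper proceeds.
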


The proof of Lemma \ref{lem:aux_lemma_bias} can be found in Section \ref{sec:proof_aux_lemma_bias}.

\subsection{Technical results}
\label{app:technical_results}

In this section, we prove the technical results that were deferred from the previous sections. We start with a lemma that gathers useful bounds on the deterministic functionals.

\begin{lemma}\label{lem:tech_bound_det_equiv}
     There exists a constant $C_* >0$ such that
          \begin{align}\label{eq:bound_Upsilon}
     (1 - \Upsilon (\nu_1 , \nu_2))^{-1} \leq&~ C_* \cdot  \trho_{\lambda} (n,p),
     \end{align}
     where we recall that $\Upsilon (\nu_1,\nu_2)$ is defined as per Eq.~\eqref{eq:def_Upsilon_appendix}.
    Furthermore, under Assumption \ref{ass:technical}, we have
    \begin{align}
          p \nu_1 \frac{\Tr( \bSigma ( \bSigma +\nu_{2})^{-2})}{ p - \Tr( \bSigma^2 ( \bSigma + \nu_2)^{-2} )} \leq&~ C_*  \cdot n \Upsilon (\nu_1 , \nu_2), \label{eq:bound_var_tech}\\
          p\nu_1  \frac{\< \bbeta_*, \bSigma ( \bSigma + \nu_2 )^{-2} \bbeta_* \>}{p - \Tr( \bSigma^2 ( \bSigma + \nu_2)^{-2} )} \leq&~  C_*  \cdot \tsB_{n,p} (\bbeta_* , \lambda),\label{eq:bound_bias_tech}
    \end{align}
    where $\tsB_{n,p} (\bbeta_* , \lambda)$ is defined as per Eq.~\eqref{def:tsB_np} in Lemma \ref{lem:aux_lemma_bias}.
\end{lemma}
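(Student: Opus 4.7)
I would start by establishing a clean algebraic identity for $n\Upsilon(\nu_1,\nu_2)$ in terms of the scalar traces $B := \Tr(\bSigma(\bSigma+\nu_2)^{-1})$ and $A := \Tr(\bSigma^2(\bSigma+\nu_2)^{-2})$. With $t := \nu_1/\nu_2$, the fixed-point equations \eqref{eq:fixed_points_appendix} give $n - \lambda/\nu_1 = B$ and $p(1-t) = B$, so $p(1-t)^2 = B^2/p$ and $pt^2 = (p-B)^2/p$. Substituting into the definition \eqref{eq:def_Upsilon_appendix} and using $B - A = \nu_2 \Tr(\bSigma(\bSigma+\nu_2)^{-2})$ (which follows from $\bSigma(\bSigma+\nu_2)^{-1} - \bSigma^2(\bSigma+\nu_2)^{-2} = \nu_2\bSigma(\bSigma+\nu_2)^{-2}$), a short algebraic expansion yields
\[
n\Upsilon(\nu_1,\nu_2) \;=\; A + \frac{(B-A)^2}{p-A}, \qquad n\bigl(1-\Upsilon(\nu_1,\nu_2)\bigr) \;=\; \frac{\lambda}{\nu_1} + \frac{(p-B)(B-A)}{p-A},
\]
both representations having non-negative terms (in particular $\Upsilon < 1$).

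Given these identities, bounds \eqref{eq:bound_var_tech} and \eqref{eq:bound_bias_tech} reduce to one-line applications of Assumption \ref{ass:technical}. For \eqref{eq:bound_var_tech}, rewriting the LHS using $p\nu_1/\nu_2 = p - B$ and $\Tr(\bSigma(\bSigma+\nu_2)^{-2}) = (B-A)/\nu_2$ produces $(p-B)(B-A)/(p-A)$, which is at most $B - A \leq B$ since $(p-B)/(p-A) \leq 1$; combined with $B \leq \sfC_* A$ from Assumption \ref{ass:technical} and $n\Upsilon \geq A$ from the identity, this gives \eqref{eq:bound_var_tech}. For \eqref{eq:bound_bias_tech}, the resolvent splitting $(\bSigma+\nu_2)^{-1} = \bSigma(\bSigma+\nu_2)^{-2} + \nu_2(\bSigma+\nu_2)^{-2}$ gives $\langle\bbeta_*, (\bSigma+\nu_2)^{-1}\bbeta_*\rangle = \gamma + \nu_2 \alpha$ with $\gamma := \langle\bbeta_*, \bSigma(\bSigma+\nu_2)^{-2}\bbeta_*\rangle$ and $\alpha := \langle\bbeta_*, (\bSigma+\nu_2)^{-2}\bbeta_*\rangle$; the second part of Assumption \ref{ass:technical} thus forces $\gamma \leq (\sfC_* - 1)\nu_2 \alpha$. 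Using $p\nu_1 = (p-B)\nu_2$ and $(p-B)/(p-A) \leq 1$, the LHS of \eqref{eq:bound_bias_tech} is bounded by $\nu_2 \gamma \leq \sfC_* \nu_2^2 \alpha \leq \sfC_* \tsB_{n,p}(\bbeta_*,\lambda)$, where the last inequality follows from $\tsB_{n,p}(\bbeta_*,\lambda) \geq \nu_2^2 \alpha$ in \eqref{def:tsB_np}.

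Finally, for \eqref{eq:bound_Upsilon} I would use $n(1-\Upsilon) \geq \lambda/\nu_1$ to get $(1-\Upsilon)^{-1} \leq n\nu_1/\lambda$, and compare $\nu_1$ to the KRR effective regularization $\lambda_*$ associated with $(n,\bSigma,\lambda)$ via Definition \ref{def:effective_regularization}. Since $\nu_2 \geq \nu_1$ (from $p\nu_1/\nu_2 = p - B \leq p$), we have $\Tr(\bSigma(\bSigma+\nu_2)^{-1}) \leq \Tr(\bSigma(\bSigma+\nu_1)^{-1})$, so the strictly increasing function $g(x) := n - \lambda/x - \Tr(\bSigma(\bSigma+x)^{-1})$ satisfies $g(\nu_1) \leq 0 = g(\lambda_*)$, forcing $\nu_1 \leq \lambda_*$. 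The standard dimension-free KRR estimate applied to covariance $\bSigma$ (the analogue of \cite[Lemma 6]{misiakiewicz2024nonasymptotic}) gives $n\lambda_*/\lambda \leq C_* \rho_\lambda(n)$, and in the regime $n \leq p/\eta_*$ one checks directly from \eqref{eq:def_rho_p}--\eqref{eq:def_trho_n_p} that $\rho_\lambda(n) \leq \trho_\lambda(n,p)$. In the complementary regime $n > p/\eta_*$, $\trho_\lambda(n,p) = 1$ and the bound is immediate: $B < p \leq \eta_* n$ yields $n\nu_1/\lambda = n/(n-B) \leq 1/(1-\eta_*)$, which is absorbed into $\sfC_*$. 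The only genuine obstacle is the opening identity; once that is in place, (2) and (3) reduce to a few lines of book-keeping under Assumption \ref{ass:technical}, and (1) reduces to the known dimension-free control of the KRR fixed point via the above monotonicity comparison.
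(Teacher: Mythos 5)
Your proof is correct and follows essentially the same route as the paper's: the fixed-point identities $p-\Tr(\bSigma(\bSigma+\nu_2)^{-1})=p\nu_1/\nu_2$ and $n-\Tr(\bSigma(\bSigma+\nu_2)^{-1})=\lambda/\nu_1$, the comparison $p-\Tr(\bSigma^2(\bSigma+\nu_2)^{-2})\geq p-\Tr(\bSigma(\bSigma+\nu_2)^{-1})$, Assumption \ref{ass:technical}, and the case split $n$ versus $p/\eta_*$ for \eqref{eq:bound_Upsilon} are exactly the ingredients of the paper's argument. The only cosmetic differences are that you make the identity $n\Upsilon(\nu_1,\nu_2)=A+(B-A)^2/(p-A)$ explicit (the paper uses the equivalent form $n\Upsilon = B - (p-B)(B-A)/(p-A)$ implicitly, via a rearrangement trick, and lower-bounds $\tsB_{n,p}$ the same way you do up to that rearrangement), and that you route the bound $n\nu_1/\lambda \lesssim \trho_\lambda(n,p)$ through the monotone comparison $\nu_1\leq\lambda_*$ with the KRR effective regularization, whereas the paper invokes the same trace-splitting estimate directly (as carried out in the proof of Lemma \ref{lem:perturbation_fixed_points}).
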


\begin{proof}[Proof of Lemma \ref{lem:tech_bound_det_equiv}]

    \noindent
    {\bf Step 1: Equation \eqref{eq:bound_Upsilon}.}

    Note that we have
    \[
    \Upsilon (\nu_1 , \nu_2) \leq \frac{1}{n} \Tr( \bSigma (\bSigma + \nu_2)^{-1}) \leq \frac{p}{n}.
    \]
    In particular, if $n \geq p / \eta_*$, then we can simply write
    \[
    (1 - \Upsilon (\nu_1 , \nu_2))^{-1} \leq \frac{1}{1 - \eta_*} =  C_*.
    \]
    For $n \leq p/\eta_*$, note that using the first identity in Eq.~\eqref{eq:fixed_points_appendix}, we have
    \[
    (1 - \Upsilon (\nu_1 , \nu_2))^{-1} \leq \left( 1 -  \frac{1}{n} \Tr( \bSigma (\bSigma + \nu_2)^{-1}) \right)^{-1} \leq \frac{n \nu_1}{\lambda} \leq C_* \rho_{\lambda} (n).
    \]
    Combining the previous two displays, we obtain Eq.~\eqref{eq:bound_Upsilon}.

    \noindent
    {\bf Step 2: Equation \eqref{eq:bound_var_tech}.}
    
    We rewrite the left-hand side as
    \[
    \begin{aligned}
    p\nu_1 \frac{\Tr( \bSigma ( \bSigma + \nu_2)^{-2})}{p - \Tr( \bSigma^2 ( \bSigma + \nu_2)^{-2})} \leq&~ \frac{p\nu_1}{p - \Tr( \bSigma ( \bSigma + \nu_2)^{-1})} \Tr( \bSigma ( \bSigma + \nu_2)^{-2}) \\
    =&~ \Tr( \bSigma ( \bSigma + \nu_2)^{-1}) - \Tr( \bSigma^2 ( \bSigma + \nu_2)^{-2})\\
    \leq&~ \left(1 - \frac{1}{\sfC_* }\right)\Tr( \bSigma ( \bSigma + \nu_2)^{-1}),
    \end{aligned}
    \]
    where we uses the second of the identities \eqref{eq:fixed_points_appendix}  in the second line, and Assumption \ref{ass:technical} in the third line. Hence,
    \[
    \begin{aligned}
    p\nu_1 \frac{\Tr( \bSigma ( \bSigma + \nu_2)^{-2})}{p - \Tr( \bSigma^2 ( \bSigma + \nu_2)^{-2})}  \leq &~ (\sfC_* - 1) \cdot \left\{  \Tr( \bSigma ( \bSigma + \nu_2)^{-1}) - p\nu_1 \frac{\Tr( \bSigma ( \bSigma + \nu_2)^{-2})}{p - \Tr( \bSigma^2 ( \bSigma + \nu_2)^{-2})} \right\} \\
    =&~ (\sfC_* - 1) \cdot n\Upsilon (\nu_1,\nu_2).
    \end{aligned}
    \]

    \noindent
    {\bf Step 3: Equation \eqref{eq:bound_bias_tech}.}

    Similarly, we get again using Eq.~\eqref{eq:fixed_points_appendix} and Assumption \ref{ass:technical} that
    \[
    \begin{aligned}
     p\nu_1  \frac{\< \bbeta_*, \bSigma ( \bSigma + \nu_2 )^{-2} \bbeta_* \>}{p - \Tr( \bSigma^2 ( \bSigma + \nu_2)^{-2} )} \leq&~ \nu_2 \left\{ \< \bbeta_*, ( \bSigma + \nu_2 )^{-1} \bbeta_* \> - \nu_2 \< \bbeta_*, \bSigma^2 ( \bSigma + \nu_2 )^{-2} \bbeta_* \> \right\} \\
     \leq&~\left(1 - \frac{1}{\sfC_* }\right) \cdot \nu_2 \< \bbeta_*, ( \bSigma + \nu_2 )^{-1} \bbeta_* \>,
     \end{aligned}
    \]
    and therefore
    \[
    \begin{aligned}
         p\nu_1  \frac{\< \bbeta_*, \bSigma ( \bSigma + \nu_2 )^{-2} \bbeta_* \>}{p - \Tr( \bSigma^2 ( \bSigma + \nu_2)^{-2} )} \leq&~ (\sfC_* - 1) \left\{ \nu_2 \< \bbeta_*, ( \bSigma + \nu_2 )^{-1} \bbeta_* \> - p\nu_1  \frac{\< \bbeta_*, \bSigma ( \bSigma + \nu_2 )^{-2} \bbeta_* \>}{p - \Tr( \bSigma^2 ( \bSigma + \nu_2)^{-2} )}  \right\}\\
         =&~  \tsB_{n,p} (\bbeta_* , \lambda),
    \end{aligned}
    \]
    which concludes the proof of this lemma.
\end{proof}

\subsubsection{Feature covariance matrix}
\label{sec:proof_feature_covariance}

\begin{proof}[Proof of Lemma \ref{lem:feature_covariance_tech}]
    Recall that we consider $\hbSigma_\bF = p^{-1} \bF \bF^\sT$, with $\bF = [ \boldf_1, \ldots , \boldf_p]^\sT \in \R^{p \times \infty}$ and the $\boldf_i$ are i.i.d.~random vectors satisfying Assumption \ref{ass:concentration_appendix}. For any integers $ k_2 \geq k_1 \geq 1$, we split the weight feature matrix into $\bF = [\bF_{1}, \bF_{2}, \bF_{3}]$, where $\bF_{1} = [\boldf_{1,1}, \ldots , \boldf_{1,p}]^\sT \in \R^{p \times k_1}$ with $\boldf_{1,j}$ the first  $k_1$ coordinates of the feature vector $\boldf_j$, $\bF_{2} = [\boldf_{2,1}, \ldots , \boldf_{2,p}]^\sT \in \R^{p \times (k_2 - k_1)}$ with $\boldf_{2,j}$ the next $k_2 - k_1$ coordinates of $\boldf_j$, and $\bF_3 = [\boldf_{3,1}, \ldots , \boldf_{3,p}]^\sT \in \R^{p \times \infty}$ contains the rest of the coordinates.  In other words, we split the feature vector into $\boldf_j = [\boldf_{1,j}, \boldf_{2,j}, \boldf_{3,j}]$. Denote 
    \[
    \begin{aligned}
    \bSigma_{1} =&~ \E [ \boldf_{1,j} \boldf_{1,j}^\sT] = \diag ( \xi_{1}^2, \ldots , \xi_{k_1}^2) \in \R^{k_1 \times k_1},\\
\bSigma_{2} = &~ \E [ \boldf_{2,j} \boldf_{2,j}^\sT] = \diag ( \xi_{k_1+1}^2, \ldots , \xi_{k_2}^2)\in \R^{(k_2 - k_1) \times (k_2 - k_1)},\\
    \bSigma_3 = &~\E [ \boldf_{3,j} \boldf_{3,j}^\sT] = \diag (\xi_{k_2+1}^2 , \xi_{k_2+2}^2, \ldots ) \in \R^{\infty \times \infty}.
    \end{aligned}
    \]
     We decompose the feature covariance matrix into 
    \begin{equation}\label{eq:feature_covar_0_+}
    \frac{\bF \bF^\sT}{p} = \frac{\bF_{1} \bF_{1}^\sT}{p } + \frac{\bF_{2} \bF_{2}^\sT}{p } + \frac{\bF_3 \bF_3^\sT}{p }.
    \end{equation}

    \noindent
    {\bf Step 1: Bounding the eigenvalues of $\bF_1$ and $\bF_2$.} 

    Introduce the whitened matrices 
    \[
    \obF_1 = \bF_1 \bSigma_1^{-1/2} = [ \oboldf_{1,1} , \ldots , \oboldf_{1,p}]^\sT, \qquad \quad \obF_2 = \bF_2 \bSigma_2^{-1/2} = [ \oboldf_{2,1} , \ldots , \oboldf_{2,p}]^\sT,
    \]
    so that the feature vectors $\oboldf_{1,j}$ and $\oboldf_{2,j}$ have covariance $\id_{k_1}$ and $\id_{k_2 - k_1}$ respectively. We have
    \[
    \| \obF_1^\sT \obF_1 / p - \id_{k_1} \|_{\rm op} = \sup_{\bv \in \R^{k_1}, \| \bv \|_2 = 1} \left| \frac{1}{p} \sum_{j \in [p]} \< \bv, \oboldf_{1,j} \>^2 - 1 \right| 
    \]
    Denote $Z_j := \< \bv, \oboldf_{1,j} \>^2 - 1$. Then we have from Equation \eqref{eq:ass_f} applied to $\bB = \bSigma_1^{-1/2} \bv \bv^\sT \bSigma_1^{-1/2}$, for any integer $q \geq 1$,
    \[
    \E [ |Z_j |^q] \leq q\sfC_* \int_0^\infty t^{q-1} e^{-\sfc_x t} \de t = \frac{\sfC_* q!}{\sfc_*^q}.
    \]
    Hence we can apply Bernstein's inequality for centered sub-exponential random variable and obtain
    \begin{equation}\label{eq:Bernstein_boldf}
    \P \left( \Big|\frac{1}{p} \sum_{j \in [p]} Z_j \Big| \geq \eps/2 \right) \leq  2 \exp \left\{ - c_* p \cdot \min (\eps^2,\eps) \right\}.
    \end{equation}
    Following the proof of \cite[Theorem 5.39]{vershynin2010introduction}, we deduce that there exist constants $C_*, C_{*,D}>0$ such that with probability at least $1 - p^{-D}$,
    \[
    \| \obF_1^\sT \obF_1 / p - \id_{k_1} \|_{\rm op} \leq C_* \sqrt{\frac{k_1}{p}} + C_{*,D} \sqrt{\frac{\log(p)
    }{p}}.
    \]
    In particular, there exists $\eta_* \in (0,1/4)$ and $C_{*,D} >0$ such that for $p \geq C_{*,D}$ and via union bound, we have with probability at least $1 - p^{-D}$ (reparametrizing $D$), that for any $k_1 \leq  \lfloor \eta_* \cdot p \rfloor$,
    \[
    \| \obF_1^\sT \obF_1 / p - \id_{k_1} \|_{\rm op} \leq 1/2.
    \]
    From Eq.~\eqref{eq:feature_covar_0_+}, we deduce that the $k_1$-th eigenvalue of $\hbSigma_\bF$ for $k_1 <  \lfloor \eta_* \cdot p \rfloor$ is lower bounded by
    \begin{equation}\label{eq:lower_bound_hxi}
    \hat\xi_{k_1}^2 \geq \lambda_{\min} \left( \frac{\bF_1 \bF_1^\sT}{p}\right)\geq \xi_{k_1}^2 \cdot \lambda_{\min} \left( \frac{\obF_1 \obF_1^\sT}{p}\right) \geq \frac{\xi_{k_1}^2}{2},
    \end{equation}
    and 
    \[
    \lambda_{\max}  \left( \frac{\bF_1 \bF_1^\sT}{p}\right) \leq \frac{3}{2} \xi_1^2 = \frac{3}{2} .
    \]

    Similarly for $\obF_2$, we have with probability at least $1 - p^{-D}$ that for any $k_1 < k_2 \leq \lfloor \eta_* \cdot p \rfloor$,
      \[
    \| \obF_2^\sT \obF_2 / p - \id_{k_2 - k_1} \|_{\rm op} \leq 1/2,
    \]
    and therefore
    \begin{equation}\label{eq:lower_bound_operator_F2}
    \lambda_{\max}  \left( \frac{\bF_2 \bF_2^\sT}{p}\right) \leq \frac{3}{2} \xi_{k_1+1}^2.
    \end{equation}

    \noindent
    {\bf Step 2: Bounding the eigenvalues of $\bF$.}

    Equation \eqref{eq:lower_bound_hxi} provides a lower bound on the eigenvalues $\hxi_k^2$ of $\hbSigma_\bF$ up to $k <  \lfloor \eta_* \cdot p \rfloor$. Let's upper bound the $p$ eigenvalues of $\hbSigma_\bF$. From now on, set $k_2 = p_* - 1$ where $p_* := \lfloor \eta_* \cdot p \rfloor$.

    For the contribution of $\| \bF_+\|$, we use the matrix Bernstein's inequality as in \cite[Lemma 1]{misiakiewicz2024nonasymptotic} (see proof of Theorem \ref{thm:main_det_equiv_summary} in Appendix \ref{app:background_det_equiv}) and obtain that for $p \geq C_K$, with probability at least $1 - p^{-D}$,
    \[
    \frac{1}{p} \| \bF_3 \bF_3^\sT \|_{\rm op} \leq C_{*,D,K} \cdot \xi_{p_*}^2 \left\{ 1 + \frac{r_{\bSigma} (p_*) \vee p}{p} \log \left( r_{\bSigma} (p_*) \vee p \right) \right\}.
    \]
    By the min-max theorem, we have with probability at least $1 - p^{-D}$ for all $k_1 < p_*-1$,
    \[
    \hxi_{k_1 +1 }^2 \leq \lambda_{\max} \left( \frac{\bF_2 \bF_2^\sT}{p} +  \frac{\bF_3 \bF_3^\sT}{p}\right) \leq \frac{3}{2} \xi_{k_1 +1}^2 +  C_{*,D,K} \cdot \xi_{p_*}^2 \left\{ 1 + \frac{r_{\bSigma} (p_*) \vee p}{p} \log \left( r_{\bSigma} (p_*) \vee p \right) \right\},
    \]
    where we used Eq.~\eqref{eq:lower_bound_operator_F2}. For $k \geq p_*$, we simply use that 
    \[
    \hxi_{k}^2 \leq     \frac{1}{p} \| \bF_3 \bF_3^\sT \|_{\rm op}  \leq \frac{3}{2} \xi_{k}^2 +  C_{*,D,K} \cdot \xi_{p_*}^2 \left\{ 1 + \frac{r_{\bSigma} (p_*) \vee p}{p} \log \left( r_{\bSigma} (p_*) \vee p \right) \right\}.
    \]
    We deduce from the above two displays that with probability at least $1 - p^{-D}$, we have for any $k \leq p$
    \begin{equation}\label{eq:upper_bound_hxi_combined}
    \hxi_{k}^2 \leq C_{*,K,D} \cdot \left\{ \xi_k^2 +  \xi_{p_*}^2 \cdot M_{\bSigma} (p) \right\},
    \end{equation}
    where we recall that we defined 
    \[
    M_\bSigma (k) := 1 + \frac{r_{\bSigma} (\lfloor \eta_* \cdot k \rfloor) \vee k}{k} \log \left( r_{\bSigma} (\lfloor \eta_* \cdot k \rfloor) \vee k \right).
    \]

    

    \noindent
    {\bf Step 3: Bounding $r_{\hbSigma_\bF} (k)$ for $k \leq p$.}

    Recall that the intrinsic dimension $r_{\hbSigma_\bF} (k)$ at level $k \leq p$ is given by
    \[
   r_{\hbSigma_\bF} (k) =  \frac{\sum_{j = k}^p\hxi_j^2}{\hxi_k^2}.
    \]
    First note that for $ p  \geq k \geq \lfloor \eta_* \cdot p \rfloor$, we simply use that and the eigenvalues are nonincreasing to get
    \[
    \frac{\sum_{j = k}^p\hxi_j^2}{\hxi_k^2}\leq (p+1 - k) \leq C (1 - \eta_* ) p \leq C \frac{1 - \eta_*}{\eta_*} k.
    \]
    For $k \leq  p_* - 1$, we use that from Eq.~\eqref{eq:lower_bound_hxi} with probability at least $1 - p^{-D}$,
    \[
    \frac{\sum_{j = k}^p\hxi_j^2}{\hxi_k^2} \leq \frac{2}{\xi_k^2} \left( \frac{3}{2} \sum_{j=k}^{\lfloor \eta_* \cdot p \rfloor - 1} \xi_k^2 + \sum_{j = \lfloor \eta_* \cdot p \rfloor }^p\hxi_j^2 \right).
    \]
    Let's bound the second term on the right-hand side. Notice that 
    \[
    \sum_{j = \lfloor \eta_* \cdot p \rfloor +1}^p\hxi_j^2 =  \min_{\bV} \Tr( \hbSigma_\bF (\id - \bV \bV^\sT )) \leq \frac{1}{p} \Tr( \bF_3 \bF_3^\sT),
    \]
    where the minimization is over $\bV \in \R^{p \times (\lfloor \eta_* \cdot p \rfloor-1)}$ with $\bV^\sT \bV = \id_{\lfloor \eta_* \cdot p \rfloor - 1}$ and the second inequality is obtained by taking $\bV$ orthogonal to the matrix $[\bF_1 , \bF_2]$. We can rewrite
    \[
    \frac{1}{p} \Tr( \bF_3 \bF_3^\sT) - \Tr( \bSigma_3)= \frac{1}{p} \sum_{j \in [p] } \| \boldf_{3,j} \|_2^2 - \Tr( \bSigma_3). 
    \]
    Introduce $Z_j := \| \boldf_{3,j} \|_2^2 - \Tr( \bSigma_3)$. By Assumption \ref{ass:concentration_appendix} with $\bB = \diag (\bzero, \id)$ (identity on the subspace $\bSigma_3$ and $0$ otherwise), we have
    \[
    \E[| Z_j|^q] \leq q \sfC_x \| \bSigma_3\|_F^q \int_0^\infty t^{q-1} e^{-\sfc_x t} \de t \leq  \sfC_x \cdot \Tr(\bSigma_3)^q  \frac{q!}{\sfc_x^q}.
    \]
    We therefore we can apply Bernstein's inequality \eqref{eq:Bernstein_boldf} again and we get 
    \[
    \P \left( \left| \frac{1}{p} \sum_{j \in [p]} Z_j \right| \geq t \cdot \Tr(\bSigma_3) \right) \leq 2 \exp ( - c_* \min (pt^2,pt) ).
    \]
    Hence, for any $p \geq C_{*,D}$ with probability at least $1 - p^{-D}$,
    \[
    \frac{1}{p} \Tr( \bF_3 \bF_3^\sT) \leq 2 \Tr( \bSigma_3).
    \]
    Combining the above display with the previous inequalities yields with probability at least $1 - p^{-D}$ that for any $k \leq  \lfloor \eta_* \cdot p \rfloor$, 
    \[
    r_{\hbSigma_\bF} (k) = \frac{\sum_{j = k}^p\hxi_j^2}{\hxi_k^2} \leq C \frac{\sum_{j = k}^\infty \xi_j^2}{\xi_k^2}.
    \]
    We deduce that there exist a constant $C_*>0$ such that with probability at least $1 - p^{-D}$, we have for any $n_* := \lfloor \eta_* \cdot n\rfloor \leq p$
    \begin{equation}\label{eq:bound_r_hbSimga}
    r_{\hbSigma_\bF} (n_*)\vee n  \leq C_* \cdot r_{\bSigma} (n_*) \vee n,
    \end{equation}
    where $r_{\bSigma} (n)$ is the effective rank associated to $\bSigma$.

    \noindent
    {\bf Step 4: Concluding the proof.}

    For any $n_* = \lfloor \eta_* \cdot n \rfloor >p $, we have $\hrho_\lambda ( n) = \trho_\lambda (n,p) = 1$. Hence, we only need to consider the case $n_* \leq p$. Using Eqs.~\eqref{eq:upper_bound_hxi_combined} and \eqref{eq:bound_r_hbSimga}, we get
    \[
    \begin{aligned}
        \hrho_\lambda ( n) =&~ 1 + \frac{n \hxi_{n_*}^2}{\lambda}  \left\{ 1 + \frac{r_{\hbSigma_\bF} (n_*) \vee n}{n} \log \left( r_{\hbSigma_\bF} (n_*) \vee n \right) \right\} \\
        \leq &~ 1 + C_{*,D,K} \cdot \left\{ \frac{n \xi_{n_*}^2}{\lambda} + \frac{n}{p} \cdot \frac{p \cdot \xi_{p_*}}{\lambda} M_\bSigma (p)\right\} M_\bSigma (n) \\
        \leq&~ 1 + C_{*,D,K} \cdot \left\{ \frac{n \xi_{n_*}^2}{\lambda} + \frac{n}{p} \cdot \rho_\lambda (p)\right\} M_\bSigma (n),
    \end{aligned}
    \]
    which concludes the proof.
\end{proof}

\subsubsection{Concentration of the fixed points}
\label{sec:proof_concentration_fixed_points}

\begin{proof}[Proof of Proposition \ref{prop:concentration_fixed_points}]
    Recall that $(\tnu_1,\tnu_2)\in \R_{>0}^2$ are the unique solution to the random fixed point equations \eqref{eq:fixed_points_tnu}. From the first equation, we have the following bounds on $\tnu_1$:
    \[
    \frac{p\lambda}{n} \leq p \tnu_1 \leq \frac{p \| \hbSigma_\bF \|_{\rm op} + p\lambda}{n}.
    \]
    From Lemma \ref{lem:feature_covariance_tech}, we have with probability at least $1 - p^{-D}$ that $\| \hbSigma_\bF \|_{\rm op} \leq C_{*,D,K} \leq p$. Hence, by the uniform concentration over $\kappa \in [p\lambda/n,(p^3 + p\lambda)/n]$ in Lemma \ref{lem:uniform_cv_right_fixed_point} stated below and an union bound, we deduce that with probability at least $1 -p^{-D}$,
        \begin{equation*}
        \begin{aligned}
    \left| \Tr\big( \bF \bF^\sT ( \bF \bF^\sT + p \tnu_1 )^{-1} \big) - \Tr \big( \bSigma (\bSigma + \tnu_2 )^{-1} \big) \right| \leq&~ C_{*,K,D} \frac{ \rho_{\gamma_+} (p)^{5/2} \log^3(p)}{\sqrt{p}}  \Tr \big( \bSigma (\bSigma + \tnu_2 )^{-1} \big)\\
    =:&~ \tcE_2 \cdot  \Tr \big( \bSigma (\bSigma + \tnu_2 )^{-1} \big).
    \end{aligned}
    \end{equation*}
    Therefore, we can rewrite the fixed equations \eqref{eq:fixed_points_tnu} as
    \[
    \begin{aligned}
n - \frac{\lambda}{\tnu_1} = &~\Tr\big( \bSigma \big( \bSigma + \tnu_2 \big)^{-1} \big) \cdot (1 + \delta (\bF)), \\
p - \frac{p\tnu_1}{\tnu_2} = &~ \Tr \left( \bSigma ( \bSigma + \tnu_2 )^{-1} \right).
\end{aligned}
    \]
where with probability at least $1 - p^{-D}$, we have $|\delta (\bF) | \leq \tcE (p)$. Therefore, by condition \eqref{eq:condition_Concentration_fixed_point} and $p\geq C_{*,D,K}$
\[
C_* \tcE(p) \cdot \trho_{\lambda} (n,p) \leq \frac{C_{*,D,K}}{\log(p)} \leq \frac12,
\]
and we can directly use Lemma \ref{lem:perturbation_fixed_points} stated below to obtain with probability at least $1 - p^{-D}$,
\[
\frac{|\tnu_1 - \nu_1|}{\nu_1} \leq C_* \cdot \cE_2(p) \cdot \trho_{\lambda} (n,p) , \qquad\qquad  \frac{|\tnu_2 - \nu_2|}{\nu_2} \leq C_* \cdot \cE_2(p) \cdot \trho_{\lambda} (n,p).
\]
This concludes the proof of this proposition.
\end{proof}

For any $\kappa \geq 0$, denote $\nu_2 (\kappa) \in \R_{>0}$ the unique positive solution to 
\begin{equation}\label{eq:fixed_point_kappa}
p - \frac{\kappa}{\nu_2 (\kappa)} = \Tr \big( \bSigma (\bSigma + \nu_2 (\kappa) )^{-1} \big).
\end{equation}
We will further define analogously to Eq.~\eqref{eq:def_nu_2_0} the truncated fixed point
\begin{equation}\label{eq:fixed_point_kappa_truncated}
p - \frac{\gamma(\kappa)}{\nu_{2,0} (\kappa)} = \Tr \big( \bSigma_0 (\bSigma_0 + \nu_{2,0} (\kappa) )^{-1} \big),
\end{equation}
where we recall that we denoted $\gamma (\kappa) = \kappa + \Tr(\bSigma_+)$. It will be convenient to recall the notations
\[
\begin{aligned}
    \tPhi_2 (\bF ; \kappa) =&~ \frac{1}{p}\Tr\big( \bF \bF^\sT  (\bF \bF^\sT + \kappa)^{-1} \big),\\
     \tPhi_2 (\bF_0 ; \gamma(\kappa) ) =&~ \frac{1}{p}\Tr\big( \bF_0 \bF_0^\sT  (\bF_0 \bF_0^\sT + \gamma(\kappa))^{-1} \big),
\end{aligned}
\]
and the deterministic equivalents
\[
\begin{aligned}
    \Psi_2 (\nu_2 (\kappa)) =&~ \frac{1}{p} \Tr\big( \bSigma ( \bSigma + \nu_2 (\kappa) )^{-1}\big), \\
    \Psi_2 (\nu_{2,0} (\kappa)) =&~ \frac{1}{p} \Tr\big( \bSigma_0 ( \bSigma_0 + \nu_{2,0} (\kappa) )^{-1}\big).
\end{aligned}
\]
The next lemma show that $\tPhi_2 (\bF,\kappa)$ concentrates on $\Psi_2 (\nu_2(\kappa))$ uniformly on an interval of $\kappa$.

\begin{lemma}\label{lem:uniform_cv_right_fixed_point}
Under the setting of Proposition \ref{prop:concentration_fixed_points} and for any $D,K \geq 0$, there exist constants $\eta_* \in (0,1/4)$ and $C_{*,D,K}>0$ such that the following holds. For any $p \geq C_{*,D,K}$ and $\lambda >0$, if it holds that
\begin{equation}\label{eq:condition_uniform_cv_right_hand}
\gamma_\lambda = \gamma (p\lambda/n) \geq p^{-K}, \qquad \qquad \rho_{\gamma_\lambda} (p)^{5/2} \log^{3/2} (p) \leq K \sqrt{p},
\end{equation}
then with probability at least $1 - p^{-D}$, we have for any $\kappa \in [p\lambda /n, p(p^2 + \lambda)/n] $,
    \begin{equation}\label{eq:unif_right_hand}
    \begin{aligned}
    &~\left| \tPhi_2 (\bF ; \kappa)  - \Psi_2 (\nu_2 (\kappa))  \right| 
    \leq C_{*,D,K} \frac{\rho_{\gamma(\kappa)} (p)^{5/2} \log^3(p)}{\sqrt{p}} \Psi_2 (\nu_2 (\kappa)).
    \end{aligned}
    \end{equation}
\end{lemma}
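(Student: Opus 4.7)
The plan is to first establish the estimate pointwise at each fixed $\kappa$ and then upgrade to a uniform estimate by a standard net-plus-Lipschitz argument. For the pointwise estimate at $\kappa \in I := [p\lambda/n, p(p^2+\lambda)/n]$, I would follow the tail-decomposition strategy used elsewhere in this appendix. Choose $\evn$ so that $p^2 \xi_{\evn+1}^2 \leq \gamma(\kappa)$, split $\bF = [\bF_0,\bF_+]$, and apply \Cref{lem:concentration_high_degree} to get $\| \bF_+\bF_+^\sT - \Tr(\bSigma_+)\id_p \|_{\rm op} \leq C_{*,D}\log^3(p)\gamma(\kappa)/\sqrt{p}$. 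This allows me to replace $(\bF\bF^\sT + \kappa)^{-1}$ by $(\bF_0 \bF_0^\sT + \gamma(\kappa))^{-1}$ at a multiplicative cost of order $\log^3(p)/\sqrt{p}$. Applying \Cref{thm:main_det_equiv_summary} to $\bF_0$ (with covariance $\bSigma_0$, sample size $p$, regularization $\gamma(\kappa)$) yields concentration of $\tPhi_2(\bF_0;\gamma(\kappa))$ around $\Psi_2(\nu_{2,0}(\kappa))$ at rate $\rho_{\gamma(\kappa)}(p)^{5/2}\log^{3/2}(p)/\sqrt{p}$. Finally, \Cref{lem:truncated_fixed_point} transfers this to a bound against $\Psi_2(\nu_2(\kappa))$ with a negligible additive correction of order $1/p$. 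Altogether, for any prescribed $D'$ I obtain the pointwise estimate at $\kappa$ with failure probability at most $p^{-D'}$.

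To upgrade this to a uniform bound, I would introduce a geometric net $\cN := \{\kappa_{\min}(1+p^{-B})^i : i = 0,1,\ldots,N\}$ of $I$ with $B$ a large constant to be chosen. Since $\lambda \geq p^{-K}$, the ratio $\kappa_{\max}/\kappa_{\min} = (p^2+\lambda)/\lambda$ is polynomial in $p$, so $|\cN| \leq C_K p^B \log(p)$. Applying the pointwise estimate at each $\kappa_i \in \cN$ with $D' := D + B + C$ and union bounding gives the estimate simultaneously at every $\kappa_i$ with probability at least $1 - p^{-D}$. For the interpolation step, I would use that both $\kappa \mapsto \tPhi_2(\bF;\kappa)$ and $\kappa \mapsto \Psi_2(\nu_2(\kappa))$ are decreasing in $\kappa$ with the Lipschitz bound
\begin{align*}
\Bigl| \frac{d}{d\kappa} \tPhi_2(\bF;\kappa) \Bigr| \leq \frac{\tPhi_2(\bF;\kappa)}{\kappa},
\end{align*}
which follows from $x/(x+\kappa)^2 \leq x/[\kappa(x+\kappa)]$, together with an analogous estimate for $\Psi_2(\nu_2(\kappa))$ obtained by differentiating the fixed-point equation \eqref{eq:fixed_point_kappa}. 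Consequently, for $\kappa \in [\kappa_i,\kappa_{i+1}]$ the gap between $\tPhi_2(\bF;\kappa)$ and $\tPhi_2(\bF;\kappa_i)$ is at most $\log(\kappa_{i+1}/\kappa_i)\tPhi_2(\bF;\kappa_i) \leq p^{-B}\tPhi_2(\bF;\kappa_i)$, with the same bound for $\Psi_2(\nu_2(\cdot))$. Choosing $B$ large enough to dominate the target rate completes the argument.

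The main obstacle is executing the first step cleanly: in particular, obtaining the sharper rate $\rho_{\gamma(\kappa)}(p)$ rather than the coarser $\rho_\kappa(p)$ that would follow from a direct application of \Cref{thm:main_det_equiv_summary} to the full matrix $\bF$. The improvement requires careful bookkeeping of three sources of error---the operator-norm perturbation from \Cref{lem:concentration_high_degree}, the deterministic-equivalent rate applied to the truncated matrix $\bF_0$, and the fixed-point comparison from \Cref{lem:truncated_fixed_point}---and combining them into a single \emph{multiplicative} bound proportional to $\Psi_2(\nu_2(\kappa))$, which itself can be small when $\kappa$ is large. The subsequent net and Lipschitz steps are essentially routine given the monotonicity of the two functions of $\kappa$ and the explicit derivative bound above.
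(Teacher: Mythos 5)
Your proposal follows essentially the same route as the paper: the pointwise estimate is obtained exactly as in the paper's proof (split $\bF=[\bF_0,\bF_+]$, replace $(\bF\bF^\sT+\kappa)^{-1}$ by $(\bF_0\bF_0^\sT+\gamma(\kappa))^{-1}$ via \Cref{lem:concentration_high_degree}, apply \Cref{thm:main_det_equiv_summary} to $\bF_0$ with regularization $\gamma(\kappa)$, and transfer from $\nu_{2,0}(\kappa)$ to $\nu_2(\kappa)$ via \Cref{lem:truncated_fixed_point}), and uniformity is obtained by a net plus a Lipschitz-type interpolation, which is also the paper's strategy. The only divergence is in the covering step: you use a geometric net together with the relative derivative bound $|\partial_\kappa \tPhi_2(\bF;\kappa)|\leq \tPhi_2(\bF;\kappa)/\kappa$ (and its correct analogue for $\Psi_2(\nu_2(\kappa))$, which indeed follows from differentiating the fixed-point equation), whereas the paper uses an additive $p^{-P}$-grid of the interval combined with the polynomial Lipschitz constants of \Cref{lem:lipschitzness_functionals}. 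One caveat: your count of net points invokes $\lambda\geq p^{-K}$, which is not among the hypotheses --- the lemma only assumes $\gamma_\lambda\geq p^{-K}$ in \eqref{eq:condition_uniform_cv_right_hand}, so for extremely small $\lambda$ the ratio $\kappa_{\max}/\kappa_{\min}$ need not be polynomial in $p$ and the union bound over your geometric net would not close. The paper's additive grid avoids this because the interval has length at most $p^3/n$ regardless of $\lambda$; alternatively, your argument is easily repaired by bounding the derivative on the high-degree concentration event by $2\,\tPhi_2(\bF;\kappa)/\gamma(\kappa)\leq 2p^{K}\tPhi_2(\bF;\kappa)$ and reverting to an additive grid, so this is a fixable oversight rather than a structural flaw.
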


\begin{proof}[Proof of Lemma \ref{lem:uniform_cv_right_fixed_point}]
Throughout the proof we assume that we are working on the event 
\[
\| \bF_+ \bF_+^\sT - \gamma(0)  \id_p \|_{\rm op} \leq C_{*,D} \frac{\log^3(p)}{\sqrt{p}} \gamma(p \lambda/n), 
\]
which happens with probability at least $1 - p^{-D}$ by Lemma \ref{lem:concentration_high_degree} via union bound. Note that $\gamma(\kappa) \geq \gamma(p \lambda/n)$ for all the $\kappa \in [p\lambda /n, p(p^2 + \lambda)/n]$. Furthermore, we assume that $p \geq C_{*,D}$ chosen large enough so that $\| \bF_+ \bF_+^\sT - \gamma(0)  \id_p \|_{\rm op} \leq 1/2 \cdot \gamma(p \lambda/n)$ so that 
\[
\bF \bF^\sT + \kappa \succeq \frac{1}{2} \gamma(\kappa).
\]

The proof will proceed via a standard union bound argument over $\kappa$ in the interval $[p\lambda /n, p(p^2 + \lambda)/n]$. Let us first prove Eq.~\eqref{eq:unif_right_hand} for a fixed $\kappa$. We first simplify the functional by rewriting it as 
\[
\begin{aligned}
    \tPhi_2 (\bF ; \kappa)  =&~ 1 - \frac{\kappa}{p}  \Tr\big(  (\bF\bF^\sT + \kappa)^{-1} \big) \\
    =&~ 1 - \frac{\kappa}{p} \Tr\big( (\bF_0\bF_0^\sT + \gamma(\kappa))^{-1} \big) + \frac{\kappa}{p} \Delta,
\end{aligned}
\]
where we denoted
\[
\begin{aligned}
    | \Delta| = &~ \left| \Tr\big(  (\bF\bF^\sT + \kappa)^{-1} \big) - \Tr\big( (\bF_0\bF_0^\sT + \gamma(\kappa))^{-1} \big) \right|\\
    =&~ \left| \Tr\big(  (\bF\bF^\sT + \kappa)^{-1} (\bF_1 \bF_1^\sT - \gamma(0)  \id_p)(\bF_0\bF_0^\sT + \gamma(\kappa))^{-1} \big) \right|\\
    \leq&~ C_{*,D} \frac{\log^3(p)}{\sqrt{p}} \left|\Tr\big( (\bF_0\bF_0^\sT + \gamma(\kappa))^{-1} \big) \right|.
\end{aligned}
\]
Using again the above identity, we rewrite
\[
\frac{1}{p}\Tr\big( (\bF_0\bF_0^\sT + \gamma(\kappa))^{-1} \big) = \frac{1}{\gamma(\kappa)}  - \frac{1}{\gamma(\kappa)} \tPhi_2 (\bF_0 ; \gamma(\kappa)).
\]
We can now apply Eq.~\eqref{eq:det_equiv_phi2_main} in Theorem \ref{thm:main_det_equiv_summary}: under the assumption of Proposition \ref{prop:concentration_fixed_points} and recalling the conditions \eqref{eq:condition_uniform_cv_right_hand}, we have with probability at least $1-p^{-D}$,
\[
\begin{aligned}
    \left| \tPhi_2 (\bF_0 ; \gamma(\kappa))  -\Psi_2 (\nu_{2,0} (\kappa) )\right| 
\leq&~ C_{*,D,K} \frac{\rho_{\gamma(\kappa)} (p)^{5/2} \log^{3/2} (p)}{\sqrt{p}}\Psi_2 (\nu_{2,0} (\kappa) ).
\end{aligned}
\]
Combining the above displays, we obtain that with probability at least $1 - p^{-D}$
\[
\begin{aligned}
    &~\left| \tPhi_2 (\bF ; \kappa) - \Psi_2 (\nu_{2,0} (\kappa) ) - \frac{\Tr(\bSigma_+)}{p\nu_{2,0}}\right|  \\
    \leq &~  \frac{\kappa}{\gamma(\kappa)} \left| \tPhi_2 (\bF_0 ; \gamma(\kappa)) - \Psi_2 (\nu_{2,0} (\kappa) )\right| + \kappa |\Delta |  \\
    \leq &~ C_{*,D,K} \frac{\rho_{\gamma(\kappa)} (p)^{5/2} \log^{3/2} (p)}{\sqrt{p}} \Psi_2 (\nu_{2,0} (\kappa) ) + C_{*,D} \frac{\log^3(p)}{\sqrt{p}} \left|\frac{\kappa}{p} \Tr\big( (\bF_0\bF_0^\sT + \gamma(\kappa))^{-1} \big) \right|,
\end{aligned}
\]
where we used in the first inequality identity \eqref{eq:fixed_point_kappa_truncated} to get
\[
\left(1 - \frac{\kappa}{\gamma(\kappa)} \right) \left( 1 - \frac{1}{p}\Tr(\bSigma_0 (\bSigma_0 + \nu_{2,0})^{-1}) \right) = \frac{\Tr(\bSigma_+)}{p\nu_{2,0}}.
\]
Further note that using condition \eqref{eq:condition_uniform_cv_right_hand}, we can simplify the right-hand side
\[
\begin{aligned}
    \left|\kappa \Tr\big( (\bF_0\bF_0^\sT + \gamma(\kappa))^{-1} \big) \right| 
    \leq&~ \frac{\kappa}{\gamma(\kappa)} \left| 1 - \Psi_2 (\nu_{2,0} (\kappa) ) \right| + C_{*,D,K} \cdot K \cdot \Psi_2 (\nu_{2,0} (\kappa) ) \\
    \leq&~ C_{*,D,K} \left\{ \Psi_2 (\nu_{2,0} (\kappa) ) + \frac{\Tr(\bSigma_+)}{p\nu_{2,0}}\right\} .
\end{aligned}
\]
We can now use Eq.~\eqref{eq:truncated_psi_2_0} in Lemma \ref{lem:truncated_fixed_point} applied to $\nu_2(\kappa)$ and $\nu_{2,0}(\kappa)$ to concluded that with probability at least $1 - p^{-D}$,
\begin{equation}\label{eq:bound_for_one_kappa}
    \begin{aligned}
    \left| \tPhi_2 (\bF;\kappa) - \Psi_2 (\nu_2(\kappa))  \right| 
    \leq&~  \tcE (p) \cdot \Psi_2 (\nu_2(\kappa)),\qquad \tcE (p) := C_{*,D,K} \frac{\rho_{\gamma(\kappa)} (p)^{5/2} \log^3(p)}{\sqrt{p}}.
    \end{aligned}
\end{equation}

We now consider a $p^{-P}$-grid $\cP_n$ of the interval $\kappa \in [p\lambda/n,p(p^2 +\lambda)/n]$, which contains at most $p^{P+3}$ points. We can use a union bound over $\kappa \in \cP_n$ and reparametrizing $D' = D + P+3$, so that with probability at least $1 - p^{-D}$, Equation \eqref{eq:bound_for_one_kappa} holds for any $\kappa \in \cP_n$. Then for any point $\kappa_1 \in [p\lambda/n,p(Kp^2 +\lambda)/n]$, denote $\kappa_0 \in \cP_n$ its closest point. Then by Lemma \ref{lem:lipschitzness_functionals} stated below, we have
\[
\begin{aligned}
   &~ \left| \tPhi_2 (\bF;\kappa_1) - \Psi_2 (\nu_2(\kappa_1))  \right| \\
   \leq&~ \left| \tPhi_2 (\bF;\kappa_1) - \tPhi_2 (\bF;\kappa_0) \right| + \left| \tPhi_2 (\bF;\kappa_0)  - \Psi_2 (\nu_2(\kappa_0)) \right| + \left| \Psi_2 (\nu_2(\kappa_0)) - \Psi_2 (\nu_2(\kappa_1))\right| \\
   \leq&~ p^{CK} | \kappa_1 - \kappa_0| \tPhi_2 (\bF;\kappa_0) + \tcE (p) \cdot \Psi_2 (\nu_2(\kappa_0)) + p^{CK} | \kappa_1 - \kappa_0| \Psi_2 (\nu_2(\kappa_1)) \\
   \leq&~ \left( p^{CK - P}  + \tcE (p) \right) \left( 1 + p^{CK - P} + \tcE (p) \right)\cdot \Psi_2 (\nu_2(\kappa_1)).
\end{aligned}
\]
where we used that $| \kappa_1 - \kappa_0| \leq p^{-P}$. Taking $P = CK+1$ concludes the proof.
\end{proof}

\begin{lemma}\label{lem:lipschitzness_functionals}
Under the setting of Proposition \ref{prop:concentration_fixed_points} and for any $D,K \geq 0$, there exist constants $\eta_* \in (0,1/4)$, $C>0$ and $C_{*,D}>0$ such that the following holds. For any $p \geq 1 $ and $\lambda >0$, it holds that 
\begin{equation}\label{eq:lipschitz_condition}
\gamma(p\lambda/n) = \frac{p\lambda}{n} + \Tr(\bSigma_+)  \geq p^{-K},
\end{equation}
 then for any $\kappa_0,\kappa_1 \geq p\lambda/n$, we have
    \begin{align} \label{eq:lipsch_det}
        \left| \frac{\Psi_2( \nu_2(\kappa_1) )}{\Psi_2( \nu_2(\kappa_0) )} - 1 \right| \leq p^{CK} | \kappa_1 - \kappa_0 |.
    \end{align}
    Furthermore, if $p \geq C_{*,D}$, then we have with probability $1 - p^{-D}$ that for any $\kappa_1,\kappa_2 \geq p\lambda/n$,
       \begin{align} \label{eq:lipsch_functional}
        \left| \frac{\tPhi_2 (\bF; \kappa_1) }{\tPhi_2 (\bF; \kappa_0)} - 1 \right| \leq p^{CK} | \kappa_1 - \kappa_0 |.
    \end{align}
\end{lemma}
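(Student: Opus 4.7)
The plan is to establish both multiplicative Lipschitz bounds by writing the difference $\tPhi_2(\bF;\kappa_1)-\tPhi_2(\bF;\kappa_0)$ (respectively $\Psi_2(\nu_2(\kappa_1))-\Psi_2(\nu_2(\kappa_0))$) exactly via a resolvent identity, and then comparing the resulting sum to $\tPhi_2(\bF;\kappa_0)$ (respectively $\Psi_2(\nu_2(\kappa_0))$) up to a polynomial-in-$p$ factor. The main difficulty is that the naive multiplicative Lipschitz constant scales as $1/\min(\kappa_0,\kappa_1)$, which can be much larger than $p^{CK}$ when $p\lambda/n\ll p^{-K}$; in that regime I will exploit the condition $\gamma(p\lambda/n)\ge p^{-K}$ to obtain an effective lower bound on the denominators via the tail $\Tr(\bSigma_+)$.

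For the random bound \eqref{eq:lipsch_functional}, denote by $\{\sigma_i^2\}_{i\in[p]}$ the eigenvalues of $\bF\bF^\sT$. The resolvent identity gives
\[
\tPhi_2(\bF;\kappa_1)-\tPhi_2(\bF;\kappa_0)=\frac{\kappa_0-\kappa_1}{p}\sum_{i\in[p]}\frac{\sigma_i^2}{(\sigma_i^2+\kappa_0)(\sigma_i^2+\kappa_1)}.
\]
The crude bound $\sigma_i^2+\kappa\ge\kappa$, combined with the monotonicity of $\tPhi_2$ in $\kappa$ which yields $\tPhi_2(\max(\kappa_0,\kappa_1))\le\tPhi_2(\kappa_0)$, produces $|\tPhi_2(\bF;\kappa_1)/\tPhi_2(\bF;\kappa_0)-1|\le|\kappa_1-\kappa_0|/\min(\kappa_0,\kappa_1)$, which settles the claim when $p\lambda/n\ge p^{-K-1}$. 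In the complementary case $p\lambda/n<p^{-K-1}$, condition \eqref{eq:lipschitz_condition} forces $\Tr(\bSigma_+)\ge p^{-K}/2$ and so $\gamma(p\lambda/n)\le 2\Tr(\bSigma_+)$. I then invoke Lemma \ref{lem:concentration_high_degree}: the concentration error $C\log^3(p)/\sqrt{p}\cdot\gamma(p\lambda/n)$ is smaller than $\Tr(\bSigma_+)/2$ for $p\ge C_{*,D}$, so with probability $\ge 1-p^{-D}$ we have $\bF_+\bF_+^\sT\succeq\Tr(\bSigma_+)/2\cdot\id_p$ and hence $\sigma_i^2+\kappa\ge\gamma(\kappa)/2\ge p^{-K}/4$ uniformly in $i$. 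Substituting this refined lower bound in the identity produces the desired $p^{CK}$ Lipschitz constant.

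For the deterministic bound \eqref{eq:lipsch_det}, the analogous identity is
\[
\Psi_2(\nu_2(\kappa_1))-\Psi_2(\nu_2(\kappa_0))=\frac{\nu_2(\kappa_0)-\nu_2(\kappa_1)}{p}\sum_{k\ge1}\frac{\xi_k^2}{(\xi_k^2+\nu_2(\kappa_0))(\xi_k^2+\nu_2(\kappa_1))},
\]
and implicit differentiation of the fixed-point equation \eqref{eq:fixed_point_kappa} yields $|\nu_2(\kappa_1)-\nu_2(\kappa_0)|\le|\kappa_1-\kappa_0|/p$. The same two-regime analysis then applies: when $p\lambda/n\ge p^{-K-1}$ one uses the trivial lower bound $\nu_2(\kappa)\ge\kappa/p\ge p^{-K-2}$ from $p\nu_2=\kappa+\nu_2\Tr(\bSigma(\bSigma+\nu_2)^{-1})$; in the complementary regime, splitting the trace at the truncation level $\evn$ (chosen so that $p^2\xi_{\evn+1}^2\le\gamma(p\lambda/n)$) and using $\nu_2\xi_k^2/(\xi_k^2+\nu_2)\ge\xi_k^2/2$ whenever $\nu_2\ge\xi_k^2$ gives the effective lower bound $\nu_2(\kappa)\gtrsim\gamma(\kappa)/p\ge p^{-K-1}$. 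This deterministic case is technically simpler since no probabilistic concentration is needed.

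The main obstacle, common to both parts, is the regime where $\kappa$ is extremely small and the effective regularization must come from the tail spectrum $\Tr(\bSigma_+)$ rather than directly from $\kappa$. The resolution is the quantitative comparison $\gamma(p\lambda/n)\le 2\Tr(\bSigma_+)$ in that regime, which allows the concentration error (random case) or the off-diagonal trace contributions (deterministic case) to be absorbed into the tail, thereby recovering a uniform $p^{CK}$ Lipschitz constant. A small amount of extra bookkeeping, using the monotonicity $\tPhi_2(\max(\kappa_0,\kappa_1))\le\tPhi_2(\kappa_0)$ and $\Psi_2(\nu_2(\max(\kappa_0,\kappa_1)))\le\Psi_2(\nu_2(\kappa_0))$, handles the case $\kappa_1<\kappa_0$ without loss.
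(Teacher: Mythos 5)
Your treatment of the random bound \eqref{eq:lipsch_functional} is correct and is essentially the paper's own argument: an exact resolvent identity, then a uniform lower bound $\bF\bF^\sT+\kappa\succeq\tfrac12\gamma(\kappa)\succeq\tfrac12 p^{-K}$ obtained from Lemma \ref{lem:concentration_high_degree} together with condition \eqref{eq:lipschitz_condition} (the paper phrases it as $\|(\bF\bF^\sT+\kappa_1)^{-1}\|_{\rm op}\le 2/\gamma(\kappa_1)$); your two-case split in $p\lambda/n$ is a harmless repackaging of the same ingredients.

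The deterministic bound \eqref{eq:lipsch_det} has a genuine gap. Your pivotal claim that implicit differentiation of \eqref{eq:fixed_point_kappa} yields $|\nu_2(\kappa_1)-\nu_2(\kappa_0)|\le |\kappa_1-\kappa_0|/p$ is false: differentiating $p-\kappa/\nu_2=\Tr(\bSigma(\bSigma+\nu_2)^{-1})$ gives
\begin{equation*}
\nu_2'(\kappa)\;=\;\frac{1}{\,p-\Tr\big(\bSigma^2(\bSigma+\nu_2)^{-2}\big)\,}\;=\;\frac{\nu_2}{\,\kappa+\nu_2^2\Tr\big(\bSigma(\bSigma+\nu_2)^{-2}\big)\,},
\end{equation*}
which is always $\ge 1/p$ (with equality only for $\bSigma=0$) and can be much larger: for $\bSigma$ with its top $p$ eigenvalues equal to $1$ and a negligible tail, the fixed point gives $\nu_2(\kappa)\approx\sqrt{\kappa/p}$ and $\nu_2'(\kappa)\approx \tfrac12(p\kappa)^{-1/2}$, i.e.\ larger than your claimed $1/p$ by a factor of order $\sqrt{p/\kappa}$. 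Moreover the obvious repair, $\nu_2'\le \nu_2/\kappa$ (from $p-\Tr(\bSigma^2(\bSigma+\nu_2)^{-2})\ge\kappa/\nu_2$), is not enough precisely in the regime this lemma is designed for: when $p\lambda/n\ll p^{-K}$ and the regularization comes from the tail, dividing by $\nu_{2,\min}$ produces a factor $1/\kappa_{\min}$, which is not controlled by $p^{CK}$. What is actually needed is a relative bound $\nu_2'/\nu_2\lesssim 1/\gamma(\kappa)$, which requires lower bounding $\nu_2^2\Tr(\bSigma(\bSigma+\nu_2)^{-2})$ by a constant times $\Tr(\bSigma_+)$; this in turn uses $\nu_2\gtrsim\xi_{\evn+1}^2$ — the same unproven step hiding in your assertion that ``$\nu_2\xi_k^2/(\xi_k^2+\nu_2)\ge\xi_k^2/2$ whenever $\nu_2\ge\xi_k^2$'' gives $\nu_2(\kappa)\gtrsim\gamma(\kappa)/p$ (it is true, but needs an argument combining $p^2\xi_{\evn+1}^2\le\gamma_\lambda$ with $\Tr(\bSigma_+)\gtrsim\gamma_\lambda$ in that regime). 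The paper sidesteps all of this by rewriting $p\Psi_2(\nu_2(\kappa))=p-\kappa/\nu_2(\kappa)$ and invoking the crude polynomial bounds $\big(\nu_2(\kappa_0)\,p\Psi_2(\nu_2(\kappa_0))\big)^{-1}\le p^{3+2K}$ and $|\nu_2(\kappa_0)/\nu_2(\kappa_1)-1|\le p^{2+2K}|\kappa_1-\kappa_0|$ from the proof of Lemma 11 in \cite{misiakiewicz2024nonasymptotic}; either that citation or the more careful derivative estimate above is needed, but your one-line justification does not establish \eqref{eq:lipsch_det}.
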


\begin{proof}[Proof of Lemma \ref{lem:lipschitzness_functionals}]
    Using the identity \eqref{eq:fixed_point_kappa}, we can decompose the first difference into
    \[
    \begin{aligned}
    \left| \frac{\Psi_2( \nu_2(\kappa_1) )}{\Psi_2( \nu_2(\kappa_0) )} - 1 \right| =&~  \frac{\left| \frac{\kappa_0}{\nu_2 (\kappa_0)} - \frac{\kappa_1}{\nu_2 (\kappa_1)}\right|}{p\Psi_2( \nu_2(\kappa_0) )} \\
    \leq&~ \frac{1}{\nu_2 (\kappa_0) \cdot p\Psi_2( \nu_2(\kappa_0) )} \left\{ | \kappa_1 - \kappa_0| + \left| \frac{\nu_2( \kappa_0)}{\nu_2 (\kappa_1)} - 1\right|\right\}.
    \end{aligned}
    \]
    From condition \eqref{eq:lipschitz_condition} and the proof of \cite[Lemma 11]{misiakiewicz2024nonasymptotic}, we have 
    \[
    \begin{aligned}
        \frac{1}{\nu_2 (\kappa_0) \cdot p\Psi_2( \nu_2(\kappa_0) )} \leq&~ p^{3 + 2K}, \\
        \left| \frac{\nu_2( \kappa_0)}{\nu_2 (\kappa_1)} - 1\right| \leq&~ p^{2 +2K} | \kappa_1 - \kappa_0|.
    \end{aligned}
    \]
    Combining the above inequalities, we deduce that there exists a constant $C>0$ such that
    \[
    \left| \frac{\Psi_2( \nu_2(\kappa_1) )}{\Psi_2( \nu_2(\kappa_0) )} - 1 \right| \leq p^{CK} | \kappa_1 - \kappa_0 |.
    \]

    For the second inequality \eqref{eq:lipsch_functional}, we rewrite the difference as
    \[
    \begin{aligned}
         \left| \frac{\tPhi_2 (\bF; \kappa_1) }{\tPhi_2 (\bF; \kappa_0)} - 1 \right| =&~ \frac{ \Tr(\bF\bF^\sT (\bF\bF^\sT + \kappa_1 )^{-1} (\bF\bF^\sT + \kappa_0 )^{-1})}{\Tr( \bF\bF^\sT (\bF\bF^\sT + \kappa_0)^{-1})}| \kappa_1 - \kappa_0| \\
         \leq&~ \| (\bF\bF^\sT + \kappa_1 )^{-1}\|_{\rm op} | \kappa_1 - \kappa_0|.
    \end{aligned}
    \]
    Hence, recalling Lemma \ref{lem:concentration_high_degree} and condition \eqref{eq:lipschitz_condition}, for any $p \geq C_{*,D}$, we get with probability at least $1 - p^{-D}$ and for all $\kappa_1,\kappa_2\geq p\lambda/n$,
    \[
     \left| \frac{\tPhi_2 (\bF; \kappa_1) }{\tPhi_2 (\bF; \kappa_0)} - 1 \right| \leq \frac{2}{\gamma(\kappa_1)} |\kappa_1 -\kappa_0 | \leq n^K | \kappa_1 - \kappa_0|,
    \]
    which concludes the proof of this lemma.
\end{proof}

We consider $(\nu_1^\eps,\nu_2^\eps) \in \R_{\geq 0}$ the unique positive solutions of the perturbed equations:
\begin{align}
    n - \frac{\lambda}{\nu_1^\eps} = &~\Tr\left( \bSigma ( \bSigma + \nu_2^\eps)^{-1} \right) (1 + \eps), \label{eq:pert_first}\\
p - \frac{p\nu_1^\eps}{\nu_2^\eps} = &~ \Tr \left( \bSigma ( \bSigma + \nu_2^\eps)^{-1} \right). \label{eq:pert_second}
\end{align}

\begin{lemma}\label{lem:perturbation_fixed_points}
    Let $\eta_* \in (0,1/4)$ be chosen as in Proposition \ref{prop:concentration_fixed_points}. Then there exists $C_*,C_*'>0$ such that for any $\eps \in \R$ with
    \[
    | \eps | \cdot C_* \cdot \trho_\lambda (n,p) \leq \frac12, 
    \]
    then
    \[
    \left| \frac{\nu_1^\eps - \nu_1^0}{\nu_1^0}\right| \leq C_*'\cdot \trho_\lambda (n,p) \cdot |\eps|, \qquad \qquad  \left| \frac{\nu_2^\eps - \nu_2^0}{\nu_2^0}\right| \leq C_*' \cdot \trho_\lambda (n,p) \cdot |\eps|.
    \]
\end{lemma}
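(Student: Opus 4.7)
The plan is to perform implicit differentiation in $\eps$ on the perturbed fixed-point system~\eqref{eq:pert_first}-\eqref{eq:pert_second}, derive closed-form expressions for the logarithmic derivatives $\partial_\eps \log \nu_j^\eps$ in terms of the quantity $\Upsilon$ introduced in~\eqref{eq:def:Upsilon}, and then integrate over $\eps$ under the smallness assumption to conclude. The crucial observation is that the Jacobian determinant of the system simplifies cleanly, with the factor $(1-\Upsilon)$ appearing explicitly; this is precisely the quantity controlled by Lemma~\ref{lem:tech_bound_det_equiv}.

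Setting $F(\nu) := \Tr(\bSigma(\bSigma+\nu)^{-1})$ and $S(\nu) := \Tr(\bSigma^2(\bSigma+\nu)^{-2})$, I will use the trace identity $\nu F'(\nu) = -(F(\nu)-S(\nu))$ together with the fixed-point relations $\lambda/\nu_1^\eps = n - (1+\eps)F(\nu_2^\eps)$ and $p\nu_1^\eps/\nu_2^\eps = p - F(\nu_2^\eps)$. The $2\times 2$ Jacobian $J^\eps$ of the system with respect to $(\nu_1,\nu_2)$ can then be shown to satisfy $\nu_1^\eps\nu_2^\eps \det(J^\eps) = n(1-\Upsilon_\eps)(p - S(\nu_2^\eps))$, where $\Upsilon_\eps := \Upsilon(\nu_1^\eps, \nu_2^\eps)$. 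This rests on the algebraic identity $np - F^2 - (n+p-2F)S = n(1-\Upsilon)(p-S)$, which follows by direct expansion of the definition of $\Upsilon$.

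Solving $J^\eps \cdot (\partial_\eps \nu_1^\eps, \partial_\eps \nu_2^\eps)^\sT = -(F(\nu_2^\eps), 0)^\sT$ and invoking $\nu_1^\eps/\nu_2^\eps = (p-F(\nu_2^\eps))/p$ yields the clean expressions
\[
\frac{\partial_\eps \nu_1^\eps}{\nu_1^\eps} = \frac{F(\nu_2^\eps)}{n(1-\Upsilon_\eps)}, \qquad \frac{\partial_\eps \nu_2^\eps}{\nu_2^\eps} = \frac{F(\nu_2^\eps)(p-F(\nu_2^\eps))}{n(1-\Upsilon_\eps)(p - S(\nu_2^\eps))}.
\]
Since $F \leq n$ and $(p-F)/(p-S) \leq 1$ (as $S < F$), both quantities are bounded in absolute value by $(1-\Upsilon_\eps)^{-1}$. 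Lemma~\ref{lem:tech_bound_det_equiv} gives $(1-\Upsilon_0)^{-1} \leq C_* \trho_\lambda(n,p)$, and a short bootstrap---using continuity of $\Upsilon$ in $(\nu_1,\nu_2)$ together with the smallness $|\eps| C_* \trho_\lambda(n,p) \leq 1/2$---extends this uniformly to $(1-\Upsilon_t)^{-1} \leq 2 C_* \trho_\lambda(n,p)$ for all $t \in [0,|\eps|]$. Integrating the logarithmic derivative and applying $|e^x - 1| \leq 2|x|$ for $|x| \leq 1/2$ then delivers the claimed multiplicative bound.

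The main obstacle is establishing the algebraic identity linking $\det J^\eps$ to $1-\Upsilon_\eps$: one must combine the trace relation $\nu F' = -(F-S)$ with both fixed-point equations and expand $(p-F)(n-F) + (F-S)(n+p-2F)$ to match $n(1-\Upsilon)(p-S)$. Once this clean factorization is in hand, the derivative expressions collapse to a single factor $(1-\Upsilon_\eps)^{-1}$ times a quantity bounded by $1$, and the rest of the argument is routine. Existence and uniqueness of $(\nu_1^\eps, \nu_2^\eps)$ for $|\eps|$ in the prescribed range follow from monotonicity of the reduced scalar equation $\Phi(\nu_2) = \lambda + \eps \nu_2(1-F(\nu_2)/p)F(\nu_2)$ obtained by substituting $\nu_1 = \nu_2(1 - F(\nu_2)/p)$ into the first equation, combined with standard implicit function theorem arguments near $\eps = 0$.
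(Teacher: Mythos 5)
Your route is genuinely different from the paper's: the paper never differentiates in $\eps$, but instead subtracts the perturbed and unperturbed systems, derives the finite-difference identities \eqref{eq:relation_delta_1_2}--\eqref{eq:second_relation_delta_1} for $\delta_j=(\nu_j^\eps-\nu_j^0)/\nu_j^\eps$, reads off $|\delta_1|\le |\eps|\,\nu_1^0F(\nu_2^0)/\lambda$ with $F(\nu):=\Tr(\bSigma(\bSigma+\nu)^{-1})$, and then bounds $\nu_1^0F(\nu_2^0)/\lambda\le C_*\trho_\lambda(n,p)$ by splitting the trace at level $\lfloor \eta_*\cdot n\rfloor$; the quantity $\Upsilon$ never appears. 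Your Jacobian computation is essentially right: with $S(\nu):=\Tr(\bSigma^2(\bSigma+\nu)^{-2})$ one gets, at the perturbed solution, $\nu_1^\eps\nu_2^\eps\det J^\eps=n\,(p-S)\,\bigl(1-(1+\eps)\Upsilon_\eps\bigr)$, i.e.\ your factorization holds only up to the factor $(1+\eps)$ multiplying $\Upsilon_\eps$ (it is exact at $\eps=0$), and correspondingly the log-derivatives are $F/\bigl(n(1-(1+\eps)\Upsilon_\eps)\bigr)$ and $F(p-F)/\bigl(n(p-S)(1-(1+\eps)\Upsilon_\eps)\bigr)$. That slip is harmless by itself, and smoothness of $t\mapsto(\nu_1^t,\nu_2^t)$ follows from $\det J^t>0$ and the implicit function theorem.

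The genuine gap is the uniform-in-$t$ control of $\bigl(1-(1+t)\Upsilon_t\bigr)^{-1}$, which you dispatch as a ``short bootstrap using continuity of $\Upsilon$''. Continuity is not enough: Eq.~\eqref{eq:bound_Upsilon} in Lemma \ref{lem:tech_bound_det_equiv} only gives $1-\Upsilon_0\ge c/\trho_\lambda(n,p)$, which may be polynomially small, whereas the hypothesis $|\eps|\,C_*\,\trho_\lambda(n,p)\le 1/2$ only guarantees order-one relative drifts of the $\nu_j^t$; an order-one relative change of $\nu_1/\nu_2$ moves $\Upsilon$ by order one and can swallow a gap of size $1/\trho_\lambda(n,p)$, so closeness of the fixed points does not by itself transfer the bound from $t=0$ to $t$. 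To close the loop you need the structure of the equations at the perturbed point: the second equation is unperturbed, so $\nu_1^t/\nu_2^t=(p-F(\nu_2^t))/p$ still holds and gives $n\Upsilon_t\le F(\nu_2^t)$, whence by the first equation $1-(1+t)\Upsilon_t\ge \lambda/(n\nu_1^t)$; inside the continuation argument you may use $\nu_1^t\le 2\nu_1^0$ together with $n\nu_1^0/\lambda=1+\nu_1^0F(\nu_2^0)/\lambda\le C_*\trho_\lambda(n,p)$ (the head/tail splitting carried out in the paper's proof), which yields the uniform bound $\bigl(1-(1+t)\Upsilon_t\bigr)^{-1}\le C_*\trho_\lambda(n,p)$ and lets the integration of the log-derivatives deliver the stated inequalities (the case $n\ge p/\eta_*$ is immediate since there $\Upsilon_t\le p/n\le\eta_*<1/4$). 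With this ingredient added your argument is a valid, if somewhat heavier, alternative to the paper's purely algebraic finite-difference proof.
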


\begin{proof}[Proof of Lemma \ref{lem:perturbation_fixed_points}]
    For convenience, we introduce the notations
    \[
    \delta_1 := \frac{\nu_1^\eps - \nu_1^0}{\nu_1^\eps}, \qquad\qquad  \delta_2 := \frac{\nu_2^\eps - \nu_2^0}{\nu_2^\eps}.
    \]
    We first consider the second equation \eqref{eq:pert_second} and subtract the identities for $(\nu_1^\eps,\nu_2^\eps)$ and $(\nu_1^0,\nu_2^0)$ to obtain
    \[
    \begin{aligned}
        &~p \left( \frac{\nu_1^0}{\nu_2^0} - \frac{\nu_1^\eps}{\nu_2^\eps} \right) + \Tr \left( \bSigma ( \bSigma + \nu_2^0)^{-1} \right) - \Tr \left( \bSigma ( \bSigma + \nu_2^\eps)^{-1} \right) \\
        =&~ p \left[ \frac{\nu_1^0}{\nu_2^0} \delta_2 - \frac{\nu_1^\eps}{\nu_2^\eps} \delta_1 \right] + \delta_2 \cdot \nu_2^\eps \Tr( \bSigma ( \bSigma + \nu_2^0)^{-1} (\bSigma + \nu_2^\eps)^{-1} ) = 0 .
    \end{aligned}
    \]
    Hence, we obtain the first identity
    \begin{equation}\label{eq:relation_delta_1_2}
    \delta_2 =  \delta_1 \frac{(\nu_1^\eps/\nu_2^\eps)}{(\nu_1^0/\nu_2^0) + \frac{\nu_2^\eps}{p} \Tr( \bSigma ( \bSigma + \nu_2^0)^{-1} (\bSigma + \nu_2^\eps)^{-1} )}.
    \end{equation}
    We now turn to the first equation \eqref{eq:pert_first}: we obtain similarly
    \[
    \begin{aligned}
        &~\lambda \left( \frac{1}{\nu_1^0} - \frac{1}{\nu_1^\eps}\right) = \Tr\left( \bSigma ( \bSigma + \nu_2^\eps)^{-1} \right) (1 + \eps) - \Tr\left( \bSigma ( \bSigma + \nu_2^0)^{-1} \right)\\
        \implies &~ \frac{\lambda}{\nu_1^0} \delta_1 = \delta_2 \cdot \nu_2^\eps \Tr( \bSigma ( \bSigma + \nu_2^0)^{-1} (\bSigma + \nu_2^\eps)^{-1} ) (1 + \eps) + \eps \Tr\left( \bSigma ( \bSigma + \nu_2^0)^{-1} \right).
    \end{aligned}
    \]
    Hence, rearranging the term and recalling the first identity \eqref{eq:pert_first}, we get the second identity
    \begin{equation}\label{eq:second_relation_delta_1}
    \delta_1 \left[\frac{\lambda}{\nu_1^0} + (1+ \eps) \frac{\nu_2^\eps \Tr( \bSigma ( \bSigma + \nu_2^0)^{-1} (\bSigma + \nu_2^\eps)^{-1} ) \cdot (\nu_1^\eps/\nu_2^\eps)}{(\nu_1^0/\nu_2^0) + \frac{\nu_2^\eps}{p} \Tr( \bSigma ( \bSigma + \nu_2^0)^{-1} (\bSigma + \nu_2^\eps)^{-1} )} \right] = \eps \Tr\left( \bSigma ( \bSigma + \nu_2^0)^{-1} \right).
    \end{equation}
    From this identity, we directly have
    \[
    | \delta_1 | \leq |\eps| \cdot \frac{\nu_1^0}{\lambda} \Tr( \bSigma ( \bSigma + \nu_2^0 )^{-1} ).
    \]
    Note that for $n \geq p/\eta_*$, we simply have by Eq.~\eqref{eq:pert_first} that 
    \[
    \frac{\nu_1^0}{\lambda} \Tr( \bSigma ( \bSigma + \nu_2^0 )^{-1} ) = \frac{\Tr( \bSigma ( \bSigma + \nu_2^0 )^{-1} )}{ n - \Tr( \bSigma ( \bSigma + \nu_2^0 )^{-1} )} \leq \frac{p}{n - p} \leq \frac{\eta_*}{1 - \eta_*}, 
    \]
    where we use that $\Tr( \bSigma ( \bSigma + \nu_2^0 )^{-1} ) \leq p$ by Eq.~\eqref{eq:pert_second}. For $n \leq p/\eta_*$, let's denote $\mu_1^0 = \lambda/\nu_1^0$. Rewriting Eq.~\eqref{eq:pert_first}, we get that
    \[
    \mu_* = \frac{n}{1 + \Tr( \bSigma ( \mu_* \bSigma + \mu_*\nu_2^0 )^{-1})}.
    \]
    Hence
    \[
    \begin{aligned}
        \frac{\nu_1^0}{\lambda} \Tr( \bSigma ( \bSigma + \nu_2^0 )^{-1} ) =&~ \Tr( \bSigma_{< \lfloor \eta_* \cdot n \rfloor} (\mu_1^0 \bSigma_{< \lfloor \eta_* \cdot n \rfloor} + \mu_1^0\nu_2^0 )^{-1}) + \Tr( \bSigma_{\geq \lfloor \eta_* \cdot n \rfloor} (\mu_1^0 \bSigma_{\geq \lfloor \eta_* \cdot n \rfloor} + \mu_1^0\nu_2^0 )^{-1})\\
        \leq&~ \frac{\eta_* \cdot n }{\mu_1^0} + \frac{\Tr( \bSigma_{\geq \lfloor \eta_* \cdot n \rfloor})}{\mu_1^0\nu_2^0}\\
        \leq&~ \eta_* \left\{1 + \Tr( \bSigma ( \mu_* \bSigma + \mu_*\nu_2^0 )^{-1})\right\} + \frac{\Tr( \bSigma_{\geq \lfloor \eta_* \cdot n \rfloor})}{\lambda},
    \end{aligned}
    \]
    where we use in the last inequality that $\nu_1^0/\nu_2^0 \leq 1$ and the definition of the effective rank. Rearranging the terms we obtain
    \[
    \frac{\nu_1^0}{\lambda} \Tr( \bSigma ( \bSigma + \nu_2^0 )^{-1} ) \leq \frac{1}{1- \eta_*} \left\{ 1 + \frac{\Tr( \bSigma_{\geq \lfloor \eta_* \cdot n \rfloor})}{\lambda}\right\}.
    \]
    Combining the above bounds we deduce that
    \[
    |\delta_1 | \leq C_* |\eps| \cdot \left\{ 1 + \ind_{n\leq p/\eta_*}\frac{\Tr( \bSigma_{\geq \lfloor \eta_* \cdot n \rfloor})}{\lambda}\right\} \leq C_* \cdot \trho_\lambda (n,p) \cdot |\eps| .
    \]
    By assumption $C_* \cdot \trho_\lambda (n,p) \cdot |\eps|  \leq 1/2$ and therefore $\nu_1^\eps \leq 2 \nu_1^0$. We conclude the first inequality 
    \[
      \left| \frac{\nu_1^\eps - \nu_1^0}{\nu_1^0}\right| \leq \frac{\nu_1^\eps}{\nu_1^0} | \delta_1 | \leq C_* \cdot \trho_\lambda (n,p) \cdot |\eps| .
    \]
    Recalling Eq.~\eqref{eq:relation_delta_1_2}, we have directly
    \[
    | \delta_2 | \leq |\delta_1| \frac{\nu_1^\eps \nu_2^0}{\nu_1^0 \nu_2^\eps} \;\; \implies \;\; \left| \frac{\nu_2^\eps - \nu_2^0}{\nu_2^0}\right| \leq \left| \frac{\nu_1^\eps - \nu_1^0}{\nu_1^0}\right| ,
    \]
    which concludes the proof.
\end{proof}

\subsubsection{Proof of deterministic equivalents for functionals on $\bold{Z}$}
\label{sec:proof_det_Z}

\begin{proof}[Proof of Proposition \ref{prop:det_Z}]
    Recall that $\hrho_\lambda (n)$ is defined in Eq.~\eqref{eq:def_hat_rho_n} and that for $\bF \in \cA_\cF$, we have $\hrho_\lambda (n) \leq C_{*,D,K} \trho_\lambda (n,p)$ for all $n \in \naturals$.
    Under the assumptions of Proposition \ref{prop:det_Z}, we can apply Theorem \ref{thm:main_det_equiv_summary} to $\bZ$ conditional on $\bF$ to obtain that with probability at least $1 - n^{-D}$,
    \[
    \begin{aligned}
           \left| \Phi_2 ( \bZ; \lambda) -   \frac{p}{n} \tPhi_2 (\bF ; p\tnu_1)  \right| \leq &~ C_{*,D,K}\frac{\trho_\lambda (n,p)^{5/2} \log^{3/2} (n)}{\sqrt{n}} \cdot  \frac{p}{n} \tPhi_2 ( \bF ; p\tnu_1)  ,\\
            \left| \Phi_3 ( \bZ; \bA ,\lambda) - \left(\frac{n\tnu_1}{\lambda}\right)^2 \tPhi_5 (\bF;\bA,p\tnu_1)  \right| \leq &~C_{*,D,K} \frac{\trho_\lambda (n,p)^6 \log^{5/2} (n)}{\sqrt{n}} \cdot \left(\frac{n\tnu_1}{\lambda}\right)^2  \tPhi_5 ( \bF ; \bA ,p\tnu_1)   ,\\
                \left| \Phi_4 ( \bZ; \bA ,\lambda) -    \tPhi_5 (\bF;\bA,p\tnu_1) \right| \leq &~C_{*,D,K} \frac{\trho_\lambda (n,p)^6 \log^{3/2} (n)}{\sqrt{n}} \cdot \tPhi_5 (\bF;\bA,p\tnu_1) ,
    \end{aligned}
    \]
    where $\tnu_1$ is the solution of the fixed point equation \eqref{eq:fixed_points_tnu}. We conclude the  proof using Lemma \ref{lem:functional_tnu1_to_nu1} stated below and that by condition \eqref{eq:conditions_det_equiv_Z}, we have $C_{*,D,K} \cdot \trho_\lambda (n,p) \cE_\nu (p) \leq C_{*,D,K} K$.
\end{proof}

\begin{proof}[Proof of Proposition \ref{prop:det_Z_uncorrelated}]
    Again, under the assumptions of the proposition and for $\bF \in \cA_\cF$, we can apply \cite[Lemma 10]{misiakiewicz2024nonasymptotic} to get
    \[
    \begin{aligned}
        \left| \< \bu , \bZ (\bZ^\sT \bZ + \lambda)^{-1} \hbSigma_\bF (\bZ^\sT \bZ + \lambda)^{-1}\bZ^\sT \bu \>  -  n \tPhi_5 (\bF; \id,p\tnu_1)  \right| 
        \leq  C_{*,D,K} \frac{\trho_\lambda (n,p)^6 \log^{7/2} (n) }{\sqrt{n}}, \\
        \left| \< \bu, \bZ (\bZ^\sT \bZ + \lambda)^{-1} \hbSigma_\bF (\bZ^\sT \bZ + \lambda)^{-1}\bZ^\sT \bv \>    \right| \leq  C_{*,D,K} \frac{\trho_\lambda (n,p)^6 \log^{7/2} (n) }{\sqrt{n}} \cdot \frac{n\nu_1}{\lambda} \sqrt{\tPhi_5 ( \bF ; \obv \obv^\sT ,p\tnu_1) }.
    \end{aligned}
    \]
    We conclude by combining these inequalities with Lemma \ref{lem:functional_tnu1_to_nu1}.
\end{proof}

\begin{lemma}\label{lem:functional_tnu1_to_nu1}
    For $\bF \in \cA_\cF$, we have
    \begin{align}\label{eq:tnu_1_to_nu1_1}
        \left| \tPhi_2 (\bF ; p\tnu_1) - \tPhi_2 (\bF ; p\nu_1) \right| \leq&~ C_{*,D,K} \cdot \cE_\nu (p) \cdot \tPhi_2 (\bF ; p\nu_1), \\
              \left| \tPhi_5 (\bF;\bA,p\tnu_1) - \tPhi_5 (\bF;\bA,p\nu_1) \right| \leq&~ C_{*,D,K} \cdot \trho_\lambda (n,p) \cE_\nu (p) \cdot \tPhi_5 (\bF;\bA,p\nu_1).\label{eq:tnu_1_to_nu1_3}
    \end{align}
    and
\begin{equation}
    \begin{aligned}
              &~\left| \left(\frac{n\tnu_1}{\lambda}\right)^2 \tPhi_5 (\bF;\bA,p\tnu_1) - \left(\frac{n\nu_1}{\lambda}\right)^2 \tPhi_5 (\bF;\bA,p\nu_1) \right| \\
              \leq&~ C_{*,D,K} \cdot \trho_\lambda (n,p)\cE_\nu (p) \cdot \left(\frac{n\nu_1}{\lambda}\right)^2 \tPhi_5 (\bF;\bA,p\nu_1).\label{eq:tnu_1_to_nu1_2}
    \end{aligned}
\end{equation}
\end{lemma}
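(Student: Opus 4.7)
The three bounds are Lipschitz-type estimates in the regularization parameter $\kappa = p\nu_1$, leveraging the deterministic control $|\tnu_1 - \nu_1| \leq C_{*,D,K} \cE_\nu(p) \cdot \nu_1$ that holds on $\cA_\cF$. Under the conditions of Proposition \ref{prop:det_Z}, we may further assume $C_{*,D,K} \cE_\nu(p) \leq 1/2$, so that $\nu_1/2 \leq \tnu_1 \leq 3\nu_1/2$ and hence the resolvents at $p\nu_1$ and $p\tnu_1$ are comparable up to constants: $(\bF^\sT\bF + p\tnu_1)^{-1} \preceq 2(\bF^\sT\bF + p\nu_1)^{-1}$, and likewise for $\bF\bF^\sT$.

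The plan for \eqref{eq:tnu_1_to_nu1_1} is a direct resolvent identity computation. Writing
\[
\tPhi_2(\bF; p\tnu_1) - \tPhi_2(\bF; p\nu_1) = p(\nu_1 - \tnu_1)\,\Tr\!\left(\frac{\bF^\sT\bF}{p}\,(\bF^\sT\bF + p\nu_1)^{-1}(\bF^\sT\bF + p\tnu_1)^{-1}\right),
\]
the trace is bounded by $(p\tnu_1)^{-1} \Tr(\bF^\sT\bF/p \cdot (\bF^\sT\bF + p\nu_1)^{-1}) \leq 2(p\nu_1)^{-1}\tPhi_2(\bF; p\nu_1)$, yielding a bound $\lesssim \cE_\nu \cdot \tPhi_2(\bF; p\nu_1)$.

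For the $\tPhi_5$ and $\tPhi_6$ bounds, I would first establish the auxiliary Lipschitz estimate
\[
\big|\tPhi_6(\bF;\bA,p\tnu_1) - \tPhi_6(\bF;\bA,p\nu_1)\big| \leq C_* \cdot \cE_\nu(p) \cdot \tPhi_6(\bF;\bA,p\nu_1)
\]
for any p.s.d.~$\bA$, by the same resolvent identity argument: differentiating $\tPhi_6$ in $\kappa$ produces $-2\Tr(\bA(\bF\bF^\sT)^2(\bF\bF^\sT + \kappa)^{-3}) \leq 2\tPhi_6/\kappa$, and integrating from $p\nu_1$ to $p\tnu_1$ yields a relative change bounded by $2p|\tnu_1 - \nu_1|/(p\nu_1 \wedge p\tnu_1) \lesssim \cE_\nu$. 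The main step is then to convert this into a bound on $\tPhi_5 = \tPhi_6(\bA,\cdot)/[n(n - \tPhi_6(\id,\cdot))]$. Writing $a = \tPhi_6(\bA,p\nu_1)$, $b = n - \tPhi_6(\id,p\nu_1)$ and primed versions at $p\tnu_1$, we decompose
\[
\frac{\tPhi_5(p\tnu_1)}{\tPhi_5(p\nu_1)} - 1 = \frac{(a'-a)b + a(b-b')}{a\,b'},
\]
so that the first term contributes $O(\cE_\nu)$ using $|a'-a| \leq C_* \cE_\nu \cdot a$ and $b \asymp b'$.

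The main obstacle is controlling the second term $a(b-b')/(ab') = (b-b')/b'$, since $|b - b'| \leq C_* \cE_\nu \cdot \tPhi_6(\id,p\nu_1) \leq C_* \cE_\nu \cdot n$, while $b'$ can be substantially smaller than $n$. Here I would use Eq.~\eqref{eq:approx_phi_6_id} in the proof of Theorem \ref{thm_app:variance}, which gives $|n - \tPhi_6(\bF;\id,p\nu_1) - n(1-\Upsilon(\nu_1,\nu_2))| \leq C_{*,D,K}\rho_{\gamma_+}(p)\cE_3(p) \cdot n\Upsilon(\nu_1,\nu_2)$, combined with the lower bound \eqref{eq:bound_Upsilon} giving $1 - \Upsilon(\nu_1,\nu_2) \geq (C_*\trho_\lambda(n,p))^{-1}$. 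Under the assumptions \eqref{eq:conditions_det_equiv_Z}, the approximation error is much smaller than $1-\Upsilon$, so $b \geq b'/2 \geq cn/\trho_\lambda(n,p)$. Therefore $|b-b'|/b' \leq C\trho_\lambda(n,p) \cdot \cE_\nu$, which yields \eqref{eq:tnu_1_to_nu1_3} with the extra $\trho_\lambda$ factor.

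Finally, for \eqref{eq:tnu_1_to_nu1_2}, I would multiply \eqref{eq:tnu_1_to_nu1_3} by the ratio $(\tnu_1/\nu_1)^2$: since $|\tnu_1/\nu_1 - 1| \leq C_{*,D,K}\cE_\nu(p) \leq 1/2$, we have $|(\tnu_1/\nu_1)^2 - 1| \leq 3C_{*,D,K}\cE_\nu(p)$, and combining with \eqref{eq:tnu_1_to_nu1_3} via the triangle inequality gives the stated $\trho_\lambda(n,p) \cE_\nu(p)$ relative bound (noting that $\trho_\lambda \geq 1$ absorbs the standalone $\cE_\nu$ term).
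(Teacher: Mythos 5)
Your first two steps (the resolvent-identity Lipschitz bounds for $\tPhi_2$ and for $\tPhi_6$ in the regularization parameter) and your final $(\tnu_1/\nu_1)^2$ algebra for \eqref{eq:tnu_1_to_nu1_2} coincide with the paper's argument. The divergence, and the gap, lies in how you lower-bound the denominator $n - \tPhi_6(\bF;\id,\cdot)$ when passing to $\tPhi_5$. You invoke Eq.~\eqref{eq:approx_phi_6_id} (concentration of $\tPhi_6(\bF;\id,p\nu_1)$ around $n\Upsilon(\nu_1,\nu_2)$) together with \eqref{eq:bound_Upsilon}. But \eqref{eq:approx_phi_6_id} is a high-probability statement over $\bF$ that is not implied by $\bF \in \cA_\cF$: the lemma is claimed deterministically on $\cA_\cF$ and is used in exactly that way inside Propositions \ref{prop:det_Z} and \ref{prop:det_Z_uncorrelated}. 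Moreover, \eqref{eq:approx_phi_6_id} is established (via Lemma \ref{lem:aux_lemma_variance}) only under the stronger conditions \eqref{eq:conditions_variance} (involving $\rho_{\gamma_+}(p)^7$), whereas the present lemma must be available under the weaker conditions \eqref{eq:conditions_det_equiv_Z} of Proposition \ref{prop:det_Z}. So, as written, your argument proves a weaker statement (valid only on a further event and under stronger hypotheses), which would force a restructuring of the good event $\cA_\cF$ and of the conditions of Proposition \ref{prop:det_Z}. (It is at least not circular, since Lemma \ref{lem:aux_lemma_variance} does not use the present lemma.)

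The paper avoids this entirely with a deterministic argument that you could substitute for your appeal to \eqref{eq:approx_phi_6_id}: since $\tPhi_6(\bF;\id,\kappa) \leq \Tr\big(\bF\bF^\sT(\bF\bF^\sT+\kappa)^{-1}\big)$ and $x \mapsto x/(n-x)$ is increasing on $[0,n)$, one has $n\tPhi_5(\bF;\id,p\tnu_1) \leq \Tr\big(\hbSigma_\bF(\hbSigma_\bF+\tnu_1)^{-1}\big)\big/\big(n - \Tr\big(\hbSigma_\bF(\hbSigma_\bF+\tnu_1)^{-1}\big)\big)$, and the defining fixed-point equation \eqref{eq:fixed_points_tnu} makes this denominator exactly $\lambda/\tnu_1$, so the ratio equals $n\tnu_1/\lambda - 1$. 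Combining $|\tnu_1-\nu_1|\leq C_{*,D,K}\,\cE_\nu(p)\,\nu_1$ (which is part of the definition of $\cA_\cF$) with the deterministic bound $(\nu_1/\lambda)\Tr\big(\bSigma(\bSigma+\nu_2)^{-1}\big) \leq C_* \,\trho_\lambda(n,p)$ established in the proof of Lemma \ref{lem:perturbation_fixed_points} then gives $\tPhi_6(\bF;\id,p\tnu_1)/\big(n-\tPhi_6(\bF;\id,p\tnu_1)\big) \leq C_{*,D,K}\,\trho_\lambda(n,p)$ on $\cA_\cF$ alone; feeding this into your ratio decomposition yields \eqref{eq:tnu_1_to_nu1_3} and \eqref{eq:tnu_1_to_nu1_2} exactly as claimed, with no extra probability loss and no stronger conditions.
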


\begin{proof}[Proof of Lemma \ref{lem:functional_tnu1_to_nu1}]
    For convenience, we denote $\kappa := \nu_1$, $\kappa' := \tnu_1$, and $\bGamma := \hbSigma_\bF$. For Eq.~\eqref{eq:tnu_1_to_nu1_1}, we simply use that
    \[
    \begin{aligned}
        \left|p\tPhi_2 (\bF ; p\tnu_1) - p\tPhi_2 (\bF ; p\nu_1) \right| = &~ \left| \Tr \left( \bGamma (\bGamma + \kappa ')^{-1} \right) -  \Tr \left( \bGamma (\bGamma + \kappa )^{-1} \right)\right| \\
        =&~ | \kappa - \kappa '| \Tr\left( \bGamma (\bGamma + \kappa ')^{-1}  (\bGamma + \kappa )^{-1}\right) \\
        \leq&~ \frac{| \kappa ' - \kappa|}{\kappa'} \Tr \left( \bGamma (\bGamma + \kappa )^{-1} \right) \\
        \leq&~ C_{*,D,K} \cdot \cE_\nu (p) \cdot p\tPhi_2 (\bF ; p\nu_1).
    \end{aligned}
    \]
    Similarly, by simple algebra, we have
    \begin{equation}\label{eq:tnu_1_nu_1_phi_6}
    \begin{aligned}
        &~\left|\tPhi_6 (\bF;\bA, p\tnu_1) - \tPhi_6 (\bF; \bA, p\nu_1) \right| \\
        =&~ \left| \Tr \big( \bA \bGamma^2 (\bGamma + \kappa ' )^{-2} \big)  - \Tr \big( \bA \bGamma^2 (\bGamma + \kappa  )^{-2} \big)  \right| \\
        \leq&~ 2 | \kappa - \kappa '| \Tr \big( \bA \bGamma^3 (\bGamma + \kappa ' )^{-2} (\bGamma + \kappa  )^{-2} \big) + | \kappa^2 - (\kappa')^2 | \Tr \big( \bA \bGamma^3 (\bGamma + \kappa ' )^{-2} (\bGamma + \kappa  )^{-2} \big) \\
        \leq&~ \frac{| \kappa - \kappa '|}{\kappa'} \left\{ 2 + \frac{| \kappa + \kappa '|}{\kappa '} \right\} \Tr \big( \bA \bGamma^2 (\bGamma + \kappa  )^{-2} \big) \\
        \leq&~ C_{*,D,K} \cdot \cE_\nu (p) \cdot \tPhi_6 (\bF;\bA, p\nu_1).
    \end{aligned}
    \end{equation}
    Furthermore, note that by the identity \eqref{eq:fixed_points_tnu},
    \[
    \begin{aligned}
        \tPhi_6 (\bF;\id , p\tnu_1) \left(n - \tPhi_6 (\bF;\id , p\tnu_1) \right)^{-1} \leq \tPhi_2 (\bF; p\tnu_1) \left(n - \tPhi_2 (\bF; p\tnu_1) \right)^{-1}  = \frac{\tnu_1}{\lambda} \tPhi_2 (\bF; p\tnu_1).
    \end{aligned}
    \]
    Furthermore, we have from the previous computation and simple algebra that
    \[
    \frac{\tnu_1}{\lambda} \tPhi_2 (\bF; p\tnu_1) \leq (1 + C_{*,D,K} \cE_\nu (p) ) \cdot \frac{\nu_1}{\lambda} \Tr (\bSigma (\bSigma + \nu_2)^{-1} ) \leq C_{*,D,K} \cdot \trho_\lambda (n,p), 
    \]
    where we used the same argument as in the proof of Lemma \ref{lem:perturbation_fixed_points} in the last inequality as well as condition \eqref{eq:conditions_det_equiv_Z}.
    The proof of Eqs.~\eqref{eq:tnu_1_to_nu1_2} and \eqref{eq:tnu_1_to_nu1_3} from simple algebra from the above displays and \eqref{eq:tnu_1_nu_1_phi_6}.
\end{proof}


\subsubsection{Proof of Lemma \ref{lem:aux_lemma_variance}}
\label{sec:proof_aux_lemma_variance}

\begin{proof}[Proof of Lemma \ref{lem:aux_lemma_variance}] For convenience, we introduce the notations
    \begin{equation}\label{eq:notations_G_G0_R0}
    \bG_{\bF} := ( \bF \bF^\sT + p \nu_1)^{-1} , \qquad \bG_0 := (\bF_0 \bF_0^\sT + \gamma_+)^{-1}, \qquad \bR_0 := (\bF_0^\sT \bF_0 + \gamma_+)^{-1},
    \end{equation}
    where we recall that we denoted $\gamma_+ = \gamma (p\nu_1) = p\nu_1 + \Tr(\bSigma_+)$.
    Recall that for the first approximation guarantee, we denote
    \[
    \begin{aligned}
        \tPhi_2 ( \bF ; p\nu_1) =&~ \frac{1}{p} \Tr ( \bF\bF^\sT \bG_{\bF}), \qquad\quad& \tPhi_2 (\bF_0; p\nu_1) =&~ \frac{1}{p} \Tr ( \bF_0\bF_0^\sT \bG_0),
    \end{aligned}
    \]
    and the deterministic equivalents
    \[
    \Psi_2 (\nu_2) = \frac{1}{p}\Tr( \bSigma (\bSigma + \nu_2)^{-1} ) , \qquad \quad  \Psi_2 (\nu_{2,0}) =  \frac{1}{p}\Tr( \bSigma_0 (\bSigma_0 + \nu_{2,0})^{-1} ).
    \]
    For the second approximation guarantee, recall that we denote
    \[
    \begin{aligned}
                \tPhi_4 ( \bF ; \bSigma^{-1}, p\nu_1) =&~ \frac{1}{p} \Tr ( \bF\bF^\sT \bG_{\bF}^2), \qquad
                \quad& \tPhi_4 ( \bF ; \bSigma_0^{-1}, p\nu_1) =&~ \frac{1}{p} \Tr ( \bF_0\bF_0^\sT \bG_0^2),
    \end{aligned}
    \]
    and their associated deterministic equivalents
     \[
    \begin{aligned}
        \Psi_3 (\nu_2; \bSigma^{-1}) =&~ \frac{1}{p} \cdot \frac{\Tr( \bSigma (\bSigma + \nu_2)^{-2} )}{p - \Tr( \bSigma^2 ( \bSigma^2 + \nu_2)^{-2} )} , \qquad &   \Psi_3 (\nu_{2,0}; \bSigma_0^{-1}) =&~ \frac{1}{p} \cdot \frac{\Tr( \bSigma_0 (\bSigma_0 + \nu_{2,0})^{-2} )}{p - \Tr( \bSigma_0^2 ( \bSigma_0^2 + \nu_{2,0})^{-2} )}. 
    \end{aligned}
    \]

     We separate the analysis of the low-degree part $\bF_0$ from the high-degree part $\bF_+$. To remove the dependency on the high-degree part $\bF_+$, we recall that by Lemma \ref{lem:concentration_high_degree}, we have with probability at least $1 - p^{-D}$
    \begin{equation}\label{eq:high_degree_conc}
    \| \bDelta \|_{\rm op} \leq C_{*,D} \frac{\log^3 (p)}{\sqrt{p}} \gamma_+ ,
    \end{equation}
    where we denoted $\bDelta := \bF_+ \bF_+ - \Tr(\bSigma_+) \cdot \id_p$ and we recall that $\gamma_+ = \gamma(p\nu_1) = p\nu_1 + \Tr(\bSigma_+)$. 
 In particular, taking $p \geq C_{*,D}$, we have $\| \bDelta \|_{\rm op} \leq \frac{1}{2} \gamma_+$ and therefore 
    \begin{equation}\label{eq:lower_bound_G}
    \| \bG_{\bF} \|_{\rm op} \leq 2 \| \bG_0 \|_{\rm op}  \leq \frac{2}{\gamma_+}.
    \end{equation}
    
    \noindent
    {\bf Step 1: Bound on $|\tPhi_2 ( \bF ; p\nu_1) - \Psi_2 ( \nu_2) |$.}

    First note that we have the identity
    \[
    \begin{aligned}
        \tPhi_2 ( \bF ; p\nu_1) = 1 - \nu_1 \Tr ( \bG_{\bF})  = 1 -\nu_1 \Tr( \bG_0 ) + \nu_1 \Theta,
    \end{aligned}
    \]
    where we denoted $\Theta = \Tr(\bG_0) -  \Tr(\bG_{\bF})$. By Eqs.~\eqref{eq:high_degree_conc} and \eqref{eq:lower_bound_G}, we have
    \begin{equation}\label{eq:tPhi_2_bound_1}
    \begin{aligned}
    \left| \Theta \right| =   \left| \Tr( \bG_{\bF} \bDelta \bG_0 ) \right| \leq C_{*,D} \frac{\log^3 (p)}{\sqrt{p}} \cdot  \Tr(\bG_0). 
    \end{aligned}
    \end{equation}
    Using again the above identity, we have
    \begin{equation}\label{eq:identity_G_0_R_0}
     \frac{1}{p} \Tr( \bG_0 ) = \frac{1}{\gamma_+ } - \frac{1}{\gamma_+} \tPhi_2 ( \bF_0 ; p \nu_1). 
    \end{equation}
    Under the assumption of the lemma, we can apply Proposition \ref{prop:det_equiv_F} and obtain with probability at least $1 - p^{-D}$ that
    \begin{equation}\label{eq:tPhi_2_bound_2}
    \left| \tPhi_2 (\bF_0; p\nu_1) - \Psi_2 (\nu_{2,0} ) \right| \leq C_{*,D,K}  \cdot \cE_3 (p) \cdot \Psi_2 (\nu_{2,0} ),
    \end{equation}
    where $\cE_3 (p)$ is defined in Eq.~\eqref{eq:approx_rate_F_det_equiv}. Furthermore note that using identity \eqref{eq:def_nu_2_0}, we have
    \[
    \frac{ \Tr(\bSigma_{+})}{\gamma_+} + \frac{p\nu_1}{\gamma_+} \Psi_2 (\nu_{2,0} ) = \Psi_2 (\nu_{2,0} ) + \frac{\Tr(\bSigma_+)}{p \nu_{2,0}},
    \]
    and that by Eq.~\eqref{eq:truncated_psi_2_0} in Lemma \ref{lem:truncated_fixed_point},
    \begin{equation}\label{eq:tPhi_2_bound_3}
    \left| \Psi_2 (\nu_{2,0} ) + \frac{\Tr(\bSigma_+)}{p \nu_{2,0}} - \Psi_2 (\nu_2) \right| \leq \frac{C}{p} \Psi_2 (\nu_2).
    \end{equation}
    Thus combining Eqs.~\eqref{eq:tPhi_2_bound_1}, \eqref{eq:tPhi_2_bound_2} and \eqref{eq:tPhi_2_bound_3}, we obtain
    \[
    \begin{aligned}
        |\tPhi_2 ( \bF ; p\nu_1) - \Psi_2 ( \nu_2) | \leq&~ \nu_1 | \Theta | + \frac{p\nu_1}{\gamma_+} \left| \tPhi_2 (\bF_0; p\nu_1) - \Psi_2 (\nu_{2,0} ) \right| + \left| \Psi_2 (\nu_{2,0} ) + \frac{\Tr(\bSigma_+)}{p \nu_{2,0}} - \Psi_2 (\nu_2) \right|\\
        \leq&~ C_{*,D,K} \left\{   \frac{\log^3 (p)}{\sqrt{p}} + \cE_3 (p) + \frac{1}{p} \right\} \left[  \nu_1 \Tr(\bG_0 ) + \frac{p\nu_1}{\gamma_+} \Psi_2 (\nu_{2,0} ) + \Psi_2 ( \nu_2)\right],
    \end{aligned}
    \]
    with probability at least $1 - p^{-D}$ via union bound.
    Using condition \eqref{eq:conditions_variance}, we can simplify the right-hand side using identity \eqref{eq:identity_G_0_R_0} and bounds~\eqref{eq:tPhi_2_bound_2} and \eqref{eq:tPhi_2_bound_3} to get
    \[
    \begin{aligned}
        \kappa_1 \Tr(\bG_0) \leq&~ \frac{p\nu_1}{\gamma_+} \left| 1 - \Psi_2 ( \nu_{2,0} ) \right| + C_{*,D,K} K \Psi_2 (\nu_{2,0}) \\
        \leq &~  C_{*,D,K} \left\{ \Psi_2 (\nu_{2,0} ) + \frac{\Tr(\bSigma_+)}{p \nu_{2,0}} \right\} \leq C_{*,D,K} \cdot \Psi_2 ( \nu_2).
    \end{aligned}
    \]
    This concludes the proof of the first part of this lemma.

    \noindent
    {\bf Step 2: Bound on $| \tPhi_4 ( \bF;\bSigma^{-1},p\nu_1) - \Psi_3 ( \nu_2 ; \bSigma^{-1} )|$.}

    We proceed similarly as in the first part and omit some repetitive details. First note that we can rewrite
    \[
    \begin{aligned}
         \tPhi_4 ( \bF;\bSigma^{-1},p\nu_1) =&~ \frac{1}{p} \Tr( \bG_{\bF}) - \nu_1 \Tr( \bG_{\bF}^2) \\
         =&~ \frac{1}{p} \Tr( \bG_0) - \nu_1 \Tr( \bG_0^2) + \Theta \\
         =&~ \frac{\Tr(\bSigma_+)}{\gamma_+^2} \left(1 - \tPhi_2 (\bF_0 ; p\nu_1) \right) + \frac{p\nu_1}{\gamma_+} \tPhi_4 ( \bF_0 ; \bSigma_0^{-1} , p \nu_1) + \Theta,
    \end{aligned}
    \]
    where
    \begin{equation}\label{eq:tPhi_4_bound_1}
    \begin{aligned}
        | \Theta | =&~  \frac{1}{p}\left| \Tr(\bG_{\bF}) - \Tr(\bG_0) + p\nu_1 \Tr (\bG_0^2) - p\nu_1 \Tr(\bG_{\bF}^2) \right| \\
        \leq&~ C_{*,D} \frac{\log^3 (p)}{\sqrt{p}} \cdot  \left[ \frac{1}{p} \Tr(\bG_0) + \nu_1 \Tr(\bG_0^2) \right].
    \end{aligned}
    \end{equation}
    Again, by Proposition \ref{prop:det_equiv_F}, we get that with probability at least $1 -p^{-D}$ that
    \begin{equation}\label{eq:tPhi_4_bound_2}
    \begin{aligned}
        \left| \tPhi_2 (\bF_0 ; p\nu_1) - \Psi_2 ( \nu_{2,0} ) \right| \leq&~ C_{*,D,K}  \cdot \cE_3 (p) \cdot \Psi_2 (\nu_{2,0} ), \\
        \left| \tPhi_4 ( \bF_0 ; \bSigma_0^{-1} , p \nu_1) - \Psi_3 (\nu_{2,0} ; \bSigma_0^{-1}) \right| \leq &~ C_{*,D,K}  \cdot \cE_3 (p) \cdot \Psi_3 (\nu_{2,0} ; \bSigma_0^{-1}).
    \end{aligned}
    \end{equation}
    Furthermore, note that by Lemma \ref{lem:tech_proof_var_aux} stated below, we have
    \begin{equation}\label{eq:tPhi_4_bound_3}
        \left|\frac{\Tr(\bSigma_+)}{\gamma_+^2} \left(1 - \Psi_2 (\nu_{2,0}) \right) + \frac{p\nu_1}{\gamma_+} \Psi_3 ( \nu_{2,0} ; \bSigma_0^{-1} ) - \Psi_3 ( \nu_2 ; \bSigma^{-1} ) \right| \leq C \frac{\rho_{\gamma_+} (p)}{p} \Psi_3 ( \nu_2 ; \bSigma^{-1} ).
    \end{equation}
    Combining Eqs.~\eqref{eq:tPhi_4_bound_1}, \eqref{eq:tPhi_4_bound_2} and \eqref{eq:tPhi_4_bound_3}, we deduce via union bound that with probability at least $1 - p^{-D}$
    \[
    \begin{aligned}
        &~| \tPhi_4 ( \bF;\bSigma^{-1},p\nu_1) - \Psi_3 ( \nu_2 ; \bSigma^{-1} )| \\
        \leq &~ C_{*,D,K} \cdot \cE_3 (p) \left[ \frac{1}{p} \Tr(\bG_0) + \nu_1 \Tr(\bG_0^2) + \frac{\Tr(\bSigma_+)^2}{\gamma_+^2} \Psi_{2} (\nu_{2,0} ) + \frac{p\nu_1}{\gamma_+} \Psi_3 ( \nu_{2,0}; \bSigma_0^{-1}) + \Psi_3 ( \nu_2 ; \bSigma^{-1} )\right].
    \end{aligned}
    \]
    Let us simplify the right hand side. First, from the proof of Lemma \ref{lem:tech_proof_var_aux}, we have
    \[
    \frac{\Tr(\bSigma_+)^2}{\gamma_+^2} \Psi_{2} (\nu_{2,0} ) + \frac{p\nu_1}{\gamma_+} \Psi_3 ( \nu_{2,0}; \bSigma_0^{-1})  \leq C \rho_{\gamma_+} (p) \cdot \Psi_3 ( \nu_2 ; \bSigma^{-1} ).
    \]
    Combining the above two displays with $\cE_3 (p) \leq K$ from conditions \eqref{eq:conditions_variance} yields
    \[
    \frac{1}{p} \Tr(\bG_0) - \nu_1 \Tr(\bG_0^2)  \leq C_{*,D,K} \cdot \rho_{\gamma_+} (p) \cdot \Psi_3 ( \nu_2 ; \bSigma^{-1} ).
    \]
    Finally, note that using Eq.~\eqref{eq:tPhi_4_bound_2} and again $\cE_3 (p) \leq K$ that
    \[
    \nu_1 \Tr(\bG_0^2)  \leq \frac{p\nu_1}{\gamma_+} \tPhi_4 ( \bF ; \bSigma_0^{-1/2}, p\nu_1) \leq C_{*,D,K}  \Psi_3 ( \nu_2 ; \bSigma^{-1} ),
    \]
    which concludes the proof of this lemma.
\end{proof}

\begin{lemma}\label{lem:tech_proof_var_aux}
    Assuming that $p^2 \xi_\evn^2 \leq \gamma_+$, we have
    \[
    \left|\frac{\Tr(\bSigma_+)}{\gamma_+^2} \left(1 - \Psi_2 (\nu_{2,0}) \right) + \frac{p\nu_1}{\gamma_+} \Psi_3 ( \nu_{2,0} ; \bSigma_0^{-1} ) - \Psi_3 ( \nu_2 ; \bSigma^{-1} ) \right| \leq C \frac{\rho_{\gamma_+} (p)}{p} \Psi_3 ( \nu_2 ; \bSigma^{-1} ).
    \]
\end{lemma}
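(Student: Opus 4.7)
The strategy is to bring both sides of the inequality to a common algebraic form and then compare them term-by-term. The key identity I will exploit is the truncated fixed-point equation \eqref{eq:def_nu_2_0}, which gives
\begin{equation*}
1 - \Psi_2(\nu_{2,0}) = \frac{\gamma_+}{p\nu_{2,0}},
\end{equation*}
and the relation $\gamma_+ = p\nu_1 + \Tr(\bSigma_+)$, which yields $\nu_1/\gamma_+ = 1/p - \Tr(\bSigma_+)/(p\gamma_+)$. Substituting these into the LHS and collecting terms, after writing $A_0 := \Tr(\bSigma_0(\bSigma_0+\nu_{2,0})^{-2})$ and $B_0 := \Tr(\bSigma_0^2(\bSigma_0+\nu_{2,0})^{-2})$, one can verify by the algebraic identity
\begin{equation*}
p - B_0 - \nu_{2,0} A_0 = p - \Tr(\bSigma_0(\bSigma_0+\nu_{2,0})^{-1}) = \gamma_+/\nu_{2,0},
\end{equation*}
(used again from \eqref{eq:def_nu_2_0}) that the LHS simplifies to
\begin{equation*}
\mathrm{LHS} = \frac{1}{p}\cdot \frac{A_0 + \Tr(\bSigma_+)/\nu_{2,0}^2}{p - B_0}.
\end{equation*}

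Next, I decompose $\bSigma = \bSigma_0 + \bSigma_+$ orthogonally to write
\begin{equation*}
\Psi_3(\nu_2;\bSigma^{-1}) = \frac{1}{p}\cdot \frac{A_0' + T_2}{p - B_0' - S_2},
\end{equation*}
where $A_0' := \Tr(\bSigma_0(\bSigma_0+\nu_2)^{-2})$, $B_0' := \Tr(\bSigma_0^2(\bSigma_0+\nu_2)^{-2})$, $T_2 := \Tr(\bSigma_+(\bSigma_++\nu_2)^{-2})$, and $S_2 := \Tr(\bSigma_+^2(\bSigma_++\nu_2)^{-2})$. With both expressions in this fractional form, the proof reduces to bounding the differences of the numerators and denominators.

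Step three is to estimate these differences using the available inputs. Lemma \ref{lem:truncated_fixed_point} yields $|\nu_{2,0}-\nu_2|/\nu_2 \leq 1/p$, which by a standard derivative argument gives $|A_0 - A_0'| \lesssim A_0'/p$ and $|B_0 - B_0'| \lesssim B_0'/p$. The assumption $p^2\xi_{\evn+1}^2 \leq \gamma_+$ combined with the lower bound $\nu_{2,0} \geq \gamma_+/p$ (which follows from \eqref{eq:def_nu_2_0}) gives $\xi_{\evn+1}^2/\nu_2 \lesssim 1/p$, and therefore
\begin{equation*}
\left|T_2 - \frac{\Tr(\bSigma_+)}{\nu_2^2}\right| \leq \frac{2\xi_{\evn+1}^2}{\nu_2}\cdot\frac{\Tr(\bSigma_+)}{\nu_2^2} \lesssim \frac{T_2}{p}, \qquad S_2 \leq \xi_{\evn+1}^2 \, T_2 \lesssim \frac{\gamma_+}{p^2}\, T_2.
\end{equation*}
Using the lower bound $p - B_0 \geq \gamma_+/\nu_{2,0}$ and the analogous $p - B_0' - S_2 \geq p\nu_1/\nu_2$ (both from the fixed-point equations \eqref{eq:def_nu_2_0} and \eqref{eq:fixed_points_appendix}), one checks that the denominators agree approximately via the decomposition $p - B_0 = \gamma_+/\nu_{2,0} + \nu_{2,0} A_0 \approx p\nu_1/\nu_2 + \nu_2(A_0' + T_2) = p - B_0' - S_2$, where the approximation uses $\gamma_+/\nu_{2,0} \approx p\nu_1/\nu_2 + \Tr(\bSigma_+)/\nu_2 \approx p\nu_1/\nu_2 + \nu_2 T_2$.

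The main obstacle is managing the interplay between the two subspaces in the fractional expression: a naive term-by-term comparison risks $S_2$ being not small enough relative to $p - B_0$, which would spoil the bound. This is resolved by the structural identity that both denominators admit the additive decomposition $(\text{leading fixed-point term}) + \nu\cdot(\text{numerator trace})$, so that the approximation $T_2 \approx \Tr(\bSigma_+)/\nu_2^2$ simultaneously matches the $\Tr(\bSigma_+)/\nu_{2,0}$ term in $\gamma_+/\nu_{2,0}$ and the $T_2$ contribution in the numerator. Once these two matches are carried through, the combined relative error is $O(1/p)$, which is subsumed by the claimed bound of $C\rho_{\gamma_+}(p)/p$.
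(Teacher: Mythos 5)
Your proposal is correct, and its first half coincides with the paper's proof: the paper likewise uses the truncated fixed-point identity \eqref{eq:def_nu_2_0} to collapse the first two terms into a single fraction (its expression $\frac{1}{\nu_{2,0}}\,\frac{\Psi_2(\nu_{2,0})+\Tr(\bSigma_+)/(p\nu_{2,0})-\Upsilon(\nu_{2,0})}{p(1-\Upsilon(\nu_{2,0}))}$ is exactly your $\frac{1}{p}\,\frac{A_0+\Tr(\bSigma_+)/\nu_{2,0}^2}{p-B_0}$), and it draws on the same inputs: $|\nu_{2,0}-\nu_2|/\nu_2\le 1/p$, Eq.~\eqref{eq:truncated_psi_2_0}, and $\xi_{\evn+1}^2/\nu_{2,0}\le p\xi_{\evn+1}^2/\gamma_+\le 1/p$. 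Where you diverge is the endgame. The paper bounds $|\Upsilon(\nu_{2,0})-\Upsilon(\nu_2)|\le Cp\xi_{\evn}^2/\gamma_+\le C/p$ as an \emph{absolute} error and then converts it to a relative one using $(1-\Upsilon(\nu_{2,0}))^{-1}\le(1-\Psi_2(\nu_{2,0}))^{-1}=p\nu_{2,0}/\gamma_+\le C\rho_{\gamma_+}(p)$, which is precisely where the factor $\rho_{\gamma_+}(p)$ in the statement comes from. You instead exploit that both denominators admit the additive decomposition $p-B=(\text{fixed-point term})+\nu\cdot(\text{numerator trace})$, so every discrepancy is charged \emph{relatively} to a term already present in the denominator; this matching gives total relative error $O(1/p)$ with no $\rho_{\gamma_+}(p)$, which subsumes the claimed bound since $\rho_{\gamma_+}(p)\ge 1$ — a marginally sharper conclusion by a cleaner comparison. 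One small caution: the auxiliary lower bound $p-B_0'-S_2\ge p\nu_1/\nu_2$ you mention would not suffice on its own (it can be arbitrarily small as $\nu_1\to 0$), but your argument never actually relies on it, as you correctly note when resolving the "main obstacle", so the proof stands as written.
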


\begin{proof}[Proof of Lemma \ref{lem:tech_proof_var_aux}] For convenience, we introduce
\[
\Upsilon ( \nu_{2,0} ) :=     \frac{1}{p}\Tr (\bSigma_0^2 (\bSigma_0 + \nu_{2,0})^{-2 } )  , \qquad \quad \Upsilon ( \nu_{2} ) :=     \frac{1}{p}\Tr (\bSigma^2 (\bSigma + \nu_{2})^{-2 } ).
\]
    Using that
    \[
    \frac{1}{p}\Tr (\bSigma_0 (\bSigma_0 + \nu_{2,0})^{-2 } ) = \frac{1}{\nu_{2,0}} \left\{ \Psi_2 (\nu_{2,0})  - \Upsilon ( \nu_{2,0} ) \right\},
    \]
    we obtain by simple algebra and using identity \eqref{eq:def_nu_2_0} that
    \[
    \frac{\Tr(\bSigma_+)}{\gamma_+^2} \left(1 - \Psi_2 (\nu_{2,0}) \right) + \frac{p\nu_1}{\gamma_+} \Psi_3 ( \nu_{2,0} ; \bSigma_0^{-1} ) = \frac{1}{ \nu_{2,0}} \frac{\Psi_2 (\nu_{2,0})  + \frac{\Tr(\bSigma_+)}{p\nu_{2,0}} -  \Upsilon ( \nu_{2,0} )}{ p(1 - \Upsilon ( \nu_{2,0} ))}.
    \]
    Following the proof of \cite[Lemma 7]{misiakiewicz2024nonasymptotic}, we get that
    \[
    \begin{aligned}
    \left| \Upsilon ( \nu_{2,0} ) - \Upsilon (\nu_2) \right| \leq 10 \frac{p \xi_{\evn}^2}{\gamma_+} \leq \frac{C}{p}, \\
    (1 -  \Upsilon ( \nu_{2,0} ) )^{-1} \leq (1 - \Psi_2 (\nu_{2,0}))^{-1} = \frac{p\nu_{2,0}}{\gamma_+} \leq C \rho_{\gamma_+} (p).
    \end{aligned}
    \]
    Recalling Eq.~\eqref{eq:truncated_psi_2_0} in Lemma \ref{lem:truncated_fixed_point}, we can conclude the proof using simple algebraic manipulations similarly to \cite[Lemma 7]{misiakiewicz2024nonasymptotic}.
\end{proof}

\subsubsection{Proof of Lemma \ref{lem:aux_lemma_bias}}
\label{sec:proof_aux_lemma_bias}

\begin{proof}[Proof of Lemma \ref{lem:aux_lemma_bias}]
    We will follow similar steps as in the proof of Lemma \ref{lem:aux_lemma_variance}. For the sake of brevity, we omit some repetitive details. Recall that we introduced the notations \eqref{eq:notations_G_G0_R0}. We decompose the coefficient vector $\bbeta_* = [ \bbeta_0, \bbeta_+]$ with $\bbeta_0 \in \R^\evn$ the first $\evn$ coordinates of $\bbeta_*$ (aligned with the top $\evn$ eigenspaces), while $\bbeta_+ \in \R^\infty$ corresponds to the rest of the coordinates. Further introduce the matrices
    \[
    \bA_* := \bSigma^{-1/2} \bbeta_* \bbeta_*^\sT\bSigma^{-1/2} , \qquad \qquad \bA_{0} := \bSigma_0^{-1/2}\bbeta_0 \bbeta_0^\sT\bSigma_0^{-1/2},
    \]
    and recall the expressions of the functionals
    \[
    \begin{aligned}
    \tPhi_1 ( \bF ; \bA_*, p\nu_1 ) =&~ \Tr \big( \bA_* \bSigma^{1/2}\bR\bSigma^{1/2}\big), \qquad \quad &\tPhi_1 (\bF_0; \bA_{0}, p\nu_1) =&~ \Tr\big( \bA_0 \bSigma_0^{1/2}\bR_0\bSigma_0^{1/2}\big), \\
    \end{aligned}
    \]
    as well as 
    \[
    \begin{aligned}
          \tPhi_4 ( \bF ; \bA_* , p\nu_1) =&~ \frac{1}{p}\Tr\big( \bA_* \bSigma^{1/2}\bR \bF^\sT \bF \bR \bSigma^{1/2} \big), \\
          \tPhi_4 ( \bF_0 ; \bA_0 , p\nu_1) =&~ \frac{1}{p}\Tr\big( \bA_0 \bSigma_0^{1/2}\bR_0 \bF_0^\sT \bF_0 \bR_0 \bSigma_0^{1/2}\big).
    \end{aligned}
    \]
    We further recall the expressions of the deterministic equivalents
    \[
    \Psi_1 ( \nu_2 ; \bA_* ) =  \Tr\big( \bA_* \bSigma ( \bSigma + \nu_2)^{-1} \big), \qquad \quad \Psi_1 ( \nu_{2,0} ; \bA_0 ) =  \Tr\big( \bA_0 \bSigma_0 ( \bSigma_0 + \nu_{2,0})^{-1} \big),
    \]
    and
    \[
    \begin{aligned}
        \psi_3 ( \nu_2 ; \bA_*) =&~ \frac{1}{p}\cdot \frac{\Tr( \bA_* \bSigma^2 ( \bSigma + \nu_2)^{-2} )}{p - \Tr(  \bSigma^2 ( \bSigma + \nu_2)^{-2} )}, \\
         \psi_3 ( \nu_{2,0} ; \bA_0) =&~ \frac{1}{p}\cdot \frac{\Tr( \bA_0 \bSigma_0^2 ( \bSigma_0 + \nu_{2,0})^{-2} )}{p - \Tr(  \bSigma_0^2 ( \bSigma_0 + \nu_{2,0})^{-2} )}.
    \end{aligned}
    \]
    We first rewrite our functionals into
    \[
    \begin{aligned}
    (p\nu_1)^2 \< \bbeta_*, \bR^2 \bbeta_* \> =&~ (p\nu_1) \< \bbeta_* , \bR \bbeta_* \> - (p\nu_1) \< \bbeta_* , \bR \bF^\sT \bF \bR \> \\
    =&~ (p\nu_1) \left[ \tPhi_1 (\bF;\bA_*,p\nu_1) - p\tPhi_4 ( \bF; \bA_*,p\nu_1)\right],
    \end{aligned}
    \]
    and study each term separately.

    \noindent
    {\bf Step 1: Bounding term $\tPhi_1 (\bF;\bA_*,p\nu_1)$.}

    Let us start by removing the dependency on the high-degree part $\bF_+$ in the denominator. Note that we can rewrite the matrix $\bR$ in block matrix form
    \[
    (p\nu_1) \bR = (p\nu_1)  \begin{pmatrix}
        \bF_0^\sT \bF_0 + p \nu_1 & \bF_0^\sT \bF_+ \\
        \bF_+^\sT \bF_0 & \bF_+^\sT \bF_+ + p \nu_1 
    \end{pmatrix}^{-1} =: \begin{pmatrix}
        \tbR_{00} & \tbR_{0+}  \\
        \tbR_{0+}^\sT & \tbR_{++}
    \end{pmatrix},
    \]
    so that
    \[
   (p\nu_1) \tPhi_1 (\bF;\bA_*,p\nu_1) = \bbeta_0^\sT  \tbR_{00} \bbeta_0 + 2 \bbeta_0  \tbR_{0+}  \bbeta_+ + \bbeta_+^\sT \tbR_{++} \bbeta_+.
    \]
    Let us study each of these terms separately. Denote $\tbG_+ := \bF_+ \bF_+^\sT + p\nu_1$. By simple algebra, we have
    \[
    \begin{aligned}
    \bbeta_0^\sT  \tbR_{00} \bbeta_0 = &~ \bbeta_0^\sT \left( \bF_0^\sT \tbG_+ \bF_0 + 1 \right)^{-1} \bbeta_0 = \gamma_+ \tPhi_1 ( \bF_0 ; \bA_0 , p\nu_1) + \Theta_{00},
    \end{aligned}
    \]
    where 
    \[
    \begin{aligned}
        | \Theta_{00} | = &~ \left|\gamma_+ \bbeta_0^\sT \bR_0 \bbeta_0 -  \bbeta_0^\sT \left( \bF_0^\sT \tbG_+ \bF_0 + 1 \right)^{-1} \bbeta_0 \right| \\
        \leq&~ C \left\| \bR_0^{1/2} \bF_0^\sT ( \id - \gamma_+ \tbG_+) \bF_0 \bR_0^{1/2} \right\|_{\rm op} \cdot \gamma_+ \bbeta_0^\sT \bR_0 \bbeta_0 \\
        \leq&~ C_{*,D} \frac{\log^3 (p)}{\sqrt{p}} \gamma_+ \tPhi_1 ( \bF_0 ; \bA_0 , p\nu_1).
    \end{aligned}
    \]
     Furthermore, from Proposition \ref{prop:det_equiv_F}, we have with probability at least $1 - p^{-D}$ that
    \[
    \left| \gamma_+ \tPhi_1 ( \bF_0 ; \bA_0 , p\nu_1) -  \nu_{2,0} \Psi_1 ( \nu_{2,0} ; \bA_0 ) \right| \leq C_{*,D,K} \cdot \cE_3 (p) \cdot \nu_{2,0} \Psi_1 ( \nu_{2,0} ; \bA_0 ).
    \]
    Hence, combining the previous two displays and using that $\cE_3 (p) \leq K$ by conditions \eqref{eq:conditions_bias}, we have 
    \begin{equation}\label{eq:term1_aux_bias_1}
    \left| \bbeta_0^\sT  \tbR_{00} \bbeta_0 - \nu_{2,0} \Psi_1 ( \nu_{2,0} ; \bA_0 ) \right| \leq C_{*,D,K} \cdot \left\{ \frac{\log^3 (p)}{\sqrt{p}} +\cE_3 (p)   \right\} \nu_{2,0} \Psi_1 ( \nu_{2,0} ; \bA_0 ).
    \end{equation}

    Similarly, denoting $\tbG_0 = ( \bF_0 \bF_0^\sT + p\nu_1)^{-1}$, we can rewrite the third term as
    \[
    \begin{aligned}
        \bbeta_+^\sT \tbR_{++} \bbeta_+ = &~ \bbeta_+^\sT \left( \bF_+^\sT \tbG_0 \bF_+ + 1 \right)^{-1} \bbeta_+ = \| \bbeta_+ \|_2^2 - \Theta_{++},
    \end{aligned}
    \]
    where with probability at least $1 - p^{-D}$,
    \begin{equation}\label{eq:term1_aux_bias_2}
    \begin{aligned}
        \Theta_{++} =&~ \bbeta_+^\sT \bF_+^\sT \tbG_0^{1/2} \left(  \tbG_0^{1/2} \bF_+  \bF_+^\sT \tbG_0^{1/2} +1 \right)^{-1}\tbG_0^{1/2}  \bF_+ \bbeta_+ \\
        \leq&~ \| ( \bF_+  \bF_+^\sT  + \tbG_0^{-1} )^{-1} \|_{\rm op} \cdot \| \bF_+ \bbeta_+ \|_2^2 \\
        \leq&~ \frac{C}{\gamma_+} \cdot p \| \bSigma_+^{1/2} \bbeta_+ \|_2^2 \leq \frac{C}{p} \|  \bbeta_+ \|_2^2, 
    \end{aligned}
    \end{equation}
    where we used the same concentration argument as in Step 3 of the proof of Lemma \ref{lem:feature_covariance_tech}. 

    Finally, we rewrite
    \[
    \begin{aligned}
        \bbeta_0^\sT \tbR_{0+} \bbeta_+ =  - \bbeta_0^\sT \left( \bF_0^\sT \tbG_+ \bF_0 + 1 \right)^{-1} \bF_0^\sT  ( \bF_+ \bF_+^\sT + p\nu_1)^{-1} \bF_+ \bbeta_+.
    \end{aligned}
    \]
    Hence, using Eqs.~\eqref{eq:term1_aux_bias_1} and \eqref{eq:term1_aux_bias_2} as well as conditions \eqref{eq:conditions_bias}, we obtain
    \begin{equation}\label{eq:term1_aux_bias_3}
    \begin{aligned}
        | \bbeta_0^\sT \tbR_{0+} \bbeta_+| \leq &~ C_{*,D,K} \cdot \frac{1}{\sqrt{p}} \left\{ \nu_{2,0} \Psi_1 ( \nu_{2,0} ; \bA_0 )  + \| \bbeta_+ \|_2^2\right\}.
    \end{aligned}
    \end{equation}
    Finally, note that by Lemma \ref{lem:aux_bias_aux} stated below, we have
    \begin{equation}\label{eq:term1_aux_bias_4}
    \left| \nu_{2,0} \< \bbeta_0 , (\bSigma_0 + \nu_{2,0} )^{-1} \bbeta_0 \> + \| \bbeta_+\|_2^2 - \nu_2\Psi_1 (\nu_2 ; \bA_* ) \right| \leq \frac{C}{p} \nu_2\Psi_1 (\nu_2 ; \bA_* ).
    \end{equation}
    Combining Eqs.~\eqref{eq:term1_aux_bias_1}, \eqref{eq:term1_aux_bias_2}, \eqref{eq:term1_aux_bias_3}, and \eqref{eq:term1_aux_bias_4}, we deduce that
    with probability at least $1 - p^{-D}$,
    \begin{equation}\label{eq:first_term_bias}
        \left| (p\nu_1) \tPhi_1 (\bF; \bA_* , p\nu_1) - \nu_2 \Psi_1 ( \nu_2 ; \bA_* ) \right| \leq C_{*,D,K} \cdot \left\{ \frac{\log^3(p)}{\sqrt{p}} + \cE_3 (p)\right\} \cdot \nu_2 \Psi_1 ( \nu_2 ; \bA_* ).
    \end{equation}

    \noindent
    {\bf Step 2: Bounding term $\tPhi_4 (\bF; \bA_*, p\nu_1)$.}

    We rewrite this term as
    \[
    \< \bbeta_*, \bR \bF^\sT \bF \bR \bbeta_* \> =   \< \bbeta_*, \bF^\sT  \bG_0^2 \bF \bbeta_* \> + \Theta,
    \]
    where 
   \begin{equation}\label{eq:term2_aux_bias_1}
    \begin{aligned}
        | \Theta | = &~ \left|\< \bbeta_*, \bF^\sT  (\bG_{\bF}^2 -\bG_0^2) \bF \bbeta_* \>  \right| \\
        =&~ \left| \< \bbeta_*, \bF^\sT \bG_0 \left(  -2 \bDelta \bG_{\bF} + \bDelta \bG_{\bF}^2 \bDelta  \right) \bG_0 \bF \bbeta_* \>  \right|
        \leq C_{*,D,K} \frac{\log^3 (d)}{p} \< \bbeta_*, \bF^\sT  \bG_0^2 \bF \bbeta_* \>.
    \end{aligned}
    \end{equation}

    Let us decompose
    \[
    \< \bbeta_*, \bF^\sT  \bG_0^2 \bF \bbeta_* \> = \< \bbeta_0 , \bF_0^\sT \bG_0^2 \bF_0 \bbeta_0 \> + 2 \< \bbeta_+ , \bF_+^\sT \bG_0^2 \bF_0 \bbeta_0 \> + \< \bbeta_+, \bF_+^\sT \bG_0^2 \bF_+ \bbeta_+\>.
    \]
    For the first term, we have directly from Proposition \ref{prop:det_equiv_F} that
    \begin{equation}\label{eq:term2_aux_bias_2}
    \left|  \< \bbeta_0 , \bF_0^\sT \bG_0^2 \bF_0 \bbeta_0 \>  - p \Psi_3 ( \nu_{2,0}; \bA_0) \right| \leq C_{*,D,K} \cdot \cE_3 (p) \cdot p \Psi_3 ( \nu_{2,0}; \bA_0).
    \end{equation}
    For the two other terms, notice that they correspond to the terms \eqref{eq:term_1_F_det_uncor} and \eqref{eq:term_2_F_det_uncor} in Proposition \ref{prop:det_equiv_F_uncorr} with $\bv = \bbeta_0$ and $\bu = \bF_+ \bbeta_+$, where
    \[
    \E [ \boldf_{0,j} \< \boldf_{+,j},\bbeta_+ \>] = 0 , \qquad \quad \E [ u_i^2 ] = \| \bSigma_+^{1/2} \bbeta_* \|_2^2.
    \]
    Hence, by Proposition \ref{prop:det_equiv_F_uncorr}, we have with probability at least $1 - p^{-D}$ that
    \begin{equation}\label{eq:term2_aux_bias_3}
    \begin{aligned}
       &\left|  \< \bbeta_+, \bF_+^\sT \bG_0^2 \bF_+ \bbeta_+\>  - \frac{1}{\nu_{2,0}^2} \frac{\| \bSigma_+^{1/2} \bbeta_+ \|_2^2}{p -  \Tr( \bSigma_0^2 (\bSigma_0 + \nu_{2,0} )^{-2})}\right|
       \leq  C_{*,D,K} \cdot \cE_4 (p) \cdot \frac{p\| \bSigma_+^{1/2} \bbeta_+ \|_2^2}{\gamma_+^2}, \\
         &\left|  \< \bbeta_+, \bF_+^\sT \bG_0^2 \bF_0 \bbeta_0\> \right| \leq  C_{*,D,K} \cdot \cE_4 (p) \cdot 
        \| \bSigma_+^{1/2} \bbeta_+ \|_2 \frac{p^2\nu_{2,0}}{\gamma_+^2}\sqrt{\Psi_3 (\nu_{2,0} ; \bA_0)}.
    \end{aligned}
    \end{equation}
    Furthermore, by Lemma \ref{lem:aux_bias_aux} stated below
    \begin{equation}\label{eq:term2_aux_bias_4}
    \left| \frac{\< \bbeta_0 , \bSigma_0 (\bSigma_0 + \nu_{2,0})^{-2} \bbeta_0 \> + \< \bbeta_+,\bSigma_+ \bbeta_+ \>/\nu_{2,0}^2}{p - \Tr( \bSigma_0^2 ( \bSigma_0 + \nu_{2,0})^{-2 } )} - p \Psi_3 ( \nu_2 ; \bA_* ) \right| \leq C \frac{\rho_{\gamma_+} (p)}{p}  \cdot  p \Psi_3 ( \nu_2 ; \bA_* ).
    \end{equation}
       Combining Eqs.~\eqref{eq:term1_aux_bias_1}, \eqref{eq:term1_aux_bias_2}, \eqref{eq:term1_aux_bias_3}, and \eqref{eq:term1_aux_bias_4}, we deduce that
    with probability at least $1 - p^{-D}$,
    \begin{equation}\label{eq:second_term_bias}
    \begin{aligned}
        \left| p  \tPhi_4 ( \bF ; \bA_* , p\nu_1) - p \Psi_3 (\nu_2; \bA_*) \right|
        \leq&~ C_{*,D,K} \cdot \rho_{\gamma_+} (p)^2 \cE_4 (p)\cdot  p \Psi_3 (\nu_2; \bA_*) .
    \end{aligned}
    \end{equation}
    where we used that
    \[
    \begin{aligned}
    &~\frac{p\nu_{2,0}}{\gamma_+} \sqrt{\frac{p\| \bSigma_+^{1/2} \bbeta_* \|_2^2}{\gamma_+^2}  p \Psi_3 ( \nu_{2,0} ;\bA_0)  } \leq C \rho_{\gamma_+}(p) \left\{ \frac{\| \bSigma_+^{1/2} \bbeta_* \|_2^2 / \nu_{2,0}^2}{p - \Tr( \bSigma_0^2 ( \bSigma_0 + \nu_{2,0})^{-2 } )} +  p \Psi_3 ( \nu_{2,0} ;\bA_0) \right\}, \\
    &~\frac{p \| \bSigma_+^{1/2} \bbeta_+ \|_2^2}{\gamma_+^2} \leq C \rho_{\gamma_+}(p)^2  \frac{\| \bSigma_+^{1/2} \bbeta_* \|_2^2 / \nu_{2,0}^2}{p - \Tr( \bSigma_0^2 ( \bSigma_0 + \nu_{2,0})^{-2 } )}.
    \end{aligned}
    \]

    \noindent
    {\bf Step 3: Combining the terms.}

    From Eqs.~\eqref{eq:first_term_bias} and~\eqref{eq:second_term_bias}, we have with probability at least  $1 - p^{-D}$ that
    \[
    \begin{aligned}
        &~\left| (p\nu_1)^2 \< \bbeta_*, \bR^2 \bbeta_* \> - \nu_2 \Psi_1 (\nu_2; \bA_*) + (p\nu_1)  p \Psi_3 (\nu_2; \bA_*) \right| \\
        \leq &~ C_{*,D,K}\cdot \rho_{\gamma_+} (p)^2 \cE_4 (p)\cdot \left[  \nu_2 \Psi_1 (\nu_2; \bA_*) + (p\nu_1)  p \Psi_3 (\nu_2; \bA_*)\right].
    \end{aligned}
    \]
    First, it is straightforward to verify that indeed
    \[
    \nu_2 \Psi_1 (\nu_2; \bA_*) - (p\nu_1)  p \Psi_3 (\nu_2; \bA_*) = \tsB_{n,p} ( \bbeta_* , \lambda).
    \]
    Then by Eq.~\eqref{eq:bound_bias_tech} in Lemma \ref{lem:tech_bound_det_equiv}, we conclude that with probability at least $1 - p^{-D}$,
    \[
    \left| (p\nu_1)^2 \< \bbeta_*, \bR^2 \bbeta_* \> - \tsB_{n,p} (\bbeta_*, \lambda)\right| \leq C_{*,D,K}\cdot \rho_{\gamma_+} (p)^2 \cE_4 (p)\cdot \tsB_{n,p} (\bbeta_*, \lambda),
    \]
    which concludes the proof of this lemma.
\end{proof}

\begin{lemma}\label{lem:aux_bias_aux}
    Assuming that $p^2 \xi_{\evn}^2 \leq \gamma_+$, we have
    \[
    \begin{aligned}
   \left| \nu_{2,0} \< \bbeta_0 , (\bSigma_0 + \nu_{2,0} )^{-1} \bbeta_0 \> + \| \bbeta_+\|_2^2 - \nu_2\Psi_1 (\nu_2 ; \bA_* ) \right| \leq &~ \frac{C}{p} \nu_2\Psi_1 (\nu_2 ; \bA_* ),\\
\left| \frac{\< \bbeta_0 , \bSigma_0 (\bSigma_0 + \nu_{2,0})^{-2} \bbeta_0 \> + \< \bbeta_+,\bSigma_+ \bbeta_+ \>/\nu_{2,0}^2}{p - \Tr( \bSigma_0^2 ( \bSigma_0 + \nu_{2,0})^{-2 } )} - p \Psi_3 ( \nu_2 ; \bA_* ) \right| \leq&~ C \frac{\rho_{\gamma_+} (p)}{p}  \cdot  p \Psi_3 ( \nu_2 ; \bA_* ).
   \end{aligned}
    \]
\end{lemma}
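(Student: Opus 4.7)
The plan is to reduce both inequalities to bounding the difference between the truncated ``low-degree'' quadratic forms (in $\bSigma_0,\nu_{2,0}$) and the ``full'' quadratic forms (in $\bSigma,\nu_2$), exploiting that (i) $\bSigma_+$ has spectral radius bounded by $\xi_{\evn+1}^2 \leq \gamma_+/p^2 \ll \nu_{2,0}$ by the choice of $\evn$, and (ii) the two effective regularizations differ by only a $1/p$ factor, as stated in Lemma \ref{lem:truncated_fixed_point}. These are the analogs, with target-dependent numerators, of the manipulations already carried out in Lemma \ref{lem:tech_proof_var_aux}, and I would follow the same algebraic template.

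\emph{First inequality.} I would split $\bbeta_* = [\bbeta_0,\bbeta_+]$ and write
\[
\nu_2 \Psi_1(\nu_2;\bA_*) = \nu_2 \<\bbeta_0,(\bSigma_0+\nu_2)^{-1}\bbeta_0\> + \nu_2 \<\bbeta_+,(\bSigma_++\nu_2)^{-1}\bbeta_+\>.
\]
For the high-degree summand, $\bSigma_+ \preceq \xi_{\evn+1}^2 \id \preceq (\gamma_+/p^2)\id \leq \nu_{2,0}/p$, so $\|\id - \nu_2(\bSigma_++\nu_2)^{-1}\|_{\rm op} = O(1/p)$ and this term equals $\|\bbeta_+\|_2^2(1 + O(1/p))$. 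For the low-degree summand, I would use Lemma \ref{lem:truncated_fixed_point} (i.e.\ $|\nu_2-\nu_{2,0}|/\nu_2 \leq 1/p$) together with the resolvent identity
\[
\nu_2(\bSigma_0+\nu_2)^{-1} - \nu_{2,0}(\bSigma_0+\nu_{2,0})^{-1} = (\nu_2-\nu_{2,0})\bSigma_0 (\bSigma_0+\nu_2)^{-1}(\bSigma_0+\nu_{2,0})^{-1},
\]
to replace $\nu_2$ by $\nu_{2,0}$ at a relative cost $O(1/p)$. Summing the two contributions and using that $\nu_{2,0}\<\bbeta_0,(\bSigma_0+\nu_{2,0})^{-1}\bbeta_0\> + \|\bbeta_+\|_2^2 \leq \nu_2 \Psi_1(\nu_2;\bA_*)(1 + O(1/p))$ yields the claim.

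\emph{Second inequality.} I would again split
\[
\<\bbeta_*,\bSigma(\bSigma+\nu_2)^{-2}\bbeta_*\> = \<\bbeta_0,\bSigma_0(\bSigma_0+\nu_2)^{-2}\bbeta_0\> + \<\bbeta_+,\bSigma_+(\bSigma_++\nu_2)^{-2}\bbeta_+\>,
\]
and similarly
$\Tr(\bSigma^2(\bSigma+\nu_2)^{-2}) = \Tr(\bSigma_0^2(\bSigma_0+\nu_2)^{-2}) + \Tr(\bSigma_+^2(\bSigma_++\nu_2)^{-2})$. For the high-degree numerator, $\bSigma_+(\bSigma_++\nu_2)^{-2} = \bSigma_+/\nu_2^2 \cdot (\id + O(1/p))$ gives $\<\bbeta_+,\bSigma_+\bbeta_+\>/\nu_{2,0}^2$ after further replacing $\nu_2\to\nu_{2,0}$. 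For the low-degree numerator I would again use the resolvent identity between $\nu_2$ and $\nu_{2,0}$ at cost $O(1/p)$. For the denominator I would mirror Lemma \ref{lem:tech_proof_var_aux}: since $\Tr(\bSigma_+^2(\bSigma_++\nu_2)^{-2}) \leq \Tr(\bSigma_+)/\nu_2 \cdot \xi_{\evn+1}^2/\nu_2 = O(1/p)$ and $|\nu_2-\nu_{2,0}|/\nu_2 \leq 1/p$, the denominator $p - \Tr(\bSigma^2(\bSigma+\nu_2)^{-2})$ and $p - \Tr(\bSigma_0^2(\bSigma_0+\nu_{2,0})^{-2})$ differ by $O(1)$, while each of them can be as small as $\gamma_+/\nu_{2,0} \gtrsim 1/\rho_{\gamma_+}(p)$. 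The relative error in the denominator is therefore $O(\rho_{\gamma_+}(p)/p)$, and this is the source of the $\rho_{\gamma_+}(p)$ factor in the final bound.

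\emph{Main obstacle.} The routine part is the numerator estimates; the delicate point is the denominator in the second inequality. Because $p - \Tr(\bSigma_0^2(\bSigma_0+\nu_{2,0})^{-2})$ may be as small as $\gamma_+/\nu_{2,0}$, an additive $O(1)$ error inflates to a multiplicative $O(\rho_{\gamma_+}(p)/p)$ error, and one must verify that no extra factor appears when combining the numerator and denominator approximations. I would handle this through the same ``common denominator'' trick used in Lemma \ref{lem:tech_proof_var_aux}, and verify carefully that the bound $(1 - \Psi_2(\nu_{2,0}))^{-1} = p\nu_{2,0}/\gamma_+ \leq C\rho_{\gamma_+}(p)$ quoted there remains the only place where $\rho_{\gamma_+}(p)$ enters.
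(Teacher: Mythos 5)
Your plan is correct and follows essentially the same route as the paper, whose proof of this lemma is simply a pointer to the arguments of Lemmas \ref{lem:truncated_fixed_point} and \ref{lem:tech_proof_var_aux}: block-splitting $\bbeta_* = [\bbeta_0,\bbeta_+]$, the bounds $\xi_{\evn+1}^2/\nu_{2,0} \leq \gamma_+/(p^2\nu_{2,0}) \leq 1/p$ and $|\nu_2-\nu_{2,0}|/\nu_2 \leq 1/p$, and the denominator bound $(1-\Psi_2(\nu_{2,0}))^{-1} = p\nu_{2,0}/\gamma_+ \leq C\rho_{\gamma_+}(p)$, which is indeed the only place the $\rho_{\gamma_+}(p)$ factor enters. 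The one small slip is the claim $\Tr(\bSigma_+^2(\bSigma_++\nu_2)^{-2}) = O(1/p)$: since $\Tr(\bSigma_+)/\nu_2$ can be of order $p$, this trace is in general only $O(1)$, but that is harmless because your argument only needs the two denominators to differ by $O(1)$.
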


\begin{proof}[Proof of Lemma \ref{lem:aux_bias_aux}]
    This lemma follows from the same arguments as in the proofs of Lemma \ref{lem:truncated_fixed_point} and Lemma \ref{lem:tech_proof_var_aux}.
\end{proof}

\section{Details of the numerical illustration}
\label{app:numerics}
In this section we provide further examples of comparison between the excess risk computed using the deterministic equivalent in Theorem \ref{thm:main_test_error_RFRR} and numerical simulations, together with details about their realization. Results from numerical experiments are obtained averaging over 20-50 seeds. The data dimension is $d = 100$, with the exception of experiments involving real data (see Appendix \ref{app:numerics:real_data}) and the ones realized considering the Gaussian design described in Appendix \ref{app:numerics_gaussian_design}.
\subsection{Self-consistent equations}\label{app:self_consistent_eq}
To solve Equations \ref{eq:def:nu2} and \ref{eq:def:nu1} numerically, the following approach has been employed. From equation \ref{eq:def:nu1},
\begin{equation}
  \sqrt{\left(1-\frac{n}{p}\right)^2 + 4\frac{\lambda}{p\nu_2}} = 2\frac{\nu_1}{\nu_2} - 1 + \frac{n}{p}.  
\end{equation}
Substituting this expression in equation \ref{eq:def:nu2}, we obtain
\begin{align}
    &2\left(1 -\frac{\nu_1}{\nu2}\right) = \frac{2}{p}\Tr(\bSigma(\bSigma+\nu_2)^{-1})\\
    \stackrel{\nu_2>0}{\implies}&\nu_2 = \nu_1+\frac{\nu_2}{p}\Tr(\bSigma(\bSigma+\nu_2)^{-1}).
\end{align}
Therefore, the parameters $\nu_1$ and $\nu_2$ have been computed by iterating
\begin{align}\label{app:eq:self_consistent_1}
    \nu_1^{t+1} &= \frac{\nu_2^{t}}{2}\left[1-\frac{n}{p} +  \sqrt{\left(1-\frac{n}{p}\right)^2 + 4\frac{\lambda}{p\nu_2^t}} \right],\\
    \nu_2^{t+1} &= \nu_1^{t+1}+\frac{\nu_2^{t}}{p}\Tr(\bSigma(\bSigma+\nu_2^t)^{-1}),\label{app:eq:self_consistent_2}
\end{align}
until a chosen tolerance $\epsilon$ was reached.
\subsection{Gaussian design}\label{app:numerics_gaussian_design}
Figures \ref{fig:rates:examples}, \ref{fig:relative_error} and \ref{fig:rates:examples_appendix} have been realized considering Gaussian design for the vectors in 'feature space' defined in Appendix \ref{app:preliminaries}. In particular, fixed $\bSigma$ and $\bbeta_*$, we have drawn
\begin{equation*}
    \{\bg_i\}_{i\in[n]}\sim_{\text{i.i.d.}}\cN(\boldsymbol{0}, \I),\quad \{\boldsymbol{f}_j\}_{j\in[p]}\sim_{\text{i.i.d.}}\cN(\boldsymbol{0}, \bSigma),
\end{equation*}
and consequently $\{y_i = \bbeta_*^\top\bg_i\}_{i\in[n]}$. Then the random feature estimator can be computed according to eqs. (\ref{eq:estimator_fun_feature_space}) and (\ref{eq:estimator_a_feature_space}). In the figures produced with this setting, the elements of $\bbeta_*$ and $\operatorname{diag}(\bSigma)$ follow power-laws truncated at the component $10^4$.

\begin{figure}
\centering
\includegraphics[width=0.45\linewidth]{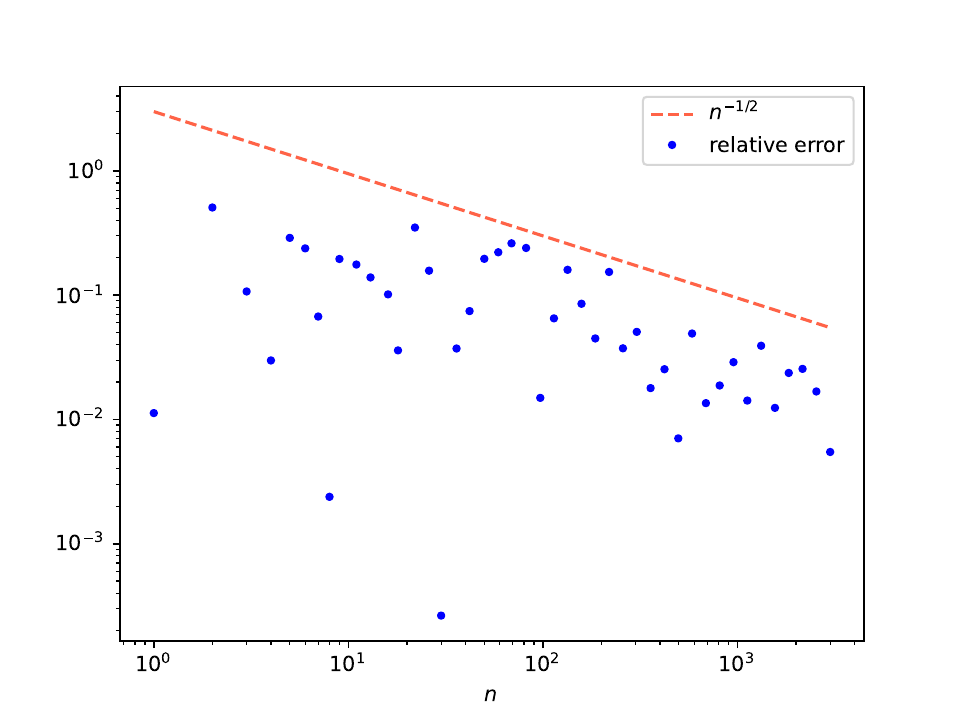}
\includegraphics[width=0.45\linewidth]{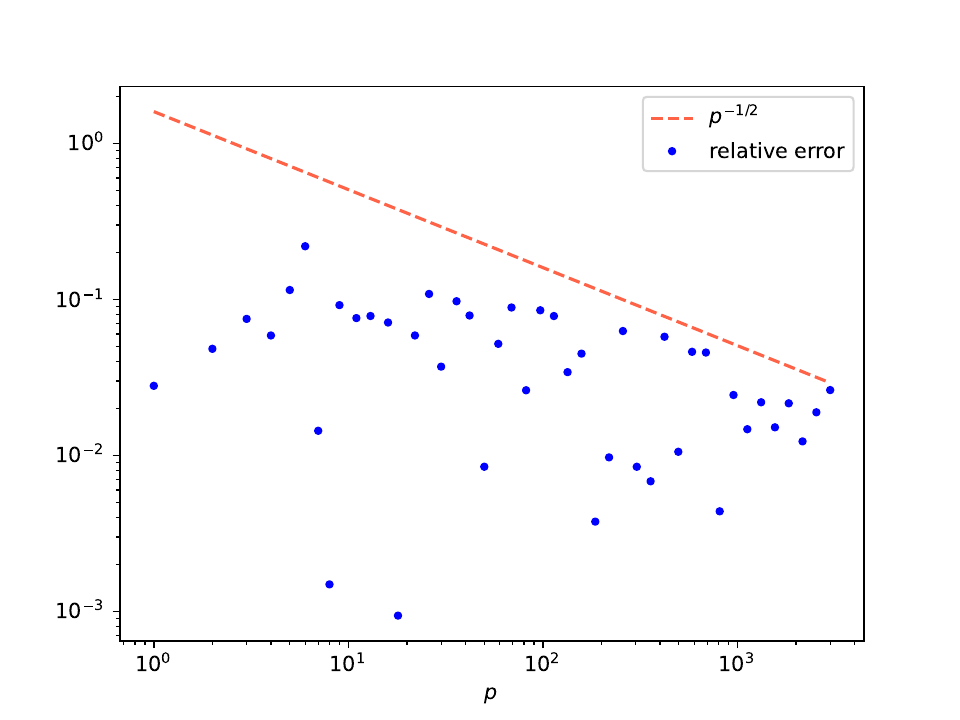}
\caption{Relative difference between the excess risk (\cref{eq:def:risk}) of random features ridge regression from numerical simulation and its deterministic equivalent (\Cref{thm:main_test_error_RFRR}), with regularization strength $\lambda  = 0.1$, and noise variance $\noisestd^2 = 0.1$. The relative error is $O((n\wedge p)^{-1/2})$, in agreement with \cref{eq:approximation_rate_test}. The simulations are made following the procedure described in \cref{app:numerics_gaussian_design}, with $\xi_k = k^{-1.2}$ and $\beta_{*,k} = k^{-1.46}$; (\textbf{left}) $p = 3000$ fixed (\textbf{right}) $n = 3000$ fixed.}
\label{fig:relative_error}
\end{figure}
\subsection{Empirical diagonalization}\label{app:empirical_diagonalization}
Whenever the data probability distribution $\mu_x$ or the weights distribution $\mu_w$ are unknown (e.g. in all cases involving real data), we estimated the matrix $\bSigma$ and the vector $\bbeta_*$ following the procedure described in this section and summarized in Algorithm \ref{alg:empirical_diag}.
Consider a data set of $N$ covariates $\{\bx_i\}_{i\in[N]}$ drawn from $\mu_{x}$ and $N$ noiseless labels $\{y_i = f_*(\bx_i)\}_{i\in[N]}$, and a set of $P$ weights $\{\bw_j\}_{j\in[P]}$. In Figures \ref{fig:examples}, \ref{fig:spherical}, \ref{fig:spike}, \ref{fig:MNIST_and_trained_network}, \ref{fig:CNN}, for which this procedure was used, we take both $N,P = 10^4$ (with the exception of Fig. \ref{fig:MNIST_and_trained_network} (right), where $P = 8000$) and approximate $\mu_x$ and $\mu_w$ respectively with the empirical distributions $\Tilde{\mu}_x = \sum_{i=1}^N N^{-1}\delta(\bx-\bx_i)$ and $\Tilde{\mu}_w = \sum_{j=1}^P P^{-1}\delta(\bw-\bw_j)$. Then eq. (\ref{eq:def:kernel}) becomes
\begin{equation}
     K(\bx,\bx') = \mathbb{E}_{\bw\sim \mu_{w}}\left[\activation(\bx, \bw)\activation(\bx',\bw)\right] = \sum_{j=1}^P P^{-1}\activation(\bx,\bw_j)\activation(\bx', \bw_j),
\end{equation}
and since eq. (\ref{eq:diag_Top}) implies $\E_{\bx} K(\bx,\bx')\psi_{k}(\bx) = \xi_k^2\psi_k(\bx')$, defining the Gram matrix $\boldsymbol{K}^{\text{emp}}\in\R^{N\times N}$ with elements $\Tilde{K}_{ii'} = K(\bx_i, \bx_{i'}) N^{-1}$ and the vectors $\boldsymbol{\tilde{\psi}}^{k} = (\psi_k(\bx_1),\ldots,\psi_k(\bx_N))^\top$, we can write the following eigenvalue problems, for $k \in [N]$:
\begin{equation}\label{eq:empirical_eig_problem}
    \boldsymbol{\Tilde{K}}\boldsymbol{\Tilde{\psi}}_k = \Tilde{\xi}_k^2\boldsymbol{\Tilde{\psi}}_k.
\end{equation}
We then constructed the matrix $\Tilde{\bSigma} = \operatorname{diag}(\Tilde{\xi}_1^2, \ldots, \Tilde{\xi}_N^2)$ and used it as an approximation of $\bSigma$. One should note that in this situation $\E_\bx \psi_k(\bx)\psi_{k'}(\bx) = \delta_{kk'}$ corresponds to $\boldsymbol{\Tilde{\psi}}_k\boldsymbol{\Tilde{\psi}}_{k'} = N\delta_{kk'}$.\\
Similarly, eq. (\ref{eq:def:fstar_decomposition}) implies $ \beta_{*,k} = \E_\bx\left[f_*(\bx)\psi_k(\bx)\right] $, which can be approximated by
\begin{equation}
    \Tilde{\beta_{k}} = N^{-1}\boldsymbol{y}^\top\boldsymbol{\Tilde{\psi}}_k.
\end{equation}

\begin{algorithm}
\caption{Empirical diagonalization}\label{alg:empirical_diag}
\begin{algorithmic}
\Require $\{\bx_i\}_{i\in[N]}\sim_{\text{i.i.d.}}\mu_x$,
$\{y_i = f_*(\bx_i)\}_{i\in[N]}$, $\{\bw_j\}_{j\in[P]}\sim_{\text{i.i.d.}}\mu_w$
\Ensure $\bSigma$, $\bbeta_*$, $\sR_{n,p} (\bbeta_*, \lambda)$
\For{$i,i'\in\{1,\ldots,N\}$}
    \State $\Tilde{K}_{ii'} \gets (NP)^{-1}\sum_{j=1}^P \activation(\bx_i,\bw_j)\activation(\bx_{i'},\bw_j)$
\EndFor
\State $\{(\Tilde{\xi}_k, \boldsymbol{\Tilde{\psi}}_k)_{k\in[N]}\} \gets \operatorname{eig}(\boldsymbol{\Tilde{K}})$ \Comment{$\boldsymbol{\Tilde{\psi}}_k\boldsymbol{\Tilde{\psi}}_{k'} \stackrel{!}{=} N\delta_{kk'}$}
\For{$k\in\{1,\ldots,N\}$}
    \State $\Tilde{\beta}_k \gets N^{-1}\by^\top\boldsymbol{\Tilde{\psi}}_k$
\EndFor
\State $\bSigma \gets \operatorname{diag}(\Tilde{\xi}_1,\ldots,\Tilde{\xi}_N)$
\State $\bbeta_* \gets (\beta_1,\ldots,\beta_N)^\top$\\
Iterate eqs. (\ref{app:eq:self_consistent_1}-\ref{app:eq:self_consistent_2}) up to tolerance $\epsilon$\\
Compute the deterministic equivalent for the excess risk (\ref{eq:def_bias_equivalent}-\ref{eq:def_risk_equivalent})
\end{algorithmic}
\end{algorithm}

\begin{figure}
\centering
\includegraphics[width = \linewidth]{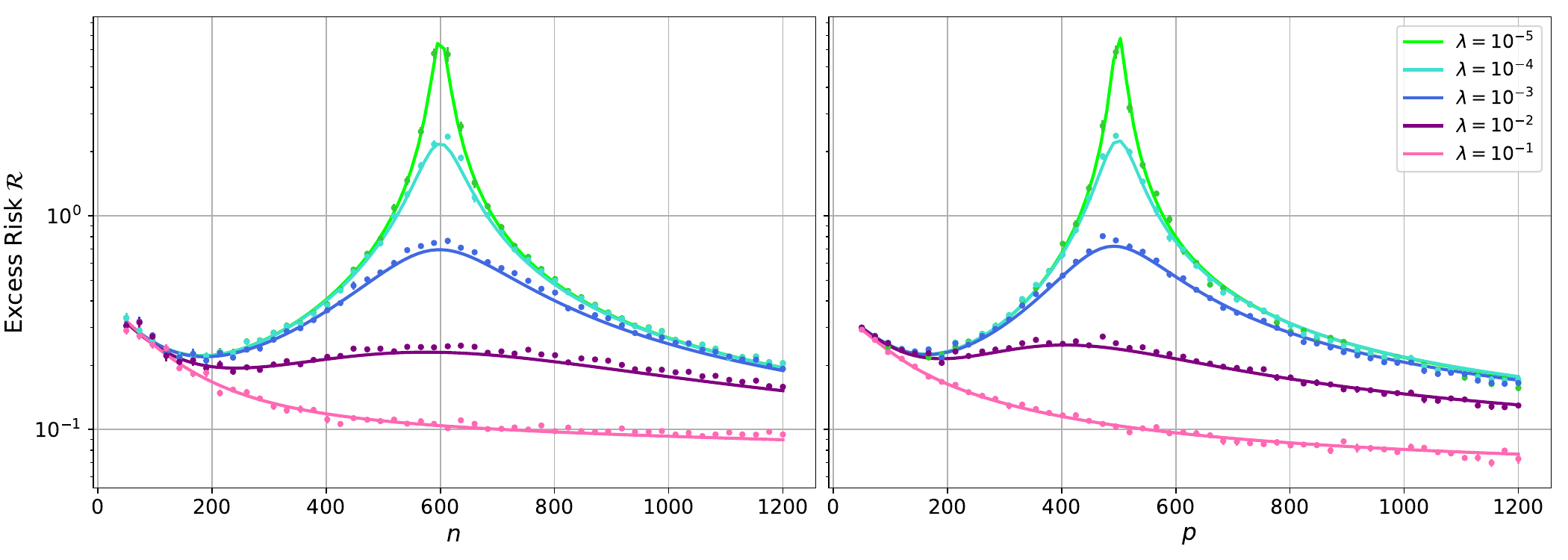}
\caption{Excess risk \cref{eq:def:risk} of random features ridge regression. Solid lines are obtained from the deterministic equivalent in \Cref{thm:main_test_error_RFRR}, and points are numerical simulations, with the different curves denoting different regularization strengths $\lambda\geq 0$. Training data $(\bx_{i},y_{i})_{i\in[n]}$, sampled from a teacher-student model $y_{i}=\operatorname{tanh}(\langle \bbeta, \bx_i\rangle) +\varepsilon_i$, $\noisestd^2=0.1$, with random feature map $\activation(\bx, \bw) = \operatorname{ReLU}(\langle \bw,\bx\rangle)$. Both covariates $\{\bx_i\}$ and weights $\{\bw_i\}$ are uniformly sampled from the $d$-dimensional spheres respectively with radius  $\sqrt{d}$ and $1$. (\textbf{Left}) Excess risk as a function of $n$, with $p = 600$ fixed. (\textbf{Right}) Excess risk as a function of $p$, with $n = 500$ fixed.}
\label{fig:spherical}
\end{figure}

\begin{figure}
\centering
\includegraphics[width = \linewidth]{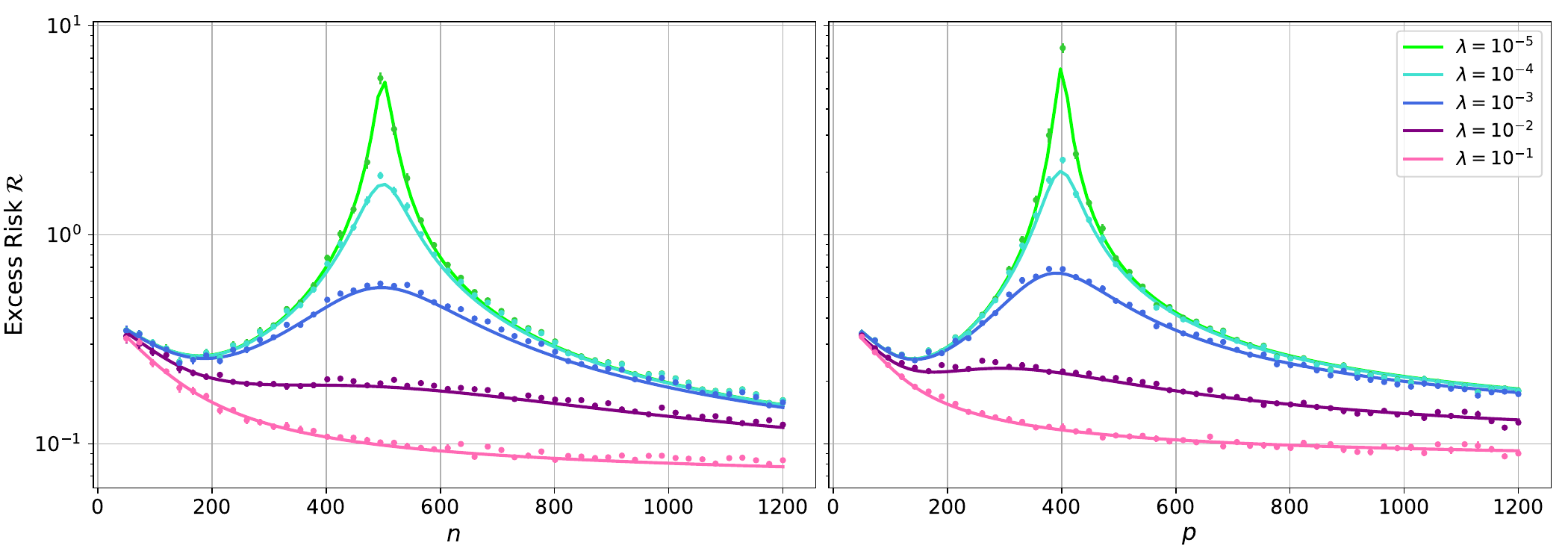}
\caption{Excess risk \cref{eq:def:risk} of random features ridge regression. Solid lines are obtained from the deterministic equivalent in \Cref{thm:main_test_error_RFRR}, and points are numerical simulations, with the different curves denoting different regularization strengths $\lambda\geq 0$. Training data $(\bx_{i},y_{i})_{i\in[n]}$, sampled from a teacher-student model $y_{i}=\operatorname{tanh}(\langle \bbeta, \bx_i\rangle) +\varepsilon_i$, $\noisestd^2=0.1$, $\bx_{i}\sim_{\text{i.i.d.}}\mathcal{N}(0,\bI_{d})$, with a spiked random feature map $\activation(\bx, \bw) = \operatorname{erf}(\langle \bw + u\bv,\bx\rangle)$ where $\bw\sim\cN(0,d^{-1}\bI_d)$, $\bv\in\mathbb{R}^{d}\sim\cN(0, d^{-1}\bI_d)$, and $u\sim\cN(0,1)$. (\textbf{Left}) Excess risk as a function of $n$, with $p = 500$ fixed. (\textbf{Right}) Excess risk as a function of $p$, with $n = 300$ fixed.}
\label{fig:spike}
\end{figure}
\subsection{Real data}\label{app:numerics:real_data}
We performed numerical simulations sampling the training data from the MNIST data set \cite{lecun98} and the FashionMNIST data set \cite{xiao2017fashionmnist}, standardizing both covariates and labels, reshaping the images into vectors with $d = 748$. Results are shown in Figures \ref{fig:examples} (right) and \ref{fig:MNIST_and_trained_network} (left).

\subsection{Trained network}\label{app:numerics:trained}
In Figure \ref{fig:MNIST_and_trained_network} (right), we apply the procedure described in Appendix \ref{app:empirical_diagonalization} to the trained weights of a two layer neural network with hidden layer of size $p$
\begin{equation*}
    \hat{f}(\bx;\bW,\ba) = \frac{1}{\sqrt{p}}\sum_{j=1}^pa_j\activation(\langle\bx,\bw_j\rangle).
\end{equation*}
At initialization, the weights are randomly drawn; then, after sampling a training dataset $\bX_{\text{tr}}\in\R^{n_{\text{tr}}\times d}$, $\by_{\text{tr}}\in\R^{n_{\text{tr}}}$, the weights of the first layer are trained using gradient descent, iterating for $t = 1,\ldots,T$ the following
\begin{equation}
   \bW_{t+1} = \bW_{t} + \frac{\eta}{n_{\text{tr}}}\bX_{\text{tr}}^\top \left[\left((\by_{\text{tr}} - \hat{f}(\langle \bX_{\text{tr}}; \bW_{t},\ba))^\top \ba\right)\odot \activation'(\bX_{\text{tr}}\bW_{t}^\top)\right],
   \end{equation} 
where $\eta$ is the learning rate, $\odot$ is the Hadamard product, $\hat{f}$ and $\activation$ are applied component-wise; finally, the weights of the second layer are minimized using ridge regression, as in eq. (\ref{eq:def:erm}).

\begin{figure}
\centering
\includegraphics[width = \linewidth]{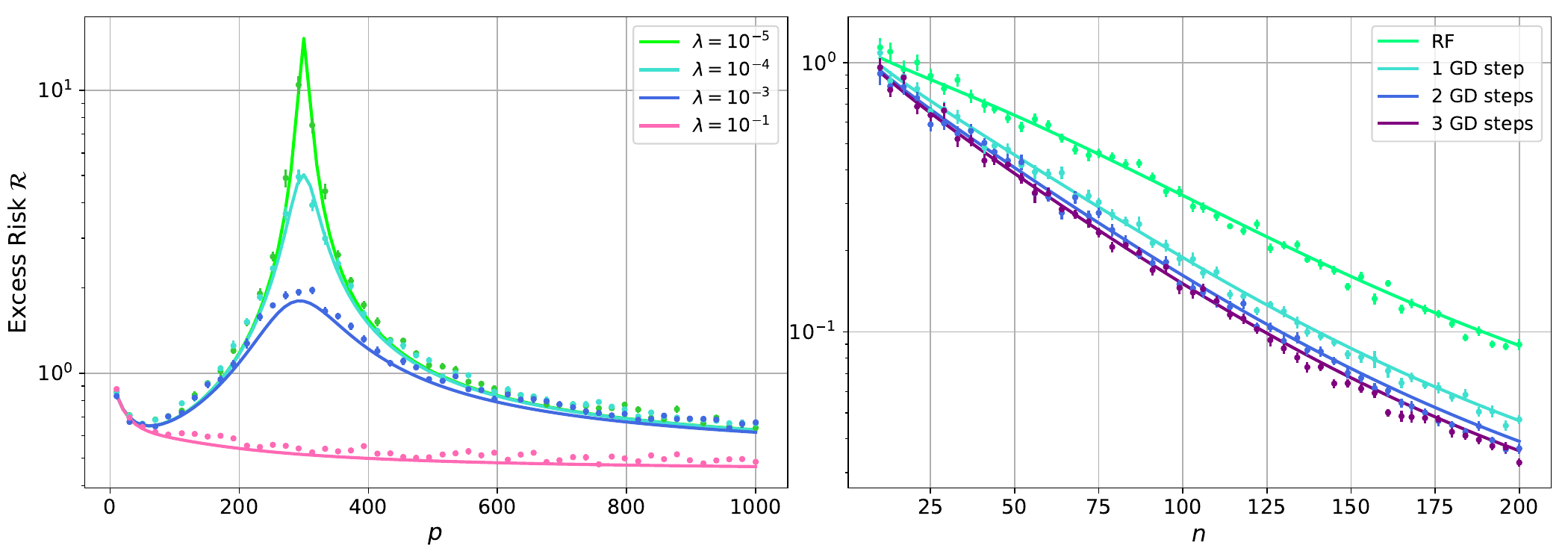}
\caption{(\textbf{Left}) Excess risk \cref{eq:def:risk} of random features ridge regression. Solid lines are obtained from the deterministic equivalent in \Cref{thm:main_test_error_RFRR}, and points are numerical simulations, with the different curves denoting different regularization strengths $\lambda\geq 0$. Training data $(\bx_{i},y_{i})_{i\in[n]}$, $n = 300$, sub-sampled from the MNIST data set \cite{lecun98}, with feature map given by $\activation(\bx, \bw) = \erf(\langle\bw,\bx\rangle)$ and $\mu_w = \cN(0, d^{-1}\bI_d)$. (\textbf{Right}) Excess risk \cref{eq:def:risk} of random features ridge regression. Solid lines are obtained from the deterministic equivalent in \Cref{thm:main_test_error_RFRR}, and points are numerical simulations, with the different curves denoting different number of total iterations of gradient descent on the weight of the first layer with learning rate $\eta = 10^{-2}$, before training the second layer with regularization strength $\lambda=10^{-4}$ (details in Appendix \ref{app:numerics:trained}). Zero iterations corresponds to random feature regression (RF). Training data $(\bx_{i},y_{i})_{i\in[n]}$, sampled from a teacher-student model $y_{i}=\langle \bbeta, \bx_i\rangle$, with random feature map $\activation(\bx, \bw) = \operatorname{ReLU}(\langle \bw,\bx\rangle)$ and $p = 8000$ fixed. Both covariates $\{\bx_i\}$ and initialization weights $\{\bw_i\}$ are uniformly sampled from the $d$-dimensional spheres respectively with radius  $\sqrt{d}$ and $1$.}
\label{fig:MNIST_and_trained_network}
\end{figure}

\begin{figure}
\centering
\includegraphics[width = \linewidth]{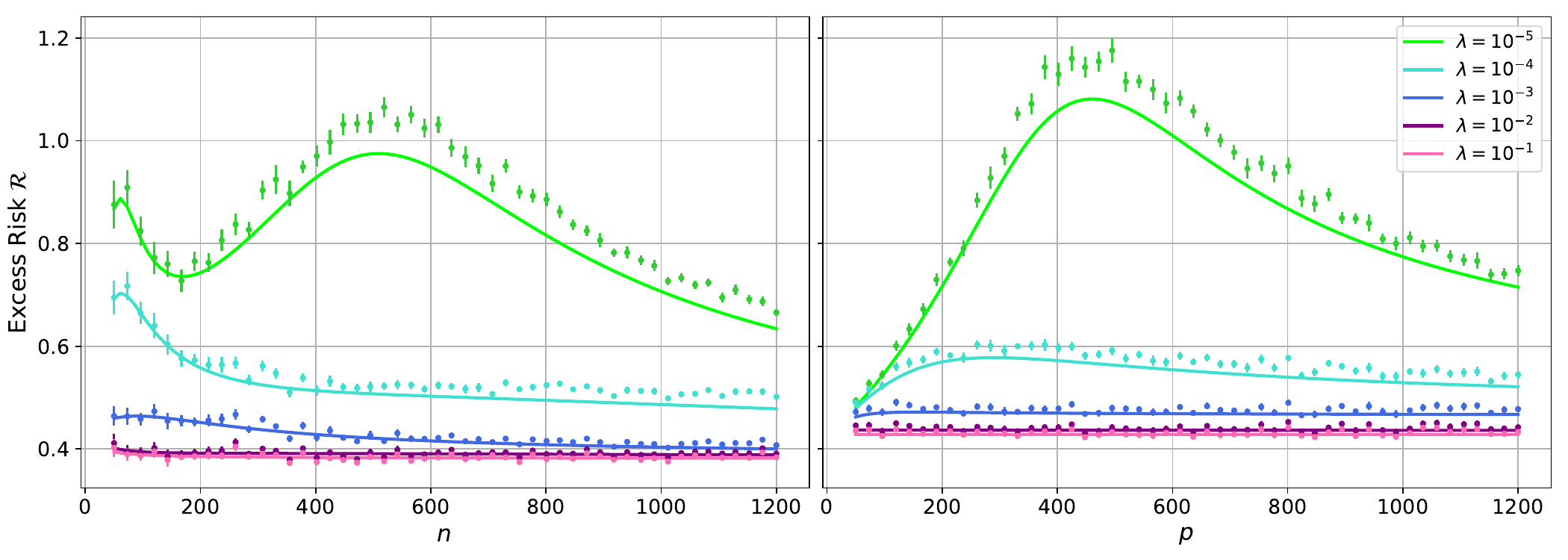}
\caption{Excess risk \cref{eq:def:risk} of random features ridge regression. Solid lines are obtained from the deterministic equivalent in \Cref{thm:main_test_error_RFRR}, and points are numerical simulations, with the different curves denoting different regularization strengths $\lambda\geq 0$. Training data $(\bx_{i},y_{i})_{i\in[n]}$, sampled from a teacher-student model $y_{i}=\operatorname{tanh}(\langle \bbeta, \bx_i\rangle) +\varepsilon_i$, $\noisestd^2=0.1$, with random feature map given by the convolutional features with global average pooling $\activation(\bx, \bw) = \sfrac{1}{d}\sum_{\ell=1}^{d}{\rm ReLU}(\langle \bw,g_{\ell}\cdot\bx\rangle)$ where $g_{\ell}\cdot \bx = (x_{\ell+1},\dots,x_{d},x_{1},\dots,x_{\ell})$ is the $\ell$-shift operator with cyclic boundary conditions. Both covariates $\{\bx_i\}$ and weights $\{\bw_i\}$ are uniformly sampled from the $d$-dimensional spheres respectively with radius  $\sqrt{d}$ and $1$. (\textbf{Left}) Excess risk as a function of $n$, with $p = 500$ fixed. (\textbf{Right}) Excess risk as a function of $p$, with $n = 500$ fixed. The discrepancy between theoretical results and numerical experiments is $\approx\nicefrac{\cR}{\sqrt{n\wedge p}}$, compatible with the approximation rate in eq. \eqref{eq:approximation_rate_test}.}
\label{fig:CNN}
\end{figure}

\section{Derivation of the rates}
\label{app:rates}
In this appendix we present a sketch of the derivation of the decay rates for the excess error given in Section \ref{sec:scaling}. We choose  
\begin{align*}
    p = n^q,\quad \lambda = n^{-(\ell-1)}, \quad \text{ with } q,l\geq0
\end{align*}
and we assume 
\begin{equation*}
    \eta_k = k^{-\alpha},\quad \beta_{*k} = k ^{-\frac{1+2\alpha r}{2}}, \quad\text{ with }\alpha>1,\;r>0,
\end{equation*}
which follows the source and capacity conditions stated in (\ref{eq:def:sourcecapacity}).
In order to simplify the derivation of the rates, we introduce the following notation:
\begin{align*}
    T^s_{\delta\gamma}(\nu) \coloneqq \sum_{k = 1}^\infty \frac{k^{-s-\delta\alpha}}{(k^{-\alpha}+\nu)^{\gamma}},&& s \in {0,1},\;0\leq\delta\leq\gamma.
\end{align*}
For $s+\alpha(\delta - \gamma) < 1$ and in the limit $\nu\to0$, this term can be written as a Riemann sum as follows
\begin{align}\label{eq:T_Riemann_sum}
    T^s_{\delta\gamma}(\nu) &= \nu^{-\gamma+\delta+\nicefrac{s}{\alpha}}\sum_{k = 1}^\infty \frac{(k\nu^{\nicefrac{1}{\alpha}})^{-s-\delta\alpha}}{((k\nu^{\nicefrac{1}{\alpha}})^{-\alpha}+1)^{\gamma}}\\
    &\stackrel{\nu\to0}{\approx} \nu^{-(\gamma-\delta)-\nicefrac{(1-s)}{\alpha}}\int_{\nu^{\nicefrac{1}{\alpha}}}^\infty \frac{x^{-s-\delta\alpha}}{(x^{-\alpha}+1)^\gamma} = O\left( \nu^{-(\gamma-\delta)-\nicefrac{(1-s)}{\alpha}}\right).
\end{align}
Otherwise, if $s+\alpha(\delta - \gamma) > 1$, we can write:
\begin{align*}
    T^s_{\delta\gamma}(\nu) &= \sum_{k = 1}^{\left \lfloor\nu^{\nicefrac{1}{\alpha}}\right \rfloor} \frac{k^{-s-\delta\alpha}}{(k^{-\alpha}+\nu)^{\gamma}} + \sum_{k = \left \lfloor\nu^{\nicefrac{1}{\alpha}}\right \rfloor + 1}^{\infty} \frac{k^{-s-\delta\alpha}}{(k^{-\alpha}+\nu)^{\gamma}}\\
    &\stackrel{\nu\to0}{\approx} O(1) +  \nu^{-(\gamma-\delta)-\nicefrac{(1-s)}{\alpha}}\int_{1+\nu^{\nicefrac{1}{\alpha}}}^\infty \frac{x^{-s-\delta\alpha}}{(x^{-\alpha}+1)^\gamma} = O(1).
\end{align*}
Hence, for $\nu\to0$, 
\begin{equation}\label{eq:rates:T}
T_{\delta\gamma}^{s}(\nu) = O\left(\nu^{\nicefrac{1}{\alpha}\left[s-1 + \alpha(\delta-\gamma)\right]\wedge0}\right).
\end{equation}
Rewriting (\ref{eq:def:nu2}-\ref{eq:def:nu1}) as follows, 
we study the dependence of the positive parameters $\nu_1$ and $\nu_2$ with $n$:
\begin{align*}
\begin{cases}
        \nu_2  = \frac{\nu_2}{p}T^0_{11}(\nu_2) + \nu_1\\
        \nu_1 = \frac{\nu_1}{n}T^0_{11}(\nu_2) + \frac{\lambda}{n}
        \end{cases}
\end{align*}.
In the limit $\samples\to\infty$, we can distinguish the following regimes
\begin{align*}
    &\begin{cases}
        T_{11}^0(\nu_2) \ll n,p \\
        \nu_1 = n^{-1}\lambda\left(1 + O\left(T_{11}^0(\nu_2)n^{-1}\right)\right)\\
       \nu_2 = n^{-1}\lambda \left(1 + O\left(T_{11}^0(\nu_2)(n\wedge p)^{-1}\right)\right) 
    \end{cases} \stackrel{\nu_2 = o(1)}{\implies} 
    \begin{cases}
        \nu_2^{-\nicefrac{1}{\alpha}} \ll n^{1\wedge q} \implies \ell < \alpha(1\wedge q)\\
        \nu_1 \approx \nu_2 \approx n^{-\ell}
    \end{cases}\\
    &\begin{cases}
        T_{11}^0(\nu_2)\ll p\\
        \nu_1 \gg \lambda n^{-1}\\
        T_{11}^0(\nu_2) = n - \left(n\nu_1\right)^{-1}\lambda\\
        \nu_2 = \nu_1(1 + O\left(T_{11}^0(\nu_2)p^{-1}\right))
        \end{cases}\stackrel{(a)}{\implies}
        \begin{cases}
            \nu_2^{-\nicefrac{1}{\alpha}}\ll n^{q}\\
            \nu_1 \gg n^{-\ell} \\
            \nu_2^{-\nicefrac{1}{\alpha}} =O\left(n + o(n)\right)\\
            \nu_2 = \nu_1\left(1 +o(1)\right) 
        \end{cases}\implies
        \begin{cases}
            q>1,\;\ell>\alpha\\
            \nu_1\approx\nu_2\propto n^{-\alpha}
        \end{cases}\\
        &\begin{cases}
            T_{11}^0(\nu_2) \ll n\\
            \nu_1 \ll \nu_2\\
            \nu_1 = n^{-1}\lambda\left(1 + O\left(T_{11}^0(\nu_2)n^{-1}\right)\right)\\
            T_{11}^0(\nu_2) = p - \nu_1\nu_2^{-1}
        \end{cases}\stackrel{(a)}{\implies}
        \begin{cases}
            \nu_2^{-\nicefrac{1}{\alpha}}\ll n\\
            \nu_1 \ll \nu_2 \\
            \nu_1 = n^{-\ell}(1+ o(1))\\
            \nu_2^{-\nicefrac{1}{\alpha}} = n^{q} - o(1)
        \end{cases}\implies
        \begin{cases}
            q < \left(1 \wedge \ell\alpha^{-1}\right)\\
            \nu_1 \approx n^{-\ell}\\
            \nu_2 \propto n^{-\alpha q}
        \end{cases}
\end{align*}
In $(a)$ we have used the fact that, for $\nu_2$ constant or diverging with $n$, $T_{11}^0(\nu_2)$ is respctively $O(1)$ or infinitesimal, while we have that $T_{11}^0(\nu_2)$ is diverging in both cases. Hence, $\nu_2$ must be infinitesimal, allowing us to use (\ref{eq:T_Riemann_sum}).\\
In conclusion, as $n\to \infty$,
\begin{align}
    \nu_1 &\approx 
    \begin{cases}
        O\left(n^{-\alpha}\right),\quad\text{for }q>1\text{ and }\ell>\alpha, \label{eq:rates:nu1}\\
        n^{-\ell},\quad\text{otherwise}
    \end{cases}\\
    \nu_2 &\approx O\left(n^{-\alpha\left(1 \wedge q \wedge \nicefrac{\ell}{\alpha}\right)}\right),\label{eq:rates:nu2}
\end{align}
in particular 
\begin{equation}\label{eq:rates:nu1nu2_ratio}
\frac{\nu_1}{\nu_2} = 
\begin{cases}1-O\left(\nu_2^{-\nicefrac{1}{\alpha}}n^{-q}\right),\quad \text{for } q > 1 \wedge \nicefrac{\ell}{\alpha} \\
O\left(n^{-\ell + \alpha q}\right) = o(1)\quad\text{otherwise}
\end{cases}.
\end{equation}

\subsection{Variance term}
Considering the results (\ref{eq:rates:nu1}-\ref{eq:rates:nu1nu2_ratio}), we can write eq. (\ref{eq:def:Upsilon}) as

\begin{align}
    \Upsilon (\nu_1,\nu_2) &= \frac{p}{n} \left[ \left(1 -  \frac{\nu_1}{\nu_2} \right)^2 +   \left(  \frac{\nu_1}{\nu_2}\right)^2  \frac{T_{22}^0(\nu_2)}{p -T_{22}^0(\nu_2)} \right]\\
    &=\begin{cases}
        n^{-1} O(\nu_2^{-\nicefrac{1}{\alpha}}) = O\left(n^{-(1-(1\wedge\nicefrac{\ell}{\alpha}))}\right),\quad \text{for } q > 1 \wedge \nicefrac{\ell}{\alpha}\\
        n^{-(1-q)}(1 + o(1))\quad\text{otherwise}
    \end{cases} 
\end{align}
One could notice, using the integral approximation of the Riemann sum $T_{22}^0(\nu_2)$ given in (\ref{eq:T_Riemann_sum}), that $1 - \Upsilon(\nu_1, \nu_2) = O(1)$ for any choice of $\ell$ and $q$. Hence, the variance term given by (\ref{eq:def_variance_equivalent}) decays with $n$ with rate
\[
\gamma_\cV(\ell, q) = 1 - \left(\frac{\ell}{\alpha}\wedge q\wedge 1\right). 
\]
\subsection{Bias term}
Using again (\ref{eq:rates:nu1}-\ref{eq:rates:nu1nu2_ratio}), we can compute the rate of $\chi(\nu_2)$ defined in eq. (\ref{eq:def:chi}), as $n\to\infty$:
\begin{align*}
    \chi (\nu_2)  =\frac{T_{12}^0(\nu_2)}{p - T_{22}^0(\nu_2))} &= n^{-q}O\left(\nu_2^{-1-\nicefrac{1}{\alpha}}\right)\left(1 + n^{-q}O\left(\nu_2^{-\nicefrac{1}{\alpha}}\right)\right)\\
    &=n^{-q}O\left(\nu_2^{-1-\nicefrac{1}{\alpha}}\right)
\end{align*}
Using the integral approximation given in (\ref{eq:T_Riemann_sum}), one could verify that $p - T_{22}^0(\nu_2) = O(p)$ for any choice of $\ell$ and $q$.\\
The deterministic equivalent for the bias term, given in eq. (\ref{eq:def_bias_equivalent}), can be written as
\begin{align}
\sB_{n,p} (\bbeta_*,\lambda) &=\frac{\nu_2^2}{1 -  \Upsilon (\nu_1,\nu_2)} \left( T_{2r,2}^1(\nu_2) + \chi (\nu_2) T_{2r+1,2}^1(\nu_2)\right)\\
&=O\left(\nu_2^2\right)O\left(\nu_2^{2(r-1\wedge 0)} +  n^{-q}\nu_2^{-1-\nicefrac{1}{\alpha}+(2r-1)\wedge 0}\right)\\
&=O\left(\nu_2^{2(r\wedge1)}+n^{-q}\nu_2^{-\nicefrac{1}{\alpha}+2(r\wedge\nicefrac{1}{2})}\right) \label{eq:rates:bias_term}
\end{align}
where we have used (\ref{eq:rates:T}) to compute the scalings of the terms $T^1{\delta\gamma}$ and the fact that $1 - \Upsilon(\nu_1,\nu_2) = O(1)$.\\
From eq. (\ref{eq:rates:bias_term}), and using the result (\ref{eq:rates:nu2}), it is straightforward to see that the decay rate of the bias term is given by
\begin{equation}
    \gamma_{\cB} =  \left[2\alpha\left(\frac{\ell}{\alpha}\wedge q\wedge1\right) (r\wedge 1)\right]\wedge \left[\left(2\alpha\left(r\wedge\frac{1}{2}\right)-1\right)\left(\frac{\ell}{\alpha}\wedge q\wedge1\right) + q\right].
\end{equation}
Examples of the results of Theorem \ref{thm:rates:risk} and Corollary \ref{thm:rates:optimal} are shown in Fig. \ref{fig:rates:examples} and \ref{fig:rates:examples_appendix}.
\begin{figure}
\centering
\includegraphics[width = \linewidth]{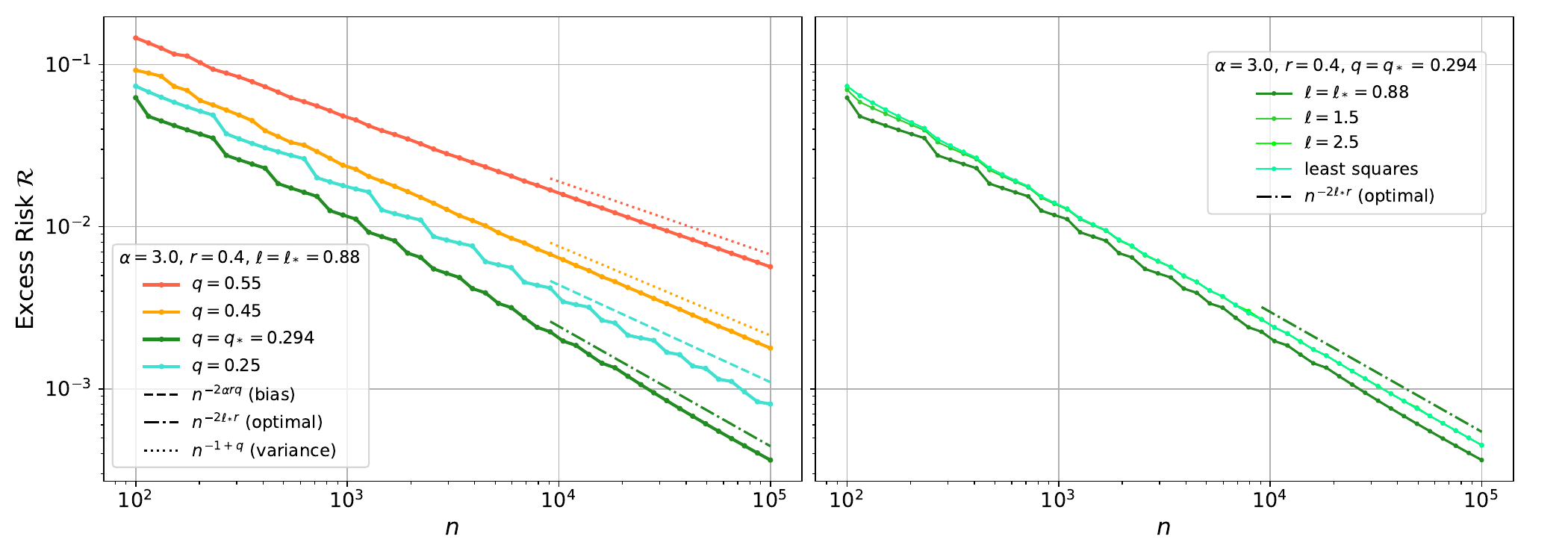}
\caption{Excess risk \cref{eq:def:risk} of random features ridge regression as a function of the number of samples $n$ under source and capacity conditions \cref{eq:def:sourcecapacity} and power-law assumptions $\lambda = n^{-(\ell-1)}$, $p=n^{q}$, with noise variance $\sigma_\varepsilon^2 = 1$, obtained from the deterministic equivalent \Cref{thm:main_test_error_RFRR}. Dashed and dotted lines are the analytical rates from Theorem \ref{thm:rates:risk}, stated in the legend. The colour scheme is the following: variance dominated region: orange and brown for the slow decay regime; cyan for the bias dominated region; shades of green for the optimal decay (red lines in Fig. \ref{fig:rates_rlarge} (right). In particular we show: (\textbf{left}) the crossover between the {\color{orange}orange} and {\color{teal}teal} regions in Fig. \ref{fig:rates_rlarge} at fixed regularization and $r<\nicefrac{1}{2}$; (\textbf{right}) the optimal decay rate along the horizontal red line line in Fig. \ref{fig:rates_rlarge} at $q = q_*$ and $r<\nicefrac{1}{2}$, for any $\lambda \leq \lambda_*$, included the non regularized case.}
\label{fig:rates:examples_appendix}
\end{figure}

\subsection{Details of Remark \ref{remark:rates}}
In order to extend the results of Theorem \ref{thm:rates:risk} and Corollary \ref{thm:rates:optimal} to the excess risk defined in (\ref{eq:def:risk}), in this section we compute the intervals for $\ell$ and $q$ such that the assumptions of \Cref{thm:main_test_error_RFRR} hold and the approximation rates $\cE(n,p)$ are vanishing,  under source and capacity conditions.\\ 
Given $n$, $p = n^q$, $\lambda^{-(\ell-1)}$ and $\nu_2$ as in (\ref{eq:rates:nu2}) assumption \ref{ass:technical} is verified for ${\rm m} = n^{q+\ell}$ and $\sfC_* = O(\nu_2^{-1})$. In fact
\begin{align}
    \sum_{k={\rm m}+1}^\infty \xi^2_k >  \int_{\rm m+1}^\infty x^{-\alpha} = \frac{({\rm m}+1)^{1-\alpha}}{\alpha - 1}
\end{align}
and the inequality in (\ref{eq:condspect}) holds if
\begin{align}
    n^{2q}({\rm m}+1)^{-\alpha}\leq n^{q-\ell}\frac{({\rm m+1})^{1-\alpha}}{\alpha - 1} \implies {\rm m} \geq n^{q+\ell}(\alpha - 1) - 1.
\end{align}
The inequalities in (\ref{eq:ass_overparametrized}) can be written as
\begin{align}
    \sfC_*&\geq\frac{T^0_{11}(\nu_2)}{T^{0}_{22}(\nu_2)} = O(1),\\
    \sfC_*&\geq \frac{T^1_{2r,1}(\nu_2)}{\nu_2T^1_{2r,2}(\nu_2)} = O\left(\nu_2^{-((0\vee(2r-1))\wedge 1)}\right).
\end{align}
Then, introducing $\eta_*\in(0,\nicefrac{1}{2})$, we can compute the following quantities of interest (introduced in \cref{eq:def_trho_n_p,eq:def_rho_p,eq:def_rho_tilde_n_p}):
\begin{align}
    r_{\bSigma}(\lfloor \eta_* \cdot k\rfloor) &= \frac{\Tr(\bSigma_{\geq \lfloor \eta_* \cdot k\rfloor})}{||\bSigma_{\geq \lfloor \eta_* \cdot k\rfloor}||_{\rm op}} = O(\lfloor \eta_* \cdot k\rfloor) = O(k)\\
     M_\bSigma (k) &\coloneqq 1 + \frac{r_{\bSigma} (\lfloor \eta_* \cdot k \rfloor) \vee k}{k} \log \left( r_{\bSigma} (\lfloor \eta_* \cdot k \rfloor) \vee k \right) = 1+O(\log k),\\
     \rho_\kappa (p) &\coloneqq 1 + \frac{p \cdot \xi^2_{\lfloor \eta_* \cdot p \rfloor}}{\kappa}  M_\bSigma (p) = 1 + O\left(\frac{n^{q(1-\alpha)}}{\kappa}\log n\right), \\
     \trho_\kappa (n,p) &\coloneqq 1 + \ind [ n \leq p/\eta_*] \cdot \left\{ \frac{n \xi_{\lfloor \eta_* \cdot n \rfloor}^2}{\kappa} + \frac{n}{p} \cdot \rho_\kappa (p)\right\} M_\bSigma (n) \\&\stackrel{n\geq \eta_*^{1/(1-q)}}{=} 1 + \ind[q\geq 1]O\left(\frac{n^{1-\alpha}}{\kappa}\log n\right). 
\end{align}
Similarly, considering $\nu_1$ scaling as 
 in \cref{eq:rates:nu1}, we have that \cref{eq:def_gamma_lambda_gamma_plus}

\begin{align}
    \gamma_\lambda &= \frac{p\lambda}{n} + \sum_{k = \evn+1}^\infty \xi_k^2=O\left(n^{q-\ell} + n^{(q +\ell)(1-\alpha)}\right), \\
    \gamma_+ &= p \nu_1 + \sum_{k = \evn+1}^\infty \xi_k^2 =O\left(\ind[q\geq1]n^{q-(\ell\wedge \alpha)}+\ind[q<1]n^{q-\ell}+n^{(q +\ell)(1-\alpha)}\right)\\
    &=O\left(\ind[q\geq1]n^{q-(\ell\wedge \alpha)}+\ind\left[\frac{\ell}{\alpha}(2-\alpha)\leq q<1\right]n^{q-\ell}+\ind\left[q<\frac{\ell}{\alpha}(2-\alpha)\right]n^{(q +\ell)(1-\alpha)}\right).
    \end{align}
The last step is a consequence of 
\begin{align}
    q-(\ell\wedge\alpha)>(q+\ell)(1-\alpha)&\implies q > \frac{\ell}{\alpha}\underbrace{(1-\alpha)}_{<0}+\left(\frac{\ell}{\alpha}\wedge 1\right),\\
    q-\ell > (q+\ell)(1-\alpha) &\implies q > \frac{\ell}{\alpha}(2-\alpha)
\end{align}
Fixing $K>0$ and considering the , we consider condition \eqref{eq:conditions_test_error}:
\begin{align}
    \lambda \geq n^{-K}&\impliedby \ell \leq 1+K,\label{app:rates:condition_test_error}\\
    \gamma_\lambda \geq p^{-K}&\impliedby \begin{cases}
       \ell \leq q(1+K) \\
       q > \frac{(2-a)}{a}\ell
       \end{cases}\vee
    \begin{cases}
       \ell \leq \frac{q(1+K-\alpha)}{\alpha-1}\label{app:rates:condition_test_error2}\\
       q < \frac{(2-a)}{a}\ell
    \end{cases},\\
    \trho_\lambda (n,p)^{5/2} \cdot \log^{3/2} (n) \leq~ K \sqrt{n}&\impliedby\ell > \ind[q\geq1]\left(\alpha + \frac{1}{5}\right),
\end{align}
and, similarly, condition \eqref{eq:conditions_test_error2} is satisfied if, for $q\geq1$
\begin{align}
    &\left(1 + O\left(n^{\ell-\alpha}\log n\right)\right)^2\left(O\left(1+n^{(\ell\wedge\alpha)-q\alpha}\log n\right)\right)^8q\log^4 n \leq Kn^{\nicefrac{q}{2}}\\
    \implies& 2(\ell-\alpha)\vee 0 < \frac{q}{2}\implies \ell < \frac{q}{4} + \alpha, \label{app:rates:condition_test_error_3}
\end{align}
while, for $q<1$, if
\begin{align}
    &\begin{cases}
        1 > q \geq \frac{\ell}{\alpha}(2-\alpha)\\
        \left(1+O\left(n^{\ell-q\alpha}\right)\right)^8q\log^4 n \leq Kn^{\nicefrac{q}{2}}
    \end{cases}  \vee
    \begin{cases}
        q < \frac{\ell}{\alpha}(2-\alpha)\\
        \left(1+O\left(n^{\ell(\alpha-1)}\right)\right)^8q\log^4 n \leq Kn^{\nicefrac{q}{2}}
    \end{cases}\\
    \implies&\begin{cases}
        1 > q \geq \frac{\ell}{\alpha}(2-\alpha)\\
        \ell < \alpha q + \frac{q}{16}
    \end{cases}\vee
    \begin{cases}
       q < \frac{\ell}{\alpha}(2-\alpha)\\
       \ell < \frac{q}{16(\alpha-1)}
    \end{cases}\\\label{app:rates:condition_test_error_4}
    \implies& \ell < q \left(\left(\alpha + \frac{1}{16}\right)\vee \frac{1}{16(\alpha - 1)}\right),\text{\footnotemark}
\end{align}\footnotetext{Under this last condition, if $K\geq \alpha - 1 + \nicefrac{1}{16}$, both inequalities \eqref{app:rates:condition_test_error} and \eqref{app:rates:condition_test_error2} are satisfied.}
where the last step is a consequence of
\begin{align}
   \frac{\alpha}{2-\alpha} &\leq\alpha+\frac{1}{16} \leq \frac{1}{16(\alpha-1)}, &&\text{ for }\alpha \geq \overline{\alpha}\coloneqq \frac{15+\sqrt{353}}{32} \approx 1.05588,\\
   \frac{\alpha}{2-\alpha} &>\alpha+\frac{1}{16} > \frac{1}{16(\alpha-1)}, &&\text{ for }\alpha < \overline{\alpha}.
\end{align}
Finally, the approximation rate defined in \cref{remark:rates} is
\begin{align}
        \cE (n,p)  =  &\left(n^{-\nicefrac{1}{2}} + \ind[q\geq1]\tilde{O}\left(n^{6(\ell-\alpha)-\nicefrac{1}{2}}\right)\right)  + \\&
        \left(n^{-\nicefrac{q}{2}} + \ind[q\geq1]\tilde{O}\left(n^{2(\ell-\alpha)-\nicefrac{q}{2}}\right)\right)\left(1 + \tilde{O}\left(\frac{n^{8q(1-\alpha)}}{\gamma_{+}^8}\right)\right),
    \end{align}
    where the second term vanishes under the conditions in \eqref{app:rates:condition_test_error_3} and \eqref{app:rates:condition_test_error_4}, and the first term vanishes by further assuming, for $q\geq 1$ 
    \begin{equation}
        \ell < \alpha + \frac{1}{12}.
    \end{equation}
\section{Comparison with neural scaling laws}
\label{app:scaling}
In this appendix we discuss the relationship between our results and the recent literature of the theory of neural scaling laws with linear models. We adopt a notation close to ours, with dictionary to their notation given in \Cref{tab:notation} and \Cref{tab:notation2}.

\cite{bahri2021explaining} and \cite{maloney2022solvable} have considered a model where with Gaussian input data and linear target function:
\begin{align}
\label{eq:app:model:scaling}
f_{\star}(\bx_{i}) = \langle \bbeta_{\star},\bx_{i}\rangle, && \bx_{i}\sim\mathcal{N}(0,\bLambda), && i\in[n]
\end{align}
The covariance matrix $\bLambda = {\rm diag}(\lambda_{k})_{k\in [d]}$ is taken to be diagonal, with eigenvalues following a power-law scaling:
\begin{align}
    \lambda_{k}\sim \left(\frac{d}{k}\right)^{\alpha}, && k\in[d]
\end{align}
with $\alpha>1$ and the target weights are assumed to be random Gaussian vectors $\bbeta_{\star}\sim\mathcal{N}(0,\nicefrac{1}{d}I_{d})$. In particular, note that $\Tr\bLambda\sim d$ for $d\to\infty$. Given the training data, they consider least-squares regression in the class of linear random features predictor:
\begin{align}
\hat{f}(\bx;\ba) = \langle \ba, \bW \bx\rangle
\end{align}
where $\bW\in\mathbb{R}^{p\times d}$ is a Gaussian random matrix elements in $\mathcal{N}(0,\nicefrac{1}{d}I_{d})$.\footnote{In \citep{maloney2022solvable}, $\bbeta_{\star}$ has variance $\nicefrac{\bsigma_{w}}{d}$, the spectrum has a scale $\lambda_{-}$ and the random projection $\bW$ has variance $\nicefrac{\sigma_{u}}{d}$. Here we take $\sigma_{w}=\lambda_{-}=\sigma_{u}=1$ since it is irrelevant to the discussion.}

\begin{table}[]
\centering
\begin{tabular}{|c|c|c|c|c|}\hline
& This work & \cite{bahri2021explaining} & \cite{maloney2022solvable} & \cite{atanasov2024scaling} \\ \hline
Input dimension &  $d$ & $d$  & $M$ & $D$ \\ \hline
Number of features & $p$ & $P$ & $N$ & $N$\\ \hline
Number of samples & $n$ & $D$ & $T$ & $P$ \\ \hline
Capacity & $\alpha$ & $1+\tilde{\alpha}$ & $1+\tilde{\alpha}$ & $\alpha$ \\ \hline
Source & $r$ & $\nicefrac{1}{2}(1-\nicefrac{1}{(\tilde{\alpha}+1)})$ & $\nicefrac{1}{2}(1-\nicefrac{1}{(\tilde{\alpha}+1)})$ & $r$ \\ \hline
Target decay (in $L_{2}$) & $\alpha r+\nicefrac{1}{2}$ & $\nicefrac{1}{2}(1+\tilde{\alpha})$ & $\nicefrac{1}{2}(1+\tilde{\alpha})$ & $\alpha r+\nicefrac{1}{2}$ \\ \hline
\end{tabular}
\caption{Dictionary of notation between the source and capacity conditions defined in \cref{def:sourcecapacity} and the scalings in different neural scaling laws works. Note that since \cite{bahri2021explaining, maloney2022solvable} also employ the greek letter ``$\alpha$'', we denote theirs by $\tilde{\alpha}$ to avoid confusion.}
\label{tab:notation}
\end{table}

\begin{table}[]
\centering
\begin{tabular}{|c|c|c|c|c|}\hline
& This work & \cite{bordelon2024dynamical} & \cite{lin2024scaling}& \cite{paquettescaling}\\ \hline
Input dimension &  $d$ & $D$ & $d$ & $v$  \\ \hline
Number of features & $p$ & $N$ & $M$ & $d$ \\ \hline
Number of samples & $n$ & $P$ & $N$ & $r$  \\ \hline
Capacity & $\alpha$ & $b$ & $a$ & $2\tilde{\alpha}$  \\ \hline
Source & $r$ & $\nicefrac{(a-1)}{2b}$ & $\nicefrac{(b-1)}{2a}$ & $\nicefrac{(2\tilde \alpha + 2\beta - 1)}{4\tilde\alpha}$  \\ \hline
Target decay (in $L_{2}$) & $\alpha r+\nicefrac{1}{2}$ & $\nicefrac{a}{2}$ &$\nicefrac{b}{2}$ &$\tilde \alpha + \beta$ \\ \hline
\end{tabular}
\caption{Dictionary of notation between the source and capacity conditions defined in \cref{def:sourcecapacity} and the scalings in different neural scaling laws works. Note that since \cite{paquettescaling} also employs the greek letter ``$\alpha$'', we denote theirs by $\tilde{\alpha}$ to avoid confusion.}
\label{tab:notation2}
\end{table}

This setting is a particular case of the one introduced in \Cref{sec:setting}. In particular, it satisfies particular source and capacity conditions \cref{def:sourcecapacity}. To see this, note that the feature population covariance is identical to the input data covariance:
\begin{align}
    \mathbb{E}[\bW\bx\bx^{\top}\bW^{\top}] = \nicefrac{1}{d}\bLambda
\end{align}
Therefore, we can identify $\bSigma = \nicefrac{1}{d}\bLambda$ which has $\Tr\bSigma<\infty$ for $\alpha>1$ in the limit $d\to\infty$. Therefore, the features satisfy a capacity condition with scaling $\alpha$. Moreover, the asymptotic kernel is simply the linear kernel:
\begin{align}
    K(\bx,\bx') = \mathbb{E}_{\bw}[\langle \bw,\bx\rangle\langle \bw,\bx'\rangle] = \frac{\langle \bx,\bx'\rangle}{d}.
\end{align}
Since the target variance is constant, this is equivalent to a source condition with:
\begin{align}\label{app:eq:constant_target_variance_condition}
    r = \frac{1}{2}\left(1-\frac{1}{\alpha}\right)
\end{align}
Since $\alpha\in(1,\infty)$, we are always in the hard regime $r\in(0,\nicefrac{1}{2})$ where the target does not belong to the RKHS $\mathcal{H}=\mathbb{R}^{d}$. Indeed, since $\bbeta_{\star}\sim\mathcal{N}(0,\nicefrac{1}{d}I_{d})$, we have:
\begin{align}
    ||f_{\star}||^{2}_{\mathcal{H}} = \sum\limits_{k=1}^{d}\beta_{\star,k}^{2}\lambda_{k}\sim d^{\alpha-1}
\end{align}
which indeed diverges as $d\to\infty$. Moreover, note that least-squares regression correspond to the case $\ell = \infty$. 

From the discussion above, the bias term scalings from \cite{bahri2021explaining} (resolution limited regime) and \cite{maloney2022solvable} (underparametrized regime $n\gg p$, \textit{i.e.} $q \ll 1$, and overparametrized regime $n\ll p$, \textit{i.e.} $q\gg 1$), correspond to a vertical cross-section on the large $\ell$ region of Fig. \ref{fig:rates_rlarge} (Right). Indeed, we recover exactly the rate of the \textit{label term} in eqs. (167)-(168) of \cite{maloney2022solvable}:
\begin{align}
 \cB (\targetfun, \bX, \bW,\beps, \lambda) = O\left(n^{-2\alpha rq}\right)=O\left(p^{-(\alpha-1)}\right), && n\gg p,\\
 \cB (\targetfun, \bX, \bW,\beps, \lambda) = O\left(n^{-2\alpha r}\right)=O\left(n^{-(\alpha-1)}\right), && n \ll p.
\end{align}

Similarly, it is possible to recover the rates for the \textit{noise term} (first two results in eq. (86) of \cite{maloney2022solvable}) as the vertical cross-section on the large $\ell$ region of Figure \ref{fig:rates_rlarge} for the rates of the variance term. In particular:
\begin{align}
 \cV (\targetfun, \bX, \bW,\beps, \lambda) =
 \begin{cases}
 O\left(\sigma_\varepsilon^2 n^{-(1-q)}\right) = O\left(\sigma_\varepsilon^2\frac{p}{n}\right), \quad n\gg p\\
 O\left(\sigma_\varepsilon^2 n^{0}\right), \quad p \gg n
 \end{cases}.
\end{align}
Finally, contemporarily to the publication of this work, \cite{atanasov2024scaling} has extended the discussion of the linear random features model in \cref{eq:app:model:scaling} to the case where the target weights $\bbeta_{\star}\in\mathbb{R}^{d}$ decay as a power-law, allowing to change the source and capacity coefficients independently. Following a private discussion with the authors, it was realized that the rates derived in \Cref{sec:scaling} can be put in a one-to-one correspondence with the ones derived from \cite{atanasov2024scaling}. We refer the interested reader to \Cref{tab:notation} for a dictionary between the notation of these works, and Section VI.6 of \cite{atanasov2024scaling} for a detailed discussion of this relationship. We stress, however, that beside being rigorous, our results hold for features in infinite-dimensional Hilbert spaces  and are not restricted to a particular asymptotic limit in the dimensions.

\paragraph{Comparison with the SGD rates from \cite{paquettescaling, lin2024scaling} ---} Furthermore, in the linear noiseless target setting, lifting the condition in eq. \eqref{app:eq:constant_target_variance_condition}, it is possible to compare our results to the compute-optimal rates for the risk obtained through stochastic gradient descent in \cite{paquettescaling}. In particular, we consider unitary batch-size and the correspondence between the number of iterations of stochastic gradient descent and the number of samples $n$ in ridge regression. Then, defining $\hat\gamma$ the decay rate of the compute-optimal curves for the risk $\hat{\cR}\asymp n^{-\hat\gamma}$ in \cite{paquettescaling}, corresponding to the compute-optimal number of features $p \asymp \hat p=:n^{\hat q}$,\footnote{Note that this quantity is denoted $d_\star$ in \cite{paquettescaling}.} coincides with $\gamma_{\cB}(\ell = 1, \hat q)$ in Theorem \ref{thm:rates:risk}, {\it i.e.} with fixed regularization parameter $\lambda = 1$ ($\ell = 1$). In particular, consider the following regions in the phase diagram provided in their work:\footnote{The Phases Ib, Ic and IV correspond to $\alpha < 1$, {\it i.e.} to an activation $\sigma\notin L_2$.}
\begin{itemize}
    \item Phase Ia ($r < \nicefrac{1}{2}$):
    \begin{align}
    \hat q &= \frac{1}{\alpha}, \\
    \hat \gamma &= 2 r = 2\alpha\hat q r = \gamma_{\cB}(1, \hat q); 
    \end{align}
    \item Phase II ($r > \nicefrac{1}{2}$ and $r < 1 - \nicefrac{1}{2\alpha} < 1$):
    \begin{align}
        \hat q &= \frac{1 + 2\alpha r - \alpha}{\alpha} < 1,\\
        \hat \gamma &= 2 r = \frac{\alpha - 1}{\alpha} + \hat q = \gamma_{\cB}(1, \hat q);
    \end{align}
    \item Phase III ($r > \nicefrac{1}{2}$ and $r > 1 - \nicefrac{1}{2\alpha}$):
    \begin{align}
        \hat q &= 1,\\
        \hat \gamma &= \frac{2\alpha - 1}{\alpha} = \frac{\alpha - 1}{\alpha} + \hat q = \gamma_{\cB}(1, \hat q).
    \end{align}
\end{itemize}
We emphasize that, in Phases Ia and II, $\hat \gamma = \max_{q}\gamma_{\cB}(1, q)$, while, in Phase III, $\hat \gamma \leq \max_q\gamma_{\cB}(1,q)$. Hence, the compute-optimal decay rate of the risk for stochastic gradient descent is equal or smaller than the largest rate achievable by RFRR with fixed regularization $\lambda = 1$ and therefore always smaller than the optimal one in Corollary \ref{thm:rates:optimal}. 

A similar setting has been investigated by the recent work \cite{lin2024scaling}, providing scaling laws for the excess risk obtained by stochastic gradient descent with stepsize schedule $\eta_t = \eta / 2^{\nicefrac{t\log n}{n}}$, for $t = 1, ..., n$.\footnote{The stepsize in \cite{lin2024scaling} is denoted by the greek letter $\gamma$, which we changed to $\eta$ to avoid confusion with the symbol we use for the risk decay rate.} Under the same source and capacity conditions, assuming $r\in(0,\nicefrac{1}{2})$ and $\eta = O(1)$, the result in their Theorem 4.2 may be rephrased as follows:
\begin{align}
    &\E_{\bX,\beps}\cR(f_\star, \bX, \bW, \beps, \eta) \asymp n^{-\gamma_{\rm SGD}(\eta, p)},\\&\gamma_{\rm SGD}(\eta, p) = \left[2\alpha r\left(\frac{1+\log_n \eta}{\alpha}\wedge \log_np\right)\right] \wedge \left[1 - \left(\frac{1+\log_n\eta}{\alpha}\wedge \log_np\right)\right].
\end{align}
Hence, choosing $p\asymp n^q$ and $\eta \asymp n^{\ell - 1}$, {\it i.e.} $\eta\asymp\lambda^{-1}$, provided $\eta = O(1)\implies \ell < 1$,
this result recovers precisely the same rates as in our Theorem \ref{thm:rates:risk}. 

\end{document}